\documentclass[10pt]{article}

\usepackage[letterpaper,margin=1in]{geometry}
\usepackage[T1]{fontenc}
\usepackage[utf8]{inputenc}
\usepackage{lmodern}
\usepackage{microtype}
\usepackage[authoryear,round]{natbib}
\setcitestyle{authoryear,round,citesep={;},aysep={,},yysep={;}}
\usepackage{amsmath,amssymb,amsfonts,amsthm,mathtools,bm}
\usepackage{booktabs,graphicx,float,placeins,enumitem,array,multirow,multicol,xcolor,subcaption,url}
\usepackage{hyperref}
\usepackage[nameinlink,capitalise,noabbrev]{cleveref}
\newtheorem{theorem}{Theorem}[section]
\newtheorem{proposition}[theorem]{Proposition}
\newtheorem{lemma}[theorem]{Lemma}
\newtheorem{corollary}[theorem]{Corollary}

\theoremstyle{definition}

\theoremstyle{remark}
\newtheorem{remark}[theorem]{Remark}

\AddToHook{env/theorem/begin}{\crefalias{theorem}{theorem}\crefalias{section}{theorem}}
\AddToHook{env/proposition/begin}{\crefalias{theorem}{proposition}\crefalias{section}{proposition}}
\AddToHook{env/lemma/begin}{\crefalias{theorem}{lemma}\crefalias{section}{lemma}}
\AddToHook{env/corollary/begin}{\crefalias{theorem}{corollary}\crefalias{section}{corollary}}
\AddToHook{env/assumption/begin}{\crefalias{theorem}{assumption}\crefalias{section}{assumption}}
\AddToHook{env/definition/begin}{\crefalias{theorem}{definition}\crefalias{section}{definition}}
\AddToHook{env/example/begin}{\crefalias{theorem}{example}\crefalias{section}{example}}
\AddToHook{env/remark/begin}{\crefalias{theorem}{remark}\crefalias{section}{remark}}

\crefname{theorem}{theorem}{theorems}
\Crefname{theorem}{Theorem}{Theorems}
\crefname{proposition}{proposition}{propositions}
\Crefname{proposition}{Proposition}{Propositions}
\crefname{lemma}{lemma}{lemmas}
\Crefname{lemma}{Lemma}{Lemmas}
\crefname{corollary}{corollary}{corollaries}
\Crefname{corollary}{Corollary}{Corollaries}
\crefname{assumption}{assumption}{assumptions}
\Crefname{assumption}{Assumption}{Assumptions}
\crefname{definition}{definition}{definitions}
\Crefname{definition}{Definition}{Definitions}
\crefname{example}{example}{examples}
\Crefname{example}{Example}{Examples}
\crefname{remark}{remark}{remarks}
\Crefname{remark}{Remark}{Remarks}

\newcommand{\R}{\mathbb{R}}

\newcommand{\E}{\mathbb{E}}

\newcommand{\cX}{\mathcal{X}}
\newcommand{\cF}{\mathcal{F}}
\newcommand{\cH}{\mathcal{H}}
\newcommand{\cZ}{\mathcal{Z}}
\newcommand{\cT}{\mathcal{T}}
\newcommand{\cV}{\mathcal{V}}
\newcommand{\Rad}{\widehat{\mathfrak R}}
\newcommand{\kb}{k_{\mathrm B}}

\newcommand{\eps}{\varepsilon}
\newcommand{\op}{\mathrm{op}}
\newcommand{\HS}{\mathrm{HS}}
\newcommand{\relu}{\operatorname{ReLU}}
\newcommand{\tr}{\operatorname{Tr}}

\newcommand{\vc}{\operatorname{VCdim}}

\newcommand{\1}{\mathbf 1}
\newcommand{\dd}{\,\mathrm d}
\newcommand{\norm}[1]{\left\lVert #1\right\rVert}
\newcommand{\abs}[1]{\left\lvert #1\right\rvert}
\newcommand{\inner}[2]{\left\langle #1,#2\right\rangle}
\newcommand{\set}[1]{\left\{#1\right\}}
\newcommand{\given}{\,\middle|\,}

\setlist[itemize]{leftmargin=1.5em,itemsep=1.5pt,topsep=2pt}
\setlist[enumerate]{leftmargin=1.7em,itemsep=1.5pt,topsep=2pt}
\allowdisplaybreaks

\makeatletter
\renewcommand{\section}{\@startsection{section}{1}{\z@}%
  {-9pt plus -2pt minus -1pt}{5pt plus 1pt minus 1pt}%
  {\normalfont\large\bfseries}}
\renewcommand{\subsection}{\@startsection{subsection}{2}{\z@}%
  {-7pt plus -2pt minus -1pt}{4pt plus 1pt minus 1pt}%
  {\normalfont\normalsize\bfseries}}
\renewcommand{\paragraph}{\@startsection{paragraph}{4}{\z@}%
  {5pt plus 1pt minus 1pt}{-0.7em}%
  {\normalfont\normalsize\bfseries}}
\makeatother
\AtBeginDocument{%
\setlength{\abovedisplayskip}{6pt plus 2pt minus 2pt}%
\setlength{\belowdisplayskip}{6pt plus 2pt minus 2pt}%
\setlength{\abovedisplayshortskip}{0pt plus 2pt}%
\setlength{\belowdisplayshortskip}{4pt plus 2pt minus 2pt}%
}

\newcommand{\PaperTitle}{Brownian Heads for Deep ReLU Representations: Activation Mass and the Cost of Same-Sample Selection}
\newcommand{\PaperPDFAuthors}{Mahdi Mohammadigohari and Nicole M\"ucke}
\hypersetup{
  colorlinks=true,
  linkcolor=blue!50!black,
  citecolor=green!35!black,
  urlcolor=blue!55!black,
  pdftitle={\PaperTitle},
  pdfauthor={\PaperPDFAuthors},
  pdfpagemode=UseNone
}
\title{\PaperTitle}
\author{%
\begin{minipage}[t]{0.47\textwidth}\centering\small
Mahdi Mohammadigohari\\
Faculty of Engineering\\
Free University of Bozen--Bolzano\\
Bruno Buozzi 1\\
39100 Bolzano, Italy\\
\texttt{Mahdi.Mohammadigohari@gmail.com}
\end{minipage}\hspace{0.03\textwidth}%
\begin{minipage}[t]{0.47\textwidth}\centering\small
Nicole M{\"u}cke\\
Institute for Statistics and\\
Foundations of Machine Learning\\
Technische Universit{\"a}t Braunschweig\\
Universit{\"a}tsplatz 2\\
38106 Braunschweig, Germany\\
\texttt{nicole.muecke@tu-braunschweig.de}
\end{minipage}}
\date{}
\begin{document}
\maketitle
\raggedbottom

\begin{abstract}
Deep representation learning often selects hidden features and fits the final predictor on the same sample, so fixed-feature analysis performed after selection can omit selection cost. We study the conditional empirical Rademacher complexity of deep ReLU representations followed by bounded-norm predictors in additive or L\'evy--Brownian RKHSs, termed \emph{Brownian heads}. For a fixed representation, we derive an exact dual identity and sharp bounds in terms of activation mass, the average norm of the observed hidden vectors. Under same-sample selection, the representation supremum induces a quadratic Rademacher process. Brownian layer-cake and Gaussian-projection identities reduce it to coordinatewise or signed projected threshold traces, separating realized scale from selection complexity. For samples with pairwise-distinct inputs, explicit scalar ReLU families match the finite-trace and VC rates up to universal constants at the realized trace-and-envelope level. Induced-norm contraction also yields architecture-level bounds for rectangular, rank-deficient ReLU networks. Experiments verify the sharp bounds and rates, exhibit a selection gap at fixed activation mass, and assess the predictive feasibility of Brownian heads.
\end{abstract}

\section{Introduction and Related Work}
\label{sec:introduction}

Modern learning architectures are commonly decomposed into a representation
map and a comparatively simple prediction head. Given an input space
$\mathcal X$, such predictors take the form $f=h\circ\phi$, 
where $\phi\colon\mathcal X\to\mathcal Z$ extracts task-relevant features and the head 
$h$ maps the resulting representation to the output space. This viewpoint
covers end-to-end neural networks, feature extraction followed by kernel or
linear prediction, and the selection of representations from intermediate
training checkpoints. Its statistical analysis depends crucially on whether
the representation is fixed independently of the sample used to train the
head or is selected using that same sample.

Generalization bounds control the discrepancy between empirical and population
performance through the Rademacher complexity (RC) of the prediction
class~\citep{BartlettMendelson2002, Mohri2018}. 
For a representation $\phi$ fixed
independently of the training sample $S$, the terminal class
\[
    \mathcal F_\phi=\{h\circ\phi:h\in\mathcal H\}
\]
is fixed, and standard symmetrization applies. If the same sample selects a
representation from a family $\Phi(S)$, however, the relevant class is
\[
    \mathcal F_S
    =\bigcup_{\phi\in\Phi(S)}\{h\circ\phi:h\in\mathcal H\}.
\]
A fixed-representation bound applied only after selection treats this choice as
free and may therefore omit part of the statistical complexity. We call this
setting \emph{same-sample representation selection}.

The distinction covers several common regimes. An encoder pretrained without
the target sample and fixed before that sample is observed defines a fixed
representation; choosing among several such encoders using the target sample
already incurs a finite model-selection cost. Fine-tuning, adapter selection,
checkpoint selection, and end-to-end training make the representation
sample-dependent. A uniform analysis must then cover the representations the
procedure could have returned, not only its realized output. This issue occurs
within training and is distinct from validation- or test-set leakage.

To isolate this effect, we place a structured terminal class after a deep ReLU
representation. Let $z_\theta\colon\mathcal X\to\mathbb R^m$ denote the output
of the last hidden layer of a feedforward ReLU network. For $q\in\{1,2\}$, we
study the complexity of classes containing functions of the form  
\begin{equation}
    f_{a,\theta}^{(q)}(x)
    =
    \inner{a}{\Phi_q(z_\theta(x))}_{\cH_q},
    \qquad
    \norm{a}_{\cH_q}\leq B,
    \label{eq:main-model}
\end{equation}
where $\Phi_q$ is the canonical feature map of the additive Brownian kernel for
$q=1$ and of the rotation-invariant L\'evy--Brownian kernel for $q=2$. Thus the
network produces $z_\theta(x)$, while a bounded-norm predictor in the
corresponding Brownian RKHS produces the output. We call the latter a
\emph{Brownian head}. 

Since $K_q(z,z)=\norm{z}_q$, the fixed-representation RC is governed by
the empirical \emph{activation mass}
$\widehat M_{q,S}(z):=n^{-1}\sum_{i=1}^n\norm{z(x_i)}_q$.
Under same-sample selection, the representation supremum instead produces a
quadratic Rademacher process. Brownian layer-cake and Gaussian-projection
identities reduce this process to coordinatewise or signed projected threshold
traces, separating representation scale from selection complexity. We obtain
matching finite-trace and VC rates through explicit scalar ReLU constructions
and propagate Brownian feature distance through arbitrary rectangular and
rank-deficient ReLU layers using induced matrix norms.

Our bounds are empirical and conditional on the realized representation
family. For a sample-independent family $\mathcal Z$, standard symmetrization
and contraction yield population guarantees for bounded Lipschitz losses; see
\Cref{cor:fixed-generalization}. If $\mathcal Z=\mathcal Z(S)$ is itself
sample-dependent, \Cref{thm:main-adaptive} controls the conditional empirical
complexity, 
while a complete
population bound additionally requires control of the dependence between the
random class and $S$, for example through a random-class
argument~\citep{Dupuis2024}. Thus, our results isolate the selection term
omitted by a post hoc fixed-representation analysis.

\paragraph{Contributions.}
\begin{itemize}[itemsep=0.5pt,topsep=1pt]

\item
For a fixed representation $z$, we derive an exact dual identity and show that the empirical Rademacher complexity of its Brownian-head class scales as $B\sqrt{\widehat M_{q,S}(z)/n}$, with optimal lower and upper constants $1/\sqrt{2}$ and $1$. Both endpoints are attained by finite Brownian canonical-feature configurations; see \Cref{thm:main-fixed}.

    \item
    For a representation family $\mathcal Z$ that may be constructed from the
    same sample, we reduce the representation-indexed quadratic Rademacher
    process to coordinatewise or signed Gaussian-projected threshold traces.
    The resulting bounds separate representation scale from threshold-selection
    complexity; see \Cref{lem:main-chaos} and \Cref{thm:main-adaptive}.

    \item
    We prove that the finite-trace and VC dependences are unavoidable, up to
    universal constants, at the trace-and-envelope level. For pairwise-distinct
    sample inputs, explicit scalar one-hidden-layer ReLU families attain the
    corresponding rates; see
    \Cref{thm:main-matching} and \Cref{prop:main-indicator}.

    \item
    We propagate Brownian feature distance through arbitrary rectangular and
    rank-deficient exact-ReLU layers using induced matrix norms, without
    inverse maps or determinant factors, and obtain an explicit deep-network
    envelope; see \Cref{prop:main-propagation2} and \Cref{cor:main-explicit}.

    \item
    Exact finite-sample constructions recover the fixed-representation
    endpoints and the finite-trace and VC rates. Controlled same-sample adapter
    families isolate selection at fixed empirical activation mass, while
    frozen-feature transfer experiments assess the predictive feasibility of
    Brownian heads; see \Cref{sec:experiments,app:experiments}.

\end{itemize}

\paragraph{Related work.}\label{sec:related}
\textbf{Data-dependent hypothesis classes.}
Classical uniform-convergence arguments assume a sample-independent hypothesis
class. \citet{Foster2019} study sample-dependent classes through transductive
Rademacher complexity and hypothesis-set stability, including stable
representation learning; related approaches use algorithm-dependent empirical
Rademacher complexity~\citep{Sachs2023} or PAC-Bayesian bounds for random
hypothesis sets~\citep{Dupuis2024}. These frameworks control the dependence
between a random class and its generating sample. Conditional on a realized
family $\mathcal Z$, we instead exploit Brownian threshold structure to obtain
explicit empirical complexity bounds and matching trace-and-envelope lower
bounds for exact-ReLU representations.

\noindent\textbf{Brownian and learned kernels.}
The additive and L\'evy--Brownian kernels are classical distance-induced
kernels~\citep{Sejdinovic2013}. Data-dependent kernel-selection bounds
typically control a prespecified family through entropy or kernel
pseudodimension~\citep{YingCampbell2010,CortesMohriRostamizadeh2010,
ZhangZhang2023}. \citet{FollainBach2025} analyze a shallow Brownian-kernel
network that jointly learns a distribution of one-dimensional projections and
the predictor, producing a same-sample supremum over Brownian Gram quadratic
forms. We place a Brownian RKHS head after a potentially deep ReLU
representation and reduce this process to coordinatewise or projected
threshold traces, with matching sample-level ReLU constructions.

\noindent\textbf{Composition-operator bounds.}
Koopman or composition-operator approaches represent layers through
$h\mapsto h\circ\varphi$. On Sobolev RKHSs, \citet{Hashimoto2024} obtain
determinant-sensitive bounds requiring injective, full-rank layers and smooth
bi-Lipschitz activations. \citet{Hashimoto2026} allow bounded domains, changing
widths, noninjective weights, and several activations, but their assumptions
exclude exact ReLU. Our direct Brownian-metric argument permits rectangular,
rank-deficient exact-ReLU layers and controls same-sample selection through
threshold traces rather than composition-operator norms.

\noindent\textbf{Recursive RKHS constructions.}
Brownian Kernel Ladders recursively average Brownian comparisons of
preceding-level functions and distinguish a fully adaptive variational family
from a fixed ladder or sample-independent finite
dictionary~\citep{MohammadigohariBKL2026}. In \Cref{sec:brownian-chains}, we
combine the metric and energy recursions of a fixed ladder with our
fixed-representation theorem; this is a consequence of the Brownian-head
analysis, not part of the same-sample argument. Related layerwise models
include Neural Hilbert Ladders and reproducing-kernel
chains~\citep{Chen2023,HeeringaSpekBrune2025}.

\textbf{Notation.}
For $p\in\mathbb N$, let $[p]:=\{1,\ldots,p\}$, and for a finite set $I$, let
$2^I:=\{T:T\subseteq I\}$ denote its power set. For
$v=(v_j)_{j=1}^d\in\R^d$ and $q\in\{1,2\}$, write
$\norm{v}_q:=\bigl(\sum_{j=1}^d\lvert v_j\rvert^q\bigr)^{1/q}$. For
$W\in\R^{d'\times d}$, let
$\norm{W}_{q\to q}:=\sup_{v\in\R^d\setminus\{0\}}
\norm{Wv}_q/\norm{v}_q$ denote the induced operator norm. The ReLU function
$\relu(t):=\max\{t,0\}$ is applied coordinatewise to vectors, and $\tr(A)$
denotes the trace of a square matrix $A$. For nonnegative $a$ and $b$, we write
$a\lesssim b$ if $a\leq Cb$ and $a\asymp b$ if $cb\leq a\leq Cb$, where
$0<c<C$ are universal constants.

\section{Brownian Geometry and the Fixed-Representation Baseline}
\label{sec:main-geometry}\label{sec:main-fixed}

We first identify the empirical capacity of a Brownian head on one representation, then bound it through the network layers.

For $u,v\in\R^m$ and $q\in\{1,2\}$, define
\begin{equation}
K_q(u,v)=\frac{\norm{u}_q+\norm{v}_q-\norm{u-v}_q}{2},
\qquad \Phi_q(u)=K_q(\cdot,u).
\label{eq:main-brownian-kernel}
\end{equation}
These classical positive-definite, distance-induced kernels~\citep{Sejdinovic2013}
define RKHSs $\cH_q$ with canonical maps $\Phi_q\colon\R^m\to\cH_q$.
The additive kernel $K_1$ sums scalar Brownian kernels over coordinates;
the rotation-invariant L\'evy--Brownian kernel $K_2$ is a Gaussian mixture
of scalar Brownian projection kernels (\Cref{prop:pd-kernels}). Their basic identities are:

\begin{proposition}[Distance-induced Brownian geometry]
\label{prop:main-isometry}
For every $u,v\in\R^m$ and $q\in\{1,2\}$,
\begin{equation}
\norm{\Phi_q(u)}_{\cH_q}^2=\norm{u}_q,
\qquad
\norm{\Phi_q(u)-\Phi_q(v)}_{\cH_q}^2=\norm{u-v}_q.
\label{eq:main-isometry}
\end{equation}
\end{proposition}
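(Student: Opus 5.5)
The plan is to derive both identities from the reproducing property alone, taking positive definiteness of $K_q$ (\Cref{prop:pd-kernels}) as given; this is exactly what is needed for $\cH_q$ and $\Phi_q$ to be well defined. In $\cH_q$ we have $\inner{\Phi_q(u)}{\Phi_q(v)}_{\cH_q}=K_q(u,v)$ for all $u,v\in\R^m$. The whole argument is then a direct evaluation of $K_q$ from \eqref{eq:main-brownian-kernel} at a few points.

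\textbf{First identity.} Set $v=u$. Since $\norm{u-u}_q=0$, we get
\[
\norm{\Phi_q(u)}_{\cH_q}^2=K_q(u,u)=\tfrac12\bigl(2\norm{u}_q-0\bigr)=\norm{u}_q .
\]

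\textbf{Second identity.} Expand the square via the reproducing property:
\[
\norm{\Phi_q(u)-\Phi_q(v)}_{\cH_q}^2=K_q(u,u)+K_q(v,v)-2K_q(u,v).
\]
By the first step, $K_q(u,u)+K_q(v,v)=\norm{u}_q+\norm{v}_q$. By definition, $2K_q(u,v)=\norm{u}_q+\norm{v}_q-\norm{u-v}_q$. Subtracting, the two norm terms cancel and leave $\norm{u-v}_q$. The argument is uniform in $q\in\{1,2\}$ because it uses only the fact that $\norm{\cdot}_q$ vanishes at $0$.

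\textbf{Main obstacle.} The only nontrivial input is that $K_q$ is positive semidefinite, so that the RKHS exists. Without this, the left-hand sides are meaningless; for instance, it is positive semidefiniteness that forces $\norm{u-v}_q$ to be a squared Hilbert distance. This input is supplied by \Cref{prop:pd-kernels}, through two representations:
\begin{itemize}
\item for $q=1$, $K_1$ is a sum over coordinates of scalar Brownian kernels $\tfrac12(|s|+|t|-|s-t|)$, each of which is a covariance;
\item for $q=2$, $K_2$ is a Gaussian mixture of such scalar kernels applied to one-dimensional projections $\inner{g}{u}$. Here one uses $\E\,|\inner{g}{w}|=c\norm{w}_2$ for standard Gaussian $g$ and normalizes by $c$.
\end{itemize}
If that proposition were not available, I would prove positive semidefiniteness directly from these two representations. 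For the scalar kernel, positive semidefiniteness follows from $\tfrac12(|s|+|t|-|s-t|)=\int\1\{0<r\le s\}\1\{0<r\le t\}\dd r$ for $s,t>0$, with the symmetric formula for negative arguments and zero when the signs differ.
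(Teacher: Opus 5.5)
Your proposal is correct and matches the paper's proof: both identities come from the reproducing property, via $K_q(u,u)=\norm{u}_q$ and the expansion $K_q(u,u)+K_q(v,v)-2K_q(u,v)=\norm{u-v}_q$, with positive definiteness supplied by the coordinatewise sum of scalar Brownian kernels for $q=1$ and the Gaussian-projection mixture for $q=2$. The only difference is that the paper re-derives positive definiteness inline, using the signed-threshold feature in $L^2(\{-1,+1\}\times\R_+)$ and an integrability check on the Gaussian integrand, to keep its proof appendix self-contained; you instead cite \Cref{prop:pd-kernels}, which is proved independently of this proposition, so there is no circularity.
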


The proof is in \Cref{proof:main-isometry}. These identities connect activation
mass to feature size and convert representation distances into squared feature distances.

Fix a map $z\colon\cX\to\R^m$, an observed
sample $S=(x_i)_{i=1}^n$, and a head radius $B>0$. 
The two objects associated with the pair $(S,z)$ are 
\begin{equation}
 \cF_{z,B}^{(q)}
 =
 \set{
 x\mapsto\inner{a}{\Phi_q(z(x))}_{\cH_q}
 \given
 a\in\cH_q,\ \norm{a}_{\cH_q}\le B
 },
 \qquad
 \widehat M_{q,S}(z)
 =
 \frac1n\sum_{i=1}^n\norm{z(x_i)}_q.
 \label{eq:main-fixed-class}
\end{equation}
Here $\widehat M_{q,S}(z)$ is the empirical \emph{activation mass}, the average
squared norm of the canonical features by \eqref{eq:main-isometry}.

We measure the size of the fixed terminal class by its \emph{empirical Rademacher}
complexity. For a class of real-valued functions $\mathcal G$, write
\begin{equation}
\Rad_S(\mathcal G)
=\frac1n\E_\eps\sup_{g\in\mathcal G}\sum_{i=1}^n\eps_i g(x_i),
\qquad
\eps_i\overset{\mathrm{i.i.d.}}{\sim}\operatorname{Unif}\{-1,+1\}.
\label{eq:rademacher-definition}
\end{equation}
The expectation in \eqref{eq:rademacher-definition} is taken over the auxiliary
Rademacher variables $\eps_1,\ldots,\eps_n$, conditionally on the sample $S$
and the representation $z$. Hence, the theorem below holds for every realized pair $(S,z)$, including
representations constructed from the same observed training data. 
Whenever a supremum over an uncountable class is not known to be
measurable, expectations and probabilities involving that supremum are
understood in the outer sense.

\begin{theorem}[Activation-mass characterization on a fixed representation]
\label{thm:main-fixed}
For $q\in\{1,2\}$,
\begin{equation}
 \Rad_S(\cF_{z,B}^{(q)})
 =\frac Bn\E_\eps\norm{\sum_{i=1}^n\eps_i\Phi_q(z(x_i))}_{\cH_q},
 \label{eq:main-exact-rad}
\end{equation}
with
\begin{equation}
 \frac{B}{\sqrt2}\sqrt{\frac{\widehat M_{q,S}(z)}{n}}
 \le \Rad_S(\cF_{z,B}^{(q)})
 \le B\sqrt{\frac{\widehat M_{q,S}(z)}{n}}.
 \label{eq:main-two-sided}
\end{equation}
The constants $1/\sqrt2$ and $1$ are optimal. If $n\ge2$, both endpoints are attained by nonnegative Brownian canonical feature configurations at every prescribed positive activation mass.
\end{theorem}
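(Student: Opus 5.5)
The plan follows the standard route: a Cauchy--Schwarz duality for the exact identity, then Jensen and the Khintchine--Kahane inequality with its optimal constant for the two-sided bound.

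\textbf{Exact identity.} Write $\Phi_i:=\Phi_q(z(x_i))$. By the reproducing structure,
\[
\sum_i\eps_i f(x_i)=\inner{a}{\sum_i\eps_i\Phi_i}_{\cH_q}.
\]
Cauchy--Schwarz in $\cH_q$ gives the supremum over $\norm{a}_{\cH_q}\le B$ as $B\norm{\sum_i\eps_i\Phi_i}_{\cH_q}$. It is attained at $a=B\,v/\norm{v}$ with $v=\sum_i\eps_i\Phi_i$, or at any $a$ when $v=0$. Taking $\E_\eps$ yields \eqref{eq:main-exact-rad}. Measurability is harmless here, since the supremum is a continuous function of finitely many $\eps$.

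\textbf{Two-sided bound.} Set $X=\norm{\sum_i\eps_i\Phi_i}_{\cH_q}$. Orthogonality of the Rademacher signs gives $\E X^2=\sum_i\norm{\Phi_i}^2=n\widehat M_{q,S}(z)$ by \Cref{prop:main-isometry}. Jensen's inequality, $\E X\le\sqrt{\E X^2}$, then gives the upper bound.

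For the lower bound I would invoke the Hilbert-space Khintchine--Kahane inequality with the optimal constant $\E X\ge\frac{1}{\sqrt2}(\E X^2)^{1/2}$. This is the Szarek/Latała--Oleszkiewicz result, valid for vector coefficients in a Hilbert space. If a self-contained argument is preferred, I would reduce to the scalar case. Restricting to the finite-dimensional span of the $\Phi_i$ and using the rotation-invariance representation $\norm{v}=c_d\,\E_g\abs{\inner{g}{v}}$ with Gaussian $g$, Fubini and the scalar Szarek inequality give $\E_\eps\abs{\inner{g}{v_\eps}}\ge\frac1{\sqrt2}\bigl(\E_\eps\inner{g}{v_\eps}^2\bigr)^{1/2}$. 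Two issues remain after that: recombining the square roots, and the fact that $c_d\,\E_g$ of a square root of a quadratic form is not the norm of the sum. This reduction therefore does not close cleanly, so I would simply cite the vector-valued optimal Khintchine inequality. This citation is the main obstacle and must be flagged as an external ingredient.

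\textbf{Optimality and attainment.} I need nonnegative configurations $u_i\in\R^m_{\ge0}$, with $n\ge2$, whose Gram matrix $G_{ij}=K_q(u_i,u_j)$ realizes each endpoint.

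The upper endpoint requires $\E X=\sqrt{\E X^2}$, i.e.\ $X$ constant in $\eps$. This happens when the $\Phi_i$ are orthogonal with equal norms. Take $u_i=t\,e_i$ on distinct coordinates with $m\ge n$. Then $K_q(te_i,te_j)=(t+t-\norm{te_i-te_j}_q)/2$. This vanishes for $q=1$, but for $q=2$ it equals $t(2-\sqrt2)/2\neq0$, so the construction as stated only handles $q=1$. For $q=2$ I would instead use equal norms with pairwise distances chosen so that $G$ is diagonal. For orthogonality with $\norm{u_i}=\norm{u_j}=r$ one needs $\norm{u_i-u_j}_2=2r$, i.e.\ antipodal points. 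That rules out $n\ge3$ within a nonnegative orthant, so I would expect to place points along one ray $u_i=s_ie_1$ and look for another mechanism. Here is a cleaner route: on a single ray, $K(s,s')=\min(s,s')$, which is the Brownian covariance, and Brownian increments are orthogonal. But this gives equal norms only if... I would resolve this by allowing $u_i$ from a fixed ladder of the form $\Phi(s_i)-\Phi(s_{i-1})$. Those are differences of features, however, not canonical features, so I must check what ``canonical feature configuration'' permits. Choosing $\mathrm{diag}$-type nonnegative configurations in distinct coordinates for $q=1$, and for $q=2$ rechecking whether an orthogonal canonical configuration exists, is the step I expect to need real care.

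The lower endpoint needs $n=2$ behaviour: $\E X=\frac1{\sqrt2}\sqrt{\E X^2}$ is the Szarek extremizer $\eps_1+\eps_2$, i.e.\ two equal features $\Phi_1=\Phi_2$. For general $n$, I would set $u_1=u_2=u$ and take the remaining points at $0$ (zero features). Scaling $u$ then prescribes any positive activation mass, since $\widehat M$ is homogeneous.
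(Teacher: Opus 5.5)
\textbf{Gap: the upper-endpoint attainment for $q=2$ is never established.} Your exact identity, the Jensen upper bound, and the lower-endpoint configuration ($u_1=u_2$, other states zero) agree with the paper. Citing the Lata{\l}a--Oleszkiewicz inequality $\E X\ge 2^{-1/2}(\E X^2)^{1/2}$, which holds in any normed space, is a legitimate replacement for the paper's Hilbert-valued Khintchine lemma.

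Your Gaussian reduction also closes, contrary to your assessment. After applying the scalar Szarek bound conditionally on $g$, what remains is $\E_g\bigl(\sum_i\inner{g}{v_i}^2\bigr)^{1/2}$. Diagonalizing $\sum_i v_i\otimes v_i$ rewrites this as $\bigl(\sum_i\norm{v_i}^2\bigr)^{1/2}\E\bigl(\sum_j p_jg_j^2\bigr)^{1/2}$ with $p$ in the simplex. Concavity in $p$ places the minimum at a vertex, which gives the factor $\sqrt{2/\pi}$. This cancels the $\sqrt{\pi/2}$ from the Gaussian representation, leaving exactly $1/\sqrt2$. This is the paper's own proof of the lemma.

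You explicitly leave the upper endpoint open for $q=2$ and drift to Brownian increments, which are not canonical features of states. So the claim that both endpoints are attained for $n\ge2$ at every prescribed mass is unproved for $q=2$. For $q=1$ your distinct-coordinate construction does work, provided $m\ge n$.

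The error is in your criterion. Equality in Jensen requires $X$ to be constant in $\eps$. Since $X^2=\sum_i\norm{\Phi_i}^2+2\sum_{i<j}\eps_i\eps_jK_{ij}$, this holds exactly when the Gram matrix is diagonal. Equal norms are not needed.

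Because $\Phi_q(0)=0$, zero states are orthogonal to everything. You already use this zero-padding for the lower endpoint, and it fixes the upper one too. Take $u_1=nMe_1$ and $u_i=0$ for $i\ge2$. Then $\norm{\sum_i\eps_i\Phi_q(u_i)}_{\cH_q}=\norm{\Phi_q(u_1)}_{\cH_q}=\sqrt{nM}$ deterministically. The activation mass is $M$ and the complexity is $B\sqrt{M/n}$, for both $q=1$ and $q=2$, which also gives optimality of the constant $1$.

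Your search for several orthogonal nonzero features could not succeed for $q=2$. The condition $K_2(u,v)=0$ forces $\norm{u-v}_2=\norm{u}_2+\norm{v}_2$, so for nonzero $u,v$ the vector $v$ must be a negative multiple of $u$. That is impossible inside $\R_+^m$. Hence at the upper endpoint, nonnegative $q=2$ configurations can have only one nonzero state.
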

The proof in \Cref{proof:main-fixed} combines Hilbert-space duality,
the Brownian diagonal identity, and the Hilbert-valued Khintchine inequality.

\paragraph{Interpreting the fixed baseline.}
Let
$K_S:=\bigl(K_q(z(x_i),z(x_j))\bigr)_{i,j=1}^n$ be the realized Brownian
Gram matrix and let $\eps=(\eps_1,\ldots,\eps_n)^\top$. Then the empirical RC becomes
\[
    \Rad_S(\cF_{z,B}^{(q)})
    =
    \frac Bn\E_\eps\sqrt{\eps^\top K_S\eps},
    \qquad
    \tr(K_S)
    =
    n\widehat M_{q,S}(z).
\]

Thus, the activation mass records the total squared size of the canonical
features, whereas the off-diagonal entries of $K_S$ record their pairwise
alignment. It therefore determines the sharp universal range in
\eqref{eq:main-two-sided}, but not the exact complexity. Indeed, one nonzero
canonical feature eliminates cancellation and attains the upper endpoint,
whereas two identical nonzero features cancel with probability $1/2$ and
attain the lower endpoint. The Gram-sensitive bound in
\Cref{prop:main-gram} refines these trace-only bounds when the off-diagonal
energy is known.

When hidden states are unavailable, a layer-norm envelope gives an architecture-level bound.
Assume that $\cX\subseteq\R^{d_0}$ and let $d_L=m$. For each
$\ell\in[L]$, let $W_\ell\in\R^{d_\ell\times d_{\ell-1}}$, $b_\ell\in\R^{d_\ell}$, 
and define
$z_0(x)=x$, $z_\ell(x)=\relu\bigl(W_\ell z_{\ell-1}(x)+b_\ell\bigr)$, $\ell\in[L]$. 
No $W_\ell$ is assumed square or full rank, and we write
$s_{\ell,q}:=\norm{W_\ell}_{q\to q}$.

Pairwise feature-distance propagation and its one-point, bias-sensitive envelope
are stated and proved in \Cref{prop:main-propagation,proof:main-propagation}.
The pairwise argument uses coordinatewise $1$-Lipschitzness; the one-point bound
also uses $\relu(0)=0$. Neither needs determinants, inverse maps, or a positive
minimum singular value. Averaging the envelope gives the following consequence.

\begin{proposition}[Architecture-level fixed-head bound for rectangular ReLU layers]
\label{prop:main-propagation2}
For every sample $S=(x_i)_{i=1}^n$ and $q\in\{1,2\}$,
\begin{equation}
\Rad_S(\cF_{z_L,B}^{(q)})
\le
\frac{B}{\sqrt n}
\left[
\left(\prod_{\ell=1}^L s_{\ell,q}\right)
\frac1n\sum_{i=1}^n\norm{x_i}_q
+
\sum_{r=1}^L\norm{b_r}_q
\prod_{\ell=r+1}^L s_{\ell,q}
\right]^{1/2}.
\label{eq:main-architecture-rad2}
\end{equation}
No weight matrix is required to be square or full rank.
\end{proposition}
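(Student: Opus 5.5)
The plan is to combine the upper endpoint of \Cref{thm:main-fixed} with a one-point envelope for the hidden states. Applied to $z=z_L$, \eqref{eq:main-two-sided} gives
\[
\Rad_S(\cF_{z_L,B}^{(q)})\le B\sqrt{\widehat M_{q,S}(z_L)/n}.
\]
It therefore suffices to show that, for every input $x$,
\[
\norm{z_L(x)}_q\le\Bigl(\prod_{\ell=1}^L s_{\ell,q}\Bigr)\norm{x}_q+\sum_{r=1}^L\norm{b_r}_q\prod_{\ell=r+1}^L s_{\ell,q}.
\]
Averaging this over $x_1,\ldots,x_n$ bounds $\widehat M_{q,S}(z_L)$ by the bracket in \eqref{eq:main-architecture-rad2}. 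Because $t\mapsto\sqrt t$ is monotone, substituting this bound into the upper endpoint gives exactly the claimed inequality. No further probabilistic argument is needed: the only Rademacher-specific input is the trace bound of \Cref{thm:main-fixed}.

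The envelope would be proved by induction on $\ell$, with the per-layer step
\[
\norm{z_\ell(x)}_q\le s_{\ell,q}\norm{z_{\ell-1}(x)}_q+\norm{b_\ell}_q.
\]
The step uses two facts. First, $\relu$ is coordinatewise $1$-Lipschitz with $\relu(0)=0$, so $\abs{\relu(t)}\le\abs{t}$ for every coordinate. Since $\norm{\cdot}_q$ is monotone in the absolute values of the coordinates, this gives $\norm{\relu(v)}_q\le\norm{v}_q$ for every $v\in\R^{d_\ell}$. Second, the triangle inequality combined with the definition of the induced norm gives $\norm{W_\ell z_{\ell-1}+b_\ell}_q\le s_{\ell,q}\norm{z_{\ell-1}}_q+\norm{b_\ell}_q$.

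The induced norm $\norm{W_\ell}_{q\to q}$ is defined for any rectangular $W_\ell$ and is finite. The argument never uses injectivity, inverses, or singular values, which justifies the remark that no weight matrix needs to be square or full rank. If the one-point envelope of \Cref{prop:main-propagation} is already stated in this form, I would simply cite it instead of redoing the induction.

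Unrolling the recursion from $z_0(x)=x$ gives the envelope. Each bias $b_r$ enters at layer $r$ and is then multiplied by the operator norms of layers $r+1,\ldots,L$, with the empty product for $r=L$ taken to be $1$. The input term picks up the full product $\prod_{\ell=1}^L s_{\ell,q}$.

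The only point needing care is the bookkeeping of index ranges in this telescoping, together with the empty-product convention. I expect no genuine obstacle beyond that, since all the analytic content sits in \Cref{thm:main-fixed}.
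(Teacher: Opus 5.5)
Your proposal is correct and follows the paper's proof. The paper likewise proves the one-point envelope \eqref{eq:main-affine-envelope} in Proposition~\ref{prop:main-propagation} by the same induction, using $\relu(0)=0$, ReLU nonexpansiveness, the triangle inequality, and the induced norm. It then averages that envelope over the sample to bound $\widehat M_{q,S}(z_L)$ and applies the upper endpoint of Theorem~\ref{thm:main-fixed}.
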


The bound follows by averaging the one-point estimate
\eqref{eq:main-affine-envelope} and applying
\eqref{eq:main-two-sided}; see \Cref{proof:main-propagation}.
It is necessarily one-sided, since global layer norms need not control the
activations realized on the sample; see \Cref{prop:no-product-lower}.

For a representation fixed independently of the sample, standard
symmetrization and loss contraction turn the empirical estimate into the risk
bound stated in \Cref{cor:fixed-generalization}. If the same sample is used to
select the representation, the conditional empirical identity remains valid,
but conditioning on the selected representation cannot justify the same
population argument: it removes the selection event from the analysis.
The next section retains the corresponding representation supremum explicitly.

\section{Same-Sample Representation Selection}
\label{sec:main-adaptive}

Let $\cZ$ be a family of representations $z\colon\cX\to\R^m$ with a common
finite output dimension $m$, and set
\[
 \cF_{\cZ,B}^{(q)}:=\bigcup_{z\in\cZ}\cF_{z,B}^{(q)},
 \qquad
 \widehat M^*_{q,S}(\cZ):=\sup_{z\in\cZ}\frac1n\sum_{i=1}^n\norm{z(x_i)}_q.
\]
The family may depend on the observed inputs and training labels, but is held
fixed before the auxiliary Rademacher signs are drawn. The upper bounds allow
arbitrary, possibly uncountable families for which the displayed quantities
are finite. For traces $\cT\subseteq2^{[n]}$, define
\begin{equation}
 \rho_n(\cT)=\frac1n\E_\eta\sup_{T\in\cT}\abs{\sum_{i\in T}\eta_i},
 \qquad \eta_i\overset{\mathrm{i.i.d.}}{\sim}\operatorname{Unif}\{-1,+1\}.
 \label{eq:main-trace-rad}
\end{equation}
Every realized trace family has at most $2^n$ elements, even when $\cZ$ is uncountable.

\paragraph{Why activation mass alone is insufficient.}
For pairwise distinct inputs, \Cref{prop:main-indicator} realizes every
$T\subseteq[n]$ by a scalar one-hidden-layer ReLU representation
$z_T(x_i)=A\mathbf1_{\{i\in T\}}$, $A>0$. The scalar kernels coincide, giving
\[
 \Rad_S\!\left(\bigcup_{T\in\cT}\cF_{z_T,B}^{(q)}\right)
 =B\sqrt A\,\rho_n(\cT),\qquad q\in\{1,2\}.
\]
For even $n$ and the family $\cT=\{T:|T|=n/2\}$ fixed before the signs,
every candidate has activation mass $A/2$ and fixed-head complexity of order
$B\sqrt{A/n}$. Yet $\rho_n(\cT)=1/2$, so the union complexity is
$B\sqrt A/2$. The fixed-head identity still holds for a selected member;
its value alone does not control the selection procedure.

\paragraph{From representation selection to threshold traces.}
Writing $z_i=z(x_i)$, Hilbert-space duality and the Brownian diagonal identity give
\[
 \Rad_S(\cF_{\cZ,B}^{(q)})
 =\frac Bn\E_\eps\sup_{z\in\cZ}
 \norm{\sum_{i=1}^n\eps_i\Phi_q(z_i)}_{\cH_q},
\]
\begin{equation}
 \norm{\sum_{i=1}^n\eps_i\Phi_q(z_i)}_{\cH_q}^2
 =\sum_{i=1}^n\norm{z_i}_q+2\sum_{i<j}\eps_i\eps_jK_q(z_i,z_j).
 \label{eq:main-feature-expansion-visible}
\end{equation}
The diagonal is at most $n\widehat M^*_{q,S}(\cZ)$. The off-diagonal term
has mean zero for each fixed $z$, but selection inside the expectation can
align the kernel values with $\eps_i\eps_j$.

For $q=1$, assume henceforth that $z(x_i)\in\R_+^m$ for all $z\in\cZ$
and $i\in[n]$, as after a final ReLU layer. The coordinatewise layer-cake identity is
\[
 K_1(z_i,z_j)=\sum_{r=1}^m\int_0^\infty
 \mathbf1_{\{z_r(x_i)\ge t\}}\mathbf1_{\{z_r(x_j)\ge t\}}\,\dd t.
\]
For $q=2$, let $c_0=\sqrt{\pi/2}$, $G\sim\mathcal N(0,I_m)$, and
$\kb(\alpha,\beta)=(|\alpha|+|\beta|-|\alpha-\beta|)/2$.
The Gaussian-mixture identity of \Cref{prop:pd-kernels} yields
\[
 K_2(u,v)=c_0\E_G\kb(\langle G,u\rangle,\langle G,v\rangle),\qquad
 \kb(\alpha,\beta)=\sum_{s\in\{-1,+1\}}\int_0^\infty
 \mathbf1_{\{s\alpha\ge t\}}\mathbf1_{\{s\beta\ge t\}}\,\dd t.
\]
Thus coordinates for $q=1$, or signed projections for $q=2$, reduce the
quadratic kernel process to threshold subsets of the sample:
\begin{align}
 \cT_{1,S}(\cZ)
 &:=\bigl\{\{i\in[n]:z_j(x_i)\ge t\}:z\in\cZ,\ j\in[m],\ t>0\bigr\},
 \label{eq:main-trace-q1}\\
 \cT_{2,S}(\cZ)
 &:=\bigl\{\{i\in[n]:s\langle g,z(x_i)\rangle\ge t\}:
 z\in\cZ,\ g\in\R^m,\ s\in\{-1,+1\},\ t>0\bigr\},
 \label{eq:main-trace-q2}\\
 \rho_{q,S}(\cZ)&:=\rho_n\bigl(\cT_{q,S}(\cZ)\bigr).
 \label{eq:main-rho-q}
\end{align}
Both signs in the projected traces account for the two Brownian half-lines.
To separate this selection freedom from the threshold ranges, define
\[
 \widehat A_{1,S}(\cZ):=\sup_{z\in\cZ}\sum_{j=1}^m\max_{i\in[n]}z_j(x_i),
 \qquad
 \widehat W_{2,S}(\cZ):=c_0\E_G\sup_{\substack{z\in\cZ\\i\in[n]}}
 |\langle G,z(x_i)\rangle|.
\]
These are the total coordinatewise range and expected largest Gaussian
projection, respectively; they satisfy
$\widehat M^*_{1,S}(\cZ)\le\widehat A_{1,S}(\cZ)$ and
$\widehat M^*_{2,S}(\cZ)\le\widehat W_{2,S}(\cZ)$.

\paragraph{From quadratic chaos to trace complexity.}
\begin{lemma}[Threshold-chaos reduction]
\label{lem:main-chaos}
Let $\cT\subseteq2^{[n]}$ and
$Q_T(\eps)=n^{-1}\sum_{i<j}\eps_i\eps_j\mathbf1_{\{i\in T\}}\mathbf1_{\{j\in T\}}$.
There is a universal $C_{\rm ch}>0$ such that
\begin{equation}
 \E_\eps\sup_{T\in\cT}|Q_T(\eps)|
 \le C_{\rm ch}\,n\rho_n(\cT)^2.
 \label{eq:main-chaos}
\end{equation}
\end{lemma}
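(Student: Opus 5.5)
The plan is to use the fact that the off-diagonal chaos of a set $T$ is an explicit function of one linear Rademacher sum. Write $S_T(\eps):=\sum_{i\in T}\eps_i$. Since $\eps_i^2=1$, we have
\[
 \Bigl(\sum_{i\in T}\eps_i\Bigr)^2=|T|+2\sum_{i<j}\eps_i\eps_j\mathbf1_{\{i\in T\}}\mathbf1_{\{j\in T\}},
 \qquad\text{so}\qquad
 Q_T(\eps)=\frac{S_T(\eps)^2-|T|}{2n}.
\]
Consequently
\[
 \sup_{T\in\cT}|Q_T(\eps)|\le\frac{1}{2n}\Bigl(\sup_{T\in\cT}S_T(\eps)^2+\max_{T\in\cT}|T|\Bigr).
\]
It then suffices to bound each of the two terms by a constant multiple of $n^2\rho_n(\cT)^2$. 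Because $\cT\subseteq2^{[n]}$ is finite, all suprema are maxima, and no measurability issue arises.

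\textbf{Diagonal term.} Fix $T\in\cT$. By the scalar Khintchine inequality with Szarek's optimal constant (the same one behind the lower endpoint of \Cref{thm:main-fixed}),
\[
 n\rho_n(\cT)\ge\E_\eta\bigl|S_T(\eta)\bigr|\ge\sqrt{|T|/2}.
\]
Hence $|T|\le2n^2\rho_n(\cT)^2$ for every $T\in\cT$, and so $\max_T|T|/(2n)\le n\rho_n(\cT)^2$.

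\textbf{Square term.} This is the step that needs an actual inequality. Regard $X(\eps):=(S_T(\eps))_{T\in\cT}$ as a Rademacher sum $\sum_i\eps_i v_i$ in the finite-dimensional Banach space $\ell_\infty(\cT)$, where $v_i=(\mathbf1_{\{i\in T\}})_{T\in\cT}$. Then $\sup_T S_T^2=\norm{X}_\infty^2$ and $n\rho_n(\cT)=\E\norm{X}_\infty$. The Kahane--Khintchine inequality for Rademacher sums in an arbitrary normed space gives
\[
 \bigl(\E\norm{X}_\infty^2\bigr)^{1/2}\le C_{\rm K}\,\E\norm{X}_\infty
\]
with a universal constant; by Latała--Oleszkiewicz one may take $C_{\rm K}=\sqrt2$. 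Therefore
\[
 \frac{1}{2n}\E\sup_T S_T^2\le\frac{C_{\rm K}^2}{2n}\bigl(n\rho_n(\cT)\bigr)^2=\frac{C_{\rm K}^2}{2}\,n\rho_n(\cT)^2.
\]

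\textbf{Conclusion and main obstacle.} Combining the two bounds yields \eqref{eq:main-chaos} with $C_{\rm ch}=1+C_{\rm K}^2/2$, which equals $2$ for the sharp constant. The only nontrivial ingredient is the moment comparison in the square term. The key point is that the constant must not depend on $|\cT|$ or $n$, which is exactly what Kahane's inequality guarantees, independently of the dimension of $\ell_\infty(\cT)$.

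If a self-contained argument is preferred, I would instead use Borell-type concentration for the convex $1$-Lipschitz function $\eps\mapsto\norm{X(\eps)}_\infty$. This function has Lipschitz constant $\sigma=\max_T|T|^{1/2}\le\sqrt2\,n\rho_n(\cT)$ by the diagonal bound. Talagrand's convex concentration then gives $\operatorname{Var}\norm{X}_\infty\lesssim\sigma^2\lesssim(n\rho_n)^2$, which yields the same conclusion with a worse but still universal constant.
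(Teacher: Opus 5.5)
Your proof is correct and follows the paper's argument: the identity $Q_T=(S_T^2-\abs{T})/(2n)$, the split into a square term and a diagonal term, and the Kahane--Khintchine inequality for the Rademacher series $X=\sum_i\eps_i v_i$ in $\ell_\infty(\cT)$. The only difference is the diagonal term. The paper absorbs it via $\sup_T\abs{T}=\sup_T\E S_T^2\le\E\norm{X}_\infty^2$, whereas you use Szarek's lower Khintchine bound; with the sharp constant $\sqrt2$, both routes give $C_{\rm ch}=2$.
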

Since $Q_T=((\sum_{i\in T}\eps_i)^2-|T|)/(2n)$, the lemma controls
centered squared trace sums by first-order trace complexity. Applying it
inside the threshold integrals gives the following bound; proofs are in
\Cref{proof:main-chaos,proof:main-adaptive}.

\begin{theorem}[Uniform threshold control under same-sample selection]
\label{thm:main-adaptive}
For $q\in\{1,2\}$, there are universal constants $C_q,C_q'>0$ such that
\begin{align}
 \Rad_S(\cF_{\cZ,B}^{(1)})
 &\le B\left[\frac{\widehat M^*_{1,S}(\cZ)}n+C_1\widehat A_{1,S}(\cZ)\rho_{1,S}(\cZ)^2\right]^{1/2}
 \le C_1'B\sqrt{\widehat A_{1,S}(\cZ)}\,\rho_{1,S}(\cZ),
 \label{eq:main-q1-adaptive}\\
 \Rad_S(\cF_{\cZ,B}^{(2)})
 &\le B\left[\frac{\widehat M^*_{2,S}(\cZ)}n+C_2\widehat W_{2,S}(\cZ)\rho_{2,S}(\cZ)^2\right]^{1/2}
 \le C_2'B\sqrt{\widehat W_{2,S}(\cZ)}\,\rho_{2,S}(\cZ).
 \label{eq:main-q2-adaptive}
\end{align}
\end{theorem}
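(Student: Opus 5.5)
The plan is to start from the duality identity already displayed before \Cref{lem:main-chaos}. Jensen's inequality then moves the square root outside the expectation:
\[
\Rad_S(\cF_{\cZ,B}^{(q)})\le \frac Bn\Bigl(\E_\eps\sup_{z\in\cZ}\norm{\textstyle\sum_i\eps_i\Phi_q(z_i)}_{\cH_q}^2\Bigr)^{1/2}.
\]
Expanding via \eqref{eq:main-feature-expansion-visible}, the diagonal contributes at most $n\widehat M^*_{q,S}(\cZ)$. What remains is to show
\[
\E_\eps\sup_{z}\Bigl|\sum_{i<j}\eps_i\eps_jK_q(z_i,z_j)\Bigr|\lesssim n^2\widehat A_{1,S}\rho_{1,S}^2 \quad(q=1),
\]
with $\widehat W_{2,S}\rho_{2,S}^2$ in place of $\widehat A_{1,S}\rho_{1,S}^2$ for $q=2$. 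Dividing by $n^2$ then gives the first inequality in each line.

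For $q=1$, I insert the coordinatewise layer-cake identity. The off-diagonal sum becomes $n\sum_r\int_0^\infty Q_{T_{z,r,t}}(\eps)\,\dd t$, where $T_{z,r,t}=\{i: z_r(x_i)\ge t\}$. For $t>\max_i z_r(x_i)$ this set is empty and the integrand vanishes. So the supremum over $z$ is bounded by
\[
n\sum_r \max_i z_r(x_i)\cdot\sup_{T\in\cT_{1,S}(\cZ)}|Q_T(\eps)|\le n\,\widehat A_{1,S}(\cZ)\sup_T|Q_T(\eps)|.
\]
Here I pull the sup over traces out of the integral before taking the sup over $z$ of the weights, which is legitimate because every trace that appears lies in $\cT_{1,S}(\cZ)$. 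Taking $\E_\eps$ and applying \Cref{lem:main-chaos} gives $C_{\rm ch}n^2\widehat A_{1,S}\rho_{1,S}^2$.

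For $q=2$, I write $K_2=c_0\E_G\kb$ and use the signed layer-cake formula. The integrand then involves the traces $\{i:s\langle G,z_i\rangle\ge t\}$, which lie in $\cT_{2,S}(\cZ)$. For each $G$ and $s$, the $t$-range is contained in $[0,\max_i|\langle G,z_i\rangle|]$, so the bound is
\[
n\,c_0\,\E_G\Bigl[\sum_s \sup_{z,i}|\langle G,z_i\rangle|\Bigr]\sup_T|Q_T|\le 2n\,\widehat W_{2,S}\sup_T|Q_T|.
\]
The step needing care is exchanging $\sup_z$ with $\E_G$ and the integral. I bound pointwise in $(G,s,t)$ before taking $\sup_z$. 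The resulting bound factors into a $G$-expectation that is free of $\eps$ and a $\sup_T|Q_T(\eps)|$ that is free of $G$, so Fubini is harmless. Measurability is handled in the outer sense, as the paper stipulates, and the sup over the finite trace family is measurable anyway.

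For the second inequality, I need $\widehat M^*/n\lesssim \widehat A\rho^2$ (resp.\ $\widehat W\rho^2$). Since $\widehat M^*\le\widehat A$, it suffices to show $\rho_n(\cT)\gtrsim n^{-1/2}$ whenever $\cT$ contains a nonempty set. By Khintchine, $\E|\sum_{i\in T}\eps_i|\ge\sqrt{|T|/2}\ge 1/\sqrt2$, which only gives $\rho\ge 1/(\sqrt2 n)$, too weak in general.

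This is the main obstacle, and I would handle it by refining the diagonal term rather than bounding it crudely. Via layer-cake, $\sum_i\norm{z_i}_1=\sum_r\int_0^\infty|T_{z,r,t}|\,\dd t$. For $T\in\cT$, $|T|\le 2\bigl(\E|\sum_{i\in T}\eps_i|\bigr)^2\le 2n^2\rho_n(\cT)^2$. Hence $\sum_i\norm{z_i}_1\le 2n^2\widehat A_{1,S}\rho_{1,S}^2$, and similarly for $q=2$ with $\widehat W_{2,S}$. The diagonal is therefore absorbed into the chaos bound, which yields the second inequality with $C_q'=\sqrt{2+C_q}$. In the degenerate case where all traces are empty, every term vanishes and the bound is trivial.
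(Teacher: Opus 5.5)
Your proposal is correct and follows the paper's proof essentially step for step. It uses Hilbert duality plus Jensen, then rewrites the off-diagonal term through the coordinatewise (resp.\ Gaussian-projected signed) layer-cake identity as integrals of $Q_T$, bounds it by an envelope times $\sup_T|Q_T(\eps)|$ before invoking \Cref{lem:main-chaos}, and absorbs the diagonal via $\max_{T}|T|\le 2n^2\rho^2$ from the sharp Khintchine inequality.

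The only slip concerns a constant. For $q=2$ the two half-line integrals each contribute up to $\bigl(\max_{T\in\cT_{2,S}(\cZ)}|T|\bigr)\max_i|\langle G,z(x_i)\rangle|$. Your "similarly" argument therefore yields $\widehat M^*_{2,S}(\cZ)/n\le 4\widehat W_{2,S}(\cZ)\rho_{2,S}(\cZ)^2$ and $C_2'=\sqrt{4+C_2}$, not $\sqrt{2+C_2}$. This does not affect the theorem, which asserts only the existence of universal constants.
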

The first inequalities separate diagonal activation mass from the selection
cost, factored into threshold scale and trace complexity. Absorbing the
diagonal gives the shorter second bounds, which may be looser when activation
mass is much smaller than the envelope. RKHS duality is generic; the diagonal,
layer-cake, and Gaussian-projection identities are Brownian-specific.
Other kernels would require an analogous controlled threshold-mixture representation.

The bounds are conditional on the sample and family. For sample-independent
$\cZ$, symmetrization and contraction give population guarantees for bounded
Lipschitz losses. A sample-generated $\cZ(S)$ needs an additional random-class
argument, and the empirical bound must cover all traces available for
selection. Selection-safe root-mass and intrinsic-width refinements appear in
\Cref{prop:main-root-mass,prop:main-intrinsic-width}.

\paragraph{Sharpness of the trace dependence.}
Write $\vc(\cT)$ for the Vapnik--Chervonenkis dimension
\citep{VapnikChervonenkis1971}. The next result concerns worst-case empirical
capacity at the specified trace and envelope level, not statistical minimax
excess risk. Its finite-family restriction does not apply to the preceding upper bounds.

\begin{theorem}[Matching finite-trace and VC dependence]
\label{thm:main-matching}
There are universal constants $0<c<C$ such that, for every $A,B>0$,
$2\le N\le2^n$, $1\le V\le n$, and $q\in\{1,2\}$, the following holds.
Each supremum ranges over finite families $\cZ$ with common finite output
dimension $m$ (which may vary between families), subject to
$\widehat A_{1,S}(\cZ)\le A$ for $q=1$ and
$\widehat W_{2,S}(\cZ)\le A$ for $q=2$; the $q=1$ states are nonnegative.
The upper directions hold for every sample $S$. For pairwise distinct inputs
$x_1,\ldots,x_n\in\R^d$, the matching lower directions also hold, and hence
\begin{align}
 \sup_{\substack{\cZ:\,|\cT_{q,S}(\cZ)|\le N}}
 \Rad_S(\cF_{\cZ,B}^{(q)})
 &\asymp B\sqrt A\min\left\{1,\sqrt{\frac{\log N}{n}}\right\},
 \label{eq:main-match-card}\\
 \sup_{\substack{\cZ:\,\vc(\cT_{q,S}(\cZ))\le V}}
 \Rad_S(\cF_{\cZ,B}^{(q)})
 &\asymp B\sqrt A\min\left\{1,\sqrt{\frac Vn}\right\}.
 \label{eq:main-match-vc}
\end{align}
\end{theorem}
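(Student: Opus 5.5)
The upper directions come from \Cref{thm:main-adaptive} combined with standard bounds on $\rho_n$. Under $\widehat A_{1,S}(\cZ)\le A$, respectively $\widehat W_{2,S}(\cZ)\le A$, the second inequality in \eqref{eq:main-q1-adaptive}--\eqref{eq:main-q2-adaptive} gives $\Rad_S\le C'_qB\sqrt A\,\rho_{q,S}(\cZ)$, so it remains to bound the trace complexity.

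\textbf{Upper bounds.} The trivial bound $|\sum_{i\in T}\eta_i|\le n$ gives $\rho_n(\cT)\le1$. If $|\cT|\le N$, Massart's finite-class lemma applied to the vectors $\mathbf 1_T$ (with $\|\mathbf 1_T\|_2\le\sqrt n$) gives
\[
\rho_n(\cT)\le\frac{\sqrt{2n\log(2N)}}{n}\lesssim\sqrt{\frac{\log N}{n}},
\]
using $N\ge2$. If $\vc(\cT)\le V$, Dudley chaining with Haussler's packing bound gives the sharp rate $\rho_n(\cT)\lesssim\sqrt{V/n}$, with no logarithmic factor. One point needs care: under $|\cT_{q,S}|\le N$ we get $\rho\lesssim\min\{1,\sqrt{\log N/n}\}$, and $\sqrt A\,\rho\le\sqrt A\min\{\cdot\}$ is exactly the claimed form.

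\textbf{Lower bounds.} I use the indicator construction of \Cref{prop:main-indicator}. For pairwise distinct inputs and each $T\subseteq[n]$ there is a scalar one-hidden-layer ReLU representation $z_T$ with $z_T(x_i)=A\mathbf 1_{\{i\in T\}}$. The family $\cZ_\cT=\{z_T:T\in\cT\}$ then has $m=1$ and union complexity $B\sqrt A\,\rho_n(\cT)$ for both $q$. I then have to check three things.
\begin{enumerate}
\item The envelope satisfies $\widehat A_{1,S}=\max_i z_T(x_i)\le A$. For $q=2$, $\widehat W_{2,S}=c_0\E|G|\,A=c_0\sqrt{2/\pi}\,A=A$, since $c_0=\sqrt{\pi/2}$.
\item The realized trace family $\cT_{q,S}(\cZ_\cT)$ is $\cT$ itself, plus possibly the empty set (from negative $s$ or $g$ for $q=2$). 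Adding $\emptyset$ raises cardinality by at most one and does not change the VC dimension (for $V\ge1$). I can simply require $\emptyset\in\cT$ and count $N$ with it, or rescale $N$ by a constant.
\item I must choose $\cT$ with $\rho_n(\cT)$ large.
\end{enumerate}

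For the VC case, take $\cT=2^{[V]}$, which has VC dimension $V$. Then $\rho_n=n^{-1}\E\max_T|\sum_{i\in T}\eta_i|\ge n^{-1}\E\sum_{i\le V}(\eta_i)_+=V/(2n)$. This is too small when $V\ll n$, so I instead use a block construction. Split $[n]$ into $V$ blocks of size about $n/V$ and let $\cT$ consist of all unions of blocks. This family has VC dimension $V$. The supremum picks the blocks with positive sums, giving $\rho_n\ge n^{-1}V\,\E(\text{block sum})_+\asymp n^{-1}V\sqrt{n/V}=\sqrt{V/n}$ by Khintchine.

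For the cardinality case with $\log N\le n$, set $k=\lfloor\log_2N\rfloor\le n$ and take the same block family with $k$ blocks. It has $2^k\le N$ members and $\rho\asymp\sqrt{k/n}\asymp\sqrt{\log N/n}$. When $\log N>n$, cap at $k=n$, which gives $\rho\asymp1$.

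\textbf{Main obstacle.} I expect the delicate step to be the VC upper bound without a logarithmic factor, which requires chaining with Haussler's bound rather than Sauer--Massart. Close behind is the bookkeeping in the lower bound: making sure the realized threshold traces for $q=2$ with arbitrary $g,s,t$ contain nothing beyond $\cT\cup\{\emptyset\}$. For a scalar nonnegative family, $s\,g\,z\ge t>0$ forces $sg>0$, and the sets $\{i:z_T(x_i)\ge t/(sg)\}$ are either $T$ or $\emptyset$, so this check should go through.
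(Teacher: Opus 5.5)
Your proposal is correct and takes essentially the same route as the paper. For the upper directions you use the absorbed form of \Cref{thm:main-adaptive} together with Massart's lemma and the Haussler--Dudley bound; for the lower directions you apply the scalar indicator realization $z_T(x_i)=A\mathbf 1_{\{i\in T\}}$ to unions of $k$ disjoint blocks of size $\lfloor n/k\rfloor\ge n/(2k)$, with $k=\lfloor\log_2N\rfloor$ or $k=V$. There are two harmless slips: adding $\varnothing$ to a trace family can raise its VC dimension by one (e.g.\ $\{\{1\}\}$ versus $\{\varnothing,\{1\}\}$), which does not matter because the block-union family already contains $\varnothing$, and the case $\log_2N>n$ never arises since $N\le2^n$.
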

The saturation follows from $\rho_n(\cT)\le1$. For sharpness, all unions of
$k$ disjoint blocks give $2^k$ traces, VC dimension $k$, and
$\rho_n(\cT)\asymp\min\{1,\sqrt{k/n}\}$; take $k\asymp\log N$ or
$k\asymp V$. Scalar ReLU realizations apply to both kernels, but may depend
on the sample and require width of order $n$. Thus the result does not claim
simultaneous sharpness under all architectural constraints.
See \Cref{proof:main-matching,cor:main-trace-entropy,prop:main-indicator}.

\paragraph{From realized to architecture-level control.}
Layer norms bound the realized scale via \Cref{prop:main-propagation}; ReLU
VC bounds control the traces through depth and parameter count. For $q=2$,
the projection contributes $m$ parameters, and the scalar threshold is
absorbed into the universal constant.

\begin{corollary}[Explicit ReLU-family specialization]
\label{cor:main-explicit}
Let $\Theta$ index a depth-$L$ ReLU representation family with final width $m$,
and set $\cZ_\Theta=\{z_L^\theta:\theta\in\Theta\}$. Assume at most $P$
scalar parameters, sample input radius $R_{0,q}=\max_i\|x_i\|_q$, and uniform
bounds $\|W_\ell\|_{q\to q}\le S_{\ell,q}$ and
$\|b_\ell\|_q\le B_{\ell,q}$. Set
\[
 R_{\Theta,q}=R_{0,q}\prod_{\ell=1}^LS_{\ell,q}
 +\sum_{r=1}^LB_{r,q}\prod_{\ell=r+1}^LS_{\ell,q},
\]
with empty products equal to one. Let $V_\Theta$ bound the VC dimensions of
both realized trace systems; one may take
$V_\Theta=C_{\rm net}(P+m)(L+1)\log(e(P+m))$ for a universal
$C_{\rm net}>0$. Then
\begin{align}
 \Rad_S(\cF_{\cZ_\Theta,B}^{(1)})
 &\lesssim B\min\left\{\sqrt{R_{\Theta,1}},
 \sqrt{\min\{m,n\}R_{\Theta,1}}\min\{1,\sqrt{V_\Theta/n}\}\right\},
 \label{eq:main-explicit-q1}\\
 \Rad_S(\cF_{\cZ_\Theta,B}^{(2)})
 &\lesssim B\min\left\{\sqrt{R_{\Theta,2}},
 \sqrt{R_{\Theta,2}}m^{1/4}\min\{1,\sqrt{V_\Theta/n}\}\right\}.
 \label{eq:main-explicit-q2}
\end{align}
If all states realized across the family and sample lie in one common
$r$-dimensional subspace, $m^{1/4}$ can be replaced by $r^{1/4}$.
\end{corollary}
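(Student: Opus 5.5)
The plan is to combine three ingredients already in place. The first is the trivial bound $\Rad_S\le B\sqrt{\widehat M^*_{q,S}}$, which follows from the exact duality identity and Jensen. The second is the second inequalities of \Cref{thm:main-adaptive}, which bound the complexity by $C_q' B\sqrt{\text{envelope}}\,\rho_{q,S}$. The third is a standard VC bound on the trace complexity, $\rho_n(\cT)\lesssim\min\{1,\sqrt{\vc(\cT)/n}\}$. We then need to control the envelopes $\widehat A_{1,S}(\cZ_\Theta)$ and $\widehat W_{2,S}(\cZ_\Theta)$ by $R_{\Theta,q}$, at the cost of the stated dimensional factors.

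\textbf{Step 1 (scale).} The one-point envelope \eqref{eq:main-affine-envelope} from \Cref{prop:main-propagation} gives $\norm{z_L^\theta(x_i)}_q\le R_{\Theta,q}$ for every $\theta$ and $i$, by induction over layers using $\relu(0)=0$ and 1-Lipschitzness. Hence $\widehat M^*_{q,S}\le R_{\Theta,q}$, and the first branch $B\sqrt{R_{\Theta,q}}$ of both displays follows.

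\textbf{Step 2 ($q=1$ envelope).} We need $\widehat A_{1,S}=\sup_\theta\sum_j\max_i z_j(x_i)$. For each $j$, pick the maximizing index $i_j$. The indices $\{i_j\}$ take at most $\min\{m,n\}$ distinct values. Grouping coordinates by their maximizing index, $\sum_j z_j(x_{i_j})\le\sum_{i\in I}\norm{z(x_i)}_1\le\min\{m,n\}R_{\Theta,1}$, where $\abs{I}\le\min\{m,n\}$. (Also trivially $\sum_j\max_i z_j\le\sum_i\norm{z(x_i)}_1$.) Plugging this into \eqref{eq:main-q1-adaptive} gives $B\sqrt{\min\{m,n\}R_{\Theta,1}}\,\rho_{1,S}$.

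\textbf{Step 3 ($q=2$ envelope).} Here $\widehat W_{2,S}=c_0\E_G\sup_{z,i}|\langle G,z(x_i)\rangle|\le c_0\E\norm{G}_2\,R_{\Theta,2}\le c_0\sqrt m\,R_{\Theta,2}$ by Cauchy--Schwarz. Taking the square root produces $m^{1/4}\sqrt{R_{\Theta,2}}$. If all states lie in an $r$-dimensional subspace $U$, replace $G$ by its projection $P_UG$, which is an $r$-dimensional standard Gaussian, giving $\sqrt r$ and hence $r^{1/4}$.

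\textbf{Step 4 (traces via VC).} Each set in $\cT_{1,S}$ is $\{i:z^\theta_j(x_i)-t\ge0\}$. This is a sign pattern of a ReLU network with at most $P+1$ parameters once the threshold and coordinate selection are added. For $\cT_{2,S}$ the pattern is $\{i:s\langle g,z^\theta(x_i)\rangle-t\ge0\}$, which adds $m+2$ parameters via one extra linear layer. The Bartlett--Harvey--Liaw--Mehrabian bound for piecewise-linear networks then gives VC dimension $\lesssim (P+m)(L+1)\log(e(P+m))$; the selection of $j$ adds only $\log m$, which is absorbed. Dudley/Sauer with the finite-class (Massart) bound then gives $\rho_n\lesssim\min\{1,\sqrt{V_\Theta/n}\}$; this is exactly the upper direction already used for \eqref{eq:main-match-vc}. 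Combining Steps 1–4 and taking the minimum of the two branches finishes the proof.

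\textbf{Main obstacle.} The main obstacle is Step 4: verifying cleanly that the threshold-and-coordinate-indexed trace systems are sign patterns of a ReLU family of depth $L+1$ with $O(P+m)$ parameters. For $q=1$ this means handling the discrete choice of $j$, either as a union of $m$ classes or through a fixed one-hot final layer, without inflating the parameter count beyond the stated form. Everything else reduces to earlier results.
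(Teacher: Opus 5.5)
Your proposal is correct and follows the paper's own proof essentially step for step. The shared steps are the affine envelope giving $\sup_{\theta,i}\|z_L^\theta(x_i)\|_q\le R_{\Theta,q}$, the root-mass cap for the first branch (duality, the triangle inequality, then Jensen), the $\min\{m,n\}R_{\Theta,1}$ and $c_0\sqrt{r}\,R_{\Theta,2}$ envelope estimates, a Bartlett--Harvey--Liaw--Mehrabian bound on the augmented network $x\mapsto\langle g,z_L^\theta(x)\rangle-t$, and the Haussler--Dudley trace estimate. That last estimate, valid here because both trace systems contain $\varnothing$, is what avoids the extra $\sqrt{\log(en/V_\Theta)}$ factor a literal Sauer-plus-Massart argument would cost. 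The differences are cosmetic: the paper removes your ``main obstacle'' by noting that the additive traces are the subclass $g=e_j$ of the projected traces, so no union over $j$ is needed, and it bounds the $q=1$ envelope by the minimum of $mR_{\Theta,1}$ and $nR_{\Theta,1}$ rather than by grouping coordinates by their maximizing index.
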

Indeed, $\norm{z_L^\theta(x_i)}_q\le R_{\Theta,q}$ uniformly, so
$\widehat M^*_{q,S}(\cZ_\Theta)\le R_{\Theta,q}$ and
\[
 \widehat A_{1,S}(\cZ_\Theta)\le\min\{m,n\}R_{\Theta,1},\qquad
 \widehat W_{2,S}(\cZ_\Theta)\lesssim\sqrt m\,R_{\Theta,2}.
\]
For $q=1$, bound coordinate maxima individually or by sample sums. For $q=2$,
Gaussian projection gives $\sqrt m$ (or $\sqrt r$ in the common subspace).
Combine these with $\rho_{q,S}(\cZ_\Theta)\lesssim\min\{1,\sqrt{V_\Theta/n}\}$;
see \Cref{proof:main-explicit}. The first branch is the selection-safe
root-mass bound; the second is useful when $V_\Theta\ll n$.
These architectural bounds need no terminal-state inspection; the realized
bounds may be sharper. Representer, finite-realization, and recursive-chain
results are in \Cref{app:auxiliary-brownian,sec:brownian-chains}.

\section{Controlled Experiments}
\label{sec:experiments}

We use three complementary protocols. Exact finite-sample constructions evaluate the theorem endpoints and trace-rate formulas without optimization. Label-dependent one-step rank-one ReLU adapters form nested same-sample families on frozen ResNet-18 features; every candidate is positively rescaled to the same empirical activation mass before complexity is evaluated. Finally, matched linear, stationary-kernel, and Brownian heads are compared on frozen ImageNet features. Complete protocols, all six transfer settings, raw summaries, and secondary diagnostics are in \Cref{app:theory-core-experiments,app:same-sample-learned,app:frozen-transfer,app:experiments}. The exact formulas attain both endpoints and recover fitted slopes $0.979$ for $\sqrt{\log N/n}$ and $0.980$ for $\sqrt{V/n}$ ($R^2=0.9998$ in both cases).

\begin{figure}[t]
\centering
\includegraphics[width=0.846154\linewidth]{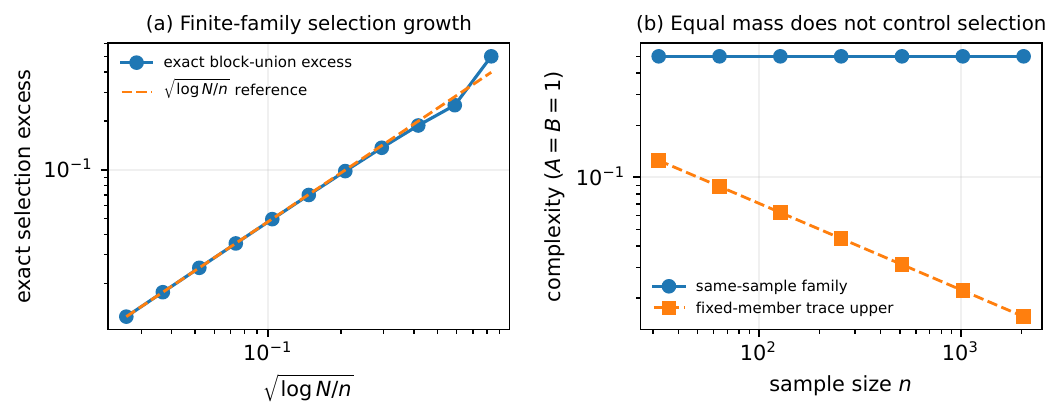}
\caption{Exact finite-sample constructions supporting the selection theorem. Panel (a) evaluates the closed-form finite-family selection excess. Panel (b) keeps every representation at the same activation mass while the same-sample family complexity remains constant and each fixed-member trace upper decays with $n$. These are deterministic evaluations, not independent stochastic experiments.}
\label{fig:main-theory-core}
\end{figure}

\begin{figure}[t]
\centering
\includegraphics[width=0.846154\linewidth]{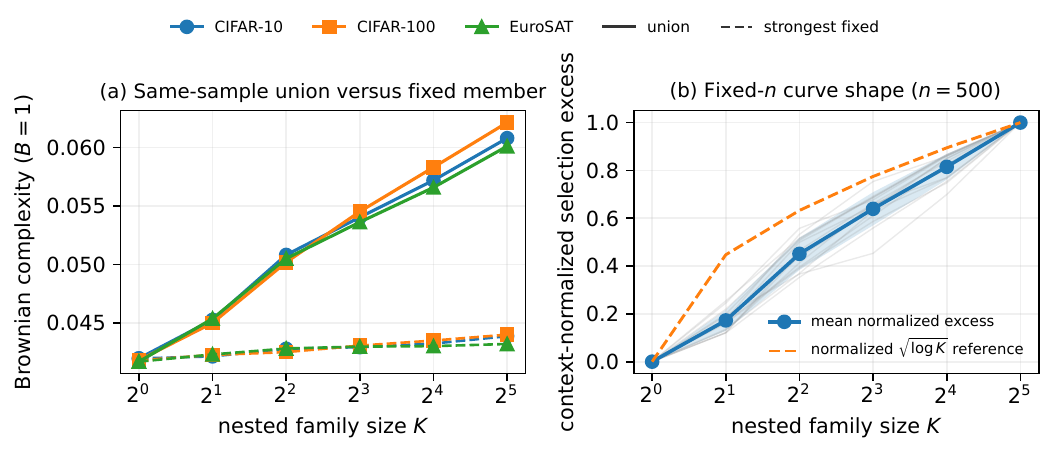}
\caption{Same-sample learned representation families at the fixed sample size $n=500$. Panel (a) compares the Brownian union complexity with the strongest fixed member after all candidates are rescaled to identical activation mass. Panel (b) shows the context-normalized union-minus-fixed excess; the dashed curve is a descriptive normalized $\sqrt{\log K}$ reference.}
\label{fig:main-learned-selection}
\end{figure}

Across three datasets and five paired seeds, the largest predeclared family increases empirical Brownian union complexity by a factor $1.40$ over its strongest fixed member (minimum across the 15 contexts: $1.338$), despite identical activation mass. At the fixed sample size $n=500$, the context-normalized selection excess grows consistently with a $\sqrt{\log K}$ curve across the nested families. The dependence on $n$ is established by the separate exact constructions, not by this fixed-$n$ study. This is a controlled selection experiment rather than an end-to-end benchmark.

Across the six dataset--budget combinations, the L\'evy--Brownian head ranks first by mean accuracy on four tasks and second on two, with average rank $1.33$. Its gains over linear ridge are modest but positive in all six settings, ranging from $0.65$ to $1.90$ percentage points; it is no more than $0.06$ percentage points below the best observed head in any setting. \Cref{fig:main-frozen-transfer} shows the paired differences across all six settings, and \Cref{tab:main-frozen-transfer} reports the $n_{\rm lab}=1000$ results. \Cref{app:frozen-transfer} gives the protocol and both budgets in \Cref{tab:app-frozen-transfer-full}. These frozen-representation experiments do not establish end-to-end superiority or remove the selection term required when the representation is selected using the same sample that trains the head.

\begin{figure}[t]
\centering
\includegraphics[width=0.846154\linewidth]{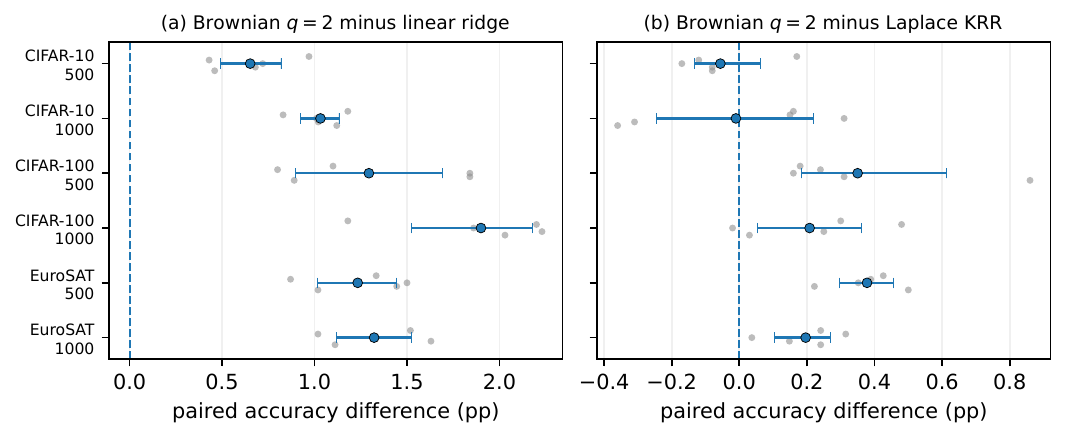}
\caption{Paired frozen-transfer differences for the L\'evy--Brownian head. Small points are the five seed-level paired differences; the large point is their mean with a descriptive paired percentile-bootstrap interval. All methods share the same frozen features and labeled splits. No multiplicity-corrected significance claim is made.}
\label{fig:main-frozen-transfer}
\end{figure}

\begin{table}[t]
\caption{Frozen ImageNet-pretrained ResNet-18 transfer with $n_{\rm lab}=1000$ labels. Values are mean $\pm$ standard deviation over five paired seeds; bold marks the largest observed mean and is not a significance claim. Ridge regularization is selected by training-set GCV. Test labels are never used for fitting or model selection and are accessed only for final evaluation.}
\label{tab:main-frozen-transfer}
\centering
\small
\begin{tabular}{lccc}
\toprule
Method & CIFAR-10 & CIFAR-100 & EuroSAT \\
\midrule
Linear ridge & $79.63\pm0.33$ & $45.41\pm0.75$ & $90.58\pm0.42$ \\
RBF KRR & $80.53\pm0.37$ & $46.55\pm0.56$ & $91.49\pm0.33$ \\
Laplace KRR & $\mathbf{80.68\pm0.45}$ & $47.10\pm0.43$ & $91.71\pm0.30$ \\
Brownian $q=1$ & $79.67\pm0.53$ & $45.68\pm0.54$ & $90.58\pm0.57$ \\
Brownian $q=2$ & $80.67\pm0.23$ & $\mathbf{47.31\pm0.48}$ & $\mathbf{91.90\pm0.30}$ \\
\bottomrule
\end{tabular}
\end{table}

\section{Limitations and Conclusion}
\label{sec:discussion}

Distance-induced Brownian geometry separates two sources of empirical capacity:
the activation mass of a fixed representation and the threshold-trace
complexity of selecting that representation on the same sample. For a fixed
representation, activation mass gives the sharp universal scale of the exact
dual complexity. Under same-sample selection, the representation supremum can
align threshold traces with the Rademacher signs, and the adaptive bound
captures this additional cost. Contraction and induced operator norms extend
the analysis to rectangular and rank-deficient ReLU representations. The resulting
architecture-level bound is worst-case, while the realized trace and envelope
bounds retain the geometry of the candidate family.

The experiments isolate these mechanisms. Exact constructions recover the
fixed-representation endpoints and selection rates, while the adapter study
exhibits the union gap at fixed activation mass. The frozen-transfer study
demonstrates the practical feasibility of Brownian heads, but neither measures
the adaptive selection penalty nor establishes predictive superiority.

The scope is limited to Brownian terminal geometries rather than arbitrary
heads or a complete theory of end-to-end learning. The matching results concern
worst-case empirical capacity, not minimax excess risk, and do not
simultaneously saturate all architectural constraints. Moreover, the empirical
study uses five seeds, one frozen architecture, and three datasets. Natural
directions are to localize the trace cost around a trained representation and
to analyze covariance-shaped or recursive Brownian geometries learned from the
same data.

\FloatBarrier
\label{page:main-end}
\clearpage
\section*{AI Use Statement}
Generative AI tools were used for literature discovery and comparison; feedback on problem framing, mathematical claims, proof writing and revision, experimental design, research-code implementation and editing, interpretation of experimental results, scientific figures, language editing, and \LaTeX{} formatting. The authors take full responsibility for the final content of this work.

\section*{Reproducibility Statement}
Complete proofs of all main results and the auxiliary results used by them are provided in Appendices~A and~B. Appendix~C documents the exact finite-sample constructions and experimental protocols, including preprocessing, candidate generation, seed counts, paired-split procedures, hyperparameter grids, complete transfer tables, and secondary diagnostics. The source package contains all \LaTeX{} and figure files required to compile the paper. Experimental code and machine-readable records, including individual random seeds and split/corruption hashes, are separate from this manuscript project.

\bibliographystyle{plainnat}
\bibliography{references}

@article{VapnikChervonenkis1971,
  author  = {Vapnik, V. N. and Chervonenkis, A. Ya.},
  title   = {On the Uniform Convergence of Relative Frequencies of Events
             to Their Probabilities},
  journal = {Theory of Probability and Its Applications},
  volume  = {16},
  number  = {2},
  pages   = {264--280},
  year    = {1971},
  doi     = {10.1137/1116025}
}

@inproceedings{Foster2019,
  author    = {Foster, Dylan J. and Greenberg, Spencer and Kale, Satyen
               and Luo, Haipeng and Mohri, Mehryar and Sridharan, Karthik},
  title     = {Hypothesis Set Stability and Generalization},
  booktitle = {Advances in Neural Information Processing Systems},
  volume    = {32},
  pages     = {6729--6739},
  year      = {2019},
  publisher = {Curran Associates, Inc.},
  url       = {https://proceedings.neurips.cc/paper/2019/hash/300d1539c3b6aa1793b5678b857732cf-Abstract.html}
}

@inproceedings{Sachs2023,
  author    = {Sachs, Sarah and van Erven, Tim and Hodgkinson, Liam
               and Khanna, Rajiv and {\c{S}}im{\c{s}}ekli, Umut},
  title     = {Generalization Guarantees via Algorithm-Dependent
               {R}ademacher Complexity},
  booktitle = {Proceedings of the Thirty-Sixth Conference on Learning Theory},
  series    = {Proceedings of Machine Learning Research},
  volume    = {195},
  pages     = {4863--4880},
  year      = {2023},
  publisher = {PMLR},
  url       = {https://proceedings.mlr.press/v195/sachs23a.html}
}

@article{Sejdinovic2013,
  author  = {Sejdinovic, Dino and Sriperumbudur, Bharath K. and Gretton, Arthur and Fukumizu, Kenji},
  title   = {Equivalence of Distance-Based and {RKHS}-Based Statistics in Hypothesis Testing},
  journal = {The Annals of Statistics},
  volume  = {41},
  number  = {5},
  pages   = {2263--2291},
  year    = {2013},
  doi     = {10.1214/13-AOS1140}
}

@article{YingCampbell2010,
  author  = {Ying, Yiming and Campbell, Colin},
  title   = {Rademacher Chaos Complexities for Learning the Kernel Problem},
  journal = {Neural Computation},
  volume  = {22},
  number  = {11},
  pages   = {2858--2886},
  year    = {2010},
  doi     = {10.1162/NECO_a_00028}
}

@inproceedings{CortesMohriRostamizadeh2010,
  author    = {Cortes, Corinna and Mohri, Mehryar and Rostamizadeh, Afshin},
  title     = {Generalization Bounds for Learning Kernels},
  booktitle = {Proceedings of the 27th International Conference on Machine Learning},
  pages     = {247--254},
  year      = {2010}
}

@inproceedings{ZhangZhang2023,
  author    = {Zhang, Yifan and Zhang, Min-Ling},
  title     = {Nearly-Tight Bounds for Deep Kernel Learning},
  booktitle = {Proceedings of the 40th International Conference on Machine Learning},
  series    = {Proceedings of Machine Learning Research},
  volume    = {202},
  pages     = {41861--41879},
  year      = {2023}
}

@article{Dupuis2024,
  author  = {Dupuis, Benjamin and Viallard, Paul and Deligiannidis, George and Simsekli, Umut},
  title   = {Uniform Generalization Bounds on Data-Dependent Hypothesis Sets via {PAC}-Bayesian Theory on Random Sets},
  journal = {Journal of Machine Learning Research},
  volume  = {25},
  number  = {409},
  pages   = {1--55},
  year    = {2024}
}

@misc{Hashimoto2026,
  author       = {Hashimoto, Yuka and Sonoda, Sho and Ishikawa, Isao and Ikeda, Masahiro},
  title        = {Why High-Rank Neural Networks Generalize?: An Algebraic Framework with {RKHS}s},
  howpublished = {arXiv preprint arXiv:2509.21895},
  year         = {2026}
}

@article{BartlettMendelson2002,
  author  = {Bartlett, Peter L. and Mendelson, Shahar},
  title   = {Rademacher and Gaussian Complexities: Risk Bounds and Structural Results},
  journal = {Journal of Machine Learning Research},
  volume  = {3},
  pages   = {463--482},
  year    = {2002}
}

@book{Mohri2018,
  author    = {Mohri, Mehryar and Rostamizadeh, Afshin and Talwalkar, Ameet},
  title     = {Foundations of Machine Learning},
  edition   = {2},
  publisher = {MIT Press},
  year      = {2018}
}

@book{LedouxTalagrand1991,
  author    = {Ledoux, Michel and Talagrand, Michel},
  title     = {Probability in Banach Spaces: Isoperimetry and Processes},
  publisher = {Springer},
  year      = {1991}
}

@article{BartlettHarvey2019,
  author  = {Bartlett, Peter L. and Harvey, Nick and Liaw, Christopher and Mehrabian, Abbas},
  title   = {Nearly-Tight {VC}-Dimension and Pseudodimension Bounds for Piecewise Linear Neural Networks},
  journal = {Journal of Machine Learning Research},
  volume  = {20},
  number  = {63},
  pages   = {1--17},
  year    = {2019}
}

@inproceedings{Hashimoto2024,
  author    = {Hashimoto, Yuka and Sonoda, Sho and Ishikawa, Isao and Nitanda, Atsushi and Suzuki, Taiji},
  title     = {Koopman-Based Generalization Bound: New Aspect for Full-Rank Weights},
  booktitle = {International Conference on Learning Representations},
  year      = {2024}
}

@misc{MohammadigohariBKL2026,
  author       = {Mohammadigohari, Mahdi and Di Fatta, Giuseppe and Nicosia, Giuseppe and Pardalos, Panos M.},
  title        = {Brownian Kernel Ladders},
  howpublished = {arXiv preprint arXiv:2606.15812},
  year         = {2026}
}

@article{FollainBach2025,
  author  = {Follain, Bertille and Bach, Francis},
  title   = {Enhanced Feature Learning via Regularisation: Integrating Neural Networks and Kernel Methods},
  journal = {Journal of Machine Learning Research},
  volume  = {26},
  number  = {172},
  pages   = {1--56},
  year    = {2025}
}

@inproceedings{Chen2023,
  author    = {Chen, Zhengdao},
  title     = {Multi-Layer Neural Networks as Trainable Ladders of Hilbert Spaces},
  booktitle = {Proceedings of the International Conference on Machine Learning},
  volume    = {202},
  pages     = {4294--4329},
  year      = {2023}
}

@article{Pedregosa2011,
  author  = {Pedregosa, Fabian and others},
  title   = {Scikit-Learn: Machine Learning in Python},
  journal = {Journal of Machine Learning Research},
  volume  = {12},
  pages   = {2825--2830},
  year    = {2011}
}

@inproceedings{Paszke2019,
  author    = {Paszke, Adam and others},
  title     = {{PyTorch}: An Imperative Style, High-Performance Deep Learning Library},
  booktitle = {Advances in Neural Information Processing Systems},
  volume    = {32},
  year      = {2019}
}

@article{Haagerup1981,
  author  = {Haagerup, Uffe},
  title   = {The Best Constants in the Khintchine Inequality},
  journal = {Studia Mathematica},
  volume  = {70},
  number  = {3},
  pages   = {231--283},
  year    = {1981}
}

@article{Haussler1995,
  author  = {Haussler, David},
  title   = {Sphere Packing Numbers for Subsets of the Boolean $n$-Cube with Bounded Vapnik--Chervonenkis Dimension},
  journal = {Journal of Combinatorial Theory, Series A},
  volume  = {69},
  number  = {2},
  pages   = {217--232},
  year    = {1995}
}

@misc{HeeringaSpekBrune2025,
  author       = {Heeringa, Tjeerd Jan and Spek, Len and Brune, Christoph},
  title        = {Deep Networks are Reproducing Kernel Chains},
  howpublished = {arXiv preprint arXiv:2501.03697},
  year         = {2025}
}

@article{LiuHuangGongYangLi2020,
  author  = {Liu, Fanghui and Huang, Xiaolin and Gong, Chen and Yang, Jie and Li, Li},
  title   = {Learning Data-Adaptive Non-Parametric Kernels},
  journal = {Journal of Machine Learning Research},
  volume  = {21},
  number  = {208},
  pages   = {1--39},
  year    = {2020}
}

@inproceedings{He2016DeepResidual,
  author    = {Kaiming He and Xiangyu Zhang and Shaoqing Ren and Jian Sun},
  title     = {Deep Residual Learning for Image Recognition},
  booktitle = {Proceedings of the IEEE Conference on Computer Vision and Pattern Recognition},
  pages     = {770--778},
  year      = {2016},
  doi       = {10.1109/CVPR.2016.90}
}

@techreport{Krizhevsky2009CIFAR,
  author      = {Alex Krizhevsky},
  title       = {Learning Multiple Layers of Features from Tiny Images},
  institution = {University of Toronto},
  year        = {2009}
}

@article{Helber2019EuroSAT,
  author  = {Patrick Helber and Benjamin Bischke and Andreas Dengel and Damian Borth},
  title   = {{EuroSAT}: A Novel Dataset and Deep Learning Benchmark for Land Use and Land Cover Classification},
  journal = {IEEE Journal of Selected Topics in Applied Earth Observations and Remote Sensing},
  volume  = {12},
  number  = {7},
  pages   = {2217--2226},
  year    = {2019},
  doi     = {10.1109/JSTARS.2019.2918242}
}

\clearpage
\appendix
\phantomsection
\section*{Appendix Map}
\addcontentsline{toc}{section}{Appendix map}

The appendix separates proofs required by the main paper from optional extensions. \Cref{app:main-proofs} is self-contained and follows the seven retained main results in order, including the pairwise/one-point propagation statement beside its proof. \Cref{app:auxiliary-brownian} collects additional terminal-head statements, each immediately followed by its proof. \Cref{app:experiments} supplies the complete experimental protocols and secondary figures. \Cref{sec:brownian-chains} is an independent recursive-chain and BKL extension and is not used by the main terminal-head theorems.

\begin{table}[H]
\centering
\small
\caption{Role of each appendix block.}
\label{tab:appendix-guide}
\begin{tabular}{@{}>{\raggedright\arraybackslash}p{0.42in}>{\raggedright\arraybackslash}p{2.20in}>{\raggedright\arraybackslash}p{2.48in}@{}}
\toprule
Block & Contents & Connection to the main paper \\
\midrule
A & Proofs and proof tools for canonical geometry, ReLU propagation, the fixed-head theorem, threshold chaos, same-sample bounds, matching trace dependence, and the explicit network corollary & Direct proof destination for every formal result retained in the nine-page paper \\
B & Gram, centered, spectral, covariance, root-mass, width, trace, representer, ridge, finite-realization, vector-valued, generalization, and rescaling results & Optional refinements and computational consequences for the terminal-head analysis \\
C & Exact theorem-aligned diagnostics, learned same-sample families, frozen-feature transfer, digits checks, and reproducibility details & Full evidence and protocols supporting \Cref{sec:experiments} \\
D & Fixed-measure BKL links, recursive Brownian chains, dyadic and affine propagation, and sample compression & Optional independent extension; not an ingredient of the main terminal-head theorems \\
\bottomrule
\end{tabular}
\end{table}

\begin{multicols}{2}
\small
\noindent\textbf{A. Main proofs.} Canonical geometry; Hilbert-valued Khintchine tool; fixed-head activation mass; rectangular ReLU propagation and the architecture-level bound; threshold chaos; uniform same-sample control; matching finite-trace and VC dependence; explicit ReLU specialization.

\columnbreak
\noindent\textbf{B--D. Supplementary material.} Additional terminal-head results; supplementary experiments and reproducibility; optional recursive Brownian-chain and BKL theory.
\end{multicols}

\section{Proofs and Proof Tools for the Main Results}
\label{app:main-proofs}\label{sec:proofs}
\makeatletter
\newcommand{\prooflabelalias}[1]{\ltx@label{{#1}}}
\makeatother
This section is self-contained and follows the order of the retained main-text results. It contains no forward reference to later appendices. All Hilbert spaces are real. Samples have $n\ge1$, head radii satisfy $B\ge0$, and representation families used in a union are nonempty. Because every trace family is a subset of the finite set $2^{[n]}$, every trace supremum is an ordinary maximum. For finite representation families all remaining suprema are measurable. For unrestricted families, every displayed expectation of a representation supremum means outer expectation; the inequalities are first proved for measurable finite subfamilies and pass to the stated outer quantities by monotonicity. The representation family is conditioned on the observed sample and is fixed before any auxiliary Rademacher or Gaussian variable is introduced.

\subsection{Proof of \texorpdfstring{\Cref{prop:main-isometry}}{canonical geometry}}
\label{proof:main-isometry}
\begin{proof}
We first verify that the two displayed kernels are positive definite, so their canonical RKHS feature maps are well defined.  For $s\in\R$, define the signed-threshold feature
\[
 \phi_{\rm B}(s)(\sigma,r)
 =\begin{cases}
   \1_{(0,s]}(r),&\sigma=+1,\ s>0,\\
   \1_{(0,-s]}(r),&\sigma=-1,\ s<0,\\
   0,&\text{otherwise},
  \end{cases}
\]
in $L^2(\{-1,+1\}\times\R_+,\#\otimes\dd r)$.  If $s$ and $t$ have opposite signs, the two features have disjoint sign coordinates and their inner product is zero; this equals
$(|s|+|t|-|s-t|)/2$.  If they have the same sign, their inner product is
$\min\{|s|,|t|\}$, which is again
$(|s|+|t|-|s-t|)/2$.  Thus
\begin{equation}
 \inner{\phi_{\rm B}(s)}{\phi_{\rm B}(t)}
 =\kb(s,t):=\frac{|s|+|t|-|s-t|}{2},
 \qquad \|\phi_{\rm B}(s)\|^2=|s|.
 \label{eq:main-proof-scalar-brownian-feature}
\end{equation}
In particular, $\kb$ is positive definite.

For $q=1$, the identity $\|u\|_1=\sum_{j=1}^m|u_j|$ gives
\[
 K_1(u,v)=\sum_{j=1}^m\kb(u_j,v_j).
\]
Hence $u\mapsto(\phi_{\rm B}(u_j))_{j=1}^m$ is a feature map in the Hilbert direct sum of $m$ copies of the scalar feature space, and $K_1$ is positive definite.

For $q=2$, let $G\sim\mathcal N(0,I_m)$ and put $c_0=\sqrt{\pi/2}$.  Since
$\E|\inner{G}{w}|=\sqrt{2/\pi}\|w\|_2$, applying the scalar Brownian formula to the three vectors $u,v,u-v$ gives
\begin{align}
 c_0\E_G\kb(\inner{G}{u},\inner{G}{v})
 &=\frac12\left(
 c_0\E|\inner{G}{u}|+c_0\E|\inner{G}{v}|
 -c_0\E|\inner{G}{u-v}|
 \right)\\
 &=\frac{\|u\|_2+\|v\|_2-\|u-v\|_2}{2}
 =K_2(u,v).
 \label{eq:main-proof-gaussian-brownian-feature}
\end{align}
The integrand is measurable and is bounded in absolute value by
$\min\{|\inner{G}{u}|,|\inner{G}{v}|\}$, which is integrable.  Therefore
$u\mapsto\sqrt{c_0}\,\phi_{\rm B}(\inner{\cdot}{u})$ is a feature map in the corresponding Gaussian $L^2$ space.  This proves positive definiteness of $K_2$, including singular and zero configurations.

Let $\Phi_q(u)=K_q(\cdot,u)$ be the canonical feature map in the RKHS of $K_q$.  The reproducing identity now yields
\[
 \|\Phi_q(u)\|_{\cH_q}^2=K_q(u,u)=\|u\|_q.
\]
Expanding the squared Hilbert distance and substituting the definition of $K_q$ gives
\begin{align*}
 \|\Phi_q(u)-\Phi_q(v)\|_{\cH_q}^2
 &=K_q(u,u)+K_q(v,v)-2K_q(u,v)\\
 &=\|u-v\|_q.
\end{align*}
Both identities also hold when one or both vectors are zero, because $K_q(0,\cdot)=0$.
\end{proof}

\subsection{A Hilbert-valued Khintchine proof tool}
\label{app:khintchine-tool}
The fixed-head theorem uses the following inequality for a finite vector family in an arbitrary real Hilbert space.

\begin{lemma}[Hilbert-valued Khintchine inequality]
\label{lem:main-khintchine}
For vectors $v_1,\ldots,v_n$ in a real Hilbert space,
\begin{align}
 \frac1{\sqrt2}\left(\sum_i\norm{v_i}^2\right)^{1/2}
 \le \E_\eps\norm{\sum_i\eps_i v_i}
 \le \left(\sum_i\norm{v_i}^2\right)^{1/2}.
 \label{eq:main-khintchine}
\end{align}
Both constants are optimal over all finite Hilbert-space configurations.
\end{lemma}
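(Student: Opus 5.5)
The upper bound follows from Jensen's inequality. Write $X=\norm{\sum_i\eps_i v_i}$. Since the function $t\mapsto t^{1/2}$ is concave, we have $\E_\eps X\le(\E_\eps X^2)^{1/2}$. Expanding the square gives $\E_\eps X^2=\sum_i\norm{v_i}^2+\sum_{i\ne j}\E[\eps_i\eps_j]\inner{v_i}{v_j}$, and the cross terms vanish by independence, so $\E_\eps X^2=\sum_i\norm{v_i}^2$. A single nonzero vector has no cancellation at all and attains equality, so the constant $1$ is optimal.

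For the lower bound we reduce to the scalar case with Haagerup's sharp constant. The finite family spans a finite-dimensional subspace, so we may take coordinates in an orthonormal basis $(e_k)$ and write $Y_k=\sum_i\eps_i\inner{v_i}{e_k}$. Then $X=(\sum_k Y_k^2)^{1/2}$. Minkowski's inequality in $L^1$ for the Euclidean norm goes the wrong way here, so we instead use the reverse estimate $\E\norm{Y}_2\ge(\sum_k(\E|Y_k|)^2)^{1/2}$. This holds because $\E\norm{Y}_2=\sup_{\norm{w}_2\le1}\dots$ is not directly usable, but the norm of expectations satisfies $\norm{(\E|Y_k|)_k}_2\le\E\norm{(|Y_k|)_k}_2$ by the convexity of the Euclidean norm (Jensen). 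Applying the scalar Khintchine inequality with the optimal constant $\E|\sum_i\eps_i a_i|\ge2^{-1/2}(\sum_i a_i^2)^{1/2}$ \citep{Haagerup1981} to each $Y_k$, we get
\[
\E X\ge\Bigl(\sum_k(\E|Y_k|)^2\Bigr)^{1/2}\ge2^{-1/2}\Bigl(\sum_k\sum_i\inner{v_i}{e_k}^2\Bigr)^{1/2}=2^{-1/2}\Bigl(\sum_i\norm{v_i}^2\Bigr)^{1/2}.
\]

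Optimality of $1/\sqrt2$ is witnessed by two identical vectors $v_1=v_2=v\ne0$. Then $\norm{\eps_1v+\eps_2v}$ equals $2\norm{v}$ with probability $1/2$ and $0$ otherwise, so $\E X=\norm{v}$. On the other hand $(\sum_i\norm{v_i}^2)^{1/2}=\sqrt2\norm{v}$, so the ratio is exactly $1/\sqrt2$.

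The main obstacle is the sharp scalar constant $1/\sqrt2$. If citing \citet{Haagerup1981} is not acceptable, the plan is to prove it via Szarek's argument, or through a direct induction or the known $L^1$ moment comparison. A cheaper route such as $\E|Y|\ge(\E Y^2)^{3/2}/(\E Y^4)^{1/2}$ only gives the constant $1/\sqrt3$, so the sharp statement needs the classical result.
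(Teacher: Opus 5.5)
Your proof is correct, and your lower-bound argument takes a different route from the paper's. You fix an orthonormal basis of the span and apply Haagerup's sharp scalar inequality to each coordinate sum $Y_k$. You then pass from coordinates to the norm with one mixed-norm inequality, $\bigl(\sum_k(\E\abs{Y_k})^2\bigr)^{1/2}\le\E\bigl(\sum_kY_k^2\bigr)^{1/2}$, which is Jensen for the Euclidean norm. That inequality is itself an instance of Minkowski's integral inequality, so your remark that Minkowski ``goes the wrong way'' is mistaken, though harmless.

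The paper instead averages over a Gaussian direction. It writes $\E_\eps\norm{\sum_i\eps_iv_i}=\sqrt{\pi/2}\,\E_G\E_\eps\abs{\sum_i\eps_i\inner{G}{v_i}}$ and applies Haagerup conditionally on $G$. It then needs a second sharp estimate, $\E_G\bigl(\sum_i\inner{G}{v_i}^2\bigr)^{1/2}\ge\sqrt{2/\pi}\bigl(\sum_i\norm{v_i}^2\bigr)^{1/2}$. This is proved by diagonalizing $\sum_iv_i\otimes v_i$ and using concavity of $p\mapsto\E\bigl(\sum_jp_jg_j^2\bigr)^{1/2}$ on the simplex, after which the factors $\sqrt{\pi/2}$ and $\sqrt{2/\pi}$ cancel exactly.

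Your route is shorter and needs only Haagerup plus one convexity step. The paper's route is basis-free and echoes the Gaussian-projection identity $\norm{w}_2=c_0\E\abs{\inner{G}{w}}$ used for the L\'evy--Brownian kernel. Its simplex-concavity step is essentially the Gaussian analogue of your Jensen step. Both proofs rest on the same sharp scalar constant, and your optimality witnesses (one nonzero vector; two equal vectors) coincide with the paper's.
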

\phantomsection\label{proof:main-khintchine}
\begin{proof}
Set $S_\eps=\sum_i\eps_i v_i$. If every $v_i$ is zero, all three quantities in \eqref{eq:main-khintchine} vanish, so assume below that $\sum_i\|v_i\|^2>0$.
For the upper bound, Jensen's inequality and independence of the signs give
\begin{align}
  \E_\eps\norm{S_\eps}_H
  &\le \left(\E_\eps\norm{S_\eps}_H^2\right)^{1/2}\\
  &=\left(
    \sum_{i,j}\E(\eps_i\eps_j)\inner{v_i}{v_j}_H
  \right)^{1/2}
  =\left(\sum_i\norm{v_i}_H^2\right)^{1/2}.
\end{align}

For the lower bound, it is enough to work in the finite-dimensional space
$H_0=\operatorname{span}\{v_1,\dots,v_n\}$.
Let $G$ be a standard Gaussian vector in $H_0$.
For every fixed $x\in H_0$,
$\E_G\abs{\inner{G}{x}_H}=\sqrt{2/\pi}\norm{x}_H$; therefore Tonelli's theorem yields
\begin{align}
  \E_\eps\norm{S_\eps}_H
  &=\sqrt{\frac\pi2}\,
    \E_G\E_\eps\abs{\sum_i\eps_i\inner{G}{v_i}_H}.
  \label{eq:hilbert-khin-gaussian-step}
\end{align}
The sharp scalar Khintchine inequality at exponent one~\citep{Haagerup1981} states that
\[
  \E_\eps\abs{\sum_i\eps_i a_i}
  \ge \frac1{\sqrt2}\left(\sum_i a_i^2\right)^{1/2}.
\]
Applying it conditionally on $G$ in \eqref{eq:hilbert-khin-gaussian-step} gives
\begin{align}
  \E_\eps\norm{S_\eps}_H
  \ge \frac{\sqrt\pi}{2}\,
  \E_G\left(\sum_i\inner{G}{v_i}_H^2\right)^{1/2}.
  \label{eq:hilbert-khin-after-scalar}
\end{align}
Let $A=\sum_i v_i\otimes v_i$ on $H_0$, and let
$\lambda_1,\dots,\lambda_r\ge0$ be its nonzero eigenvalues.
If $g_1,\dots,g_r$ are independent standard Gaussians, then
\[
  \sum_i\inner{G}{v_i}_H^2
  \stackrel{d}{=}\sum_{j=1}^r\lambda_j g_j^2,
  \qquad
  \sum_{j=1}^r\lambda_j=\operatorname{tr}(A)=\sum_i\norm{v_i}_H^2.
\]
When the trace is nonzero, put $p_j=\lambda_j/\operatorname{tr}(A)$.
The map
$p\mapsto\E(\sum_jp_jg_j^2)^{1/2}$ is concave on the probability simplex.
Since $p=\sum_jp_je_j$, concavity and
$\E\abs{g_j}=\sqrt{2/\pi}$ imply
\begin{align}
  \E_G\left(\sum_i\inner{G}{v_i}_H^2\right)^{1/2}
  &=\left(\sum_i\norm{v_i}_H^2\right)^{1/2}
    \E\left(\sum_jp_jg_j^2\right)^{1/2}\\
  &\ge
    \sqrt{\frac2\pi}
    \left(\sum_i\norm{v_i}_H^2\right)^{1/2}.
\end{align}
The zero-trace case was separated at the start.
Substitution into \eqref{eq:hilbert-khin-after-scalar} proves the lower bound.

For optimality of $1/\sqrt2$, take $n=2$ and $v_1=v_2\ne0$ in a one-dimensional Hilbert space.
Then $\E\norm{\eps_1v_1+\eps_2v_2}=\norm{v_1}$, whereas
$(\norm{v_1}^2+\norm{v_2}^2)^{1/2}=\sqrt2\norm{v_1}$.
For optimality of the upper constant, take one nonzero vector (or, more generally, pairwise orthogonal vectors); the norm of the signed sum is then deterministic and equal to the right-hand side of
\eqref{eq:main-khintchine}.
\end{proof}

\subsection{Proof of \texorpdfstring{\Cref{thm:main-fixed}}{the fixed-head theorem}}
\label{proof:main-fixed}
\begin{proof}
If $B=0$, the class contains only the zero function and all asserted quantities vanish.  We may therefore keep $B\ge0$ in the formulas without dividing by it.  Put $u_i=z(x_i)$ and $v_i=\Phi_q(u_i)\in\cH_q$.
For a fixed sign vector $\eps$, Hilbert-space duality gives
\begin{align}
 \sup_{\norm{a}_{\cH_q}\le B}
 \sum_{i=1}^n\eps_i\inner{a}{v_i}_{\cH_q}
 &=\sup_{\norm{a}\le B}\inner{a}{\sum_{i=1}^n\eps_i v_i}_{\cH_q}\\
 &=B\norm{\sum_{i=1}^n\eps_i v_i}_{\cH_q}.
\end{align}
Taking expectation and dividing by $n$ proves \eqref{eq:main-exact-rad}.
By \eqref{eq:main-isometry},
\[
 \sum_{i=1}^n\norm{v_i}_{\cH_q}^2
 =\sum_{i=1}^n\norm{u_i}_q
 =n\widehat M_{q,S}(z).
\]
Applying \Cref{lem:main-khintchine} to $(v_i)_{i=1}^n$ and multiplying by $B/n$ gives \eqref{eq:main-two-sided}. If $\widehat M_{q,S}(z)=0$, then every $u_i=0$, hence every $v_i=0$ and the identity gives zero complexity. For $n=1$ and positive mass, the signed norm is deterministic, so only the upper numerical endpoint is attained.

It remains to verify that both constants occur inside the Brownian canonical feature class. Fix a prescribed mass $M>0$ and $n\ge2$. For the lower endpoint, let
$u_1=u_2=(nM/2)e_1$ and $u_i=0$ for $i\ge3$. These states are nonnegative and their empirical activation mass is $M$. Since $\Phi_q(0)=0$,
\[
 \sum_i\eps_i\Phi_q(u_i)
 =(\eps_1+\eps_2)\Phi_q((nM/2)e_1).
\]
Now $\norm{\Phi_q((nM/2)e_1)}=\sqrt{nM/2}$ and
$\E|\eps_1+\eps_2|=1$, so the complexity is
$B\sqrt{M/(2n)}$.

For the upper endpoint, take $u_1=nMe_1$ and $u_i=0$ for $i\ge2$. The signed feature sum has deterministic norm $\sqrt{nM}$, hence the complexity is $B\sqrt{M/n}$. Both configurations have the required mass and attain the stated endpoints.
\end{proof}

\subsection{Rectangular ReLU propagation and the architecture-level bound}
\begin{proposition}[Propagation through rectangular ReLU layers]
\label{prop:main-propagation}
For every $x,x'$ and $q\in\{1,2\}$,
\begin{equation}
 \norm{\Phi_q(z_L(x))-\Phi_q(z_L(x'))}_{\cH_q}^2
 \le \left(\prod_{\ell=1}^L s_{\ell,q}\right)\norm{x-x'}_q.
 \label{eq:main-deep-propagation}
\end{equation}
Moreover,
\begin{equation}
 \norm{z_L(x)}_q
 \le \left(\prod_{\ell=1}^L s_{\ell,q}\right)\norm{x}_q
 +\sum_{r=1}^L\norm{b_r}_q\prod_{\ell=r+1}^L s_{\ell,q}.
 \label{eq:main-affine-envelope}
\end{equation}
\end{proposition}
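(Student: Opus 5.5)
The plan is to reduce both claims to elementary vector inequalities in $\R^{d_\ell}$, and only at the last step pass to the feature space through \Cref{prop:main-isometry}. By that proposition, $\norm{\Phi_q(z_L(x))-\Phi_q(z_L(x'))}_{\cH_q}^2=\norm{z_L(x)-z_L(x')}_q$. So \eqref{eq:main-deep-propagation} is equivalent to the Lipschitz estimate
\[
\norm{z_L(x)-z_L(x')}_q\le\Bigl(\prod_{\ell=1}^L s_{\ell,q}\Bigr)\norm{x-x'}_q .
\]
I will prove this by induction on the depth $L$, with $L=0$ trivial because the empty product equals one.

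For the inductive step I use the scalar inequality $\abs{\relu(\alpha)-\relu(\beta)}\le\abs{\alpha-\beta}$, which says that $\relu$ is $1$-Lipschitz. Applied coordinatewise, it gives $\norm{\relu(w)-\relu(w')}_q\le\norm{w-w'}_q$ for any $q\in\{1,2\}$, since both norms are monotone in the absolute values of the coordinates. Taking $w=W_\ell z_{\ell-1}(x)+b_\ell$ and $w'=W_\ell z_{\ell-1}(x')+b_\ell$, the biases cancel, and the definition of the induced norm yields
\[
\norm{z_\ell(x)-z_\ell(x')}_q\le s_{\ell,q}\norm{z_{\ell-1}(x)-z_{\ell-1}(x')}_q .
\]
Iterating this bound gives the product. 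The argument never inverts $W_\ell$, so rectangular or rank-deficient layers cause no difficulty, because $\norm{W_\ell}_{q\to q}$ is defined for any $d_\ell\times d_{\ell-1}$ matrix.

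For the one-point envelope \eqref{eq:main-affine-envelope}, I use $\relu(0)=0$, so that $\norm{\relu(w)}_q\le\norm{w}_q$ coordinatewise; this is the Lipschitz bound against the point $0$. The triangle inequality then gives the recursion
\[
R_\ell:=\norm{z_\ell(x)}_q\le s_{\ell,q}R_{\ell-1}+\norm{b_\ell}_q,\qquad R_0=\norm{x}_q .
\]
Unrolling this linear recursion by induction gives
\[
R_L\le\Bigl(\prod_{\ell=1}^L s_{\ell,q}\Bigr)\norm{x}_q+\sum_{r=1}^L\norm{b_r}_q\prod_{\ell=r+1}^L s_{\ell,q}.
\]
To check the induction step, multiplying the level-$(L-1)$ bound by $s_{L,q}$ extends each product up to $L$, and adding $\norm{b_L}_q$ supplies the $r=L$ term with its empty product.

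I do not expect a serious obstacle. The only point needing care is that the coordinatewise contraction transfers to both the $\ell_1$ and $\ell_2$ norms, which holds because each is a monotone (lattice) norm. The same argument would also hold for any $\ell_p$.
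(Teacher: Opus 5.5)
Your proposal is correct and follows essentially the same route as the paper. Both arguments use coordinatewise $1$-Lipschitzness of $\relu$ in $\ell_1$ and $\ell_2$, cancel the common bias, apply the induced norm, induct over layers, and convert to feature distance through \Cref{prop:main-isometry}; the one-point recursion $R_\ell\le s_{\ell,q}R_{\ell-1}+\norm{b_\ell}_q$ from $\relu(0)=0$ is likewise unrolled by induction.
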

\phantomsection\label{proof:main-propagation}
\begin{proof}
Fix $x,x'$ and a layer $\ell$.  For real numbers $r,s$,
$|\relu(r)-\relu(s)|\le|r-s|$; summing the coordinate inequalities shows that coordinatewise ReLU is nonexpansive in both $\ell_1$ and $\ell_2$.  With
$u=z_{\ell-1}(x)$ and $v=z_{\ell-1}(x')$, the common bias cancels before this contraction, and therefore
\begin{align}
 \|z_\ell(x)-z_\ell(x')\|_q
 &\le \|W_\ell(u-v)\|_q
 \le s_{\ell,q}\|u-v\|_q.
 \label{eq:one-layer-distance}
\end{align}
Induction on $r$ gives
\[
 \|z_r(x)-z_r(x')\|_q
 \le\left(\prod_{\ell=1}^r s_{\ell,q}\right)\|x-x'\|_q.
\]
At $r=L$, the second identity of \Cref{prop:main-isometry} turns the left side into the squared terminal feature distance and proves
\eqref{eq:main-deep-propagation}.  If a factor $s_{\ell,q}$ is zero, the same induction shows that all later realized differences are zero; no inverse or positive singular-value assumption is used.  For $L=0$, the assertion reduces to the canonical identity with the empty product equal to one.

For the one-point estimate, $\relu(0)=0$ and ReLU nonexpansiveness imply
$\|\relu(w)\|_q\le\|w\|_q$.  Hence
\begin{equation}
 r_\ell:=\|z_\ell(x)\|_q
 \le s_{\ell,q}r_{\ell-1}+\|b_\ell\|_q.
 \label{eq:one-layer-energy}
\end{equation}
We prove the claimed formula by induction.  For $r=1$, it is exactly
\eqref{eq:one-layer-energy}.  If it holds at $r-1$, then
\begin{align*}
 r_r
 &\le s_{r,q}\left[
   \left(\prod_{\ell=1}^{r-1}s_{\ell,q}\right)r_0
   +\sum_{j=1}^{r-1}\|b_j\|_q
     \prod_{\ell=j+1}^{r-1}s_{\ell,q}
 \right]+\|b_r\|_q\\
 &=\left(\prod_{\ell=1}^{r}s_{\ell,q}\right)r_0
 +\sum_{j=1}^{r}\|b_j\|_q
   \prod_{\ell=j+1}^{r}s_{\ell,q}.
\end{align*}
Taking $r=L$ and $r_0=\|x\|_q$ proves
\eqref{eq:main-affine-envelope}, including zero biases, zero matrices, and empty products.
\end{proof}

Averaging \eqref{eq:main-affine-envelope} over $x_1,\ldots,x_n$ bounds
$\widehat M_{q,S}(z_L)$ by the bracket in \eqref{eq:main-architecture-rad2}.
The upper inequality in \Cref{thm:main-fixed} then proves
\Cref{prop:main-propagation2}.

\subsection{Proof of \texorpdfstring{\Cref{lem:main-chaos}}{threshold chaos}}
\label{proof:main-chaos}
\begin{proof}
Because $\varnothing\ne\cT\subseteq2^{[n]}$, the family is finite and every supremum below is a maximum. If $\cT=\{\varnothing\}$, then $Q_T=0$ and $\rho_n(\cT)=0$, so the assertion holds. For $T\subseteq[n]$, put
$S_T(\eps)=\sum_{i\in T}\eps_i$.
The random vector
\[
  X(\eps)=(S_T(\eps))_{T\in\cT}
\]
is a well-defined Rademacher series in the finite-dimensional Banach space
$\ell_\infty(\cT)$, with
$\norm{X(\eps)}_\infty=\sup_{T\in\cT}\abs{S_T(\eps)}$.
Expanding the square gives, for each $T$,
\begin{align}
  S_T(\eps)^2
  =\abs T+2\sum_{1\le i<j\le n}
  \eps_i\eps_j\1_{\{i\in T\}}\1_{\{j\in T\}},
\end{align}
and hence
\begin{align}
  Q_T(\eps)=\frac{S_T(\eps)^2-\abs T}{2n}.
  \label{eq:chaos-square-identity}
\end{align}
Therefore
\begin{align}
  \E_\eps\sup_{T\in\cT}\abs{Q_T(\eps)}
  &\le\frac1{2n}
  \left(
    \E_\eps\sup_{T\in\cT}S_T(\eps)^2
    +\sup_{T\in\cT}\abs T
  \right)\notag\\
  &=\frac1{2n}
  \left(
    \E_\eps\norm{X(\eps)}_\infty^2
    +\sup_{T\in\cT}\abs T
  \right).
  \label{eq:chaos-before-kk}
\end{align}
For every fixed $T$,
$\E_\eps S_T(\eps)^2=\abs T$.
Thus
\[
  \sup_{T\in\cT}\abs T
  =\sup_{T\in\cT}\E S_T^2
  \le\E\sup_{T\in\cT}S_T^2
  =\E\norm{X}_\infty^2.
\]
Substituting this into \eqref{eq:chaos-before-kk} yields
\begin{align}
  \E_\eps\sup_{T\in\cT}\abs{Q_T(\eps)}
  \le\frac1n\E_\eps\norm{X(\eps)}_\infty^2.
  \label{eq:chaos-second-moment}
\end{align}
The $p=1$, $r=2$ Kahane--Khintchine inequality for a finite Rademacher series in an arbitrary Banach space states that a universal $K_{1,2}<\infty$ satisfies~\citep{LedouxTalagrand1991}
\[
  \left(\E\norm{\sum_i\eps_i x_i}^2\right)^{1/2}
  \le K_{1,2}\E\norm{\sum_i\eps_i x_i}
\]
for every finite vector family $(x_i)$. Applying it in $\ell_\infty(\cT)$ to the vectors $x_i=(\1_{\{i\in T\}})_{T\in\cT}$ gives
\[
  \left(\E\norm{X}_\infty^2\right)^{1/2}
  \le K_{1,2}\E\norm{X}_\infty.
\]
By definition,
$\E\norm{X}_\infty=n\rho_n(\cT)$.
Squaring the Kahane--Khintchine estimate and inserting it into
\eqref{eq:chaos-second-moment} proves
\[
  \E\sup_T\abs{Q_T}
  \le K_{1,2}^2 n\rho_n(\cT)^2.
\]
\end{proof}

\subsection{Proof of \texorpdfstring{\Cref{thm:main-adaptive}}{uniform threshold control}}
\label{proof:main-adaptive}
\begin{proof}
The representation family is conditioned on the observed sample before $\eps$ and, in the projected argument, before $G$ are drawn.  The trace systems are nonempty: choosing a threshold above every realized coordinate gives the empty trace in the additive case, while $g=0$ gives the empty trace in the projected case.  They are finite because they are subsets of $2^{[n]}$.  If the relevant envelope is zero, all realized states are zero on the sample and both sides of the corresponding bound vanish; the calculations below cover this case without division.

\medskip\noindent\emph{Additive Brownian head.}
For $z\in\cZ$, abbreviate $z_i=z(x_i)$ and
$K_z(i,j)=K_1(z_i,z_j)$.
Because every $z_i$ is nonnegative, the scalar identity
$\kb(s,t)=\int_0^\infty\1_{\{s\ge r\}}\1_{\{t\ge r\}}\dd r$
on $\R_+$ and additivity of $K_1$ give
\begin{align}
  K_z(i,j)
  =\sum_{r=1}^m\int_0^\infty
  \1_{\{z_r(x_i)\ge t\}}
  \1_{\{z_r(x_j)\ge t\}}\dd t.
  \label{eq:q1-kernel-threshold}
\end{align}
Define
\[
  D_z(\eps)=\frac1n\sum_{1\le i<j\le n}
  \eps_i\eps_jK_z(i,j)
\]
and, for each $(z,r,t)$, let
$T_{z,r,t}=\{i:z_r(x_i)\ge t\}$.
Substituting \eqref{eq:q1-kernel-threshold} and using the definition of
$Q_T$ yields the exact identity
\begin{align}
  D_z(\eps)
  =\sum_{r=1}^m\int_0^\infty Q_{T_{z,r,t}}(\eps)\,\dd t.
  \label{eq:q1-D-layercake}
  \prooflabelalias{eq:main-layer-cake}
\end{align}
For fixed $z$ and $r$, the trace is empty whenever
$t>\max_i z_r(x_i)$.
Hence, for every sign vector,
\begin{align}
  \abs{D_z(\eps)}
  &\le\sum_{r=1}^m
  \max_i z_r(x_i)
  \sup_{T\in\cT_{1,S}(\cZ)}\abs{Q_T(\eps)}\\
  &\le\widehat A_{1,S}(\cZ)
  \sup_{T\in\cT_{1,S}(\cZ)}\abs{Q_T(\eps)}.
\end{align}
Taking first the supremum over $z$ and then expectation, and applying
\Cref{lem:main-chaos}, gives
\begin{align}
  \E_\eps\sup_{z\in\cZ}\abs{D_z(\eps)}
  \le C_{\mathrm{ch}}\widehat A_{1,S}(\cZ)
  n\rho_{1,S}(\cZ)^2.
  \label{eq:q1-chaos-bound}
\end{align}

For each $z$ and $\eps$, the RKHS norm expands as
\begin{align}
\begin{aligned}
  \norm{\sum_{i=1}^n\eps_i\Phi_{1,m}(z_i)}_{\cH_{1,m}}^2
  &=\sum_{i=1}^nK_1(z_i,z_i)
    +2\sum_{i<j}\eps_i\eps_jK_1(z_i,z_j)\\
  &=\sum_{i=1}^n\norm{z_i}_1+2nD_z(\eps).
\end{aligned}
  \label{eq:q1-feature-norm-expansion}
  \prooflabelalias{eq:main-feature-expansion}
\end{align}
Hilbert duality followed by Jensen's inequality now gives
\begin{align}
  \Rad_S(\cF_{\cZ,B}^{(1)})
  &=\frac Bn\E_\eps\sup_{z\in\cZ}
    \norm{\sum_i\eps_i\Phi_{1,m}(z_i)}\\
  &\le\frac Bn
  \left[
    \E_\eps\sup_{z\in\cZ}
    \norm{\sum_i\eps_i\Phi_{1,m}(z_i)}^2
  \right]^{1/2}\\
  &\le B\left[
    \frac{\widehat M^*_{1,S}(\cZ)}{n}
    +\frac2n\E_\eps\sup_{z\in\cZ}\abs{D_z(\eps)}
  \right]^{1/2}.
\end{align}
Substitution of \eqref{eq:q1-chaos-bound} proves the first inequality in
\eqref{eq:main-q1-adaptive} with $C_1=2C_{\mathrm{ch}}$.

It remains to absorb the diagonal term into the threshold term.
Let
\[
  k_*=\max_{T\in\cT_{1,S}(\cZ)}\abs T.
\]
Choose $T_*$ attaining this maximum.
The sharp scalar Khintchine lower bound gives
\begin{align}
  n\rho_{1,S}(\cZ)
  \ge\E_\eps\abs{\sum_{i\in T_*}\eps_i}
  \ge\sqrt{\frac{k_*}{2}},
  \qquad\text{so}\qquad
  k_*\le2n^2\rho_{1,S}(\cZ)^2.
  \label{eq:q1-kstar-rho}
\end{align}
For any fixed $z$, the layer-cake formula gives
\begin{align}
  \frac1n\sum_{i=1}^n\norm{z(x_i)}_1
  &=\frac1n\sum_{r=1}^m\int_0^\infty
    \abs{T_{z,r,t}}\,\dd t\\
  &\le\frac{k_*}{n}\sum_{r=1}^m\max_i z_r(x_i)
  \le\frac{k_*}{n}\widehat A_{1,S}(\cZ).
\end{align}
Taking the supremum over $z$ and using \eqref{eq:q1-kstar-rho} yields
\begin{align}
  \frac{\widehat M^*_{1,S}(\cZ)}{n}
  \le2\widehat A_{1,S}(\cZ)\rho_{1,S}(\cZ)^2.
  \label{eq:q1-diagonal-absorption}
\end{align}
Combining this with the first inequality proves the second one, for example with
$C_1'=\sqrt{2+2C_{\mathrm{ch}}}$.

\medskip\noindent\emph{L\'evy--Brownian head.}
Fix $z\in\cZ$, write $z_i=z(x_i)$, and define
\[
  D_z(\eps)=\frac1n\sum_{1\le i<j\le n}
  \eps_i\eps_jK_2(z_i,z_j).
\]
For $G\in\R^m$, let
$a_i(G,z)=\inner{G}{z_i}$ and define the two projected threshold traces
\[
  T_{G,z,t}^{+}=\{i:a_i(G,z)\ge t\},
  \qquad
  T_{G,z,t}^{-}=\{i:-a_i(G,z)\ge t\},
  \qquad t>0.
\]
Both traces belong to $\cT_{2,S}(\cZ)$.
For every threshold trace $T\subseteq[n]$,
$\abs{Q_T(\eps)}\le n^{-1}\binom{\abs T}{2}\le(n-1)/2$.
Consequently, for fixed $G,z,\eps$,
\begin{align}
  &\int_0^\infty
  \left(
    \abs{Q_{T_{G,z,t}^{+}}(\eps)}
    +\abs{Q_{T_{G,z,t}^{-}}(\eps)}
  \right)\dd t\\
  &\qquad\le(n-1)\max_{i\in[n]}\abs{\inner{G}{z_i}}.
\end{align}
The right-hand side is Gaussian integrable because the sample is finite.
For every $s,t\in\R$, the signed scalar identity
\[
 \kb(s,t)=\int_0^\infty\left(
 \mathbf1_{\{s\ge r\}}\mathbf1_{\{t\ge r\}}+
 \mathbf1_{\{s\le-r\}}\mathbf1_{\{t\le-r\}}
 \right)\dd r
\]
holds as follows. If $st\le0$, both products vanish for every $r>0$ and $\kb(s,t)=0$. If $s,t>0$, the first product integrates to $\min\{s,t\}$ and the second vanishes; if $s,t<0$, the second integrates to $\min\{|s|,|t|\}$ and the first vanishes. Zero endpoints are included in the first case. In addition, the Gaussian projection calculation in \eqref{eq:main-proof-gaussian-brownian-feature} gives
\[
 K_2(u,v)=c_0\E_G\left[\kb(\langle G,u\rangle,\langle G,v\rangle)\right].
\]
These are the Gaussian projection and signed-threshold representations of the L\'evy--Brownian kernel. Together with the preceding integrability estimate, they justify Fubini's theorem and give
\begin{align}
  D_z(\eps)
  =c_0\E_G\int_0^\infty
  \left(
    Q_{T_{G,z,t}^{+}}(\eps)
    +Q_{T_{G,z,t}^{-}}(\eps)
  \right)\dd t.
  \label{eq:q2-D-projection}
\end{align}
For fixed $G,z$, the positive integral vanishes above
$\max_i(a_i)_+$ and the negative integral vanishes above
$\max_i(-a_i)_+$.
Therefore, for every $\eps$,
\begin{align}
  \sup_{z\in\cZ}\abs{D_z(\eps)}
  &\le c_0\E_G\sup_{z\in\cZ}
  \int_0^\infty
  \left(
    \abs{Q_{T_{G,z,t}^{+}}(\eps)}
    +\abs{Q_{T_{G,z,t}^{-}}(\eps)}
  \right)\dd t\\
  &\le 2c_0\E_G\sup_{z\in\cZ,\,i\in[n]}
  \abs{\inner{G}{z(x_i)}}
  \sup_{T\in\cT_{2,S}(\cZ)}\abs{Q_T(\eps)}\\
  &=2\widehat W_{2,S}(\cZ)
  \sup_{T\in\cT_{2,S}(\cZ)}\abs{Q_T(\eps)}.
\end{align}
Taking expectation in $\eps$ and applying
\Cref{lem:main-chaos} gives
\begin{align}
  \E_\eps\sup_{z\in\cZ}\abs{D_z(\eps)}
  \le2C_{\mathrm{ch}}\widehat W_{2,S}(\cZ)
  n\rho_{2,S}(\cZ)^2.
  \label{eq:q2-chaos-bound}
\end{align}

As in \eqref{eq:q1-feature-norm-expansion}, the diagonal identity
$K_2(z_i,z_i)=\norm{z_i}_2$ gives
\[
  \norm{\sum_i\eps_i\Phi_{2,m}(z_i)}_{\cH_{2,m}}^2
  =\sum_i\norm{z_i}_2+2nD_z(\eps).
\]
Hilbert duality and Jensen's inequality therefore imply
\begin{align}
  \Rad_S(\cF_{\cZ,B}^{(2)})
  &\le B\left[
    \frac{\widehat M^*_{2,S}(\cZ)}{n}
    +\frac2n\E_\eps\sup_{z\in\cZ}\abs{D_z(\eps)}
  \right]^{1/2}\\
  &\le B\left[
    \frac{\widehat M^*_{2,S}(\cZ)}{n}
    +4C_{\mathrm{ch}}\widehat W_{2,S}(\cZ)
    \rho_{2,S}(\cZ)^2
  \right]^{1/2},
\end{align}
which proves the first inequality in \eqref{eq:main-q2-adaptive} with
$C_2=4C_{\mathrm{ch}}$.

To absorb the diagonal term, let
$k_*=\max_{T\in\cT_{2,S}(\cZ)}\abs T$.
Choose $T_*\in\cT_{2,S}(\cZ)$ with $|T_*|=k_*$. The sharp scalar Khintchine inequality gives
\[
 n\rho_{2,S}(\cZ)
 \ge\E_\eps\left|\sum_{i\in T_*}\eps_i\right|
 \ge\sqrt{k_*/2}.
\]
Squaring and rearranging yields
\begin{align}
  k_*\le2n^2\rho_{2,S}(\cZ)^2.
  \label{eq:q2-kstar-rho}
\end{align}
The Gaussian identity
$\norm{v}_2=c_0\E_G\abs{\inner{G}{v}}$ gives
\begin{align}
  \widehat M^*_{2,S}(\cZ)
  &=c_0\sup_{z\in\cZ}\E_G
  \frac1n\sum_{i=1}^n\abs{a_i(G,z)}.
\end{align}
For fixed $G,z$, the signed layer-cake formula yields
\begin{align}
  \sum_{i=1}^n\abs{a_i(G,z)}
  &=\int_0^\infty\abs{T_{G,z,t}^{+}}\,\dd t
    +\int_0^\infty\abs{T_{G,z,t}^{-}}\,\dd t\\
  &\le k_*\left(
    \max_i(a_i(G,z))_+
    +\max_i(-a_i(G,z))_+
  \right)\\
  &\le2k_*\max_i\abs{a_i(G,z)}.
\end{align}
Taking the supremum over $z$ outside the Gaussian expectation gives
\begin{align}
  \widehat M^*_{2,S}(\cZ)
  \le\frac{2k_*}{n}\widehat W_{2,S}(\cZ).
  \label{eq:q2-mass-width}
\end{align}
Combining \eqref{eq:q2-kstar-rho} and \eqref{eq:q2-mass-width},
\[
  \frac{\widehat M^*_{2,S}(\cZ)}{n}
  \le4\widehat W_{2,S}(\cZ)\rho_{2,S}(\cZ)^2.
\]
This proves the second inequality in \eqref{eq:main-q2-adaptive}, for example with
$C_2'=2\sqrt{1+C_{\mathrm{ch}}}$.
\end{proof}

\subsection{Proof of \texorpdfstring{\Cref{thm:main-matching}}{matching trace dependence}}
\label{proof:main-matching}
\begin{proof}
All logarithms in this proof may be taken in any fixed base, because changing the base only changes universal constants.

\medskip\noindent\emph{Cardinality upper bound.}
Let $\cT\ne\varnothing$ be a trace family and write
$v_T=(\1_{\{i\in T\}})_{i=1}^n$.  The class
$\{v_T,-v_T:T\in\cT\}$ has cardinality at most $2|\cT|$ and every vector has Euclidean norm at most $\sqrt n$.  The finite-class exponential-moment argument (Massart's lemma) gives
\[
 \E_\eta\max_{T\in\cT}|\inner{\eta}{v_T}|
 \le\sqrt{2n\log(2|\cT|)}.
\]
Indeed, for $\lambda>0$,
$\E e^{\lambda\inner{\eta}{v_T}}\le e^{\lambda^2\|v_T\|_2^2/2}\le e^{\lambda^2n/2}$; applying the log-sum-exp bound to the $2|\cT|$ signed vectors and optimizing at
$\lambda=\sqrt{2\log(2|\cT|)/n}$ gives the display.  Division by $n$, together with the deterministic bound by one, yields
\begin{equation}
 \rho_n(\cT)\le
 \min\left\{1,\sqrt{\frac{2\log(2|\cT|)}{n}}\right\}.
 \label{eq:matching-proof-card-upper}
\end{equation}
When $|\cT|\le N$ and $N\ge2$, $\log(2N)\le2\log N$ after a universal adjustment of the logarithm base.  Applying the absorbed form of \Cref{thm:main-adaptive} with $\widehat A_{1,S}(\cZ)\le A$ for $q=1$, or $\widehat W_{2,S}(\cZ)\le A$ for $q=2$, proves the cardinality upper direction.

\medskip\noindent\emph{VC upper bound.}
Every nonempty realized trace system in the theorem contains $\varnothing$.  For a trace class of VC dimension at most $V$, Haussler's packing theorem in the empirical symmetric-difference metric
$d_n(T,T')=(|T\triangle T'|/n)^{1/2}$ gives universal constants $c_1,c_2$ such that
\[
 \log N(u,\cT,d_n)\le c_1V\log(c_2/u),\qquad 0<u\le1.
\]
The empirical Dudley bound for the centered Rademacher process indexed by the indicator vectors then gives
\[
 \frac1n\E_\eta\sup_{T\in\cT}\sum_{i\in T}\eta_i
 \le\frac{c_3}{\sqrt n}\int_0^1
       \sqrt{V\log(c_2/u)}\,\dd u
 \le c_4\sqrt{V/n}.
\]
The integral is finite because the substitution $u=e^{-s}$ produces an exponentially weighted square-root moment.  Since $\varnothing\in\cT$, the one-sided supremum is nonnegative; symmetry of $\eta$ bounds the absolute supremum by twice its expectation.  Combining this estimate with the deterministic cap gives
\begin{equation}
 \rho_n(\cT)\le C_{\rm VC}\min\{1,\sqrt{V/n}\}.
 \label{eq:matching-proof-vc-upper}
\end{equation}
This is the Haussler--Dudley route recorded in
\citet{Haussler1995,BartlettMendelson2002}.  Substitution into
\Cref{thm:main-adaptive} proves the VC upper direction.

\medskip\noindent\emph{Exact scalar Brownian reduction.}
Let $\varnothing\ne\cT\subseteq2^{[n]}$ and prescribe sample values
$z_T(x_i)=A\1_{\{i\in T\}}$.  The output dimension is one and the values are nonnegative.  For $q=1$ and $q=2$ the scalar kernels coincide with $\kb$, while
$\Phi_q(0)=0$ and $\|\Phi_q(A)\|=\sqrt A$.  For every sign vector, Hilbert duality therefore gives
\begin{align}
 \sup_{T\in\cT}\sup_{\|a\|\le B}
 \frac1n\sum_{i=1}^n\eps_i
 \inner{a}{\Phi_q(z_T(x_i))}
 &=\frac{B\sqrt A}{n}
   \sup_{T\in\cT}\left|\sum_{i\in T}\eps_i\right|.
\end{align}
Taking expectation yields
\begin{equation}
 \Rad_S\!\left(\bigcup_{T\in\cT}\cF_{z_T,B}^{(q)}\right)
 =B\sqrt A\,\rho_n(\cT).
 \label{eq:matching-scalar-realization}
\end{equation}
If $\varnothing\in\cT$ and at least one member is nonempty, the induced positive-threshold trace family is exactly $\cT$: thresholds in $(0,A]$ recover $T$ and thresholds above $A$ give $\varnothing$.  In the projected scalar case the sign and projection scalar recover the same alternatives.  The additive envelope is $A$.  The projected envelope is also $A$, because
$c_0\E|G|=1$.  Thus the lower construction belongs to precisely the admissible class used in the upper theorem.

\medskip\noindent\emph{Finite ReLU realization on pairwise-distinct inputs.}
Assume first $n\ge2$.  The set of $w\in\R^d$ for which
$\inner{w}{x_i-x_j}=0$ for some $i\ne j$ is a finite union of proper hyperplanes, hence does not cover $\R^d$.  Choose $w$ outside it and relabel the distinct knots
$\tau_1<\cdots<\tau_n$, where $\tau_r=\inner{w}{x_{\pi(r)}}$.
For a fixed $T$, put $y_r=A\1_{\{\pi(r)\in T\}}$, define
$s_0=s_n=0$, and for $1\le r<n$ set
$s_r=(y_{r+1}-y_r)/(\tau_{r+1}-\tau_r)$.  Then
\begin{equation}
 g_T(t)=y_1+\sum_{r=1}^n(s_r-s_{r-1})(t-\tau_r)_+
 \label{eq:matching-explicit-hinge}
\end{equation}
interpolates every $(\tau_r,y_r)$, is constant outside the extreme knots, and is affine between consecutive knots.  Because each segment joins two values in $[0,A]$, it satisfies $0\le g_T(t)\le A$ for every $t\in\R$.  The right side is an affine combination of $n$ ReLU hinge units and an output bias.  Appending a scalar terminal ReLU leaves it unchanged globally, so
$x\mapsto\relu(g_T(\inner{w}{x}))$ is a finite ReLU representation with one hinge layer, a scalar terminal ReLU, and the required globally nonnegative sample values.  For $n=1$, the same conclusion follows from the constant scalar representation $z_T\equiv0$ or $z_T\equiv A$.  No network parameter depends on the auxiliary Rademacher signs; it may depend on the observed sample and on the trace index $T$, as permitted by the theorem.

\medskip\noindent\emph{Finite-cardinality lower bound.}
Fix $2\le N\le2^n$ and set $k=\lfloor\log_2N\rfloor$.  Then
$1\le k\le n$.  Put $b=\lfloor n/k\rfloor$; since $n/k\ge1$,
\begin{equation}
 b\ge \frac{n}{2k}.
 \label{eq:matching-block-size}
\end{equation}
Choose $k$ pairwise disjoint blocks $B_1,\ldots,B_k\subseteq[n]$, each of cardinality $b$, and set
\[
 \cT_{\rm blk}=\left\{\bigcup_{r\in J}B_r:J\subseteq[k]\right\}.
\]
Different choices of $J$ give different unions, so
$|\cT_{\rm blk}|=2^k\le N$, and $\varnothing\in\cT_{\rm blk}$.  Write
$S_r=\sum_{i\in B_r}\eps_i$.  Selecting exactly those blocks for which $S_r>0$ gives
\[
 \sup_{T\in\cT_{\rm blk}}\sum_{i\in T}\eps_i
 =\sum_{r=1}^k(S_r)_+.
\]
By symmetry, $\E(S_r)_+=\tfrac12\E|S_r|$, and the sharp scalar Khintchine lower bound gives
$\E|S_r|\ge\sqrt{b/2}$.  Consequently,
\begin{equation}
 \rho_n(\cT_{\rm blk})
 \ge\frac{k\sqrt b}{2\sqrt2\,n}
 \ge \frac14\sqrt{k/n}.
 \label{eq:block-rho-lower}
\end{equation}
For $x=\log_2N\ge1$, $\lfloor x\rfloor\ge x/2$; hence
$k\ge(\log_2N)/2$.  The right side of
\eqref{eq:block-rho-lower} is therefore at least a universal constant times
$\min\{1,\sqrt{\log N/n}\}$.  The scalar ReLU family just constructed has trace cardinality $2^k\le N$, envelope exactly $A$, and complexity
$B\sqrt A\rho_n(\cT_{\rm blk})$, proving the cardinality lower direction, including the saturated regime $k$ of order $n$.

\medskip\noindent\emph{VC lower bound.}
Repeat the block construction with $k=V$; then $b=\lfloor n/V\rfloor\ge1$.  Choosing one representative point from each block shows that all $2^V$ label patterns on those representatives are realized, so the VC dimension is at least $V$.  Since the class has cardinality $2^V$, its VC dimension is at most $\log_2(2^V)=V$.  It is therefore exactly $V$.  The same calculation gives
$\rho_n(\cT_{\rm blk})\ge c\min\{1,\sqrt{V/n}\}$.  Equation
\eqref{eq:matching-scalar-realization} and the finite globally nonnegative ReLU realization prove the VC lower direction.

The upper and lower constants used in the two directions are universal but are not equal.  Thus $\asymp$ in the theorem means matching universal-constant order, not exact equality of constants.
\end{proof}

\subsection{Proof of \texorpdfstring{\Cref{cor:main-explicit}}{the ReLU-family specialization}}
\label{proof:main-explicit}
\begin{proof}
Applying \eqref{eq:main-affine-envelope} with the uniform layer bounds and then taking the supremum over $\theta\in\Theta$ and $i\in[n]$ gives
\begin{equation}
 \sup_{\theta,i}\|z_L^\theta(x_i)\|_q\le R_{\Theta,q}.
 \label{eq:relu-uniform-radius-proof}
\end{equation}
The final states are nonnegative because the last representation layer is a coordinatewise ReLU.

For $q=1$, every coordinate is nonnegative and no coordinate exceeds the $\ell_1$ norm.  Hence
\[
 \sum_{j=1}^m\max_i z_{L,j}^\theta(x_i)
 \le mR_{\Theta,1}.
\]
Also, for each $j$, $\max_i z_{L,j}^\theta(x_i)\le\sum_i z_{L,j}^\theta(x_i)$, so
\[
 \sum_{j=1}^m\max_i z_{L,j}^\theta(x_i)
 \le\sum_{i=1}^n\|z_L^\theta(x_i)\|_1
 \le nR_{\Theta,1}.
\]
Taking the smaller bound and then the supremum over $\theta$ yields
\begin{equation}
 \widehat A_{1,S}(\cZ_\Theta)
 \le\min\{m,n\}R_{\Theta,1}.
 \label{eq:q1-relu-envelope-proof}
\end{equation}

For $q=2$, suppose all realized states lie in a common subspace $E$ of dimension $r$.  For $G\sim\mathcal N(0,I_m)$,
\[
 \sup_{\theta,i}|\inner{G}{z_L^\theta(x_i)}|
 =\sup_{\theta,i}|\inner{P_EG}{z_L^\theta(x_i)}|
 \le R_{\Theta,2}\|P_EG\|_2.
\]
Jensen's inequality and $\E\|P_EG\|_2^2=r$ imply
$\E\|P_EG\|_2\le\sqrt r$.  Therefore
\begin{equation}
 \widehat W_{2,S}(\cZ_\Theta)
 \le c_0R_{\Theta,2}\sqrt r.
 \label{eq:q2-relu-width-proof}
\end{equation}
Taking $E=\R^m$ gives the ambient-width estimate with $r=m$.  The same argument covers $r=0$, in which case every realized state is zero.

It remains to justify one VC bound for both trace systems.  Adjoin to the fixed representation graph the scalar affine output
\[
 x\longmapsto \inner{g}{z_L^\theta(x)}-t,
 \qquad g\in\R^m,\quad t\in\R.
\]
The augmented graph has
$W_\Theta=P+m+1$ trainable weights and biases and longest path at most $L+1$.  Because each noninput unit has a counted bias, its number of computation units is at most a universal multiple of $W_\Theta$.  Every nonoutput activation is ReLU, a two-piece degree-one function, and the new output is the identity.  The main upper-bound theorem of
\citet{BartlettHarvey2019} applies to a fixed directed acyclic computation graph of this form and gives
\[
 \operatorname{VCdim}\left\{
 x\mapsto\1_{\{\inner{g}{z_L^\theta(x)}-t\ge0\}}:
 (\theta,g,t)
 \right\}
 \le C_{\rm net}W_\Theta(L+1)\log(eW_\Theta),
\]
after enlarging the universal constant to include the finitely many small-unit cases.  The additive coordinate-threshold class is a subclass obtained by restricting $g$ to coordinate vectors.  The projected trace class is a subclass as well: the sign $s$ is absorbed into $g$, and allowing every real $t$ only enlarges the class relative to $t>0$.  Thus the displayed $V_\Theta$ bounds both realized trace systems with the exact sample, threshold, coordinate, and projection quantifiers used in the corollary.

The VC trace estimate proved in the preceding theorem gives
$\rho_{q,S}(\cZ_\Theta)\le C\min\{1,\sqrt{V_\Theta/n}\}$.  Substituting this estimate and
\eqref{eq:q1-relu-envelope-proof} into the $q=1$ line of
\Cref{thm:main-adaptive} yields
\[
 \Rad_S(\cF_{\cZ_\Theta,B}^{(1)})
 \le C B\sqrt{\min\{m,n\}R_{\Theta,1}}
       \min\{1,\sqrt{V_\Theta/n}\}.
\]
Substitution of \eqref{eq:q2-relu-width-proof} yields
\[
 \Rad_S(\cF_{\cZ_\Theta,B}^{(2)})
 \le C B\sqrt{R_{\Theta,2}}\,r^{1/4}
       \min\{1,\sqrt{V_\Theta/n}\},
\]
where the fixed factor $c_0^{1/2}$ is absorbed into the universal constant.  Taking $r=m$ gives the stated ambient-width term.

Finally, \eqref{eq:relu-uniform-radius-proof} and
\Cref{prop:main-isometry} give
$\|\Phi_q(z_L^\theta(x_i))\|\le\sqrt{R_{\Theta,q}}$.  For every sign vector,
\[
 \sup_\theta\left\|\sum_{i=1}^n\eps_i
 \Phi_q(z_L^\theta(x_i))\right\|
 \le\sum_{i=1}^n\sup_\theta
 \|\Phi_q(z_L^\theta(x_i))\|
 \le n\sqrt{R_{\Theta,q}}.
\]
Hilbert duality therefore gives the deterministic cap
$\Rad_S\le B\sqrt{R_{\Theta,q}}$, including $R_{\Theta,q}=0$.  Taking the minimum of this cap and the corresponding VC estimate proves
\eqref{eq:main-explicit-q1}--\eqref{eq:main-explicit-q2} and the common-rank replacement.
\end{proof}

\section{Additional Terminal-Head Results}
\label{app:auxiliary-brownian}

This section collects supporting terminal-head refinements, not additional headline contributions. Fixed-representation Hilbert duality and the empirical-process tools used below are standard and are adapted here. Each statement is followed immediately by its proof; topics are Brownian identities, covariance, same-sample tools, finite realization, and consequences.

\subsection{Scalar Brownian identities and kernel construction}

The scalar Brownian kernel on $\R$ is
\begin{equation}
  \kb(s,t)=\frac{\abs{s}+\abs{t}-\abs{s-t}}{2}.
  \label{eq:scalar-brownian}
\end{equation}
It has the signed-threshold representation
\begin{equation}
  \kb(s,t)
  =\int_0^\infty
  \left(
  \1_{\{s\ge r\}}\1_{\{t\ge r\}}
  +
  \1_{\{s\le-r\}}\1_{\{t\le-r\}}
  \right)\dd r.
  \label{eq:signed-threshold-rep}
\end{equation}
On $\R_+$ this reduces to $\kb(s,t)=\min\{s,t\}$, with canonical feature $r\mapsto\1_{[0,s]}(r)$ in $L^2(\R_+)$.  For the additive geometry,
\begin{equation}
  K_1(u,v)=\sum_{j=1}^m\kb(u_j,v_j).
  \label{eq:additive-brownian}
\end{equation}

\begin{proposition}[Positive definiteness and Gaussian projection]
\label{prop:pd-kernels}
The kernels $K_1$ and $K_2$ in \eqref{eq:main-brownian-kernel} are positive definite.  If $G\sim\mathcal N(0,I_m)$ and $c_0=\sqrt{\pi/2}$, then
\begin{equation}
  K_2(u,v)=c_0\,\E_G\!\left[\kb(\inner{G}{u},\inner{G}{v})\right].
  \label{eq:gaussian-projection-kernel}
\end{equation}
\end{proposition}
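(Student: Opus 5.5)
The plan is to realize both kernels explicitly through feature maps into $L^2$ spaces, since an inner-product representation immediately gives positive definiteness: any Gram matrix $(\inner{\psi(u_i)}{\psi(u_j)})_{i,j}$ is positive semidefinite. I will first treat the scalar kernel $\kb$ through the signed-threshold representation \eqref{eq:signed-threshold-rep}, then handle $K_1$ by additivity and $K_2$ by a Gaussian mixture over one-dimensional projections.

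For the scalar kernel, define $\phi_{\rm B}(s)(\sigma,r)=\1_{\{\sigma s\ge r\}}$ on $\{-1,+1\}\times(0,\infty)$ with counting measure times Lebesgue measure. Then $\inner{\phi_{\rm B}(s)}{\phi_{\rm B}(t)}$ is exactly the right side of \eqref{eq:signed-threshold-rep}. The proof of \eqref{eq:signed-threshold-rep} is a three-case check:
\begin{itemize}
\item if $st\le0$, both integrands vanish for $r>0$, and $\kb(s,t)=0$;
\item if $s,t>0$, the integral equals $\min\{s,t\}=(s+t-|s-t|)/2$;
\item if $s,t<0$, the same computation applies after reflecting signs.
\end{itemize}
Hence $\kb$ is positive definite. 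Because $\norm{u}_1=\sum_j|u_j|$ and $\norm{u-v}_1=\sum_j|u_j-v_j|$, we get $K_1(u,v)=\sum_j\kb(u_j,v_j)$. The direct-sum feature map $u\mapsto(\phi_{\rm B}(u_j))_{j\le m}$ then shows that $K_1$ is positive definite.

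For $K_2$, the key step is the projection identity \eqref{eq:gaussian-projection-kernel}. For $G\sim\mathcal N(0,I_m)$ and fixed $w$, the variable $\inner{G}{w}$ is $\mathcal N(0,\norm{w}_2^2)$, so $\E|\inner{G}{w}|=\sqrt{2/\pi}\,\norm{w}_2$, i.e.\ $\norm{w}_2=c_0\E|\inner{G}{w}|$. I will apply this to $w=u$, $w=v$ and $w=u-v$, and use linearity $\inner{G}{u-v}=\inner{G}{u}-\inner{G}{v}$. Taking expectations in the definition of $\kb$, linearity of expectation gives $c_0\E_G\kb(\inner{G}{u},\inner{G}{v})=(\norm{u}_2+\norm{v}_2-\norm{u-v}_2)/2=K_2(u,v)$. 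Integrability is immediate, since $|\kb(s,t)|\le\min\{|s|,|t|\}$.

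Positive definiteness of $K_2$ then follows in either of two ways:
\begin{itemize}
\item directly, because for coefficients $c_i$ we have $\sum_{i,j}c_ic_jK_2(u_i,u_j)=c_0\E_G\sum_{i,j}c_ic_j\kb(\inner{G}{u_i},\inner{G}{u_j})\ge0$ pointwise in $G$;
\item equivalently, via the feature map $u\mapsto\sqrt{c_0}\,\phi_{\rm B}(\inner{\cdot}{u})$ into $L^2(\gamma_m;L^2(\{\pm1\}\times\R_+))$, where $\gamma_m$ is the standard Gaussian measure.
\end{itemize}

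No step is a real obstacle. The only care points are the sign cases in \eqref{eq:signed-threshold-rep}, including zero endpoints where the threshold indicators vanish for $r>0$, and justifying the exchange of expectation and summation. Both are routine given the bound $\min\{|s|,|t|\}$ and the finiteness of every sample configuration.
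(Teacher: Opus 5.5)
Your proposal is correct and follows essentially the same route as the paper: the signed-threshold feature map in $L^2(\{\pm1\}\times\R_+)$ for $\kb$, the Hilbert direct sum for $K_1$, and, for $K_2$, pointwise-in-$G$ nonnegativity of the pulled-back Gram form together with $\E\abs{\inner{G}{w}}=\sqrt{2/\pi}\,\norm{w}_2$ applied to $w=u,v,u-v$. The only difference is cosmetic: your integrability bound $\kb(s,t)\le\min\{\abs{s},\abs{t}\}$ is slightly sharper than the paper's $(\abs{s}+\abs{t})/2$, but it serves the same purpose.
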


\begin{proof}
Let
$\Omega:=\{+,-\}\times\R_+$ with counting measure on $\{+,-\}$ and
Lebesgue measure on $\R_+$, and define
\[
 \phi(s)(+,r):=\1_{\{s\ge r\}},
 \qquad
 \phi(s)(-,r):=\1_{\{s\le-r\}}.
\]
For every $s\in\R$, $\phi(s)\in L^2(\Omega)$ and
$\|\phi(s)\|_{L^2(\Omega)}^2=|s|$.  The signed-threshold identity
\eqref{eq:signed-threshold-rep} gives
\[
 \inner{\phi(s)}{\phi(t)}_{L^2(\Omega)}=\kb(s,t).
\]
Thus $\kb$ is positive definite, including at $s=0$ or $t=0$.
Taking the Hilbert direct sum of $m$ copies of $L^2(\Omega)$ shows that
$K_1(u,v)=\sum_{j=1}^m\kb(u_j,v_j)$ is positive definite.

For fixed $g\in\R^m$, the map
$(u,v)\mapsto\kb(\inner{g}{u},\inner{g}{v})$ is the pullback of the
positive-definite kernel $\kb$ through the linear functional
$u\mapsto\inner{g}{u}$.  Moreover,
\[
  0\le \kb(a,b)\le \frac{|a|+|b|}{2},
\]
so the Gaussian integrand is integrable for every fixed $u,v$.  Therefore a
finite Gram quadratic form may be passed through the expectation, and the
expectation is again positive definite.  Finally, for every $w\in\R^m$,
$\inner{G}{w}\sim\mathcal N(0,\|w\|_2^2)$ and hence
$\E|\inner{G}{w}|=\sqrt{2/\pi}\|w\|_2$.  Applying this identity to
$w=u,v,u-v$ gives
\[
 c_0\E\kb(\inner{G}{u},\inner{G}{v})
 =\frac{\norm{u}_2+\norm{v}_2-\norm{u-v}_2}{2}=K_2(u,v).
\]
This proves both positive definiteness and
\eqref{eq:gaussian-projection-kernel}.
\end{proof}

For $u\in\R_+^m$, the additive canonical map has the explicit realization
\begin{equation}
  \Phi_1(u)(j,t)=\1_{[0,u_j]}(t)
  \quad\text{in}\quad
  L^2([m]\times\R_+,\#\otimes\dd t).
  \label{eq:explicit-additive-feature}
\end{equation}
Consequently, an additive Brownian readout can be written as
\begin{equation}
  h_a(u)=\inner{a}{\Phi_1(u)}
  =\sum_{j=1}^m\int_0^{u_j}a_j(t)\dd t
  =\sum_{j=1}^m g_j(u_j),
  \label{eq:additive-head}
\end{equation}
where $g_j(0)=0$, $g_j'=a_j$, and $\norm{h_a}_{\cH_1}^2=\sum_j\norm{g_j'}_{L^2}^2$.

\subsection{Fixed-head refinements and localization}
\label{app:fixed-head-refinements}
The next statements refine \Cref{thm:main-fixed} when the realized Gram matrix, empirical centering, or spectral localization is available.

\begin{proposition}[Gram-sensitive lower endpoint]
\label{prop:main-gram}
Let $K_S=(\langle v_i,v_j\rangle)$ with $v_i=\Phi_q(z(x_i))$, and set
$T_S=\tr K_S$ and $\Delta_S^2=2\sum_{i\ne j}(K_S)_{ij}^2$.
If $T_S>0$, then
\begin{equation}
 \Rad_S(\cF_{z,B}^{(q)})
 \ge \frac{B\sqrt{T_S}}n
 \max\left\{\frac1{\sqrt2},\frac1{\sqrt{1+\Delta_S^2/T_S^2}}\right\}.
 \label{eq:main-gram-lower}
\end{equation}
When $K_S$ is diagonal, the second term equals the trace upper bound.
\end{proposition}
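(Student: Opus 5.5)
The plan is to start from the exact dual identity \eqref{eq:main-exact-rad} of \Cref{thm:main-fixed} and bound $\E_\eps X$ from below, where
\[
X:=\norm{\sum_{i=1}^n\eps_i v_i}_{\cH_q}=\sqrt{\eps^\top K_S\eps}.
\]
The first branch of the maximum is already the lower endpoint $B\sqrt{T_S}/(\sqrt2\,n)$ in \eqref{eq:main-two-sided}. This uses $T_S=\sum_i\norm{v_i}^2=n\widehat M_{q,S}(z)$, which follows from \Cref{prop:main-isometry}, together with the lower half of \Cref{lem:main-khintchine}. So only the second branch needs a new argument.

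For the second branch I will use a moment comparison: control $\E X$ through $\E X^2$ and $\E X^4$. Hölder's inequality with exponents $3/2$ and $3$, applied to $X^2=X^{2/3}X^{4/3}$, gives
\[
\E X^2\le(\E X)^{2/3}(\E X^4)^{1/3},
\qquad\text{hence}\qquad
\E X\ge\frac{(\E X^2)^{3/2}}{(\E X^4)^{1/2}}.
\]
The next step is to compute both moments exactly. Write
\[
X^2=T_S+Y,\qquad Y:=\sum_{i\ne j}\eps_i\eps_j(K_S)_{ij}.
\]
Then $\E Y=0$, so $\E X^2=T_S$. Expanding $\E Y^2$, the products $\eps_i\eps_j\eps_k\eps_l$ with $i\ne j$ and $k\ne l$ have nonzero expectation only when $\{k,l\}=\{i,j\}$. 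Using the symmetry of $K_S$, this gives $\E Y^2=2\sum_{i\ne j}(K_S)_{ij}^2=\Delta_S^2$. Therefore
\[
\E X^4=T_S^2+2T_S\,\E Y+\E Y^2=T_S^2+\Delta_S^2.
\]

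Substituting the two moments yields
\[
\E X\ge\frac{T_S^{3/2}}{\sqrt{T_S^2+\Delta_S^2}}=\frac{\sqrt{T_S}}{\sqrt{1+\Delta_S^2/T_S^2}},
\]
where the hypothesis $T_S>0$ justifies dividing by $T_S$. Multiplying by $B/n$ and taking the maximum with the Khintchine branch proves \eqref{eq:main-gram-lower}. If $K_S$ is diagonal, then $\Delta_S=0$ and the bound becomes $B\sqrt{T_S}/n=B\sqrt{\widehat M_{q,S}(z)/n}$, which is exactly the upper endpoint in \eqref{eq:main-two-sided}.

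The only step that needs care is the fourth-moment computation, specifically checking that the pairing count produces exactly the factor $2$ built into $\Delta_S^2$. The remaining steps are immediate.
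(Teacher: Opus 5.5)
Your proposal is correct and follows the paper's proof: the paper also takes the first branch from the universal Khintchine endpoint and gets the second from the $L^1$--$L^2$--$L^4$ interpolation $\E X\ge(\E X^2)^{3/2}/(\E X^4)^{1/2}$. Your Hölder step with exponents $3/2$ and $3$ is that same interpolation, and your moments $\E X^2=T_S$ and $\E X^4=T_S^2+\Delta_S^2$ match the paper's exactly.
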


\phantomsection\label{proof:main-gram}
\begin{proof}
Write
$X_\eps=\norm{\sum_i\eps_i v_i}^2=\eps^\top\mathbf K_S\eps$ and
$Y_\eps=\sqrt{X_\eps}$.
Independence of the signs gives
\begin{equation}
  \E X_\eps=T_S,
  \qquad
  \E X_\eps^2=T_S^2+2\sum_{i\ne j}(\mathbf K_S)_{ij}^2
  =T_S^2+\Delta_S^2.
  \label{eq:gram-quadratic-moments}
\end{equation}
Interpolation between $L^1$, $L^2$, and $L^4$ gives
$\norm{Y_\eps}_{L^2}
\le\norm{Y_\eps}_{L^1}^{1/3}\norm{Y_\eps}_{L^4}^{2/3}$, hence
\[
  \E Y_\eps
  \ge\frac{(\E Y_\eps^2)^{3/2}}{(\E Y_\eps^4)^{1/2}}
  =\frac{T_S^{3/2}}{(T_S^2+\Delta_S^2)^{1/2}}.
\]
Multiply by $B/n$ and combine with the universal lower bound in
\eqref{eq:main-two-sided}.
If $\mathbf K_S$ is diagonal, then $\Delta_S=0$ and this lower bound equals
the Jensen upper bound.
The assumption $T_S>0$ is used only to divide by $\sqrt{T_S}$; when
$T_S=0$, positive semidefiniteness forces every $v_i=0$ and the complexity is
zero.
\end{proof}

\begin{corollary}[Centered fixed-head complexity]
\label{cor:main-centered}
Let $v_i=\Phi_q(z(x_i))$ and $\overline v=n^{-1}\sum_i v_i$.  For the centered class
$\cF_{z,B}^{(q),\circ}=\{x\mapsto\langle a,\Phi_q(z(x))-\overline v\rangle:\|a\|\le B\}$, define
\begin{equation}
 \widehat V_{q,S}^{\circ}(z)
 =\frac{1}{2n^2}\sum_{i,j=1}^n\norm{z(x_i)-z(x_j)}_q.
 \label{eq:main-centered-dispersion}
\end{equation}
Then
\begin{equation}
 \frac{B}{\sqrt2}\sqrt{\frac{\widehat V_{q,S}^{\circ}(z)}n}
 \le \Rad_S(\cF_{z,B}^{(q),\circ})
 \le B\sqrt{\frac{\widehat V_{q,S}^{\circ}(z)}n}.
 \label{eq:main-centered-bound}
\end{equation}
For a depth-$L$ extractor, $\widehat V_{q,S}^{\circ}(z_L)$ is at most
$\bigl(\prod_{\ell=1}^Ls_{\ell,q}\bigr)(2n^2)^{-1}\sum_{i,j}\|x_i-x_j\|_q$, with no bias term.
\end{corollary}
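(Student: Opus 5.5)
The plan is to reduce the centered class to the fixed-head machinery of \Cref{thm:main-fixed}, with the activation mass replaced by the empirical dispersion. First, for a fixed sign vector, Hilbert duality gives
\[
 \sup_{\|a\|\le B}\sum_i\eps_i\langle a,v_i-\overline v\rangle
 =B\Bigl\|\sum_i\eps_i(v_i-\overline v)\Bigr\|_{\cH_q}.
\]
Taking expectation therefore yields the exact identity
\[
 \Rad_S(\cF^{(q),\circ}_{z,B})=\frac Bn\,\E_\eps\Bigl\|\sum_i\eps_i w_i\Bigr\|,
 \qquad w_i:=v_i-\overline v .
\]
The centered vectors $w_i$ form a finite configuration in a real Hilbert space. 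Applying \Cref{lem:main-khintchine} to $(w_i)$ bounds the expectation between $2^{-1/2}$ and $1$ times $(\sum_i\|w_i\|^2)^{1/2}$. It then remains only to identify $\sum_i\|w_i\|^2$ with $n\widehat V^\circ_{q,S}(z)$, since multiplying by $B/n$ then gives exactly \eqref{eq:main-centered-bound}.

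For this identification, use the standard variance identity in Hilbert space:
\[
 \sum_i\|v_i-\overline v\|^2=\frac1{2n}\sum_{i,j}\|v_i-v_j\|^2 .
\]
To check it, expand both sides into $\sum_i\|v_i\|^2-n\|\overline v\|^2$. Then the second identity in \Cref{prop:main-isometry} converts each squared feature distance into $\|z(x_i)-z(x_j)\|_q$. The result is
\[
 \sum_i\|w_i\|^2=\frac1{2n}\sum_{i,j}\|z(x_i)-z(x_j)\|_q=n\,\widehat V^\circ_{q,S}(z),
\]
as required. Note that the centered features need not be canonical features of any point, but the lemma does not require that.

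For the depth-$L$ statement, apply the pairwise propagation bound \eqref{eq:one-layer-distance} inductively, as in the proof of \Cref{prop:main-propagation}. This gives
\[
 \|z_L(x_i)-z_L(x_j)\|_q\le\Bigl(\prod_\ell s_{\ell,q}\Bigr)\|x_i-x_j\|_q .
\]
The biases cancel in differences, so no bias term appears. Averaging over pairs gives the stated bound on $\widehat V^\circ_{q,S}(z_L)$.

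The only step needing care is the variance identity together with the degenerate case: if $\widehat V^\circ=0$, all $w_i=0$ and both sides vanish. Beyond that, no real obstacle is expected, since everything reduces to \Cref{lem:main-khintchine} and \Cref{prop:main-isometry}.
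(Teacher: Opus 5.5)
Your proposal is correct and follows the paper's proof essentially step for step. The route is the same: Hilbert duality for the centered features, then the Hilbert-space variance identity combined with the Brownian isometry to get $\sum_i\|v_i-\overline v\|^2=n\widehat V^{\circ}_{q,S}(z)$, then the Hilbert-valued Khintchine lemma for the two-sided bound, and finally bias-free pairwise ReLU propagation for the depth-$L$ envelope.
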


\phantomsection\label{proof:main-centered}
\begin{proof}
Let $w_i=\Phi_q(z(x_i))$ and $\overline w=n^{-1}\sum_iw_i$, and set $v_i=w_i-\overline w$. Hilbert-space duality gives
\[
  \Rad_S(\cF_{z,B}^{(q),\circ})
  =\frac Bn\E_\eps\norm{\sum_i\eps_i v_i}_{\cH_q}.
\]
For arbitrary vectors $w_1,\dots,w_n$ in a Hilbert space, expansion around their empirical mean gives
\begin{equation}
  \frac1n\sum_i\norm{w_i-\overline w}^2
  =\frac1{2n^2}\sum_{i,j}\norm{w_i-w_j}^2.
  \label{eq:hilbert-pairwise-variance}
\end{equation}
Applying this identity to the Brownian features and using \eqref{eq:main-isometry} yields
\[
  \frac1n\sum_i\norm{v_i}^2
  =\frac1{2n^2}\sum_{i,j}\norm{z(x_i)-z(x_j)}_q
  =\widehat V_{q,S}^{\circ}(z).
\]
The two inequalities in \eqref{eq:main-centered-bound} now follow from \Cref{lem:main-khintchine}.
Finally, \eqref{eq:main-deep-propagation} bounds every pairwise term for a depth-$L$ extractor; averaging over the ordered pairs gives the displayed bias-free product envelope.
\end{proof}

\begin{proposition}[Localized spectral complexity]
\label{prop:main-localized}
Let $v_i=\Phi_q(z(x_i))$, let $C_S=n^{-1}\sum_i v_i\otimes v_i$ have nonzero eigenvalues $(\lambda_j)_{j=1}^r$, and define
$\cF_{z,B}^{(q)}(\varrho)=\{f\in\cF_{z,B}^{(q)}:n^{-1}\sum_i f(x_i)^2\le\varrho^2\}$.
Then
\begin{equation}
 \Rad_S(\cF_{z,B}^{(q)}(\varrho))
 \le \frac1{\sqrt n}\inf_{t\ge0}
 \left[(B^2+t\varrho^2)\sum_{j=1}^r\frac{\lambda_j}{1+t\lambda_j}\right]^{1/2}.
 \label{eq:main-localized}
\end{equation}
In particular,
\[
 \Rad_S(\cF_{z,B}^{(q)}(\varrho))
 \le \min\left\{B\sqrt{\frac{\widehat M_{q,S}(z)}n},
 \left[\frac2n\sum_j\min\{\varrho^2,B^2\lambda_j\}\right]^{1/2}\right\}.
\]
\end{proposition}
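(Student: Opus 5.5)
The proof follows the standard Mendelson-type localization argument: an ellipsoidal constraint is dualized against a family of regularized operators.

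\emph{Reduction to a finite-dimensional space.} Put $v_i=\Phi_q(z(x_i))$ and $H_0=\operatorname{span}\{v_1,\dots,v_n\}$. Every $f\in\cF_{z,B}^{(q)}$ has the form $f(x_i)=\inner{a}{v_i}$. Replacing $a$ by its orthogonal projection $P_{H_0}a$ changes none of the sample values and does not increase $\norm{a}$. So we may assume $a\in H_0$, where $C_S$ is a finite-rank positive operator. The two constraints defining $\cF_{z,B}^{(q)}(\varrho)$ then read
\[
\inner{a}{a}\le B^2,\qquad \inner{a}{C_Sa}=\frac1n\sum_i\inner{a}{v_i}^2\le\varrho^2 .
\]

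\emph{Dualization for fixed $t\ge0$.} Let $M_t=I+tC_S$ on $H_0$. It is positive definite, and every admissible $a$ satisfies $\inner{a}{M_ta}\le B^2+t\varrho^2$. For a fixed sign vector, Cauchy--Schwarz in the $M_t$-geometry gives
\[
\sum_i\eps_i\inner{a}{v_i}
=\inner{M_t^{1/2}a}{M_t^{-1/2}\textstyle\sum_i\eps_iv_i}
\le\sqrt{B^2+t\varrho^2}\,\norm{M_t^{-1/2}\textstyle\sum_i\eps_iv_i}.
\]
Take the supremum over admissible $a$, then the expectation over $\eps$. Jensen's inequality and independence of the signs give
\[
\E_\eps\norm{M_t^{-1/2}\textstyle\sum_i\eps_iv_i}^2=\sum_i\inner{v_i}{M_t^{-1}v_i}=n\,\tr(M_t^{-1}C_S)=n\sum_{j=1}^r\frac{\lambda_j}{1+t\lambda_j}.
\]
Dividing by $n$ produces the bracket in \eqref{eq:main-localized} for each $t$. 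Taking the infimum over $t\ge0$ proves the main bound.

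\emph{The two specializations.} Choosing $t=0$ gives $B\,(\sum_j\lambda_j/n)^{1/2}$. Here $\sum_j\lambda_j=\tr C_S=n^{-1}\sum_i\norm{v_i}^2=\widehat M_{q,S}(z)$ by \Cref{prop:main-isometry}, which recovers the fixed-head upper bound. For the second branch with $\varrho>0$, choose $t=B^2/\varrho^2$. Then $B^2+t\varrho^2=2B^2$, and
\[
\frac{\lambda_j}{1+t\lambda_j}\le\min\{\lambda_j,1/t\}=\min\{\lambda_j,\varrho^2/B^2\},
\]
so the bracket is at most $2\sum_j\min\{B^2\lambda_j,\varrho^2\}$.

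\emph{Degenerate cases.} If $B=0$, or if $\varrho=0$, the class reduces to functions vanishing on the sample. The complexity is then zero and both bounds hold trivially. The only delicate point is keeping $M_t^{-1}$ and the trace well defined, and the restriction to $H_0$ handles that. I expect no genuine obstacle beyond choosing the right $t$ in the second specialization.
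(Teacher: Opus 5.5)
Your proof is correct and follows essentially the same route as the paper. You discard the component of $a$ orthogonal to the sample span, which the paper phrases as removing the $\ker C_S$ component; you then apply Cauchy--Schwarz in the $(I+tC_S)$-geometry, use Jensen to obtain $n\,\tr\{C_S(I+tC_S)^{-1}\}$, specialize at $t=0$ and $t=B^2/\varrho^2$, and treat the cases $B=0$ and $\varrho=0$ separately.
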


\phantomsection\label{proof:main-localized}
\begin{proof}
Put $g_\eps=\sum_{i=1}^n\eps_i v_i$.  The component of $a$ in
$\ker C_S$ is orthogonal to every $v_i$, so it changes neither the sample
predictions nor $\inner{a}{g_\eps}$ and can be discarded.  On
$(\ker C_S)^\perp$, membership in the localized class is exactly the pair of
constraints
\[
  \norm a^2\le B^2,
  \qquad
  \inner{a}{C_Sa}=\frac1n\sum_{i=1}^n\inner{a}{v_i}^2\le\varrho^2.
\]
For $t\ge0$, the operator $I+tC_S$ is strictly positive and invertible.  Its
Cauchy--Schwarz inequality gives, for every admissible $a$,
\begin{align*}
 \inner{a}{g_\eps}
 &=\inner{(I+tC_S)^{1/2}a}
 {(I+tC_S)^{-1/2}g_\eps}\\
 &\le
 \bigl(\norm a^2+t\inner{a}{C_Sa}\bigr)^{1/2}
 \inner{g_\eps}{(I+tC_S)^{-1}g_\eps}^{1/2}\\
 &\le
 (B^2+t\varrho^2)^{1/2}
 \inner{g_\eps}{(I+tC_S)^{-1}g_\eps}^{1/2}.
\end{align*}
After taking the supremum, expectation, and applying Jensen's inequality,
independence of the signs yields
\begin{align*}
 \frac1n\E_\eps\sup_a\inner{a}{g_\eps}
 &\le \frac{(B^2+t\varrho^2)^{1/2}}n
 \left[\E_\eps\inner{g_\eps}{(I+tC_S)^{-1}g_\eps}\right]^{1/2}\\
 &=\frac1{\sqrt n}
 \left[(B^2+t\varrho^2)
 \tr\{C_S(I+tC_S)^{-1}\}\right]^{1/2},
\end{align*}
because $\E(g_\eps\otimes g_\eps)=\sum_i v_i\otimes v_i=nC_S$.
Diagonalizing $C_S$ gives
$\tr\{C_S(I+tC_S)^{-1}\}=\sum_{j=1}^r
\lambda_j/(1+t\lambda_j)$, and taking the infimum over $t\ge0$ proves
\eqref{eq:main-localized}.

At $t=0$, the eigenvalue sum is $\tr C_S=\widehat M_{q,S}(z)$, which gives
the global trace bound.  If $B>0$ and $\varrho>0$, choose
$t=B^2/\varrho^2$.  Then $B^2+t\varrho^2=2B^2$ and, for each
$\lambda\ge0$,
\[
 \frac{\lambda}{1+(B^2/\varrho^2)\lambda}
 \le \min\left\{\lambda,\frac{\varrho^2}{B^2}\right\}.
\]
Substitution gives
$[2n^{-1}\sum_j\min\{B^2\lambda_j,\varrho^2\}]^{1/2}$.
If $B=0$, the class contains only zero.  If $\varrho=0$, every admissible
$a$ satisfies $C_S^{1/2}a=0$, so all sample predictions and the empirical
Rademacher complexity are zero.  If $r=0$, the same conclusion holds.  Thus
the displayed particular bounds cover all singular and endpoint cases.
\end{proof}

\subsection{Covariance-shaped terminal heads}
\label{app:covariance-heads}
For $\Sigma\succeq0$, write $\|u\|_\Sigma=\|\Sigma^{1/2}u\|_2$ and
$K_\Sigma(u,v)=\tfrac12(\|u\|_\Sigma+\|v\|_\Sigma-\|u-v\|_\Sigma)$.
The first proposition identifies this geometry as a Euclidean Brownian pullback and records the exact same-sample reduction; the subsequent results quantify perturbations and anisotropic refinements.

\begin{proposition}[Covariance pullback and same-sample reduction]
\label{prop:main-covariance}
If $G_\Sigma\sim\mathcal N(0,\Sigma)$ and $c_0=\sqrt{\pi/2}$, then
\begin{equation}
 K_\Sigma(u,v)=K_2(\Sigma^{1/2}u,\Sigma^{1/2}v)
 =c_0\,\E\,\kb(\langle G_\Sigma,u\rangle,\langle G_\Sigma,v\rangle).
 \label{eq:main-covariance}
\end{equation}
Hence $\|\Phi_\Sigma(u)\|^2=\|u\|_\Sigma$ and
$\|\Phi_\Sigma(u)-\Phi_\Sigma(v)\|^2=\|u-v\|_\Sigma$.
If $\Sigma$ is selected on the same sample from a family $\mathfrak S$, the resulting union of heads is isometric to the Euclidean $q=2$ class generated by
$\mathfrak S^{1/2}\cZ=\{x\mapsto\Sigma^{1/2}z(x)\}$, and its projected-threshold traces form a subset of those already generated by arbitrary linear projections of $\cZ$.
\end{proposition}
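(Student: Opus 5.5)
The plan is to reduce everything to the Euclidean Lévy--Brownian geometry via the linear map $u\mapsto\Sigma^{1/2}u$, and then read off each claim from \Cref{prop:main-isometry}, \Cref{prop:pd-kernels}, and the trace definitions \eqref{eq:main-trace-q2}.

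\emph{Step 1 (pullback identity).} By definition, $\|u\|_\Sigma=\|\Sigma^{1/2}u\|_2$. Substituting into $K_\Sigma$ gives
\[
K_\Sigma(u,v)=K_2(\Sigma^{1/2}u,\Sigma^{1/2}v)
\]
directly, using linearity $\Sigma^{1/2}(u-v)=\Sigma^{1/2}u-\Sigma^{1/2}v$. For the Gaussian form, apply \eqref{eq:gaussian-projection-kernel} to the vectors $\Sigma^{1/2}u$ and $\Sigma^{1/2}v$. With $G\sim\mathcal N(0,I_m)$ we have $\inner{G}{\Sigma^{1/2}u}=\inner{\Sigma^{1/2}G}{u}$, and $\Sigma^{1/2}G\overset{d}{=}G_\Sigma$ since $\Sigma^{1/2}$ is symmetric. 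Hence $K_\Sigma$ is a pullback of a positive-definite kernel and is itself positive definite, so $\Phi_\Sigma$ is well defined. Alternatively, compute $c_0\E|\inner{G_\Sigma}{w}|=\|w\|_\Sigma$ for $w=u,v,u-v$, exactly as in the proof of \Cref{prop:pd-kernels}.

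\emph{Step 2 (geometry).} The reproducing identity gives $\|\Phi_\Sigma(u)\|^2=K_\Sigma(u,u)=\|u\|_\Sigma$. Expanding the squared distance as in \Cref{prop:main-isometry} gives $\|\Phi_\Sigma(u)-\Phi_\Sigma(v)\|^2=\|u-v\|_\Sigma$.

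\emph{Step 3 (isometry of the union).} Let $\cH_\Sigma$ be the RKHS of $K_\Sigma$ and $\cH_2$ that of $K_2$. Since $K_\Sigma=K_2\circ(\Sigma^{1/2}\times\Sigma^{1/2})$, the map $\Phi_\Sigma(u)\mapsto\Phi_2(\Sigma^{1/2}u)$ preserves all inner products. It therefore extends to a linear isometry $J_\Sigma$ from $\cH_\Sigma$ onto the closed span of $\Phi_2(\operatorname{ran}\Sigma^{1/2})$ in $\cH_2$. For $\|a\|\le B$ in $\cH_\Sigma$,
\[
\inner{a}{\Phi_\Sigma(z(x))}=\inner{J_\Sigma a}{\Phi_2(\Sigma^{1/2}z(x))}.
\]
Conversely, for $b\in\cH_2$, the orthogonal projection $P$ onto that span does not increase the norm and leaves the values on $\Sigma^{1/2}z$ unchanged; thus $b$ produces the same function as $J_\Sigma^{-1}Pb$. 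Hence $\cF^{\Sigma}_{z,B}=\cF^{(2)}_{\Sigma^{1/2}z,B}$ as function classes. Taking the union over $\Sigma\in\mathfrak S$ and $z\in\cZ$ identifies the same-sample class with $\cF^{(2)}_{\mathfrak S^{1/2}\cZ,B}$. In particular, the Rademacher complexities coincide and \Cref{thm:main-adaptive} applies verbatim.

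\emph{Step 4 (traces).} A projected trace of the new family has the form
\[
\{i:s\inner{g}{\Sigma^{1/2}z(x_i)}\ge t\}=\{i:s\inner{\Sigma^{1/2}g}{z(x_i)}\ge t\},
\]
which is a projected trace of $\cZ$ with direction $\Sigma^{1/2}g\in\R^m$. Thus $\cT_{2,S}(\mathfrak S^{1/2}\cZ)\subseteq\cT_{2,S}(\cZ)$. This inclusion holds regardless of how large or data-dependent $\mathfrak S$ is, so covariance selection adds no trace complexity, only envelope.

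\emph{Main obstacle.} The delicate step is Step 3 when $\Sigma$ is singular: the pulled-back RKHS sees only $\operatorname{ran}\Sigma^{1/2}$. I would handle this with the projection argument above rather than claiming a surjective isometry onto all of $\cH_2$, and state the isometry at the level of function classes.
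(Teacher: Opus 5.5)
Your proposal is correct and follows the paper's proof essentially step for step. You obtain the pullback by substituting $\|u\|_\Sigma=\|\Sigma^{1/2}u\|_2$, the Gaussian form via $\Sigma^{1/2}G\overset{d}{=}G_\Sigma$ or, as the paper does, via $\E|\langle G_\Sigma,w\rangle|=\sqrt{2/\pi}\,\|w\|_\Sigma$ for $w=u,v,u-v$, the geometry by the reproducing-kernel expansion, and the trace inclusion from $\langle g,\Sigma^{1/2}z\rangle=\langle\Sigma^{1/2}g,z\rangle$. Your Step 3 is more careful than the paper, which only asserts that the pullback ``is an isometry between the corresponding head classes''; your isometry onto the closed span of $\Phi_2(\operatorname{ran}\Sigma^{1/2})$, combined with the projection argument, correctly makes this precise as an equality of function classes, including for singular $\Sigma$.
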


\phantomsection\label{proof:main-covariance}
\begin{proof}
The pullback identity in \eqref{eq:main-covariance} follows by inserting $\|u\|_\Sigma=\|\Sigma^{1/2}u\|_2$ into the definition of $K_\Sigma$. Positive definiteness is therefore inherited from $K_2$.
For every $w\in\R^m$, the scalar $\langle G_\Sigma,w\rangle$ is centered Gaussian with variance $w^\top\Sigma w$, and hence
\[
  \E\abs{\langle G_\Sigma,w\rangle}
  =\sqrt{2/\pi}\,\|w\|_\Sigma.
\]
Applying this identity to $w=u,v,u-v$ in the scalar Brownian kernel proves the Gaussian-mixture equality in \eqref{eq:main-covariance}. The canonical RKHS identities follow from
$\|\Phi_\Sigma(u)\|^2=K_\Sigma(u,u)$ and from expanding
$\|\Phi_\Sigma(u)-\Phi_\Sigma(v)\|^2$.

For same-sample covariance selection, replace each pair $(\Sigma,z)$ by the transformed representation $x\mapsto\Sigma^{1/2}z(x)$. The preceding pullback identity is an isometry between the corresponding head classes. Moreover,
$\langle g,\Sigma^{1/2}z(x)\rangle=\langle\Sigma^{1/2}g,z(x)\rangle$, so every threshold trace of the transformed class is already generated by an arbitrary linear projection of the original representation family.
\end{proof}

\begin{proposition}[Deterministic stability under covariance perturbations]
\label{prop:covariance-perturbation-stability}
Let $\Sigma,\Gamma\succeq0$ and set
$\eta:=\norm{\Sigma-\Gamma}_{\op}^{1/2}$.  Then, for all $r,u,v\in\R^m$,
\begin{align}
  \abs{\norm{r}_\Sigma-\norm{r}_\Gamma}
  &\le \eta\norm{r}_2,
  \label{eq:covariance-seminorm-stability}\\
  \abs{K_\Sigma(u,v)-K_\Gamma(u,v)}
  &\le\frac\eta2\bigl(\norm{u}_2+\norm{v}_2+\norm{u-v}_2\bigr).
  \label{eq:covariance-kernel-stability}
\end{align}
For a sample $S=(x_i)_{i=1}^n$ and a fixed representation $z$, write
\[
  \widehat M_{\Sigma,S}(z):=\frac1n\sum_{i=1}^n\norm{z(x_i)}_\Sigma.
\]
Then
\begin{equation}
  \abs{\widehat M_{\Sigma,S}(z)-\widehat M_{\Gamma,S}(z)}
  \le \eta\,\frac1n\sum_{i=1}^n\norm{z(x_i)}_2.
  \label{eq:covariance-mass-stability}
\end{equation}
If $\max_i\norm{z(x_i)}_2\le R$ and $\mathbf K_{\Sigma,S},\mathbf K_{\Gamma,S}$ are the two sample Gram matrices, then
\begin{equation}
  \norm{\frac1n(\mathbf K_{\Sigma,S}-\mathbf K_{\Gamma,S})}_{\op}
  \le2R\eta.
  \label{eq:covariance-gram-stability}
\end{equation}
Consequently, if the eigenvalues of the normalized Gram matrices are ordered
nonincreasingly and counted with multiplicity, each corresponding pair differs
by at most $2R\eta$.
\end{proposition}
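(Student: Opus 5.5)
The plan is to reduce everything to a scalar inequality for a single seminorm, then propagate it through the definition of $K_\Sigma$, averaging, and a row-sum bound on the Gram difference.

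\textbf{Seminorm step.} Fix $r\in\R^m$ and put $a=\norm{r}_\Sigma$ and $b=\norm{r}_\Gamma$, both nonnegative. I will use the elementary inequality $\abs{a-b}^2\le\abs{a-b}(a+b)=\abs{a^2-b^2}$, valid for $a,b\ge0$. Since $a^2-b^2=r^\top(\Sigma-\Gamma)r$, this gives
\[
\abs{a^2-b^2}\le\norm{\Sigma-\Gamma}_{\op}\norm{r}_2^2=\eta^2\norm{r}_2^2 .
\]
Taking square roots yields \eqref{eq:covariance-seminorm-stability}. This route avoids any operator-square-root continuity theorem of Powers--St{\o}rmer type, because only the quadratic forms are compared.

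\textbf{Kernel and mass steps.} By definition,
\[
K_\Sigma(u,v)-K_\Gamma(u,v)=\tfrac12\Bigl[(\norm{u}_\Sigma-\norm{u}_\Gamma)+(\norm{v}_\Sigma-\norm{v}_\Gamma)-(\norm{u-v}_\Sigma-\norm{u-v}_\Gamma)\Bigr].
\]
Applying the triangle inequality and \eqref{eq:covariance-seminorm-stability} to each of $u$, $v$ and $u-v$ gives \eqref{eq:covariance-kernel-stability}. For \eqref{eq:covariance-mass-stability}, apply \eqref{eq:covariance-seminorm-stability} with $r=z(x_i)$ and average over $i\in[n]$.

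\textbf{Gram step.} Let $\Delta=\mathbf K_{\Sigma,S}-\mathbf K_{\Gamma,S}$. Since $\max_i\norm{z(x_i)}_2\le R$, we have $\norm{z(x_i)-z(x_j)}_2\le2R$. Inserting this into \eqref{eq:covariance-kernel-stability} bounds every entry by
\[
\abs{\Delta_{ij}}\le\tfrac\eta2(R+R+2R)=2R\eta .
\]
$\Delta$ is symmetric, so its operator norm is at most its maximal absolute row sum (the Schur test, or equivalently $\norm{\Delta}_{\op}\le\sqrt{\norm{\Delta}_1\norm{\Delta}_\infty}$). This gives $\norm{\Delta}_{\op}\le n\cdot2R\eta$. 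Dividing by $n$ proves \eqref{eq:covariance-gram-stability}.

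The eigenvalue statement then follows from Weyl's inequality for the symmetric matrices $n^{-1}\mathbf K_{\Sigma,S}$ and $n^{-1}\mathbf K_{\Gamma,S}$. Their nonincreasingly ordered eigenvalues differ by at most the operator norm of their difference.

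\textbf{Main obstacle.} The only nontrivial point is \eqref{eq:covariance-seminorm-stability} with the square-root rate $\eta=\norm{\Sigma-\Gamma}_{\op}^{1/2}$. The elementary $\abs{a-b}^2\le\abs{a^2-b^2}$ device handles it. The remaining care is bookkeeping: using the entrywise-to-operator bound with the factor $n$ that exactly cancels the normalization $1/n$.
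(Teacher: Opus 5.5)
Your proof is correct and follows the paper's argument essentially step by step. Your inequality $\abs{a-b}^2\le\abs{a^2-b^2}$ is the paper's $\abs{\sqrt a-\sqrt b}\le\sqrt{\abs{a-b}}$ applied to the two quadratic forms, and your kernel, mass, and Weyl steps coincide with the paper's; the only cosmetic difference is that the paper passes from the entrywise bound $2R\eta$ to the operator norm via the Frobenius norm, $\norm{\Delta}_{\op}\le\norm{\Delta}_F\le 2nR\eta$, where you use the maximal row sum, and both give the same constant.
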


\begin{proof}
For $a,b\ge0$, $\abs{\sqrt a-\sqrt b}\le\sqrt{\abs{a-b}}$; hence
\[
  \abs{\norm{r}_\Sigma-\norm{r}_\Gamma}
  \le\sqrt{\abs{r^\top(\Sigma-\Gamma)r}}
  \le\eta\norm{r}_2.
\]
The kernel and mass bounds follow by applying this inequality to the defining terms.  Under the radius assumption, every Gram entry changes by at most $2R\eta$, so
$\norm{\mathbf K_{\Sigma,S}-\mathbf K_{\Gamma,S}}_{\op}
\le\norm{\mathbf K_{\Sigma,S}-\mathbf K_{\Gamma,S}}_F\le2nR\eta$.
After division by $n$, Weyl's eigenvalue perturbation inequality for real
symmetric matrices gives the final eigenvalue assertion, including repeated
and zero eigenvalues.
\end{proof}

The explicit norm formula also yields a general finite-dimensional expansion for covariance deformation.

\begin{proposition}[Second-order covariance-deformation expansion]
\label{prop:general-covariance-expansion}
Let $\Sigma,\Gamma\succ0$, put $H:=\Gamma-\Sigma$, and define
\[
  h:=\norm{H}_{\op},
  \qquad
  a_*:=\min\{\lambda_{\min}(\Sigma),\lambda_{\min}(\Gamma)\}>0.
\]
For $r\in\R^m$, set
\begin{equation}
  q_{\Sigma,H}(r)
  :=
  \begin{cases}
    \displaystyle\frac{r^\top Hr}{\norm{r}_\Sigma},&r\ne0,\\[1ex]
    0,&r=0,
  \end{cases}
  \qquad
  \mathcal D_{\Sigma,H}(u,v)
  :=\frac14\bigl[q_{\Sigma,H}(u)+q_{\Sigma,H}(v)-q_{\Sigma,H}(u-v)\bigr].
  \label{eq:general-covariance-derivative}
\end{equation}
Then
\begin{equation}
  K_\Gamma(u,v)
  =K_\Sigma(u,v)+\mathcal D_{\Sigma,H}(u,v)
  +R_{\Sigma,H}(u,v),
  \label{eq:general-covariance-expansion}
\end{equation}
where
\begin{equation}
  \abs{R_{\Sigma,H}(u,v)}
  \le\frac{h^2}{16a_*^{3/2}}
  \bigl(\norm{u}_2+\norm{v}_2+\norm{u-v}_2\bigr).
  \label{eq:general-covariance-remainder}
\end{equation}
Thus the Fr\'echet derivative is explicit and the remainder is quadratic in the covariance perturbation, with no distributional or high-dimensional assumption.
\end{proposition}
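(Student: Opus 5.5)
The plan is to work directly with the closed-form norm $\norm{r}_\Gamma=(r^\top\Gamma r)^{1/2}=(\norm{r}_\Sigma^2+r^\top Hr)^{1/2}$ and to Taylor-expand the square root to second order, one vector at a time. Since $K_\Gamma$ and $K_\Sigma$ are the same linear combination $\tfrac12(f(u)+f(v)-f(u-v))$ of the seminorm $f$, it suffices to prove a one-vector statement. For every $r\in\R^m$,
\[
 \norm{r}_\Gamma=\norm{r}_\Sigma+\tfrac12 q_{\Sigma,H}(r)+\rho(r),
 \qquad
 \abs{\rho(r)}\le\frac{h^2}{8a_*^{3/2}}\norm{r}_2 .
\]
Summing with weights $\tfrac12,\tfrac12,-\tfrac12$ over $r=u,v,u-v$ then produces $\mathcal D_{\Sigma,H}$, with the factor $\tfrac14=\tfrac12\cdot\tfrac12$. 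The remainder satisfies $R=\tfrac12(\rho(u)+\rho(v)-\rho(u-v))$, and the triangle inequality gives exactly the stated constant $h^2/(16a_*^{3/2})$. The case $r=0$ is trivial, because both sides vanish by the convention $q_{\Sigma,H}(0)=0$.

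For the one-vector bound, fix $r\ne0$ and set $a=\norm{r}_\Sigma^2>0$ and $\delta=r^\top Hr$, so that $a+\delta=\norm{r}_\Gamma^2>0$. Use the exact identity for $g(x)=\sqrt x$:
\[
 \sqrt{a+\delta}-\sqrt a-\frac{\delta}{2\sqrt a}
 =-\frac{\delta^2}{2\sqrt a\,(\sqrt{a+\delta}+\sqrt a)^2}.
\]
This follows by writing $\sqrt{a+\delta}-\sqrt a=\delta/(\sqrt{a+\delta}+\sqrt a)$ and subtracting $\delta/(2\sqrt a)$. Alternatively, the Lagrange remainder $-\tfrac18\xi^{-3/2}\delta^2$ with $\xi$ between $a$ and $a+\delta$ gives the same conclusion, provided $\xi\ge a_*\norm{r}_2^2$.

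The bound then comes from lower-bounding everything in the denominator. Both $a\ge\lambda_{\min}(\Sigma)\norm{r}_2^2\ge a_*\norm{r}_2^2$ and $a+\delta=r^\top\Gamma r\ge a_*\norm{r}_2^2$ hold, so any intermediate $\xi$ also satisfies $\xi\ge a_*\norm{r}_2^2$. For the numerator, $\delta^2\le h^2\norm{r}_2^4$. Combining these,
\[
 \abs{\rho(r)}\le\frac18\,\frac{h^2\norm{r}_2^4}{(a_*\norm{r}_2^2)^{3/2}}=\frac{h^2}{8a_*^{3/2}}\norm{r}_2 .
\]
This is exactly the one-vector constant needed.

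The main point needing care is choosing the remainder form so that its denominator uses both endpoints $a$ and $a+\delta$. This is why $a_*$ is defined as the minimum of the two smallest eigenvalues, and why $\Gamma\succ0$ is needed: $\delta$ may be negative, and bounding only with $a$ would fail. Once the Lagrange (or exact) form is used with $\xi\in[\min(a,a+\delta),\max(a,a+\delta)]$, the rest is routine. Finally, note that $\mathcal D_{\Sigma,H}$ is linear in $H$. This identifies it as the Fréchet derivative of $\Gamma\mapsto K_\Gamma(u,v)$ at $\Sigma$, since the remainder is $O(h^2)$.
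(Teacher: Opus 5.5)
Your proposal is correct and follows essentially the paper's argument: a one-vector second-order Taylor expansion of $\sqrt{r^\top(\Sigma+tH)r}$ along the segment from $\Sigma$ to $\Gamma$. The Lagrange remainder is controlled at every intermediate point by $a_*\|r\|_2^2$, which the paper states as $\Sigma+tH=(1-t)\Sigma+t\Gamma\succeq a_*I_m$; this gives the per-vector constant $h^2\|r\|_2/(8a_*^{3/2})$, and combining $r=u,v,u-v$ with weights $\tfrac12$ yields $h^2/(16a_*^{3/2})$, exactly as you describe. Your exact algebraic identity for $\sqrt{a+\delta}-\sqrt a-\delta/(2\sqrt a)$ is a slightly more elementary route to the same bound that avoids Taylor's theorem.
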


\begin{proof}
For fixed $r$, let
$g_r(t)=\sqrt{r^\top(\Sigma+tH)r}$, $t\in[0,1]$.
If $r=0$, then $g_r(t)=0$, $q_{\Sigma,H}(0)=0$, and the expansion and
remainder bound both read $0=0$.  Assume henceforth that $r\ne0$.  Then
\[
  g_r'(0)=\frac{r^\top Hr}{2\norm{r}_\Sigma},
  \qquad
  g_r''(t)=-\frac{(r^\top Hr)^2}
  {4\{r^\top(\Sigma+tH)r\}^{3/2}}.
\]
Because $\Sigma+tH=(1-t)\Sigma+t\Gamma\succeq a_*I_m$ and
$\abs{r^\top Hr}\le h\norm{r}_2^2$, Taylor's theorem gives
\[
  \abs{g_r(1)-g_r(0)-g_r'(0)}
  \le\frac{h^2}{8a_*^{3/2}}\norm{r}_2.
\]
Apply this estimate to $r=u,v,u-v$ and combine the three norm expansions in the definition of $K_\Gamma$.
\end{proof}

\begin{corollary}[Two-sided nonasymptotic spiked Brownian expansion]
\label{cor:spiked-brownian-expansion}
Fix $a>0$, $\delta>-a$, and $w\in\R^m$ with $\norm{w}_2=1$, and set
$\Sigma_{a,\delta,w}=aI_m+\delta ww^\top$ and
$a_\delta:=a+\min\{\delta,0\}>0$.
For
\begin{equation}
  \vartheta_w(r)
  :=
  \begin{cases}
    \displaystyle\frac{\inner{w}{r}^2}{\norm{r}_2},&r\ne0,\\[1ex]
    0,&r=0,
  \end{cases}
  \qquad
  D_w(u,v):=\vartheta_w(u)+\vartheta_w(v)-\vartheta_w(u-v),
  \label{eq:spiked-directional-form}
\end{equation}
we have
\begin{equation}
  K_{\Sigma_{a,\delta,w}}(u,v)
  =\sqrt a\,K_2(u,v)
  +\frac{\delta}{4\sqrt a}D_w(u,v)
  +R_{a,\delta,w}(u,v),
  \label{eq:spiked-brownian-expansion}
\end{equation}
with
\begin{equation}
  \abs{R_{a,\delta,w}(u,v)}
  \le
  \frac{\delta^2}{16a_\delta^{3/2}}
  \bigl(\norm{u}_2+\norm{v}_2+\norm{u-v}_2\bigr).
  \label{eq:spiked-brownian-remainder}
\end{equation}
The same formula therefore covers directional amplification ($\delta>0$) and suppression ($-a<\delta<0$).
\end{corollary}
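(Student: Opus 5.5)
This is a direct specialization of \Cref{prop:general-covariance-expansion} with $\Sigma=aI_m$ and $\Gamma=\Sigma_{a,\delta,w}$, so that $H=\delta ww^\top$. The plan has three steps: identify the constants $h$ and $a_*$, compute the first-order term $\mathcal D_{\Sigma,H}$ explicitly, and identify the zeroth-order term with $\sqrt a\,K_2$.

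\emph{Step 1: the constants.} Since $\|w\|_2=1$, we have $h=\|H\|_{\op}=|\delta|$, and so $h^2=\delta^2$. The eigenvalues of $\Gamma$ are $a+\delta$ (on $w$) and $a$ (on $w^\perp$), so $\lambda_{\min}(\Gamma)=a+\min\{\delta,0\}=a_\delta$. Also $\lambda_{\min}(\Sigma)=a\ge a_\delta$. Hence $a_*=a_\delta>0$, using $\delta>-a$, which also gives $\Gamma\succ0$. Substituting into \eqref{eq:general-covariance-remainder} yields exactly \eqref{eq:spiked-brownian-remainder}.

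\emph{Step 2: the derivative term.} We have $\|r\|_\Sigma=\sqrt a\,\|r\|_2$ and $r^\top Hr=\delta\langle w,r\rangle^2$. Therefore $q_{\Sigma,H}(r)=\delta\,\vartheta_w(r)/\sqrt a$, and this also holds at $r=0$ because both sides are zero there. It follows that
\[
\mathcal D_{\Sigma,H}(u,v)=\frac{\delta}{4\sqrt a}D_w(u,v).
\]

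\emph{Step 3: the zeroth-order term.} Since $\|r\|_{aI}=\sqrt a\,\|r\|_2$ for every $r$, inserting this into the definition of $K_\Sigma$ gives $K_{aI}(u,v)=\sqrt a\,K_2(u,v)$. Substituting Steps 2 and 3 into \eqref{eq:general-covariance-expansion} then gives \eqref{eq:spiked-brownian-expansion}, with remainder $R_{a,\delta,w}:=R_{\Sigma,H}$.

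The only point that needs care is checking $a_*=a_\delta$ in both sign regimes. For $\delta\ge0$ the minimum eigenvalue is $a$, and for $-a<\delta<0$ it is $a+\delta$. In both cases this matches $a+\min\{\delta,0\}$. If $m=1$, there is no $w^\perp$ direction, so $\lambda_{\min}(\Gamma)=a+\delta$; even then, $\min\{a,a+\delta\}=a_\delta$ still holds. Everything else is direct substitution.
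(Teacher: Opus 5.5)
Your proposal is correct and matches the paper's proof: you specialize \Cref{prop:general-covariance-expansion} with $\Sigma=aI_m$ and $H=\delta ww^\top$, read off $\|H\|_{\op}=|\delta|$ and $a_*=a_\delta$, compute $q_{aI_m,H}=\delta\vartheta_w/\sqrt a$ (including at $r=0$), and identify $K_{aI_m}=\sqrt a\,K_2$. Your closing remark is the right correction to Step 1, where you wrote $\lambda_{\min}(\Gamma)=a_\delta$; that equality fails for $m=1$, $\delta>0$, but because $a_*$ is also minimized against $\lambda_{\min}(aI_m)=a$, you still get $a_*=a_\delta$ in every case.
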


\begin{proof}
Apply \Cref{prop:general-covariance-expansion} with
$\Sigma=aI_m$, $H=\delta ww^\top$, and
$\Gamma=\Sigma_{a,\delta,w}$.  The condition $\delta>-a$ makes both matrices
positive definite.  Moreover,
\[
 \norm{H}_{\op}=|\delta|,
 \qquad
 \min\{\lambda_{\min}(aI_m),\lambda_{\min}(aI_m+\delta ww^\top)\}
 =a_\delta.
\]
For $r\ne0$,
\[
 q_{aI_m,\delta ww^\top}(r)
 =\frac{\delta\inner{w}{r}^2}{\sqrt a\norm r_2}
 =\frac\delta{\sqrt a}\vartheta_w(r),
\]
and both sides are zero at $r=0$ by definition.  Hence
$\mathcal D_{aI_m,H}(u,v)=\delta D_w(u,v)/(4\sqrt a)$.
The base kernel is $K_{aI_m}=\sqrt a\,K_2$, and the general remainder bound
becomes exactly \eqref{eq:spiked-brownian-remainder} because
$\norm H_{\op}^2=\delta^2$.  This proves the asserted expansion for both
signs of $\delta$.
\end{proof}

\begin{remark}[Scope of anisotropic propagation]
\label{rem:anisotropic-relu-scope}
The feature identities and fixed-representation complexity statements hold for every supplied $\Sigma\succeq0$.  If $\Sigma$ is selected on the same sample, \Cref{prop:main-covariance} reduces the resulting class to the transformed Euclidean representation family $\mathfrak S^{1/2}\mathcal Z$.  Coordinatewise ReLU is automatically nonexpansive for diagonal weighted Euclidean seminorms, but not for a general nondiagonal $\Sigma$.  Accordingly, \Cref{prop:main-propagation} is stated for the additive and isotropic Euclidean geometries; a general anisotropic head is controlled by its realized $\Sigma$-activation mass rather than by an unsupported layerwise contraction.
\end{remark}

\begin{corollary}[Anisotropic activation-mass characterization]
\label{cor:anisotropic-sharp-complexity}
Fix $z:\cX\to\R^m$, $S=(x_i)_{i=1}^n$, $B\ge0$, and $\Sigma\succeq0$. Define
\begin{equation}
  \cF_{z,B}^{(\Sigma)}
  :=\set{x\mapsto\inner{a}{\Phi_\Sigma(z(x))}_{\cH_\Sigma}
  \given \norm{a}_{\cH_\Sigma}\le B},
  \qquad
  \widehat M_{\Sigma,S}(z)
  :=\frac1n\sum_{i=1}^n\norm{z(x_i)}_\Sigma.
  \label{eq:anisotropic-head-class-mass}
\end{equation}
Then
\begin{align}
  \Rad_S(\cF_{z,B}^{(\Sigma)})
  &=\frac Bn\E_\eps
  \norm{\sum_{i=1}^n\eps_i\Phi_\Sigma(z(x_i))}_{\cH_\Sigma},
  \label{eq:anisotropic-exact-rad}\\
  \frac{B}{\sqrt2}\sqrt{\frac{\widehat M_{\Sigma,S}(z)}n}
  &\le\Rad_S(\cF_{z,B}^{(\Sigma)})
  \le B\sqrt{\frac{\widehat M_{\Sigma,S}(z)}n}.
  \label{eq:anisotropic-two-sided-rad}
\end{align}
If $\Sigma\ne0$ and $n\ge2$, both constants are attained by canonical
features along any direction $h$ with $h^\top\Sigma h>0$.  For $n=1$, the
complexity equals the upper endpoint for every nonzero realized feature.  For
$\Sigma=0$, the class is identically zero.  Moreover, for $c>0$, $K_{c\Sigma}=\sqrt c\,K_\Sigma$ and
\begin{equation}
  \norm{f}_{\cH_{c\Sigma}}
  \sqrt{\widehat M_{c\Sigma,S}(z)}
  =\norm{f}_{\cH_\Sigma}
  \sqrt{\widehat M_{\Sigma,S}(z)}
  \label{eq:anisotropic-scale-balance}
\end{equation}
for every function $f$ in the common RKHS function set.
\end{corollary}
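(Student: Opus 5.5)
The plan is to transfer \Cref{thm:main-fixed} to the anisotropic geometry through the covariance pullback of \Cref{prop:main-covariance}, and then to check the endpoint and scaling claims directly. For the exact identity \eqref{eq:anisotropic-exact-rad} I would fix the sign vector and apply Hilbert-space duality in $\cH_\Sigma$, exactly as in the proof of \Cref{thm:main-fixed}: the supremum of $\inner{a}{\sum_i\eps_i\Phi_\Sigma(z(x_i))}$ over $\norm{a}\le B$ equals $B$ times the norm of the signed sum. Taking the expectation and dividing by $n$ gives the identity. This step uses only that $K_\Sigma$ is positive semidefinite, which \Cref{prop:main-covariance} supplies because $K_\Sigma$ is a pullback of $K_2$.

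For the two-sided bound \eqref{eq:anisotropic-two-sided-rad}, the diagonal identity $\norm{\Phi_\Sigma(u)}^2=K_\Sigma(u,u)=\norm{u}_\Sigma$ from \Cref{prop:main-covariance} gives
\[
\sum_i\norm{\Phi_\Sigma(z(x_i))}^2=n\widehat M_{\Sigma,S}(z).
\]
Applying \Cref{lem:main-khintchine} to the vectors $v_i=\Phi_\Sigma(z(x_i))$ and multiplying by $B/n$ then yields both inequalities.

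For attainment, assume $\Sigma\ne0$. Then some $h$ satisfies $h^\top\Sigma h>0$, and I set $u=\lambda h$ with $\norm{u}_\Sigma=\lambda\sqrt{h^\top\Sigma h}$, choosing $\lambda>0$ to realize the prescribed mass.
\begin{itemize}
\item \emph{Lower endpoint ($n\ge2$).} Take $u_1=u_2=u$ with $\norm{u}_\Sigma=nM/2$ and $u_i=0$ for $i\ge3$. Since $\Phi_\Sigma(0)=0$, the signed sum is $(\eps_1+\eps_2)\Phi_\Sigma(u)$, and $\E|\eps_1+\eps_2|=1$ gives $B\sqrt{M/(2n)}$.
\item \emph{Upper endpoint.} Take a single nonzero state with $\norm{u}_\Sigma=nM$ and all other states zero. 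The norm of the signed sum is then deterministic and equals the upper bound.
\item \emph{Case $n=1$.} The norm is deterministic, so the complexity equals the upper endpoint.
\item \emph{Case $\Sigma=0$.} Here $K_0\equiv0$, so $\cH_0=\{0\}$ and the class is identically zero.
\end{itemize}
These cases mirror the proof of \Cref{thm:main-fixed} with $e_1$ replaced by $h$.

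For the scale balance, $\norm{u}_{c\Sigma}=\sqrt c\,\norm{u}_\Sigma$, and substituting into the definition gives $K_{c\Sigma}=\sqrt c\,K_\Sigma$. The RKHS of a positive multiple $\alpha K$ has the same function set as that of $K$, with norm $\norm{f}_{\alpha K}=\alpha^{-1/2}\norm{f}_K$. Hence $\norm{f}_{\cH_{c\Sigma}}=c^{-1/4}\norm{f}_{\cH_\Sigma}$, while $\sqrt{\widehat M_{c\Sigma,S}(z)}=c^{1/4}\sqrt{\widehat M_{\Sigma,S}(z)}$, and the two factors cancel.

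The only nonroutine point is this RKHS scaling fact. I would justify it from the reproducing property: the inner product $\alpha^{-1}\inner{\cdot}{\cdot}_K$ makes $\alpha K(\cdot,x)$ reproduce evaluation, so by uniqueness of the RKHS it is the inner product of $\cH_{\alpha K}$. The remaining steps are direct specializations of earlier results.
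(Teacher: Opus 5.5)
Your proposal is correct and takes essentially the paper's approach. The paper identifies the configuration with the Euclidean Brownian configuration of $\Sigma^{1/2}z(x_i)$ via \Cref{prop:main-covariance} and applies \Cref{thm:main-fixed} wholesale, whereas you rerun duality, the diagonal identity, and \Cref{lem:main-khintchine} directly in $\cH_\Sigma$; this is the same computation, and your endpoint configurations along $h$, the $n=1$ and $\Sigma=0$ cases, and the RKHS norm rescaling with $\alpha=\sqrt c$ match the paper's proof.
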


\begin{proof}
By \Cref{prop:main-covariance}, the anisotropic feature configuration is the Euclidean Brownian configuration generated by $\Sigma^{1/2}z(x_i)$.  Applying \Cref{thm:main-fixed} gives \eqref{eq:anisotropic-exact-rad}--\eqref{eq:anisotropic-two-sided-rad}.  When $\Sigma\ne0$, choose $h$ with
$\|h\|_\Sigma=(h^\top\Sigma h)^{1/2}>0$ and normalize it to
$\bar h=h/\|h\|_\Sigma$.  For $n\ge2$, the configurations
$z(x_1)=nM\bar h$, $z(x_i)=0$ for $i>1$, and
$z(x_1)=z(x_2)=(nM/2)\bar h$, $z(x_i)=0$ for $i>2$, have anisotropic
activation mass $M$ and reproduce respectively the upper and lower endpoint
computations in \Cref{thm:main-fixed}.  For $n=1$, Hilbert duality gives the
exact value $B\sqrt{M}$.  Finally, multiplying a kernel by $\alpha>0$ leaves its function set unchanged and multiplies squared RKHS norms by $\alpha^{-1}$.  Here $\alpha=\sqrt c$ and $\widehat M_{c\Sigma,S}=\sqrt c\,\widehat M_{\Sigma,S}$, which proves \eqref{eq:anisotropic-scale-balance}.
\end{proof}

\subsection{Auxiliary tools for same-sample selection}
\label{app:same-sample-tools}
These results give a threshold-free cap, realized-width reductions, standard trace upper bounds, and the finite scalar ReLU realization used by the lower constructions.

\begin{proposition}[Same-sample root-mass cap]
\label{prop:main-root-mass}
Let $\varnothing\ne\cZ$ and use outer expectation when the displayed
supremum is not measurable.  For either Brownian geometry,
\begin{equation}
 \Rad_S(\cF_{\cZ,B}^{(q)})
 \le \frac Bn\sup_{z\in\cZ}\sum_{i=1}^n\sqrt{\norm{z(x_i)}_q}
 \le B\sqrt{\widehat M^*_{q,S}(\cZ)}.
 \label{eq:main-root-mass}
\end{equation}
\end{proposition}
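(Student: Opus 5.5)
The plan is to work pointwise in the sign vector, using only Hilbert-space duality and the triangle inequality, so no probabilistic estimate is needed. As in the proof of \Cref{thm:main-adaptive}, duality gives
\[
 \Rad_S(\cF_{\cZ,B}^{(q)})
 =\frac Bn\E_\eps\sup_{z\in\cZ}\norm{\sum_{i=1}^n\eps_i\Phi_q(z(x_i))}_{\cH_q},
\]
with outer expectation if the supremum is not measurable. The first step bounds the integrand deterministically. For each fixed $\eps$ and $z$, the triangle inequality in $\cH_q$ together with the diagonal identity $\norm{\Phi_q(u)}_{\cH_q}^2=\norm{u}_q$ of \Cref{prop:main-isometry} yields
\[
 \norm{\sum_i\eps_i\Phi_q(z(x_i))}_{\cH_q}\le\sum_{i=1}^n\sqrt{\norm{z(x_i)}_q}.
\]
The right-hand side depends on neither $\eps$ nor the signs. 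Taking the supremum over $z$ and then the (outer) expectation, which is monotone, gives the first inequality in \eqref{eq:main-root-mass}. To handle measurability, the paper's convention is followed: prove the bound for finite subfamilies and pass to the outer quantity by monotonicity. Since the bound here is a deterministic majorant, this step is immediate.

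For the second inequality, fix $z$ and apply Cauchy--Schwarz (equivalently, concavity of the square root):
\[
 \frac1n\sum_i\sqrt{\norm{z(x_i)}_q}\le\Bigl(\frac1n\sum_i\norm{z(x_i)}_q\Bigr)^{1/2}\le\sqrt{\widehat M^*_{q,S}(\cZ)}.
\]
Taking the supremum over $z$ completes the proof. If $\widehat M^*_{q,S}(\cZ)=\infty$, the statement is vacuous, and if $B=0$, both sides vanish.

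None of the steps poses a real obstacle. The only point requiring care is the order of supremum and expectation. Bounding the norm of the signed sum before taking the supremum over $z$ is what makes the cap selection-safe: the per-$z$ majorant does not depend on $\eps$, so the selection cannot align the representation with the signs. This explains why the bound loses the $1/\sqrt n$ factor of \Cref{thm:main-fixed} in exchange for uniformity over arbitrary families.
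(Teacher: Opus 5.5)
Your proposal is correct and follows essentially the same route as the paper. You use pointwise Hilbert duality, then the triangle inequality with the diagonal identity $\norm{\Phi_q(u)}_{\cH_q}^2=\norm{u}_q$ to get an $\eps$-free majorant (which makes the outer-expectation step immediate), and finally Cauchy--Schwarz for each fixed $z$ before taking the supremum over $\cZ$.
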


\phantomsection\label{proof:main-root-mass}
\begin{proof}
For every realization of the Rademacher signs, Hilbert-space duality gives
\[
 \sup_{z\in\cZ}\sup_{\|a\|\le B}
 \sum_{i=1}^n\eps_i\langle a,\Phi_q(z(x_i))\rangle
 =B\sup_{z\in\cZ}\left\|\sum_{i=1}^n\eps_i\Phi_q(z(x_i))\right\|.
\]
The triangle inequality and the canonical energy identity imply
\[
 \left\|\sum_i\eps_i\Phi_q(z(x_i))\right\|
 \le\sum_i\|\Phi_q(z(x_i))\|
 =\sum_i\sqrt{\|z(x_i)\|_q}.
\]
Taking the supremum, expectation, and the factor $1/n$ proves the first inequality. For every fixed $z$, Cauchy--Schwarz gives
\[
 \frac1n\sum_{i=1}^n\sqrt{\|z(x_i)\|_q}
 \le\left(\frac1n\sum_{i=1}^n\|z(x_i)\|_q\right)^{1/2}.
\]
Taking the supremum over $z\in\cZ$ and using monotonicity of the square
root gives $\sqrt{\widehat M^*_{q,S}(\cZ)}$, which proves the second
inequality.
\end{proof}

\begin{proposition}[Projected width on realized states]
\label{prop:main-intrinsic-width}
Let $\varnothing\ne\cV_S(\cZ)=\{z(x_i):z\in\cZ,\ i\in[n]\}$ and assume
$\sup_{v\in\cV_S(\cZ)}\|v\|_2\le R_z$.  If $\cV_S(\cZ)$ lies in a common
$r$-dimensional subspace, then
$\widehat W_{2,S}(\cZ)\le c_0R_z\sqrt r$.  If
$|\cV_S(\cZ)|\le M$ for an integer $M\ge1$, then
$\widehat W_{2,S}(\cZ)\le c_0R_z\sqrt{2\log(2M)}$.
\end{proposition}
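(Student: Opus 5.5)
Both claims reduce to bounding the expected supremum of a Gaussian process over the finite or low-dimensional realized state set. Recall that
\[
\widehat W_{2,S}(\cZ)=c_0\E_G\sup_{z\in\cZ,\,i\in[n]}\abs{\inner{G}{z(x_i)}}=c_0\E_G\sup_{v\in\cV_S(\cZ)}\abs{\inner{G}{v}},
\]
since the supremum over pairs $(z,i)$ is exactly a supremum over the realized state set. Everything then concerns $\E\sup_{v\in\cV}\abs{\inner{G}{v}}$ for $G\sim\mathcal N(0,I_m)$ and a set $\cV$ of vectors of Euclidean norm at most $R_z$.

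\textbf{Subspace case.} Let $E$ be the common $r$-dimensional subspace and $P_E$ its orthogonal projection. For $v\in E$ we have $\inner{G}{v}=\inner{P_EG}{v}$, so Cauchy--Schwarz gives
\[
\sup_{v}\abs{\inner{G}{v}}\le R_z\norm{P_EG}_2 .
\]
Jensen's inequality then yields $\E\norm{P_EG}_2\le(\E\norm{P_EG}_2^2)^{1/2}=\sqrt r$. This is the argument already used in the proof of \Cref{cor:main-explicit}. The case $r=0$ is trivial, since every state is zero.

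\textbf{Finite case.} Let $\cV$ have at most $M$ elements. Consider the $2M$ centered Gaussian variables $\pm\inner{G}{v}$, each with variance $\norm{v}_2^2\le R_z^2$. For $\lambda>0$, Jensen's inequality gives
\[
\exp\bigl(\lambda\E\max\bigr)\le\E\exp(\lambda\max)\le\sum\E e^{\pm\lambda\inner{G}{v}}\le 2M e^{\lambda^2R_z^2/2}.
\]
Taking logarithms,
\[
\E\max\le\frac{\log(2M)}{\lambda}+\frac{\lambda R_z^2}{2},
\]
and the choice $\lambda=\sqrt{2\log(2M)}/R_z$ gives $R_z\sqrt{2\log(2M)}$. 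This is the Gaussian version of the Massart step in the proof of \Cref{thm:main-matching}. If $R_z=0$ all states vanish and the bound holds trivially. Multiplying by $c_0$ gives the claim.

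There is no real obstacle. The only points needing care are measurability and the degenerate cases, and both are harmless. In the finite case the supremum is a maximum of finitely many measurable variables. In the subspace case the pointwise bound by $R_z\norm{P_EG}_2$ holds for every $G$, so any outer expectation is controlled by a measurable majorant.
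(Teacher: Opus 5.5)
Your proposal is correct and follows the paper's proof essentially step for step. For the subspace case you project onto $E$ and apply Cauchy--Schwarz and then Jensen. For the finite case you use the Jensen/moment-generating-function bound over the (at most) $2M$ signed Gaussian projections, optimized at $\lambda=\sqrt{2\log(2M)}/R_z$, with the same separate treatment of $R_z=0$ before multiplying by $c_0$.
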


\phantomsection\label{proof:main-intrinsic-width}
\begin{proof}
Let $E$ be a common $r$-dimensional subspace and let $P_E$ denote the orthogonal projection onto $E$.
For every $v\in E$,
$\inner{G}{v}=\inner{P_EG}{v}$, so Cauchy--Schwarz gives
\[
  \sup_{v\in\cV_S(\cZ)}\abs{\inner{G}{v}}
  \le R_z\norm{P_EG}_2.
\]
Because $P_EG$ is a standard Gaussian vector in $E$,
\[
  \E\norm{P_EG}_2
  \le\left(\E\norm{P_EG}_2^2\right)^{1/2}
  =\sqrt r.
\]
Multiplication by $c_0$ proves the common-subspace assertion.

For the finite-set bound, let
$M_0:=|\cV_S(\cZ)|\le M$ and enumerate its distinct points as
$v_1,\dots,v_{M_0}$.  For $\lambda>0$, Jensen's inequality followed by the
Gaussian moment-generating function gives
\begin{align*}
 \exp\left\{\lambda\E\max_{j\le M_0}|\inner{G}{v_j}|\right\}
 &\le \E\exp\left\{\lambda\max_{j\le M_0}|\inner{G}{v_j}|\right\}\\
 &\le \sum_{j=1}^{M_0}
 \left[\E e^{\lambda\inner{G}{v_j}}
       +\E e^{-\lambda\inner{G}{v_j}}\right]\\
 &\le 2M_0\exp\{\lambda^2R_z^2/2\}.
\end{align*}
Thus
\[
 \E\max_{j\le M_0}|\inner{G}{v_j}|
 \le \frac{\log(2M_0)}\lambda+\frac{\lambda R_z^2}{2}.
\]
If $R_z=0$, the left side is zero.  Otherwise, minimizing at
$\lambda=\sqrt{2\log(2M_0)}/R_z$ yields
$R_z\sqrt{2\log(2M_0)}\le R_z\sqrt{2\log(2M)}$.  Multiplication by $c_0$
proves the assertion.
\end{proof}

\begin{corollary}[Finite-trace and VC upper bounds]
\label{cor:main-trace-entropy}
For every nonempty trace family $\varnothing\ne\cT\subseteq2^{[n]}$,
\begin{equation}
 \rho_n(\cT)\le
 \min\left\{1,\sqrt{\frac{2\log(2|\cT|)}n}\right\}.
 \label{eq:main-cardinality-rho}
\end{equation}
If $\emptyset\in\cT$ and $\vc(\cT)\le V$, then
$\rho_n(\cT)\le C_{\rm VC}\min\{1,\sqrt{V/n}\}$.
\end{corollary}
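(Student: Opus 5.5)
Both bounds have in effect already been carried out inside the proof of \Cref{thm:main-matching}. The corollary only records them as stand-alone facts about an arbitrary nonempty trace family. I would prove the two parts separately, each time combining a maximal inequality with the deterministic cap $\rho_n(\cT)\le1$. The cap holds because $\abs{\sum_{i\in T}\eta_i}\le\abs T\le n$ for every $T\subseteq[n]$.

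\emph{Cardinality bound.} Set $v_T=(\1_{\{i\in T\}})_{i=1}^n$ and consider the at most $2\abs{\cT}$ signed vectors $\pm v_T$, each with $\norm{v_T}_2\le\sqrt n$. Then $\sup_T\abs{\sum_{i\in T}\eta_i}$ is the maximum of $\inner{\eta}{\pm v_T}$ over this finite set. For $\lambda>0$, Jensen's inequality gives $\exp\{\lambda\E\max\}\le\E\exp\{\lambda\max\}\le\sum\E e^{\lambda\inner{\eta}{\pm v_T}}$. Hoeffding's bound $\E e^{\lambda\inner{\eta}{v}}\le e^{\lambda^2\norm v_2^2/2}$ then yields $\E\max\le\log(2\abs{\cT})/\lambda+\lambda n/2$. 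Optimizing at $\lambda=\sqrt{2\log(2\abs\cT)/n}$ gives $\sqrt{2n\log(2\abs\cT)}$. Dividing by $n$ and taking the minimum with the cap gives \eqref{eq:main-cardinality-rho}.

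\emph{VC bound.} Work with the empirical pseudometric $d_n(T,T')=(\abs{T\triangle T'}/n)^{1/2}$, which equals $n^{-1/2}\norm{v_T-v_{T'}}_2$. Haussler's packing theorem bounds the covering numbers by $\log N(u,\cT,d_n)\le c_1V\log(c_2/u)$ for $0<u\le1$, and the $d_n$-diameter of $\cT$ is at most $1$. Dudley's entropy integral for the subgaussian process $T\mapsto n^{-1/2}\sum_{i\in T}\eta_i$, anchored at $\varnothing\in\cT$ where the process vanishes, then gives
\[
\frac1n\E\sup_{T}\sum_{i\in T}\eta_i\le\frac{c_3}{\sqrt n}\int_0^1\sqrt{V\log(c_2/u)}\,\dd u\le c_4\sqrt{V/n}.
\]
The integral is finite, as the substitution $u=e^{-s}$ shows. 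To pass to the absolute value, note that $\varnothing\in\cT$ makes the one-sided supremum nonnegative. Hence $\sup_T\abs{S_T}\le\sup_TS_T+\sup_T(-S_T)$, and the symmetry $\eta\stackrel{d}{=}-\eta$ shows the two terms have equal expectation. This costs only a factor $2$. Combining with the cap gives $\rho_n(\cT)\le C_{\rm VC}\min\{1,\sqrt{V/n}\}$.

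The only delicate step is this VC part. It requires applying Haussler's bound at the correct scale in the normalized metric and using $\varnothing\in\cT$ both as the chaining base point and for the symmetrization to the absolute supremum. Everything else is routine.
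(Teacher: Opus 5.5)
Your proposal is correct and follows essentially the same route as the paper: a Massart-type finite-class bound over the $2|\cT|$ signed indicator vectors for the cardinality estimate, and Haussler packing plus Dudley chaining anchored at $\varnothing$ for the VC estimate, with the factor-$2$ symmetrization to the absolute supremum. The paper adds one detail you omit, the degenerate case $V=0$, where Haussler's bound in the form $c_1V\log(c_2/u)$ is not available. There, $\varnothing\in\cT$ forces $\cT=\{\varnothing\}$, since any nonempty $T$ together with $\varnothing$ shatters a singleton, and so $\rho_n(\cT)=0$ directly.
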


\phantomsection\label{proof:main-trace-entropy}
\begin{proof}
Associate each $T\in\cT$ with its indicator vector
$v_T=(\1_{\{i\in T\}})_{i=1}^n\in\{0,1\}^n$.
Let
$\mathcal V:=\{v_T:T\in\cT\}\cup\{-v_T:T\in\cT\}$; it is nonempty,
has at most $2|\cT|$ elements, and every $v\in\mathcal V$ satisfies
$\|v\|_2\le\sqrt n$.  For $\lambda>0$, Jensen's inequality and
$\E e^{\lambda\inner{\eta}{v}}=\prod_i\cosh(\lambda v_i)
\le e^{\lambda^2\|v\|_2^2/2}$ imply
\begin{align*}
 \E\max_{v\in\mathcal V}\inner{\eta}{v}
 &\le \frac1\lambda
 \log\sum_{v\in\mathcal V}\E e^{\lambda\inner{\eta}{v}}\\
 &\le \frac{\log(2|\cT|)}\lambda+\frac{\lambda n}{2}.
\end{align*}
Minimizing over $\lambda$ gives
$\E\sup_{T\in\cT}|\inner{\eta}{v_T}|
\le\sqrt{2n\log(2|\cT|)}$.  Division by $n$ proves the logarithmic bound.
The deterministic estimate
$\sup_T\abs{\sum_{i\in T}\eta_i}\le n$ gives the additional cap by one.

For the VC bound, define the one-sided empirical Rademacher average
\[
  \mathfrak r_n^+(\cT)
  =\frac1n\E_\eta\sup_{T\in\cT}\sum_{i\in T}\eta_i.
\]
Because $\emptyset\in\cT$, the supremum is nonnegative.  By symmetry of
$\eta$ and the elementary inequality $\max\{a,b\}\le a+b$ for
$a,b\ge0$,
\begin{align*}
  \rho_n(\cT)
  &=\frac1n\E\max\left\{
    \sup_T\sum_{i\in T}\eta_i,
    \sup_T\sum_{i\in T}(-\eta_i)
  \right\}\\
  &\le2\mathfrak r_n^+(\cT).
\end{align*}
If $V=0$, the presence of $\emptyset$ forces $\cT=\{\emptyset\}$: any
nonempty $T$ together with $\emptyset$ would shatter a singleton.  Hence
$\rho_n(\cT)=0$.  Assume now $V\ge1$.

Equip $\cT$ with the empirical $L^2$ pseudometric
\[
  d_n(T,T')
  =\left(\frac{\abs{T\triangle T'}}{n}\right)^{1/2}.
\]
Haussler's packing theorem for a binary class of VC dimension at most $V$
implies, after the packing-to-covering conversion, that for $0<u\le1$,
\[
  \log N\bigl(u,\cT,d_n\bigr)
  \le C_0V\log\left(\frac{C_1}{u}\right)
\]
with universal constants $C_0,C_1$~\citep{Haussler1995}.  The finite process
$X_T=n^{-1}\sum_i\eta_i\1_{\{i\in T\}}$ has increments satisfying
\[
 \E_\eta\exp\{\lambda(X_T-X_{T'})\}
 \le \exp\left\{\frac{\lambda^2d_n(T,T')^2}{2n}\right\}
 \qquad(\lambda\in\R),
\]
by $\cosh x\le e^{x^2/2}$.  Dudley's entropy bound for a finite centered
subgaussian process, applied with increment metric $d_n/\sqrt n$, therefore
gives
\begin{align*}
  \mathfrak r_n^+(\cT)
  &=\E\sup_{T\in\cT}(X_T-X_\emptyset)\\
  &\le\frac{C}{\sqrt n}
  \int_0^1\sqrt{\log N(u,\cT,d_n)}\,\dd u\\
  &\le C'\sqrt{\frac Vn}
  \int_0^1\sqrt{\log(C_1/u)}\,\dd u
  \le C''\sqrt{\frac Vn},
\end{align*}
where the last integral is finite; this is the empirical-process form used in
\citet{BartlettMendelson2002}.  Combining the result with
$\rho_n\le1$ proves the VC assertion.
\end{proof}

\begin{proposition}[Scalar ReLU realization of threshold traces]
\label{prop:main-indicator}
Let $x_1,\ldots,x_n$ be distinct and let
$\varnothing\ne\cT\subseteq2^{[n]}$.  For every $A>0$ there is a family of
globally nonnegative scalar one-hidden-layer ReLU representations satisfying
$z_T(x_i)=A\mathbf 1_{\{i\in T\}}$ and
\begin{equation}
 \Rad_S(\cF_{\{z_T:T\in\cT\},B}^{(q)})
 =B\sqrt A\,\rho_n(\cT),\qquad q\in\{1,2\}.
 \label{eq:main-indicator}
\end{equation}
\end{proposition}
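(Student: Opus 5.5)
The proof has two independent parts. First we realize the prescribed sample values by a scalar ReLU network. Then we compute the Rademacher complexity exactly by Hilbert duality. Both steps repeat, in standalone form, the arguments already used in the proof of \Cref{thm:main-matching}.

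\emph{Construction.} Assume $n\ge2$. Choose $w\in\R^d$ outside the finite union of hyperplanes $\{w:\inner{w}{x_i-x_j}=0\}$, $i\ne j$. This is possible because the $x_i$ are distinct, so each set is a proper hyperplane and finitely many of them cannot cover $\R^d$. The projections $\tau_r=\inner{w}{x_{\pi(r)}}$ are then pairwise distinct, and after sorting we may assume $\tau_1<\cdots<\tau_n$.

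For each $T$, set $y_r=A\1_{\{\pi(r)\in T\}}$ and let $g_T$ be the piecewise-linear interpolant \eqref{eq:matching-explicit-hinge}. It is constant outside $[\tau_1,\tau_n]$. Each linear piece joins two values in $[0,A]$, so $0\le g_T\le A$ on all of $\R$.

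Define $z_T(x)=\relu(g_T(\inner{w}{x}))$. This is a one-hidden-layer ReLU network (hinge units composed with a linear readout and a bias), followed by a terminal ReLU. The terminal ReLU does not change the values, since $g_T\ge0$ already, but it makes global nonnegativity explicit. By construction $z_T(x_i)=A\1_{\{i\in T\}}$. For $n=1$, use the constant representations $0$ or $A$.

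\emph{Exact complexity.} The representations are scalar, so $m=1$ and both $K_1$ and $K_2$ reduce to $\kb$. The ingredients are then:
\begin{itemize}
\item $\Phi_q(0)=0$ and $\norm{\Phi_q(A)}_{\cH_q}=\sqrt A$, by \Cref{prop:main-isometry}.
\item Hence, for each $T$ and each sign vector $\eps$,
\[
\sum_i\eps_i\Phi_q(z_T(x_i))=\Bigl(\sum_{i\in T}\eps_i\Bigr)\Phi_q(A).
\]
\item Hilbert duality over the ball $\norm{a}\le B$ turns the inner supremum into $B\sqrt A\,\abs{\sum_{i\in T}\eps_i}$.
\end{itemize}
Take the supremum over $T\in\cT$, then the expectation over $\eps$, and divide by $n$. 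This yields exactly $B\sqrt A\,\rho_n(\cT)$, which is \eqref{eq:main-indicator}.

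The only step needing care is the realization. One must check that the interpolant is globally in $[0,A]$, not merely nonnegative at the sample points, and that no network parameter depends on $\eps$. Both follow from the convexity of each linear segment and from the fact that the construction uses only $S$ and $T$. The union is over a finite family, so measurability is automatic.
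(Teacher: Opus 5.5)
Your proposal is correct and follows essentially the same route as the paper. The exact identity comes from Hilbert duality with $\Phi_q(0)=0$ and $\norm{\Phi_q(A)}=\sqrt A$, and the realization uses a generic projection direction and the explicit hinge interpolant, which stays in $[0,A]$ because each segment joins two values in $[0,A]$. The paper's proof of this proposition omits your terminal ReLU, which appears only in the proof of the matching theorem. Since $g_T\ge0$, that ReLU does not change the function, so $x\mapsto g_T(\inner{w}{x})$ is still the required one-hidden-layer representation.
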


\phantomsection\label{proof:main-indicator}
\begin{proof}
Define the sample values by
$z_T(x_i)=A\1_{\{i\in T\}}$.
For either scalar Brownian geometry,
$K_q(0,\cdot)=0$,
$\Phi_{q,1}(0)=0$, and
$\norm{\Phi_{q,1}(A)}_{\cH_{q,1}}=\sqrt A$.
For a fixed sign vector, Hilbert duality therefore gives
\begin{align*}
  \sup_{T\in\cT}\sup_{\norm{a}\le B}
  \frac1n\sum_{i=1}^n\eps_i
  \inner{a}{\Phi_{q,1}(z_T(x_i))}
  &=\frac Bn\sup_{T\in\cT}
  \norm{\sum_{i\in T}\eps_i\Phi_{q,1}(A)}\\
  &=\frac{B\sqrt A}{n}
  \sup_{T\in\cT}\abs{\sum_{i\in T}\eps_i}.
\end{align*}
Taking expectation proves \eqref{eq:main-indicator}.

For $q=1$, a positive scalar threshold of $z_T$ is equal to $T$ whenever
$0<t\le A$ and is empty when $t>A$.
For $q=2$, a signed scalar projection has the form
$s g A\1_T$; when $sg>0$, its positive thresholds are again $T$ or $\emptyset$, and when $sg\le0$ they are empty.
Thus both sample trace families are $\cT\cup\{\emptyset\}$. This is exactly $\cT$ when $\emptyset\in\cT$, and adjoining $\emptyset$ does not change $\rho_n(\cT)$. If $\cT$ contains a nonempty trace, the additive envelope equals $A$ and, because $c_0\E\abs G=1$, the Euclidean projected-width envelope also equals $A$. If $\cT=\{\emptyset\}$, equation~\eqref{eq:main-indicator} remains valid trivially, with both sides equal to zero.

It remains to construct the ReLU extensions.
The set of vectors $w\in\R^d$ for which
$\inner{w}{x_i}=\inner{w}{x_j}$ for some $i\ne j$ is a finite union of proper hyperplanes.
Choose $w$ outside that union and write
$t_i=\inner{w}{x_i}$; these values are pairwise distinct.
Relabel the sample so that
$t_1<\cdots<t_n$, set $y_r=A\1_{\{i_r\in T\}}$, and define slopes
\[
 d_0:=0,
 \qquad
 d_r:=\frac{y_{r+1}-y_r}{t_{r+1}-t_r}\quad(1\le r<n),
 \qquad
 d_n:=0.
\]
The explicit hinge formula
\begin{equation}
 g_T(t):=y_1+\sum_{r=1}^n(d_r-d_{r-1})(t-t_r)_+
 \label{eq:indicator-hinge-formula}
\end{equation}
has slope zero on $(-\infty,t_1)$, slope $d_r$ on
$(t_r,t_{r+1})$, and slope zero on $(t_n,\infty)$.  It therefore interpolates
$(t_r,y_r)$ and is constant outside $[t_1,t_n]$.  On each interval its value
is a convex combination of the endpoint values $y_r,y_{r+1}\in\{0,A\}$;
hence $0\le g_T(t)\le A$ for every $t\in\R$.  Formula
\eqref{eq:indicator-hinge-formula} is exactly a one-hidden-layer ReLU network
with $n$ hinge units, scalar hidden weights and biases
$(w,-t_r)$, output weights $d_r-d_{r-1}$, and output bias $y_1$.
Thus $z_T(x):=g_T(\inner{w}{x})$ is globally nonnegative and has the required
sample values.

Finally, for the full power set, let
$I_+(\eps)=\{i:\eps_i=1\}$ and
$I_-(\eps)=\{i:\eps_i=-1\}$.
Both belong to $2^{[n]}$, and
\[
  \sup_{T\subseteq[n]}\abs{\sum_{i\in T}\eps_i}
  \ge\max\{\abs{I_+(\eps)},\abs{I_-(\eps)}\}
  \ge\frac n2.
\]
Thus $\rho_n(2^{[n]})\ge1/2$, which gives the high-capacity lower statement quoted after \Cref{prop:main-indicator}.
\end{proof}

\subsection{Optimization and finite realization}
\label{sec:main-computation}
For a fixed representation, the Brownian head is an ordinary kernel problem. The results below record a singular-safe representer form, the ridge-capacity path, finite additive-profile approximation, and a vector-valued extension.

\begin{proposition}[Representer form for a terminal Brownian head]
\label{prop:main-representer}
Assume $t\mapsto\ell(y,t)$ is lower semicontinuous and bounded below, and let $\lambda>0$. The problem
\begin{equation}
 \min_{a\in\cH_q}\frac1n\sum_{i=1}^n
 \ell\bigl(y_i,\langle a,\Phi_q(z(x_i))\rangle\bigr)+\lambda\|a\|_{\cH_q}^2
 \label{eq:main-head-optimization}
\end{equation}
attains its minimum, and every minimizer lies in
$\operatorname{span}\{\Phi_q(z(x_i)):i\in[n]\}$.
For squared loss,
\begin{equation}
 \widehat f(x)=\sum_{i=1}^n\alpha_iK_q(z(x_i),z(x)),
 \qquad \alpha=(K+n\lambda I)^{-1}y.
 \label{eq:main-representer}
\end{equation}
\end{proposition}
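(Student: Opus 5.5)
The plan is to split the statement into existence, the span property, and the squared-loss formula, and to handle each by the standard representer-theorem argument in the RKHS $\cH_q$ of \Cref{prop:pd-kernels}. Write $v_i=\Phi_q(z(x_i))$, $V=\operatorname{span}\{v_i:i\in[n]\}$, and let $J(a)$ denote the objective in \eqref{eq:main-head-optimization}.

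\textbf{Step 1 (reduction to $V$).} Decompose $a=a_V+a_\perp$ with $a_V\in V$ and $a_\perp\perp V$. Every sample prediction $\inner{a}{v_i}=\inner{a_V}{v_i}$ depends only on $a_V$, so the loss term is unchanged. The penalty satisfies $\norm{a}^2=\norm{a_V}^2+\norm{a_\perp}^2$. Hence $J(a)\ge J(a_V)$, with strict inequality whenever $a_\perp\ne0$ and $\lambda>0$. It follows that every minimizer, if one exists, lies in $V$, and that $\inf J=\inf_{V}J$.

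\textbf{Step 2 (existence).} On the finite-dimensional space $V$, the map $a\mapsto\ell(y_i,\inner{a}{v_i})$ is lower semicontinuous, since it composes a continuous linear map with a lower semicontinuous function. Because $\ell\ge-c$ for some constant $c$, the objective is coercive: $J(a)\ge -c+\lambda\norm{a}^2$. Therefore any sublevel set $\{J\le J(0)\}$ is closed and bounded in $V$, hence compact, and a lower semicontinuous function attains its minimum there. By Step 1 this minimizer is a global minimizer on $\cH_q$.

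This step is the main, if mild, obstacle. One must make sure that no convexity is used, so minimizers need not be unique, and that lower semicontinuity rather than continuity suffices. One must also allow $J(0)=+\infty$ if $\ell$ may take the value $+\infty$. In that case, take the sublevel set at any finite value of $J$, or note that the minimum is trivially attained when $J\equiv+\infty$.

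\textbf{Step 3 (squared loss).} Take $\ell(y,t)=(y-t)^2$ and parametrize $a=\sum_j\alpha_jv_j$. Then the vector of predictions is $K\alpha$ and $\norm{a}^2=\alpha^\top K\alpha$, where $K$ is the realized Brownian Gram matrix. The objective becomes $n^{-1}\norm{y-K\alpha}^2+\lambda\alpha^\top K\alpha$. Setting its gradient to zero gives $K\bigl[(K+n\lambda I)\alpha-y\bigr]=0$, and $\alpha=(K+n\lambda I)^{-1}y$ solves this equation. The inverse exists because $K\succeq0$ and $n\lambda>0$, so no invertibility of $K$ is needed; this is the singular-safe point.

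Any other stationary $\alpha'$ differs from $\alpha$ by an element of $\ker K$, and such an element gives the same $a$, since $\norm{\sum_j\beta_jv_j}^2=\beta^\top K\beta=0$ for $\beta\in\ker K$. Strict convexity of $J$ in $a$ then yields uniqueness of $\widehat a$. Finally, $\widehat f(x)=\inner{\widehat a}{\Phi_q(z(x))}=\sum_i\alpha_iK_q(z(x_i),z(x))$ by the reproducing property.
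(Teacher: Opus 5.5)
Your proof is correct and has the same skeleton as the paper's proof: orthogonal reduction to the sample span, then existence, then the normal equation. Two steps are carried out differently.

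\textbf{Existence.} The paper proves existence directly in all of $\cH_q$, without using the reduction. The bound $J(a)\ge m_0+\lambda\norm{a}^2$ makes minimizing sequences bounded, and reflexivity yields a weakly convergent subsequence. Weak continuity of $a\mapsto\inner{a}{v_i}$ and weak lower semicontinuity of the norm then show that the weak limit is a minimizer. You instead use the reduction twice, so that existence needs only Heine--Borel compactness of a sublevel set in the finite-dimensional space $V$. This is more elementary and avoids weak topologies entirely. The paper's argument, by contrast, does not rely on finite-dimensional reduction, and it would extend to coercive, weakly lower semicontinuous objectives that are not determined by a finite-dimensional subspace.

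\textbf{Squared loss.} The paper stays at the operator level. With the sampling operator $T$ and $TT^*=K$, it checks that $a=T^*\alpha$ solves $(T^*T+n\lambda I)a=T^*y$ via the push-through identity $(T^*T+n\lambda I)T^*=T^*(TT^*+n\lambda I_n)$. Uniqueness then follows from strict convexity in $a$, so non-uniqueness of coefficients never arises. Your coefficient-space computation is more concrete and closer to an implementation. It does need the extra observation that two stationary coefficient vectors differ by an element of $\ker K$, which is invisible in $\cH_q$; you handle this correctly.

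\textbf{One caveat.} Drop the aside about $+\infty$-valued losses. If $J\equiv+\infty$, every $a\in\cH_q$ is a minimizer, so the span claim fails, and so does your strict inequality in Step~1. The statement is meant for real-valued losses, where your argument is complete.
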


\phantomsection\label{proof:main-representer}
\begin{proof}
Write $v_i:=\Phi_q(z(x_i))\in\cH_q$ and let
$H_S:=\operatorname{span}\{v_1,\ldots,v_n\}$.  This subspace is
finite-dimensional and therefore closed.  Every $a\in\cH_q$ has a unique
orthogonal decomposition $a=a_S+a_\perp$ with $a_S\in H_S$ and
$a_\perp\perp H_S$.  For every training point,
\[
 \inner{a}{v_i}=\inner{a_S}{v_i},
 \qquad
 \norm a^2=\norm{a_S}^2+\norm{a_\perp}^2.
\]
Thus replacing $a$ by $a_S$ leaves all empirical losses unchanged and
reduces the regularizer by $\lambda\|a_\perp\|^2$.  Every minimizer must
therefore lie in $H_S$.

We next prove existence without assuming convexity of the loss.  Define
$m_0:=n^{-1}\sum_{i=1}^n\inf_{t\in\R}\ell(y_i,t)>-\infty$.  The objective is
at least $m_0+\lambda\|a\|^2$, so every
minimizing sequence is norm bounded.  A real Hilbert space is reflexive;
hence a bounded sequence has a weakly convergent subsequence
$a_k\rightharpoonup a$.  Each coordinate map
$a\mapsto\inner{a}{v_i}$ is weakly continuous, so
$\inner{a_k}{v_i}\to\inner{a}{v_i}$.  Lower semicontinuity of each scalar
loss and weak lower semicontinuity of the squared norm give
\[
 \frac1n\sum_i\ell(y_i,\inner{a}{v_i})+\lambda\|a\|^2
 \le \liminf_k\left[
 \frac1n\sum_i\ell(y_i,\inner{a_k}{v_i})+\lambda\|a_k\|^2
 \right].
\]
The weak limit is therefore a minimizer, and the preceding orthogonal
argument places every minimizer in $H_S$.  Writing
$a=\sum_i\alpha_i v_i$ and using
$\inner{v_i}{\Phi_q(z(x))}=K_q(z(x_i),z(x))$ gives the representer form.

For squared loss, define the bounded sampling operator
$T:\cH_q\to\R^n$ by $(Ta)_i=\inner{a}{v_i}$.  Its adjoint is
$T^*c=\sum_i c_i v_i$ and $TT^*=K$.  Differentiating the quadratic objective
gives the normal equation
\[
 (T^*T+n\lambda I)a=T^*y.
\]
Because $\lambda>0$, $K+n\lambda I_n$ is positive definite even when $K$ is
singular.  Set $\alpha=(K+n\lambda I_n)^{-1}y$ and $a=T^*\alpha$.  The
identity
$(T^*T+n\lambda I)T^*=T^*(TT^*+n\lambda I_n)$ shows that this $a$ satisfies
the normal equation.  The quadratic objective is strictly convex because of
$\lambda\|a\|^2$, so this solution is unique and yields
\eqref{eq:main-representer}.
\end{proof}

\begin{proposition}[Ridge-capacity path]
\label{prop:main-ridge}
Let $\lambda>0$, let
$K=U\operatorname{diag}(\kappa_1,\ldots,\kappa_n)U^\top\succeq0$, set
$\widetilde y=U^\top y$, and let
$\alpha_\lambda=(K+n\lambda I)^{-1}y$. Then
\begin{align}
 \|\widehat f_\lambda\|_{\cH_K}^2
 &=\sum_{j=1}^n\frac{\kappa_j\widetilde y_j^2}{(\kappa_j+n\lambda)^2},\notag\\
 (-1)^r\frac{\dd^r}{\dd\lambda^r}\|\widehat f_\lambda\|_{\cH_K}^2
 &=(r+1)!n^r\sum_{j=1}^n\frac{\kappa_j\widetilde y_j^2}{(\kappa_j+n\lambda)^{r+2}}\ge0,
 \qquad r\ge0.
 \label{eq:main-ridge-path}
\end{align}
Hence the fitted Brownian capacity
$\|\widehat f_\lambda\|_{\cH_K}\sqrt{\widehat M_{q,S}(z)/n}$ is nonincreasing in $\lambda$ for a fixed representation. No monotonicity of test risk is implied.
\end{proposition}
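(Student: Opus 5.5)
The plan is to diagonalize the Gram matrix and reduce everything to an explicit scalar rational function of $\lambda$.

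\textbf{Step 1 (RKHS norm of the fit).} By \Cref{prop:main-representer}, the squared-loss minimizer is $\widehat f_\lambda=\sum_i(\alpha_\lambda)_iK_q(z(x_i),\cdot)$. Its RKHS norm is therefore the quadratic form
\[
\|\widehat f_\lambda\|_{\cH_K}^2=\alpha_\lambda^\top K\alpha_\lambda=y^\top(K+n\lambda I)^{-1}K(K+n\lambda I)^{-1}y .
\]
Now insert $K=U\operatorname{diag}(\kappa_j)U^\top$. The three matrices $(K+n\lambda I)^{-1}$, $K$ and $(K+n\lambda I)^{-1}$ are simultaneously diagonal in the basis $U$, so with $\widetilde y=U^\top y$ the form becomes
\[
\sum_{j=1}^n\frac{\kappa_j\widetilde y_j^2}{(\kappa_j+n\lambda)^2}.
\]
Since $\lambda>0$ and $\kappa_j\ge0$, each denominator is strictly positive. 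This makes the formula valid even when $K$ is singular; the zero eigenvalues simply contribute nothing.

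\textbf{Step 2 (derivatives).} Each summand has the form $c_j(\kappa_j+n\lambda)^{-2}$ with $c_j=\kappa_j\widetilde y_j^2\ge0$. Differentiating $r$ times in $\lambda$, the chain rule gives a factor $n$ each time, and the power rule gives
\[
\frac{\dd^r}{\dd\lambda^r}(\kappa+n\lambda)^{-2}=(-1)^r(r+1)!\,n^r(\kappa+n\lambda)^{-(r+2)}.
\]
This is checked by a one-line induction on $r$, using $\frac{\dd}{\dd\lambda}(\kappa+n\lambda)^{-p}=-pn(\kappa+n\lambda)^{-p-1}$ with $p=r+2$, which yields the factor $(r+2)\cdot(r+1)!=(r+2)!$. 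Multiplying by $(-1)^r$ and summing over $j$ gives \eqref{eq:main-ridge-path}. Nonnegativity is immediate from $c_j\ge0$ and the positive denominators. It follows that the squared norm is completely monotone on $(0,\infty)$.

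\textbf{Step 3 (monotone capacity).} The case $r=1$ shows that $\lambda\mapsto\|\widehat f_\lambda\|^2_{\cH_K}$ is nonincreasing, hence so is its square root. The factor $\sqrt{\widehat M_{q,S}(z)/n}$ depends only on the fixed representation and the sample, not on $\lambda$. Multiplying by this nonnegative constant therefore preserves monotonicity. The disclaimer about test risk needs no proof.

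No step is a genuine obstacle. The only care needed is to justify the simultaneous diagonalization and the positivity of $\kappa_j+n\lambda$ in the possibly singular case.
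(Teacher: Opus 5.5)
Your proposal is correct and follows essentially the same route as the paper: representer form, diagonalization of $\alpha_\lambda^\top K\alpha_\lambda$ in the eigenbasis $U$, termwise differentiation of the finite sum with positive denominators $\kappa_j+n\lambda$, and then the case $r=1$ combined with monotonicity of the square root and multiplication by the fixed factor $\sqrt{\widehat M_{q,S}(z)/n}$. Your explicit induction for the $r$-th derivative and the complete-monotonicity remark are consistent with the paper's argument.
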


\phantomsection\label{proof:main-ridge}
\begin{proof}
By \Cref{prop:main-representer},
$\widehat f_\lambda=\sum_i(\alpha_\lambda)_iK(x_i,\cdot)$ and hence
\[
 \|\widehat f_\lambda\|_{\cH_K}^2
 =\alpha_\lambda^\top K\alpha_\lambda
 =\sum_{j=1}^n
 \frac{\kappa_j\widetilde y_j^2}{(\kappa_j+n\lambda)^2}.
\]
For every integer $r\ge0$ and every $j$,
\[
 \frac{\dd^r}{\dd\lambda^r}
 \frac{\kappa_j\widetilde y_j^2}{(\kappa_j+n\lambda)^2}
 =(-1)^r(r+1)!n^r
 \frac{\kappa_j\widetilde y_j^2}{(\kappa_j+n\lambda)^{r+2}}.
\]
The sum is finite, so termwise differentiation is valid and proves
\eqref{eq:main-ridge-path}, including zero eigenvalues.  The case $r=1$
shows that the squared norm is nonincreasing in $\lambda$; taking square
roots preserves this order.  Multiplication by the fixed nonnegative factor
$\sqrt{\widehat M_{q,S}(z)/n}$ proves the capacity claim.  No statement about
test risk follows from this algebraic monotonicity.
\end{proof}

\begin{proposition}[Finite additive-profile realization]
\label{prop:main-spline}
Suppose $A_j>0$, $0\le u_j\le A_j$, and
$h(u)=\sum_jg_j(u_j)$, where each $g_j\in H^1(0,A_j)$ is represented by its
absolutely continuous version, $g_j(0)=0$, and
$\sum_j\|g_j'\|_{L^2(0,A_j)}^2\le B^2$.  If $I_jg_j$ is the piecewise-linear
interpolant on a finite grid containing $0$ and $A_j$, with maximum spacing
$h_j$, then
\begin{equation}
 \sup_{u\in\prod_j[0,A_j]}
 \left|h(u)-\sum_j(I_jg_j)(u_j)\right|
 \le \frac B2\left(\sum_jh_j\right)^{1/2}.
 \label{eq:main-spline}
\end{equation}
On a uniform $G_j$-cell grid with integer $G_j\ge1$,
$h_j=A_j/G_j$.
\end{proposition}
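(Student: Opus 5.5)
The plan is to reduce to a one-dimensional Sobolev interpolation estimate for each coordinate profile and then combine the coordinates with Cauchy--Schwarz. The additive structure gives
\[
 h(u)-\sum_j(I_jg_j)(u_j)=\sum_j e_j(u_j),\qquad e_j:=g_j-I_jg_j .
\]
Because $u$ ranges over a product set, the supremum of the absolute sum is at most $\sum_j\sup_{[0,A_j]}|e_j|$. It therefore suffices to establish, for each $j$ separately,
\[
 \sup_{t\in[0,A_j]}|e_j(t)|\le\tfrac12\sqrt{h_j}\,\|g_j'\|_{L^2(0,A_j)}.
\]
Once this is available, Cauchy--Schwarz gives
\[
 \sum_j\tfrac12\sqrt{h_j}\,\|g_j'\|\le\tfrac12\Bigl(\sum_jh_j\Bigr)^{1/2}\Bigl(\sum_j\|g_j'\|^2\Bigr)^{1/2}\le\tfrac B2\Bigl(\sum_jh_j\Bigr)^{1/2},
\]
which is exactly \eqref{eq:main-spline}.

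For the one-dimensional estimate, I will fix a grid cell $[a,b]$ of length $\delta\le h_j$ and a point $t\in[a,b]$. The error $e=g-Ig$ is absolutely continuous and vanishes at the endpoints $a,b$. Writing $e(t)=\int_a^te'$ and $e(t)=-\int_t^be'$, Cauchy--Schwarz on each piece bounds $|e(t)|$ by $\sqrt{t-a}\,\|e'\|_{L^2(a,t)}$ and by $\sqrt{b-t}\,\|e'\|_{L^2(t,b)}$. Combining these as a weighted average, with a further Cauchy--Schwarz, yields
\[
 |e(t)|^2\le\frac{(t-a)(b-t)}{\delta}\,\|e'\|_{L^2(a,b)}^2\le\frac{\delta}{4}\|e'\|_{L^2(a,b)}^2 .
\]
Concretely, I will use $|e|^2\le\min\{\alpha X,\beta Y\}$ with $X+Y=\|e'\|^2$, and note that $\min\{\alpha X,\beta Y\}\le\frac{\alpha\beta}{\alpha+\beta}(X+Y)$.

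The next step is to show $\|e'\|_{L^2(a,b)}\le\|g'\|_{L^2(a,b)}$. On the cell, $(Ig)'$ is the constant $c=\frac1\delta\int_a^bg'$. Hence $e'=g'-c$ is the $L^2$-orthogonal projection of $g'$ onto mean-zero functions on $[a,b]$, and its norm is at most $\|g'\|_{L^2(a,b)}$. Restricting to the whole interval then gives
\[
 |e(t)|\le\tfrac12\sqrt{\delta}\,\|g'\|_{L^2(a,b)}\le\tfrac12\sqrt{h_j}\,\|g_j'\|_{L^2(0,A_j)} .
\]
Because the grid contains $0$ and $A_j$, every $t\in[0,A_j]$ lies in some cell, so this bound covers the whole interval. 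The hypothesis $g_j(0)=0$ is not needed for the error bound itself; it only fixes the normalization of the head.

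The main obstacle is the sharp constant $1/2$. The naive estimate $|e(t)|\le\sqrt{\delta}\,\|e'\|$ loses a factor of $2$, and the constant is recovered only by using both endpoint zeros together with the harmonic-mean combination $(t-a)(b-t)/\delta\le\delta/4$. The final statement about the uniform grid is immediate, since a uniform $G_j$-cell grid has spacing $h_j=A_j/G_j$.
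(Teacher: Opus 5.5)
Your proof is correct. Its global structure matches the paper's: a uniform per-cell bound $|g_j-I_jg_j|\le\frac12\sqrt{h_j}\,\|g_j'\|_{L^2(0,A_j)}$, then the triangle inequality and Cauchy--Schwarz across coordinates. The only difference is how you derive the one-cell estimate.

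The paper writes the error directly as a Peano-kernel integral $g(t)-(Ig)(t)=\int_a^b q_t(s)g'(s)\,\mathrm{d}s$, with the bridge kernel $q_t(s)=\mathbf{1}_{[a,t]}(s)-\frac{t-a}{b-a}$. It computes $\|q_t\|_{L^2(a,b)}^2=\frac{(t-a)(b-t)}{b-a}\le\frac{b-a}{4}$ and finishes with a single Cauchy--Schwarz. You instead use the two endpoint zeros of $e$ and two one-sided Cauchy--Schwarz bounds, merged via $\min\{\alpha X,\beta Y\}\le\frac{\alpha\beta}{\alpha+\beta}(X+Y)$, followed by the projection inequality $\|g'-c\|_{L^2(a,b)}\le\|g'\|_{L^2(a,b)}$.

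Both routes reach the same pointwise bound $|e(t)|^2\le\frac{(t-a)(b-t)}{b-a}\|g'\|_{L^2(a,b)}^2$, and they are closely related. Since $q_t$ has mean zero on $[a,b]$, the identity $\int_a^b q_tg'=\int_a^b q_t(g'-c)$ silently performs your projection step. Your harmonic-mean optimization reconstructs $\|q_t\|^2$.

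The paper's route is shorter and shows that the pointwise constant is attained: equality holds when $g'$ is proportional to $q_t$ on a longest cell and vanishes elsewhere, with $t$ at that cell's midpoint. Your route avoids having to guess the kernel, and it makes clear that using both endpoint zeros is what produces the factor $\frac12$.
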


\phantomsection\label{proof:main-spline}
\begin{proof}
Fix $j\in[m]$ and one grid cell
$[a,b]=[t_{j,r-1},t_{j,r}]$.
For $t\in[a,b]$, the affine interpolation formula and absolute
continuity give
\begin{align}
  g_j(t)-(I_jg_j)(t)
  &=g_j(t)-g_j(a)
    -\frac{t-a}{b-a}\bigl(g_j(b)-g_j(a)\bigr)\notag\\
  &=\int_a^b q_t(s)g_j'(s)\dd s,
  \label{eq:spline-error-integral}
\end{align}
where
\begin{equation}
  q_t(s):=\1_{[a,t]}(s)-\frac{t-a}{b-a},
  \qquad s\in[a,b].
  \label{eq:spline-bridge-representer}
\end{equation}
Writing $\alpha=(t-a)/(b-a)$ and separating the intervals
$[a,t]$ and $(t,b]$ yields
\begin{align}
  \norm{q_t}_{L^2(a,b)}^2
  &=(t-a)(1-\alpha)^2+(b-t)\alpha^2\notag\\
  &=\frac{(t-a)(b-t)}{b-a}
  \le\frac{b-a}{4}
  \le\frac{h_j}{4}.
  \label{eq:spline-bridge-norm}
\end{align}
Cauchy--Schwarz in \eqref{eq:spline-error-integral} therefore gives
\begin{equation}
  \abs{g_j(t)-(I_jg_j)(t)}
  \le
  \frac{\sqrt{h_j}}{2}\norm{g_j'}_{L^2(a,b)}
  \le
  \frac{\sqrt{h_j}}{2}\norm{g_j'}_{L^2(0,A_j)}.
  \label{eq:spline-coordinate-error}
\end{equation}
Because the cell and $t$ were arbitrary,
\eqref{eq:spline-coordinate-error} holds uniformly on $[0,A_j]$.
Consequently, for every
$u\in\prod_{j=1}^m[0,A_j]$,
\begin{align}
  \abs{h(u)-\widetilde h(u)}
  &\le\sum_{j=1}^m
    \abs{g_j(u_j)-(I_jg_j)(u_j)}\notag\\
  &\le\frac12\sum_{j=1}^m
    \sqrt{h_j}\,\norm{g_j'}_{L^2(0,A_j)}\notag\\
  &\le\frac12
    \left(\sum_{j=1}^m h_j\right)^{1/2}
    \left(\sum_{j=1}^m\norm{g_j'}_{L^2(0,A_j)}^2\right)^{1/2}\notag\\
  &\le\frac B2\left(\sum_{j=1}^m h_j\right)^{1/2},
\end{align}
where the last two steps use Cauchy--Schwarz across coordinates and the Sobolev budget assumed in \Cref{prop:main-spline}.
Taking the supremum over $u$ proves \eqref{eq:main-spline}; the uniform-grid statement follows by substitution.
\end{proof}

\begin{proposition}[Vector-valued fixed head]
\label{prop:main-vector}
\label{prop:vector-hs-extension}
For an integer $C\ge1$, define
$\cF_{z,B}^{(q,C)}=\{x\mapsto A\Phi_q(z(x)):A:\cH_q\to\R^C,\ \|A\|_{\HS}\le B\}$ and
\[
 \Rad_S^{\mathrm{vec}}(\mathcal G)
 =\frac1n\E_\sigma\sup_{g\in\mathcal G}\sum_{i=1}^n\langle\sigma_i,g(x_i)\rangle,
\]
where $\sigma_i\in\{-1,+1\}^C$ have mutually independent coordinates. Then
\begin{equation}
 \Rad_S^{\mathrm{vec}}(\cF_{z,B}^{(q,C)})
 =\frac Bn\E_\sigma\left\|\sum_{i=1}^n\sigma_i\otimes\Phi_q(z(x_i))\right\|_{\HS}
 \le B\sqrt{\frac{C\widehat M_{q,S}(z)}n}.
 \label{eq:main-vector}
\end{equation}
\end{proposition}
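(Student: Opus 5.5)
The plan is to follow the scalar proof of \Cref{thm:main-fixed}, with the head operator $A$ playing the role of $a$. Write $v_i=\Phi_q(z(x_i))\in\cH_q$. For a fixed sign array $\sigma=(\sigma_i)_{i=1}^n$ with $\sigma_i\in\{-1,+1\}^C$, rewrite the inner sum as
\[
 \sum_{i=1}^n\inner{\sigma_i}{Av_i}_{\R^C}
 =\sum_{i=1}^n\inner{A}{\sigma_i\otimes v_i}_{\HS}
 =\inner{A}{\sum_{i=1}^n\sigma_i\otimes v_i}_{\HS}.
\]
Here $\sigma_i\otimes v_i$ denotes the rank-one operator $h\mapsto\inner{v_i}{h}\sigma_i$ from $\cH_q$ to $\R^C$. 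Its HS pairing with $A$ is $\tr(A^*(\sigma_i\otimes v_i))=\inner{\sigma_i}{Av_i}$, which is a one-line check.

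The Hilbert--Schmidt operators $\cH_q\to\R^C$ form a Hilbert space. Cauchy--Schwarz with equality at $A=B\,X/\|X\|_{\HS}$ (or $A=0$ if $X=0$) therefore gives the supremum over $\|A\|_{\HS}\le B$ as $B\|\sum_i\sigma_i\otimes v_i\|_{\HS}$. Taking $\E_\sigma$ and dividing by $n$ yields the exact identity in \eqref{eq:main-vector}.

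For the upper bound, apply Jensen's inequality: $\E\|X_\sigma\|_{\HS}\le(\E\|X_\sigma\|_{\HS}^2)^{1/2}$. Then expand
\[
 \E\|X_\sigma\|_{\HS}^2=\sum_{i,j}\E\inner{\sigma_i}{\sigma_j}\,\inner{v_i}{v_j}_{\cH_q},
\]
using $\inner{\sigma_i\otimes v_i}{\sigma_j\otimes v_j}_{\HS}=\inner{\sigma_i}{\sigma_j}\inner{v_i}{v_j}$. Because all coordinates are independent Rademacher variables, $\E\inner{\sigma_i}{\sigma_j}=C\1_{\{i=j\}}$. Hence $\E\|X_\sigma\|_{\HS}^2=C\sum_i\|v_i\|^2$, and by \Cref{prop:main-isometry} this equals $C\sum_i\|z(x_i)\|_q=Cn\widehat M_{q,S}(z)$. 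Multiplying by $B/n$ gives $B\sqrt{C\widehat M_{q,S}(z)/n}$.

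The only step needing care is the identification of the HS inner product of rank-one tensors and the duality on the HS space. Both are standard, so I expect no real obstacle. Measurability is trivial because the supremum has a closed form for each $\sigma$.
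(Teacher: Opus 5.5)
Your proposal is correct and follows essentially the same route as the paper's proof. Both use rank-one Hilbert--Schmidt duality $\inner{\sigma_i}{Av_i}=\inner{A}{\sigma_i\otimes v_i}_{\HS}$ to get the exact identity, and then Jensen's inequality with $\E\inner{\sigma_i}{\sigma_j}=C\1_{\{i=j\}}$ and the feature-energy identity $\norm{v_i}^2=\norm{z(x_i)}_q$ for the upper bound.
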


\phantomsection\label{proof:main-vector}
\begin{proof}
Write $v_i=\Phi_q(z(x_i))\in\cH_q$.  The Hilbert--Schmidt inner product on
operators from $\cH_q$ to $\R^C$ satisfies
\begin{equation}
  \inner{\sigma_i}{Av_i}_{\R^C}
  =\inner{A}{\sigma_i\otimes v_i}_{\HS}.
  \label{eq:vector-rank-one-duality}
\end{equation}
Hence Hilbert-space duality gives, for every realization of the signs,
\begin{align}
  \sup_{\norm{A}_{\HS}\le B}
  \sum_{i=1}^n\inner{\sigma_i}{Av_i}_{\R^C}
  &=\sup_{\norm{A}_{\HS}\le B}
    \inner{A}{\sum_{i=1}^n\sigma_i\otimes v_i}_{\HS}\notag\\
  &=B\norm{\sum_{i=1}^n\sigma_i\otimes v_i}_{\HS}.
  \label{eq:vector-hs-duality}
\end{align}
Taking expectation and dividing by $n$ proves
\eqref{eq:main-vector}.

For the upper bound, Jensen's inequality and independence imply
\begin{align}
  \E_\sigma
  \norm{\sum_{i=1}^n\sigma_i\otimes v_i}_{\HS}
  &\le
  \left(
  \E_\sigma
  \norm{\sum_{i=1}^n\sigma_i\otimes v_i}_{\HS}^2
  \right)^{1/2}\notag\\
  &=\left(
  \sum_{i,j=1}^n
  \E\inner{\sigma_i}{\sigma_j}_{\R^C}
  \inner{v_i}{v_j}_{\cH_q}
  \right)^{1/2}\notag\\
  &=\left(C\sum_{i=1}^n\norm{v_i}_{\cH_q}^2\right)^{1/2}.
  \label{eq:vector-second-moment}
\end{align}
The feature-energy identity \eqref{eq:main-isometry} gives
$\norm{v_i}^2=\norm{z(x_i)}_q$; consequently,
$\sum_i\norm{v_i}^2=n\widehat M_{q,S}(z)$.  Substitution into the exact
duality identity and division by $n$ proves the stated upper bound, including
$B=0$, $C=1$, and zero feature configurations.
\end{proof}

\subsection{Consequences for propagation, generalization, and rescaling}

\begin{corollary}[Brownian H\"older stability]
\label{cor:holder-stability}
Let $q\in\{1,2\}$ and $f_{a}(x)=\inner{a}{\Phi_q(z_L(x))}$ with $\norm a\le B$. Then
\begin{equation}
  \abs{f_a(x)-f_a(x')}
  \le B\left(\prod_{\ell=1}^Ls_{\ell,q}\right)^{1/2}
  \norm{x-x'}_q^{1/2}.
  \label{eq:holder-stability}
\end{equation}
\end{corollary}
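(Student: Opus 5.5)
The plan is to combine the reproducing property with Cauchy--Schwarz in $\cH_q$ and then apply the feature-distance propagation of \Cref{prop:main-propagation}.

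First, for $\norm{a}_{\cH_q}\le B$, write
\[
 f_a(x)-f_a(x')=\inner{a}{\Phi_q(z_L(x))-\Phi_q(z_L(x'))}_{\cH_q}.
\]
This follows from linearity of the inner product in its second argument. Cauchy--Schwarz then bounds the absolute value by
\[
 B\,\norm{\Phi_q(z_L(x))-\Phi_q(z_L(x'))}_{\cH_q}.
\]

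Second, invoke \eqref{eq:main-deep-propagation}, which bounds the squared feature distance by $\bigl(\prod_{\ell=1}^L s_{\ell,q}\bigr)\norm{x-x'}_q$. Taking square roots, which is monotone on $[0,\infty)$, gives exactly \eqref{eq:holder-stability}.

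The degenerate cases need no separate treatment. These are $B=0$, some $s_{\ell,q}=0$, and $L=0$ with an empty product equal to one, and each is covered by the same chain of inequalities.

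There is no real obstacle here. The only substantive input is the Brownian isometry in \Cref{prop:main-isometry}, which converts $\ell_q$ distance into \emph{squared} feature distance, together with the nonexpansiveness of ReLU inside \Cref{prop:main-propagation}. Both are already established. The exponent $1/2$ in the statement reflects precisely this squared-distance identity.
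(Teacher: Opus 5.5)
Your proposal is correct and matches the paper's proof: Cauchy--Schwarz in $\cH_q$, then the propagation bound \eqref{eq:main-deep-propagation}, then a square root. The paper only phrases the middle step slightly differently, first using the isometry to get $B\,\|z_L(x)-z_L(x')\|_q^{1/2}$ and then applying the $\ell_q$ contraction, which is the same content.
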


\begin{proof}
For every $x,x'$, Cauchy--Schwarz and the canonical feature isometry give
\begin{align*}
 |f_a(x)-f_a(x')|
 &\le\|a\|\,\|\Phi_q(z_L(x))-\Phi_q(z_L(x'))\|\\
 &\le B\,\|z_L(x)-z_L(x')\|_q^{1/2}.
\end{align*}
By \eqref{eq:main-deep-propagation},
$\|z_L(x)-z_L(x')\|_q\le
(\prod_{\ell=1}^Ls_{\ell,q})\|x-x'\|_q$.
Taking the square root proves \eqref{eq:holder-stability}; if a factor or the
input distance is zero, both sides remain well defined and the same argument
applies.
\end{proof}

\begin{corollary}[Architecture-level norm-product envelope]
\label{cor:product-envelope}
Let $\widehat M_{q,S}^{(0)}=n^{-1}\sum_i\norm{x_i}_q$.  For the fixed depth-$L$ representation in \Cref{sec:main-geometry},
\begin{equation}
  \Rad_S(\cF_{z_L,B}^{(q)})
  \le\frac{B}{\sqrt n}
  \left[
    \left(\prod_{\ell=1}^Ls_{\ell,q}\right)\widehat M_{q,S}^{(0)}
    +\sum_{r=1}^L\norm{b_r}_q\prod_{\ell=r+1}^Ls_{\ell,q}
  \right]^{1/2}.
  \label{eq:bias-product-rad}
\end{equation}
In the bias-free case,
\begin{equation}
  \Rad_S(\cF_{z_L,B}^{(q)})
  \le B\sqrt{\frac{\widehat M_{q,S}^{(0)}}n}
  \prod_{\ell=1}^Ls_{\ell,q}^{1/2}.
  \label{eq:biasfree-product-rad}
\end{equation}
\end{corollary}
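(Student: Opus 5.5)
This corollary follows directly from the architecture-level bound of \Cref{prop:main-propagation2}, which is the same statement restated with the notation $\widehat M_{q,S}^{(0)}$. So the proof is a short specialization rather than a new argument. The plan is to first bound the empirical activation mass of the terminal representation $z_L$ on the sample. Then insert that bound into the upper endpoint of \eqref{eq:main-two-sided} in \Cref{thm:main-fixed}.

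\emph{Step 1: average the one-point envelope.} Apply \eqref{eq:main-affine-envelope} of \Cref{prop:main-propagation} at each sample point $x_i$. Average over $i\in[n]$; because the bias sum does not depend on $i$, averaging leaves it unchanged. This gives
\[
\widehat M_{q,S}(z_L)\le\Bigl(\prod_{\ell=1}^L s_{\ell,q}\Bigr)\widehat M_{q,S}^{(0)}+\sum_{r=1}^L\norm{b_r}_q\prod_{\ell=r+1}^L s_{\ell,q}.
\]

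\emph{Step 2: invoke the fixed-head bound.} The upper inequality in \eqref{eq:main-two-sided} reads $\Rad_S(\cF_{z_L,B}^{(q)})\le B\sqrt{\widehat M_{q,S}(z_L)/n}$. This holds for every realized pair $(S,z_L)$. Since the square root is monotone, substituting the Step 1 bound yields \eqref{eq:bias-product-rad}.

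\emph{Step 3: bias-free case.} Set all $b_r=0$, so the bias sum vanishes. Then use $\sqrt{\prod_\ell s_{\ell,q}}=\prod_\ell s_{\ell,q}^{1/2}$, which is valid because every $s_{\ell,q}\ge0$. This gives \eqref{eq:biasfree-product-rad}.

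There is no real obstacle here. The only points to check are the degenerate cases, namely $B=0$, zero matrices, and empty products equal to one. All of them are already covered by the conventions used in \Cref{prop:main-propagation} and \Cref{thm:main-fixed}.
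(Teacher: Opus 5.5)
Your proposal is correct and is essentially the paper's own proof. The paper likewise averages the one-point envelope \eqref{eq:main-affine-envelope} over the sample to bound $\widehat M_{q,S}(z_L)$, substitutes this into the upper endpoint of \eqref{eq:main-two-sided}, and drops the bias sum in the bias-free case.
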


\begin{proof}
Average \eqref{eq:main-affine-envelope} over the sample and substitute the resulting upper bound on $\widehat M_{q,S}(z_L)$ into the upper endpoint of \eqref{eq:main-two-sided}.  When every bias vanishes, only the first term remains.
\end{proof}

The realized activation mass in \Cref{thm:main-fixed} can be much smaller than this architecture-level envelope.  The following construction shows that no converse based only on a product of layer norms is possible.

\begin{proposition}[No lower characterization by layer-norm products alone]
\label{prop:no-product-lower}
Fix $q\in\{1,2\}$ and consider a bias-free ReLU network with input dimension $d_0\ge1$ and at least two affine--ReLU layers.
Assume that the first hidden width satisfies $d_1\ge2$ and that the second layer has at least one output coordinate.
For every prescribed $s_1,s_2>0$, there exist matrices
$W_1\in\R^{d_1\times d_0}$ and
$W_2\in\R^{d_2\times d_1}$ such that
\[
  \norm{W_1}_{q\to q}=s_1,
  \qquad
  \norm{W_2}_{q\to q}=s_2,
  \qquad
  \relu\bigl(W_2\relu(W_1x)\bigr)=0
  \quad\text{for every }x\in\R^{d_0}.
\]
Thus a network may have an arbitrary positive product
$\norm{W_1}_{q\to q}\norm{W_2}_{q\to q}=s_1s_2$
while its final representation, and hence every anchored Brownian-head class built on it, is identically zero.
Consequently, no strictly positive lower bound on Brownian-head complexity can depend only on a product of layer operator norms.
\end{proposition}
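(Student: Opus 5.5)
The construction uses nonnegativity. After the first ReLU every hidden state lies in $\R_+^{d_1}$. A second-layer matrix whose entries are all nonpositive therefore sends every such state into $(-\infty,0]^{d_2}$, and the second ReLU annihilates it. The only remaining task is to choose the two matrices so that their induced norms are exactly the prescribed values $s_1$ and $s_2$, simultaneously for $q=1$ and $q=2$.

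\textbf{Choice of matrices.} Let $e_1$ denote the first standard basis vector in the relevant dimension. Set $W_1:=s_1e_1e_1^\top\in\R^{d_1\times d_0}$ and $W_2:=-s_2e_1e_1^\top\in\R^{d_2\times d_1}$.
\begin{itemize}
\item A rank-one matrix $\alpha\,e_1e_1^\top$ maps $v$ to $\alpha v_1e_1$, so $\norm{\alpha e_1e_1^\top v}_q=|\alpha||v_1|\le|\alpha|\norm{v}_q$.
\item Equality holds at $v=e_1$, for both $q=1$ and $q=2$.
\end{itemize}
Hence $\norm{W_1}_{q\to q}=s_1$ and $\norm{W_2}_{q\to q}=s_2$. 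This uses only $d_0,d_1,d_2\ge1$; the hypothesis $d_1\ge2$ is not needed, though it allows other variants such as a full nonpositive block.

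\textbf{Vanishing of the representation.} For every $x$ we have $\relu(W_1x)=s_1(x_1)_+e_1$, and therefore $W_2\relu(W_1x)=-s_1s_2(x_1)_+e_1$, which is coordinatewise $\le0$. Applying $\relu$ gives $z_2\equiv0$. If the network has more than two layers, every later layer is bias-free, so $z_\ell=\relu(W_\ell\cdot0)=0$ by induction, whatever the later weights are. The later weights can be chosen with any prescribed norms, so the total product of layer norms is an arbitrary positive number.

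\textbf{Consequence for complexity.} By \Cref{prop:main-isometry}, $\Phi_q(0)=0$. Every head function $x\mapsto\inner{a}{\Phi_q(z_L(x))}$ is therefore identically zero, so $\Rad_S(\cF_{z_L,B}^{(q)})=0$ for every sample; this also follows from \eqref{eq:main-exact-rad}.

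Now suppose a lower bound of the form $\Rad_S\ge\psi\bigl(\prod_\ell s_{\ell,q}\bigr)$ held with $\psi$ strictly positive. Evaluating it at this network gives $0\ge\psi(s_1s_2)>0$, a contradiction.

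There is no genuine obstacle here. The only point needing care is checking that the induced norms are attained exactly, rather than merely bounded, for both values of $q$; the rank-one choice settles this immediately.
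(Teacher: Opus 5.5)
Your proof is correct, but it makes the representation vanish by a different mechanism than the paper. The paper takes $W_1=s_1e_1v^\top$ with $\|v\|_{q^*}=1$, using the rank-one identity $\|uv^\top\|_{q\to q}=\|u\|_q\|v\|_{q^*}$, and $W_2=s_2e_1e_2^\top$. Then $W_2e_1=0$, so the second pre-activation is already identically zero: the ray $\R_+e_1$ containing $\relu(W_1x)$ lies in the kernel of $W_2$. This is a nullspace (rank-deficiency) mechanism, and it is exactly where the hypothesis $d_1\ge2$ enters.

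Your construction relies on sign instead. The second pre-activation $-s_1s_2(x_1)_+e_1$ is generally nonzero but nonpositive, and the second $\relu$ annihilates it; this is a dead-ReLU mechanism, like one of the networks in the paper's rank-deficiency figure. Your route gains two things. The hypothesis $d_1\ge2$ becomes unnecessary, as you correctly note. The norm verification is also fully elementary and needs no dual-norm formula.

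The paper's route gains robustness to the activation: vanishing does not depend on the second ReLU, and would persist with any activation satisfying $\sigma(0)=0$ or with none at that layer. It therefore illustrates the paper's theme that rank deficiency of rectangular layers, and not only sign cancellation, can produce zero realized mass under an arbitrary positive norm product.

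Your treatment of further bias-free layers, the use of $\Phi_q(0)=0$, and the contradiction with any strictly positive lower bound match the paper's argument.
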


\begin{proof}
Let $q^*$ be the H\"older conjugate of $q$ and choose
$v\in\R^{d_0}$ with $\norm{v}_{q^*}=1$.
Set
\[
  W_1=s_1e_1v^\top.
\]
The induced norm of a rank-one operator satisfies
$\norm{uv^\top}_{q\to q}=\norm{u}_q\norm{v}_{q^*}$; hence
$\norm{W_1}_{q\to q}=s_1$.
Moreover, for every $x$,
$W_1x$ lies in $\operatorname{span}\{e_1\}$, and therefore
$\relu(W_1x)$ lies in the nonnegative ray $\R_+e_1$.

Because $d_1\ge2$, define
\[
  W_2=s_2e_1e_2^\top,
\]
where the first $e_1$ is an output basis vector in $\R^{d_2}$ and
$e_2$ is the second input basis vector in $\R^{d_1}$.
Again the rank-one norm formula gives
$\norm{W_2}_{q\to q}=s_2$, while $W_2e_1=0$.
It follows that
$W_2\relu(W_1x)=0$ for every $x$, and applying the second ReLU leaves zero unchanged.
Any additional bias-free layers also map zero to zero, regardless of their positive operator norms.
Finally, $\Phi_{q,m}(0)=K_q(\cdot,0)=0$, so the corresponding anchored Brownian-head class contains only the zero function.
\end{proof}

\begin{corollary}[Generalization for a fixed or independently learned representation]
\label{cor:fixed-generalization}
Let $S=((X_i,Y_i))_{i=1}^n\sim P^n$, let $\delta\in(0,1)$, and suppose $z$ is deterministic or random independently of $S$.  If $\ell:\mathcal Y\times\R\to[0,1]$ is $L_\ell$-Lipschitz in its second argument, then, with probability at least $1-\delta$, simultaneously for all $f\in\cF_{z,B}^{(q)}$,
\begin{equation}
  R(f)\le\widehat R_S(f)
  +2L_\ell B\sqrt{\frac{\widehat M_{q,S}(z)}n}
  +3\sqrt{\frac{\log(2/\delta)}{2n}}.
  \label{eq:fixed-gen-bound}
\end{equation}
Here $R(f)=\E_P\ell(Y,f(X))$ and $\widehat R_S(f)=n^{-1}\sum_i\ell(Y_i,f(X_i))$.
\end{corollary}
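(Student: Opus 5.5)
The plan is the standard symmetrization argument for bounded losses, applied to the loss class $\ell\circ\cF_{z,B}^{(q)}=\{(x,y)\mapsto\ell(y,f(x)):f\in\cF_{z,B}^{(q)}\}$, followed by contraction and the fixed-head upper bound of \Cref{thm:main-fixed}.

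\emph{Reduction to a fixed class.} If $z$ is random but independent of $S$, condition on $z$. Then the class is fixed, $S$ is still distributed as $P^n$, and it suffices to prove the bound for deterministic $z$ and integrate the conditional probability at the end.

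\emph{Uniform deviation and empirical complexity.} Since $\ell\in[0,1]$, the supremum $\Psi(S)=\sup_f(R(f)-\widehat R_S(f))$ has bounded differences $1/n$. McDiarmid's inequality, symmetrization, and a second McDiarmid step converting the expected Rademacher complexity into the empirical one $\Rad_S(\ell\circ\cF)$ (each step at confidence $\delta/2$) give, with probability at least $1-\delta$,
\[
R(f)\le\widehat R_S(f)+2\Rad_S(\ell\circ\cF)+3\sqrt{\log(2/\delta)/(2n)}.
\]
This is exactly the textbook form in \citet{Mohri2018}.

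\emph{Contraction and activation mass.} Talagrand's contraction lemma \citep{LedouxTalagrand1991} applies coordinatewise with the maps $t\mapsto\ell(Y_i,t)$, each of which is $L_\ell$-Lipschitz. It yields $\Rad_S(\ell\circ\cF)\le L_\ell\Rad_S(\cF_{z,B}^{(q)})$. The upper inequality in \eqref{eq:main-two-sided} then bounds this by $L_\ell B\sqrt{\widehat M_{q,S}(z)/n}$, which completes \eqref{eq:fixed-gen-bound}.

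\emph{Expected obstacles.} The main point needing care is measurability of the supremum over the uncountable ball $\|a\|\le B$. This is not actually a problem: for fixed sample values the supremum depends on $a$ only through $n$ inner products. It can therefore be restricted to a countable dense subset of the finite-dimensional span of $\{\Phi_q(z(x_i))\}_{i=1}^n$, and continuity of $\ell$ in its second argument makes the supremum over that countable set equal to the full supremum. A second point is that the constant $2$ in front of the complexity term requires the contraction form without the factor $2$. That holds for the one-sided supremum used in symmetrization, which is the form we need here.
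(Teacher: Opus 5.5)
Your proposal is correct and follows the paper's proof essentially step for step: you condition on an independent $z$, apply the empirical-Rademacher deviation bound of \citet{Mohri2018} to the loss class, contract, and finish with the upper endpoint of \Cref{thm:main-fixed}. The paper first recenters to $\psi_i(t)=\ell(Y_i,t)-\ell(Y_i,0)$ so that the contraction maps vanish at zero, which the one-sided lemma you invoke does not require.

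One small correction to your measurability aside. The deviation supremum also contains $R(f_a)$, which depends on the component of $a$ orthogonal to the sample-feature span, so restricting to that span does not work there. Instead, restrict to a sample-independent countable dense subset of the ball in the separable space $\cH_q$, using continuity of $a\mapsto R(f_a)$ by dominated convergence. Alternatively, simply adopt the paper's outer-probability convention.
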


\begin{proof}
First suppose $z$ is deterministic.  The empirical-Rademacher deviation
theorem for a fixed class $\mathcal G$ of functions with values in $[0,1]$
states that, with probability at least $1-\delta$, simultaneously for all
$g\in\mathcal G$,
\[
 \E g\le\frac1n\sum_{i=1}^ng(X_i,Y_i)
 +2\widehat{\mathfrak R}_S(\mathcal G)
 +3\sqrt{\frac{\log(2/\delta)}{2n}};
\]
we use the empirical form recorded in \citet{Mohri2018}.  Apply it to
$\mathcal G=\{(x,y)\mapsto\ell(y,f(x)):f\in\cF_{z,B}^{(q)}\}$.
For the realized sample define
$\psi_i(t)=\ell(Y_i,t)-\ell(Y_i,0)$.  Then $\psi_i(0)=0$ and each $\psi_i$ is
$L_\ell$-Lipschitz.  The scalar contraction inequality gives
\begin{align*}
 \widehat{\mathfrak R}_S(\mathcal G)
 &=\frac1n\E_\eps\sup_f\sum_i\eps_i\psi_i(f(X_i))\\
 &\le L_\ell\Rad_S(\cF_{z,B}^{(q)}),
\end{align*}
because the removed term $\sum_i\eps_i\ell(Y_i,0)$ is independent of $f$
and has expectation zero.  The upper endpoint of
\eqref{eq:main-two-sided} is
$\Rad_S(\cF_{z,B}^{(q)})\le
B\sqrt{\widehat M_{q,S}(z)/n}$, which proves
\eqref{eq:fixed-gen-bound}.

If $z$ is random independently of $S$, condition on its value.  The preceding
probability statement holds with conditional probability at least
$1-\delta$ for every fixed value of $z$.  Integrating this conditional bound
over the law of $z$ gives the same unconditional probability, without moving
a sample-dependent representation choice through a Rademacher expectation.
\end{proof}

\begin{proposition}[Scale-balanced terminal rescaling]
\label{prop:scale-balance}
Fix $q\in\{1,2\}$. Let $z:\cX\to\R^m$, let $c>0$, and define
\begin{align*}
  H_z&=\overline{\operatorname{span}}\{\Phi_{q,m}(z(x)):x\in\cX\},\\
  H_{cz}&=\overline{\operatorname{span}}\{\Phi_{q,m}(cz(x)):x\in\cX\}.
\end{align*}
Because $K_q(cu,cv)=cK_q(u,v)$, there is a unique surjective isometry
$J_c:H_z\to H_{cz}$ satisfying
\begin{equation}
  J_c\Phi_{q,m}(z(x))=c^{-1/2}\Phi_{q,m}(cz(x))
  \qquad(x\in\cX).
  \label{eq:scale-isometry}
\end{equation}
For every $a\in H_z$, define $a_c=c^{-1/2}J_ca$.
Then
\begin{equation}
  \inner{a_c}{\Phi_{q,m}(cz(x))}
  =\inner{a}{\Phi_{q,m}(z(x))}
  \quad\text{for all }x,
  \qquad
  \norm{a_c}=c^{-1/2}\norm{a}.
  \label{eq:scale-compensation}
\end{equation}
Consequently, under this compensating terminal rescaling,
\begin{equation}
  \norm{a_c}^2\widehat M_{q,S}(cz)
  =\norm{a}^2\widehat M_{q,S}(z).
  \label{eq:scale-invariant-product}
\end{equation}
\end{proposition}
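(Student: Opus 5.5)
The plan is to build $J_c$ from the Gram structure of the two feature families, using the homogeneity $K_q(cu,cv)=cK_q(u,v)$, and then read off the compensation identities.

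\emph{Homogeneity.} For $q\in\{1,2\}$ and $c>0$ we have $\norm{cu}_q=c\norm{u}_q$. Substituting into \eqref{eq:main-brownian-kernel} gives $K_q(cu,cv)=cK_q(u,v)$.

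\emph{Construction of $J_c$.} Define $J_c$ on finite combinations by
\[
J_c\Bigl(\sum_k\alpha_k\Phi_{q,m}(z(x_k))\Bigr)=c^{-1/2}\sum_k\alpha_k\Phi_{q,m}(cz(x_k)).
\]
By the reproducing property and homogeneity,
\[
\norm{c^{-1/2}\sum_k\alpha_k\Phi_{q,m}(cz(x_k))}^2=c^{-1}\sum_{k,l}\alpha_k\alpha_lK_q(cz(x_k),cz(x_l))=\norm{\sum_k\alpha_k\Phi_{q,m}(z(x_k))}^2.
\]
This norm identity is the one step needing care. It shows $J_c$ is well defined: two representations of the same vector differ by a combination of norm zero, which is mapped to norm zero. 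It also shows $J_c$ is linear and isometric on the dense span. It therefore extends uniquely by continuity to an isometry on $H_z$. Its range is closed, because isometries have complete range, and it contains the dense span of $\{\Phi_{q,m}(cz(x))\}$. Hence $J_c$ is surjective onto $H_{cz}$. Uniqueness holds because any isometry satisfying \eqref{eq:scale-isometry} is linear and determined on a dense set.

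\emph{Compensation identities.} Put $a_c=c^{-1/2}J_ca$. Since $J_c$ preserves inner products,
\[
\inner{a_c}{\Phi_{q,m}(cz(x))}=c^{-1/2}\inner{J_ca}{c^{1/2}J_c\Phi_{q,m}(z(x))}=\inner{a}{\Phi_{q,m}(z(x))},
\]
using $\Phi_{q,m}(cz(x))=c^{1/2}J_c\Phi_{q,m}(z(x))$. Also $\norm{a_c}=c^{-1/2}\norm{J_ca}=c^{-1/2}\norm{a}$. This proves \eqref{eq:scale-compensation}.

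\emph{Invariant product.} By homogeneity of the norm, $\widehat M_{q,S}(cz)=c\,\widehat M_{q,S}(z)$. Therefore
\[
\norm{a_c}^2\widehat M_{q,S}(cz)=c^{-1}\norm{a}^2\cdot c\,\widehat M_{q,S}(z)=\norm{a}^2\widehat M_{q,S}(z),
\]
which is \eqref{eq:scale-invariant-product}.
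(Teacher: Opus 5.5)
Your proposal is correct and follows the same route as the paper. Homogeneity of $K_q$ gives inner-product preservation on the span of the canonical features, and the map extends by continuity to an isometry whose closed range contains a dense spanning set of $H_{cz}$. The compensation and invariant-product identities then follow from $\Phi_{q,m}(cz(x))=\sqrt c\,J_c\Phi_{q,m}(z(x))$ and $\widehat M_{q,S}(cz)=c\,\widehat M_{q,S}(z)$. The one justification to tighten is the uniqueness remark. It holds among linear isometries, which is the intended meaning, but not because every isometry satisfying \eqref{eq:scale-isometry} is automatically linear: when $z$ is constant and nonzero, a nonlinear distance-preserving map of the one-dimensional $H_z$ onto $H_{cz}$ also satisfies it.
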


\begin{proof}
The one-homogeneity of the norm gives
$K_q(cu,cv)=cK_q(u,v)$.
Hence, for every $x,x'\in\cX$,
\begin{align*}
  &\inner{
    c^{-1/2}\Phi_{q,m}(cz(x))
  }{
    c^{-1/2}\Phi_{q,m}(cz(x'))
  }_{\cH_{q,m}}\\
  &\qquad=c^{-1}K_q(cz(x),cz(x'))
  =K_q(z(x),z(x'))
  =\inner{\Phi_{q,m}(z(x))}{\Phi_{q,m}(z(x'))}.
\end{align*}
Therefore the rule in \eqref{eq:scale-isometry} preserves inner products on the algebraic span.
It extends uniquely by continuity to an isometry $J_c:H_z\to H_{cz}$, and its range contains a dense spanning set of $H_{cz}$, so the range is all of $H_{cz}$.
Since \eqref{eq:scale-isometry} is equivalent to
$\Phi_{q,m}(cz(x))=\sqrt c\,J_c\Phi_{q,m}(z(x))$,
\begin{align*}
  \inner{a_c}{\Phi_{q,m}(cz(x))}
  &=\inner{c^{-1/2}J_ca}{\sqrt c\,J_c\Phi_{q,m}(z(x))}\\
  &=\inner{a}{\Phi_{q,m}(z(x))}.
\end{align*}
The norm identity follows from the isometry.
Finally,
$\widehat M_{q,S}(cz)=c\widehat M_{q,S}(z)$, and substitution proves
\eqref{eq:scale-invariant-product}.
\end{proof}

% Flexible appendix floats; the boundary before Appendix D is retained.
\section{Supplementary Experiments and Reproducibility}
\label{app:experiments}\label{app:extended-diagnostics}

This section expands the controlled evidence summarized in \Cref{sec:experiments}. The three principal protocols are parallel: exact finite-sample constructions, same-sample representation-family experiments, and frozen-transfer experiments. The earlier digits study is retained only as an auxiliary mechanism diagnostic for the fixed-head quantities. None of these protocols is used to claim an end-to-end state-of-the-art system.

The experiments are designed to check the quantities appearing in the theory rather than to establish a new benchmark. The protocols and results are documented below; experimental code and machine-readable outputs are separate from this manuscript project.
We use the handwritten-digits data distributed with scikit-learn~\citep{Pedregosa2011}, normalize pixels to $[0,1]$, and form disjoint stratified source/probe/validation/test splits containing approximately $50/25/12.5/12.5$ percent of the data.
For each of three seeds, a $64$--$128$--$64$--$32$ ReLU feature extractor with a linear $10$-class head is trained for $60$ epochs on the source split only, using Adam in PyTorch~\citep{Paszke2019}.
We then freeze the $32$-dimensional representation and use only the disjoint target splits for all downstream fits.
The logistic linear probe is fitted on the union of the probe and validation splits.
For each additive or L\'evy--Brownian kernel-ridge head, the probe split is used for fitting candidate models, the validation split selects the regularization parameter, and the selected model is refitted on probe plus validation data.
The test split is untouched until final evaluation.

\subsection{Sharpness of the trace characterization}

On $400$ frozen downstream-fit features per seed, sampled without replacement from the union of the probe and validation representations, we estimate
\[
  \frac1n\E_\eps\sqrt{\eps^\top K\eps}
\]
using $600$ Rademacher draws and compare it with the universal lower and upper bounds from \Cref{thm:main-fixed} for $B=1$.
As shown in \Cref{fig:rad-tightness}, the Monte Carlo value lies well inside the guaranteed interval and is close to the trace upper bound for both kernels.
The mean ratios of Monte Carlo complexity to the upper bound are approximately $0.93$ for $q=1$ and $0.92$ for $q=2$.
The Gram-sensitive lower bound from \Cref{prop:main-gram} is also reported; it adapts to the realized off-diagonal Gram energy and strictly improves the worst-case universal lower endpoint and moves toward the Monte Carlo value.

\begin{figure}[!htbp]
  \centering
  \includegraphics[width=0.524615\linewidth]{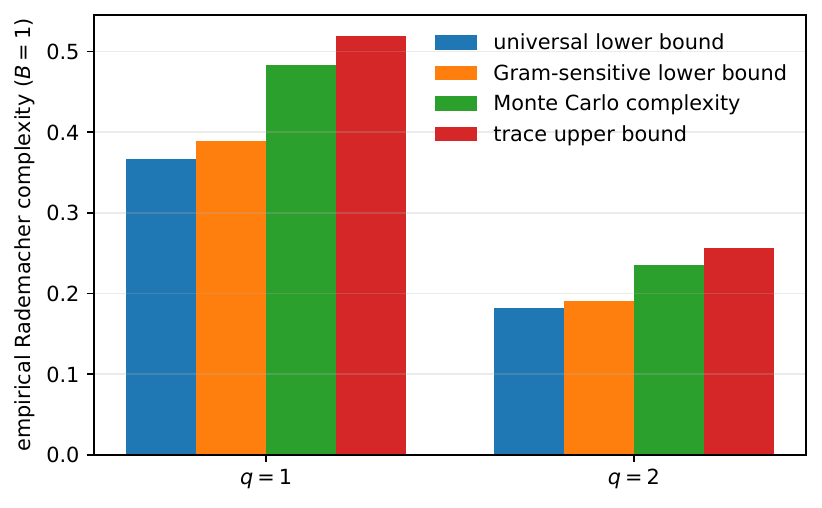}
  \caption{Monte Carlo empirical Rademacher complexity of frozen Brownian feature heads compared with the universal lower endpoint, the Gram-sensitive refinement in \Cref{prop:main-gram}, and the trace upper endpoint. Bars show means over three seeds.}
  \label{fig:rad-tightness}
\end{figure}

\subsection{Activation mass versus layer-norm envelopes}

At every fifth epoch, we record the empirical final-layer activation mass and the bias-free product envelope from \eqref{eq:biasfree-product-rad}.
The network contains biases, so this product is displayed only as a diagnostic proxy; the exact affine envelope is given by \eqref{eq:bias-product-rad}.
At the final epoch, the $\ell_1$ product proxy is on average about $110$ times the observed $\ell_1$ activation mass, while the spectral proxy is about $4.0$ times the observed $\ell_2$ mass.
The centered Euclidean dispersion is about $38\%$ of the uncentered mass, and its bias-free product envelope is approximately $4.1$ times the observed dispersion.
\Cref{fig:mass-l1,fig:mass-l2,fig:centered-l2} illustrate that the empirical quantities characterized by \Cref{thm:main-fixed} and \Cref{cor:main-centered} can be substantially more informative than worst-case layerwise propagation estimates.

\begin{figure}[!htbp]
  \centering
  \begin{subfigure}[t]{0.46\linewidth}
    \centering
    \includegraphics[width=0.846154\linewidth]{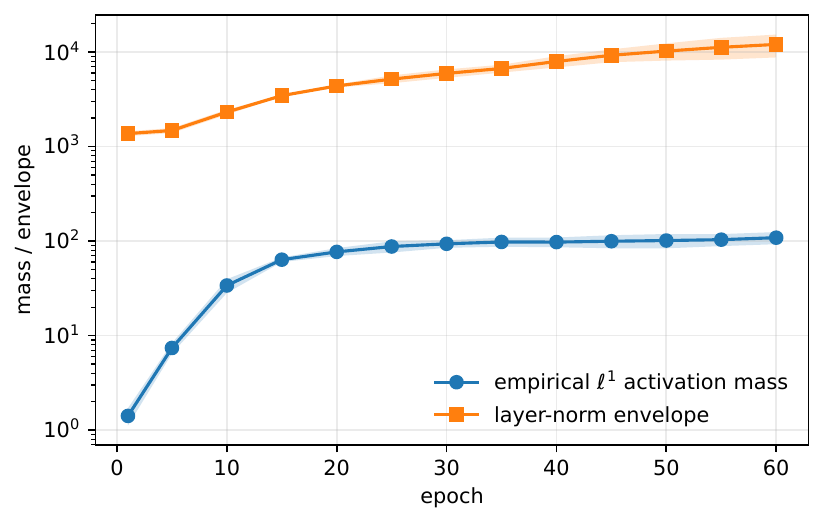}
    \caption{$\ell_1$ activation mass and induced-$\ell_1$ proxy.}
    \label{fig:mass-l1}
  \end{subfigure}\hfill
  \begin{subfigure}[t]{0.46\linewidth}
    \centering
    \includegraphics[width=0.846154\linewidth]{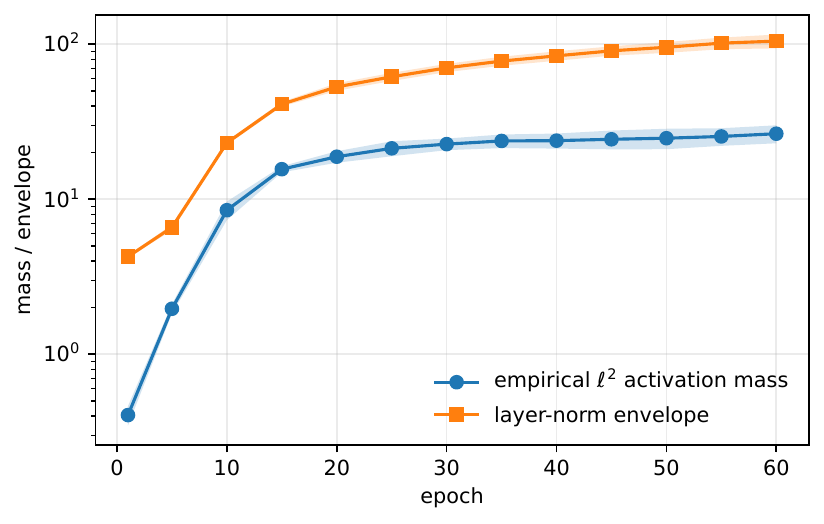}
    \caption{$\ell_2$ activation mass and spectral proxy.}
    \label{fig:mass-l2}
  \end{subfigure}
  \caption{Empirical activation mass and layer-norm product proxies during training. Both vertical axes are logarithmic; ribbons show one standard deviation over three seeds.}
  \label{fig:mass-envelope-pair}
\end{figure}

\begin{figure}[!htbp]
  \centering
  \includegraphics[width=0.440000\linewidth]{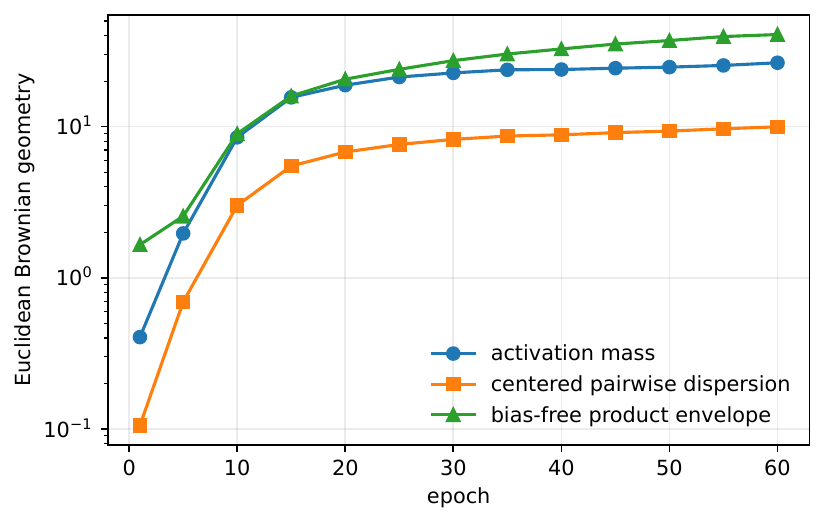}
  \caption{Euclidean activation mass, centered pairwise Brownian dispersion, and the bias-free centered product envelope from \Cref{cor:main-centered}. The feature extractor is evaluated on a probe split independent of its training data.}
  \label{fig:centered-l2}
\end{figure}

\subsection{Predictive performance of Brownian heads}

For each fitted $C$-output kernel head, let $\widehat{\mathsf A}_q$ be its
Hilbert--Schmidt output operator, let $S_{\mathrm{fit}}$ be the union of the
probe and validation splits, let $n_{\mathrm{fit}}=\abs{S_{\mathrm{fit}}}$, and set
\begin{equation}
  \widehat M_{q,\mathrm{fit}}
  :=\frac1{n_{\mathrm{fit}}}
  \sum_{x\in S_{\mathrm{fit}}}\norm{z(x)}_q,
  \qquad
  \Pi_q
  :=\norm{\widehat{\mathsf A}_q}_{\HS}
  \sqrt{\frac{\widehat M_{q,\mathrm{fit}}}{n_{\mathrm{fit}}}}.
  \label{eq:experimental-head-scale}
\end{equation}

\begin{table}[!htbp]
  \centering
  \caption{Digits test accuracy and the fitted scalarized head-scale diagnostic
  $\Pi_q$ from \eqref{eq:experimental-head-scale}, reported as mean $\pm$ standard deviation over three seeds.
  The diagnostic is shown only for the kernel heads and is not directly comparable across different output parameterizations.
  Because the experiment has $C=10$ outputs, the vector-valued upper bound in
  \Cref{prop:vector-hs-extension} contains the additional fixed factor $\sqrt{10}$; $\Pi_q$ is therefore not itself a certified multiclass bound.}
  \label{tab:digits}
  \begin{tabular}{lcc}
    \toprule
    Method & Test accuracy (\%) & $\Pi_q$ \\
    \midrule
    Source-trained neural head & $92.74\pm2.68$ & -- \\
    Frozen-feature linear probe & $92.89\pm3.36$ & -- \\
    Additive Brownian head ($q=1$) & $94.22\pm2.35$ & $0.662\pm0.162$ \\
    L\'evy--Brownian head ($q=2$) & $94.81\pm2.82$ & $0.964\pm0.246$ \\
    \bottomrule
  \end{tabular}
\end{table}

The L\'evy--Brownian head obtains the highest mean accuracy in this small experiment, with the additive head also exceeding the two linear-head baselines on average.
These results do not establish superiority on large-scale tasks; they show that the proposed terminal hypothesis spaces are computationally usable and need not sacrifice predictive performance in a frozen-feature setting.

\subsection{Regularization paths and spectral tails}
\label{subsec:empirical-capacity-path}

For each frozen representation and each $q\in\{1,2\}$, we refit the multiclass
Brownian kernel-ridge head over the predetermined regularization grid and
record the Hilbert--Schmidt capacity
\begin{equation}
  \mathfrak C_{q,S}(\lambda)
  :=\norm{\widehat{\mathsf A}_{q,\lambda}}_{\HS}
  \sqrt{\frac{\widehat M_{q,S}}n}.
  \label{eq:empirical-capacity-path}
\end{equation}
The probe and validation observations are used for fitting, and the held-out
test split is used only to evaluate the displayed path.
As guaranteed by \Cref{prop:main-ridge}, the capacity is
nonincreasing in $\lambda$ for every seed and both Brownian geometries.
\Cref{fig:capacity-path} plots test error against this exact complexity
coordinate rather than parameter count; the curves are diagnostic and are not
claimed to satisfy a universal deterministic-equivalent law.

The same fitted Gram matrices provide a direct spectral diagnostic for
\Cref{prop:main-localized} and
\Cref{prop:spectral-tail-chain-compression}.
\Cref{fig:spectral-tail} reports the normalized residual trace
$\sum_{j>r}\lambda_j/\sum_j\lambda_j$.
Its decay quantifies the approximation available to a rank-$r$ feature
projection; the experiment concerns terminal Brownian features and does not
simulate the distinct recursive-chain architecture.

\begin{figure}[!htbp]
  \centering
  \begin{subfigure}[t]{0.485\linewidth}
    \centering
    \includegraphics[width=0.846154\linewidth]{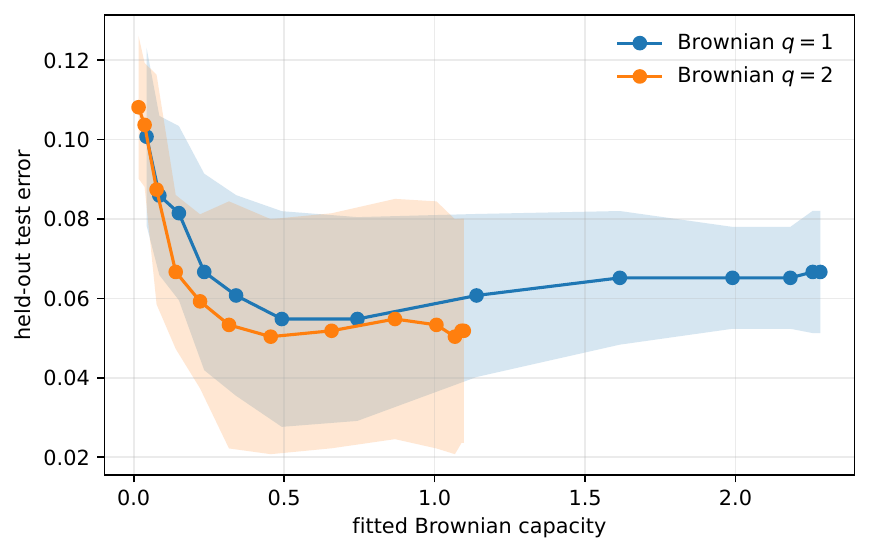}
    \caption{Held-out error along the fitted-capacity path.}
    \label{fig:capacity-path}
  \end{subfigure}\hfill
  \begin{subfigure}[t]{0.485\linewidth}
    \centering
    \includegraphics[width=0.846154\linewidth]{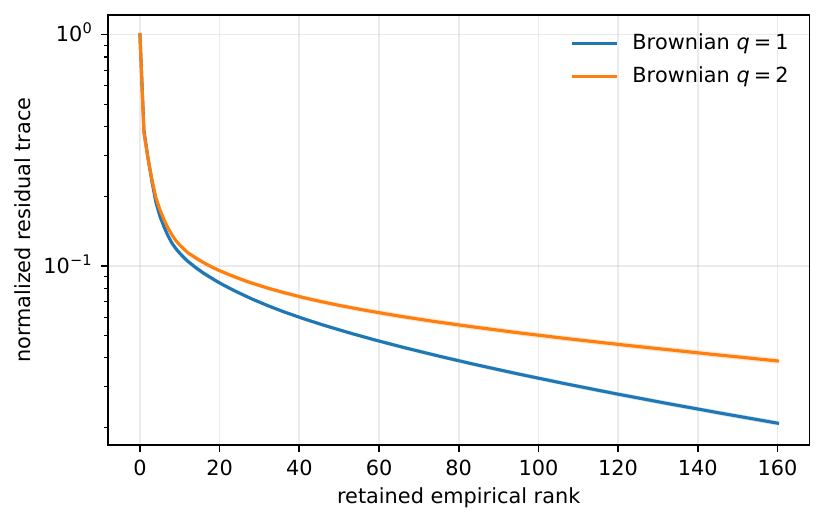}
    \caption{Normalized empirical Brownian Gram tails.}
    \label{fig:spectral-tail}
  \end{subfigure}
  \caption{Regularization-path and spectral diagnostics. In (a), points correspond to the predetermined regularization grid; markers show means over three seeds and ribbons show one standard deviation. In (b), faster residual-trace decay yields smaller localized complexity and smaller rank-truncation error.}
  \label{fig:path-and-tail}
\end{figure}

\subsection{Digits reproducibility details}
\label{app:digits-reproducibility}

The reproducible script \path{experiments/run_digits.py} uses only
\path{sklearn.datasets.load_digits}.  A $64\to128\to64\to32\to10$ ReLU network is trained for $60$ source-half epochs with Adam (learning rate $10^{-3}$, weight decay $10^{-4}$, batch size $128$).  The disjoint target half is split $1/2$, $1/4$, $1/4$ into probe, validation, and test sets.  Brownian kernel-ridge regularization is selected on the validation split from $\{10^{-5},10^{-4.5},\ldots,10^1\}$ and refit on probe plus validation; test observations never enter fitting or selection.  Seeds $0,1,2$, raw results, spectral diagnostics, figures, histories, and exact package versions are included in the archive.

\subsection{Exact finite-sample constructions}
\label{app:theory-core-experiments}

All quantities in this subsection are deterministic evaluations of proved finite-sample formulas. No optimizer, train/test split, or selected random seed enters the calculations. For $S_m=\sum_{i=1}^m\varepsilon_i$, the block-union class with $k$ equal blocks satisfies
\begin{equation}
 \rho_n(\mathcal T_k)
 =\frac{k\,\mathbb E|S_{n/k}|+\mathbb E|S_n|}{2n},
 \qquad |\mathcal T_k|=2^k,
 \qquad \operatorname{VCdim}(\mathcal T_k)=k.
 \label{eq:app-block-union-exact}
\end{equation}
The common term $\mathbb E|S_n|/(2n)$ does not depend on $k$; subtracting it isolates the representation-selection growth.
\begin{table}[!htbp]
\centering
\small
\begin{tabular}{lccc}
\toprule
quantity & fitted slope & $R^2$ & deterministic check \\
\midrule
finite traces vs. $\sqrt{\log N/n}$ & 0.9785 & 0.99982 & slope in $[0.95,1.02]$ \\
VC traces vs. $\sqrt{V/n}$ & 0.9804 & 0.99981 & slope in $[0.95,1.02]$ \\
\bottomrule
\end{tabular}
\caption{Deterministic theorem-diagnostic summary.}
\label{tab:app-phase01-acceptance}
\end{table}

For the mass-only counterexample, $\mathcal T$ is the class of all $n/2$-subsets. Every representation $z_T(x_i)=A\mathbf 1_{\{i\in T\}}$ has mass $A/2$, while for every Rademacher vector one can choose a subset containing $n/2$ signs of one majority value. Hence the family complexity is exactly $B\sqrt A/2$ for every even $n$, whereas a fixed member has trace upper $B\sqrt{A/(2n)}$. The ratio is $\sqrt{n/2}$.

\begin{figure}[!htbp]
\centering
\includegraphics[width=0.829231\linewidth]{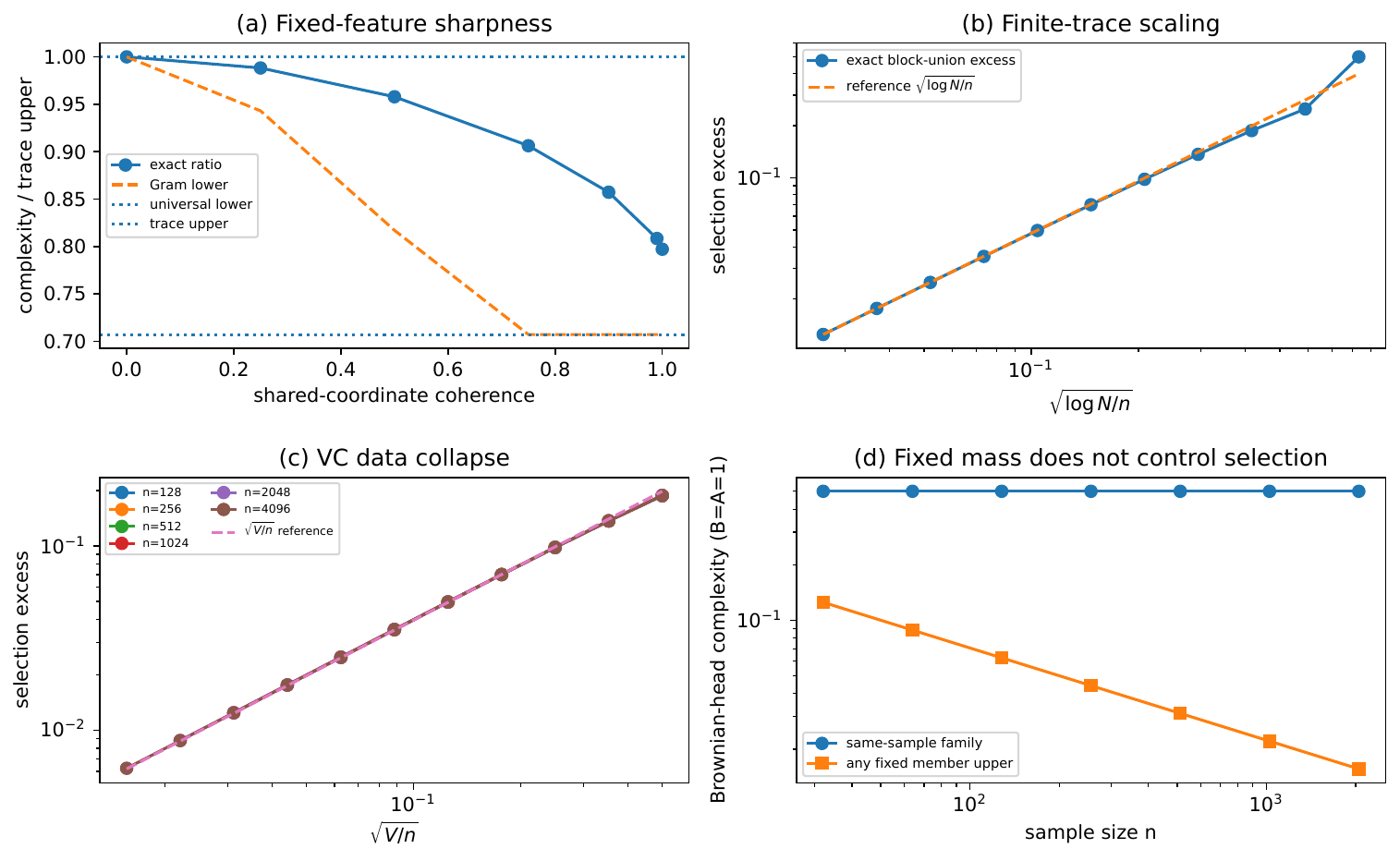}
\caption{Complete deterministic diagnostics. The panels show the optimal fixed-feature endpoints, finite-trace $\sqrt{\log N/n}$ scaling, VC $\sqrt{V/n}$ scaling, and the equal-mass failure under same-sample selection.}
\label{fig:app-theory-core-summary}
\end{figure}

\begin{figure}[!htbp]
\centering
\includegraphics[width=0.575385\linewidth]{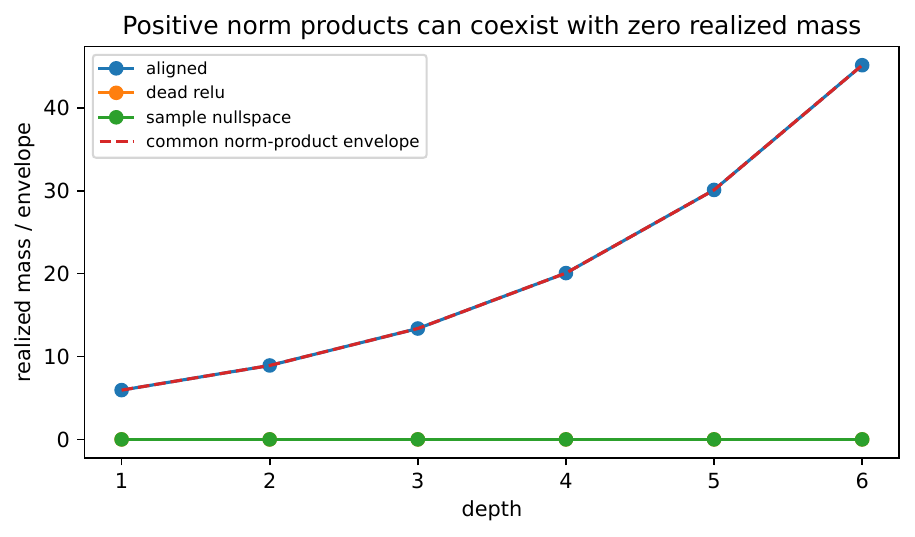}
\caption{Rank-deficiency construction. The aligned, sample-nullspace, and dead-ReLU networks have the same positive product of operator norms, while the latter two have zero realized sample mass. Thus the norm product is a worst-case envelope and has no universal realized converse.}
\label{fig:app-rank-deficiency}
\end{figure}

The exact summaries and vector figures are reported here; the underlying machine-readable tables are separate from this manuscript project.

\subsection{Same-sample representation-family experiments}
\label{app:same-sample-learned}

This controlled experiment complements the exact trace constructions with label-dependent representation families on frozen ImageNet-pretrained ResNet-18 features. Each CIFAR context uses the official training/test split; each EuroSAT seed uses a stratified 80/20 split. From the training pool we draw a balanced sample of exactly $n=500$, evaluate on a balanced held-out subset of 2000 examples, corrupt 20\% of the training labels only, and never expose clean held-out labels to adapter generation or family selection. Training features are standardized coordinatewise using training means and standard deviations (with a deterministic variance floor), and every sample is then normalized to unit Euclidean norm; the same training transformation is applied to held-out features.

For candidate $c$, a class-contrast vector $v_c\in\mathbb R^C$ is sampled from a standard Gaussian, centered, and normalized. With corrupted labels $\widetilde y_i$, the scalar targets are $t_{c,i}=(v_c)_{\widetilde y_i}$, centered over the training sample, and the label-dependent direction is
\[s_c=\frac{X^\top t_c}{\|X^\top t_c\|_2}.\]
A width-12 adapter is then
\[z_c(x)=\operatorname{ReLU}((W_{0,c}+\eta_c a_cs_c^\top)x+b_c),\]
where the entries of $W_{0,c}$ are independent $\mathcal N(0,1/d)$ variables, $b_c\sim\mathcal N(0,0.08^2I)$, and $a_c$ is an independent normalized Gaussian width-12 direction. The predeclared strengths cycle through $\eta_c\in\{0.5, 1, 2, 4\}$ in candidate order. The 32 candidates are generated in that deterministic order from the context seed, and the nested families consist of the first $K\in\{1, 2, 4, 8, 16, 32\}$ candidates. Positive homogeneity rescales every adapter so that its empirical additive-Brownian activation mass is exactly one before any complexity comparison.

For each dataset-seed context, the same 128 Rademacher sign vectors are reused for every candidate and every nested family. The union complexity is the draw-average of the largest additive-Brownian canonical-feature norm over the first $K$ candidates, divided by $n$; the strongest fixed-member complexity is the largest corresponding candidatewise draw-average. Their difference is the selection excess. The threshold statistic is the draw-average of the largest absolute coordinate-threshold Rademacher sum over candidates and thresholds, divided by $n$. Scalar Brownian layer-cake quantities are evaluated exactly; only the secondary predictive head discretizes each coordinate into 6 equal-width threshold cells. Each candidate head chooses ridge regularization by training-set GCV over $\{0.0001, 0.001, 0.01, 0.1\}$, while family selection chooses the candidate with the smallest corrupted-training MSE among the first $K$ candidates. All random seeds, split hashes, and corruption hashes are stored in the machine-readable records.

The largest family has mean union-to-fixed ratio 1.396 and minimum ratio 1.338. At fixed $n=500$, the stored selection excess and $\sqrt{\log K}$ predictor are each divided separately by their own contextwise maximum; pooling these normalized curves gives a descriptive zero-intercept slope 0.8621 and $R^2=0.8795$. This is a curve-shape comparison only: it neither tests the dependence on $n$ nor estimates a universal theorem constant. The global union--threshold Spearman value is 0.977, but the within-context correlation equals one in all 15 nested contexts because both quantities are monotone in $K$; it is therefore retained only as a confounded diagnostic and not as independent evidence beyond family size.

\begin{figure}[!htbp]
\centering
\includegraphics[width=0.609231\linewidth]{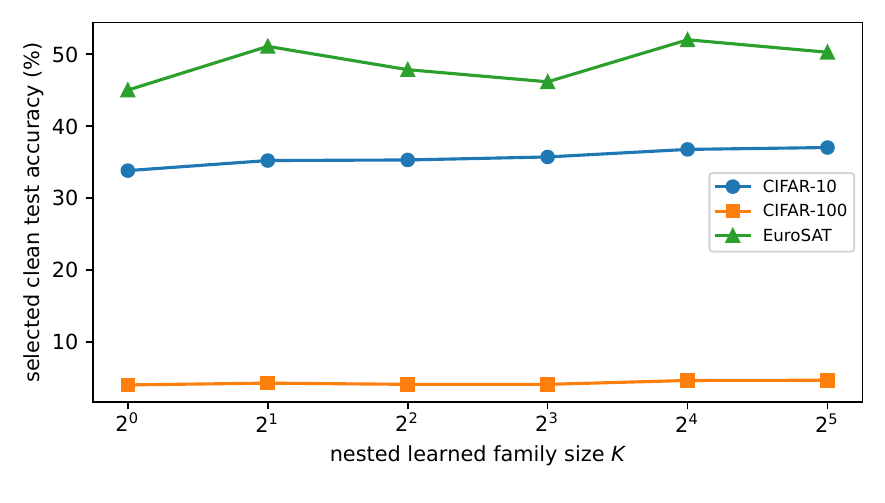}
\caption{Secondary predictive diagnostic: clean held-out accuracy of the candidate selected by corrupted-training MSE within each nested family. Clean test labels are used only for final evaluation.}
\label{fig:learned-selection-predictive}
\end{figure}

Complete candidate-level, family-level, and within-context monotonicity records are separate from this manuscript project.

\subsection{Frozen-transfer experiments}
\label{app:frozen-transfer}

We freeze the penultimate 512-dimensional representation of an ImageNet-pretrained ResNet-18 \citep{He2016DeepResidual} and fit five heads on identical balanced labeled subsets: linear ridge, RBF and Laplace kernel ridge, and the additive and L\'evy--Brownian heads. CIFAR-10 and CIFAR-100 \citep{Krizhevsky2009CIFAR} use their official test sets. EuroSAT \citep{Helber2019EuroSAT} uses a new stratified 80/20 split for each seed, with all methods paired on that split. The label budgets 500 and 1000 are nested, the representation never uses target labels, ridge regularization is selected by training-set generalized cross-validation, and clean test labels are accessed only for final evaluation. The $q=2$ head is the rotation-invariant Euclidean Brownian geometry emphasized by the projected-threshold theory; $q=1$ is retained as the coordinatewise alternative. \Cref{fig:main-frozen-transfer,tab:main-frozen-transfer} in the main text summarize paired accuracy differences across all six settings and the $n_{\rm lab}=1000$ results, respectively.

\begin{figure}[!htbp]
\centering
\includegraphics[width=0.829231\linewidth]{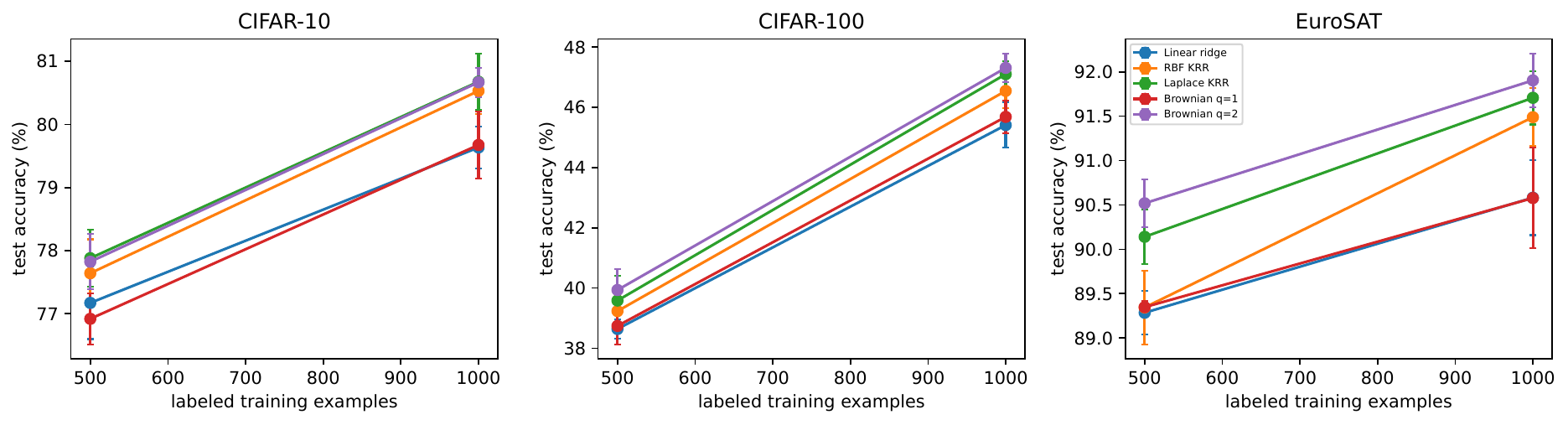}
\caption{Frozen-feature learning curves. Markers show five-seed means and error bars show one standard deviation.}
\label{fig:app-frozen-transfer-curves}
\end{figure}

\begin{table}[!htbp]
\caption{Complete frozen-transfer results for both labeled-data budgets. Values are mean $\pm$ standard deviation over five paired seeds.}
\label{tab:app-frozen-transfer-full}
\centering
\scriptsize
\begin{tabular}{llrrrrr}
\toprule
Dataset & Method & $n_{\rm lab}$ & Accuracy (\%) & Macro-F1 (\%) & Fit (s) & Predict (s) \\
\midrule
CIFAR-10 & Brownian $q=1$ & 500 & $76.92\pm0.40$ & $76.68\pm0.39$ & 0.01 & 0.01 \\
CIFAR-10 & Brownian $q=2$ & 500 & $77.82\pm0.44$ & $77.62\pm0.44$ & 0.01 & 0.01 \\
CIFAR-10 & Laplace KRR & 500 & $77.88\pm0.45$ & $77.65\pm0.45$ & 0.01 & 0.01 \\
CIFAR-10 & Linear ridge & 500 & $77.17\pm0.57$ & $76.84\pm0.57$ & 0.01 & 0.01 \\
CIFAR-10 & RBF KRR & 500 & $77.64\pm0.54$ & $77.47\pm0.54$ & 0.01 & 0.02 \\
CIFAR-10 & Brownian $q=1$ & 1000 & $79.67\pm0.53$ & $79.49\pm0.54$ & 0.05 & 0.03 \\
CIFAR-10 & Brownian $q=2$ & 1000 & $80.67\pm0.23$ & $80.54\pm0.23$ & 0.05 & 0.02 \\
CIFAR-10 & Laplace KRR & 1000 & $80.68\pm0.45$ & $80.53\pm0.46$ & 0.05 & 0.03 \\
CIFAR-10 & Linear ridge & 1000 & $79.63\pm0.33$ & $79.43\pm0.34$ & 0.04 & 0.01 \\
CIFAR-10 & RBF KRR & 1000 & $80.53\pm0.37$ & $80.38\pm0.38$ & 0.05 & 0.03 \\
CIFAR-100 & Brownian $q=1$ & 500 & $38.74\pm0.61$ & $36.06\pm0.62$ & 0.01 & 0.02 \\
CIFAR-100 & Brownian $q=2$ & 500 & $39.93\pm0.70$ & $37.56\pm0.65$ & 0.01 & 0.02 \\
CIFAR-100 & Laplace KRR & 500 & $39.58\pm0.83$ & $37.55\pm0.79$ & 0.01 & 0.02 \\
CIFAR-100 & Linear ridge & 500 & $38.64\pm0.33$ & $35.49\pm0.46$ & 0.01 & 0.01 \\
CIFAR-100 & RBF KRR & 500 & $39.23\pm0.50$ & $36.52\pm0.51$ & 0.01 & 0.02 \\
CIFAR-100 & Brownian $q=1$ & 1000 & $45.68\pm0.54$ & $43.20\pm0.60$ & 0.05 & 0.03 \\
CIFAR-100 & Brownian $q=2$ & 1000 & $47.31\pm0.48$ & $45.22\pm0.51$ & 0.05 & 0.03 \\
CIFAR-100 & Laplace KRR & 1000 & $47.10\pm0.43$ & $45.23\pm0.50$ & 0.05 & 0.03 \\
CIFAR-100 & Linear ridge & 1000 & $45.41\pm0.75$ & $42.57\pm0.92$ & 0.04 & 0.01 \\
CIFAR-100 & RBF KRR & 1000 & $46.55\pm0.56$ & $44.09\pm0.68$ & 0.05 & 0.03 \\
EuroSAT & Brownian $q=1$ & 500 & $89.35\pm0.07$ & $88.82\pm0.07$ & 0.01 & 0.01 \\
EuroSAT & Brownian $q=2$ & 500 & $90.52\pm0.27$ & $90.09\pm0.29$ & 0.01 & 0.01 \\
EuroSAT & Laplace KRR & 500 & $90.14\pm0.31$ & $89.68\pm0.34$ & 0.01 & 0.01 \\
EuroSAT & Linear ridge & 500 & $89.29\pm0.25$ & $88.83\pm0.29$ & 0.01 & 0.00 \\
EuroSAT & RBF KRR & 500 & $89.34\pm0.42$ & $88.83\pm0.46$ & 0.01 & 0.01 \\
EuroSAT & Brownian $q=1$ & 1000 & $90.58\pm0.57$ & $90.15\pm0.58$ & 0.05 & 0.01 \\
EuroSAT & Brownian $q=2$ & 1000 & $91.90\pm0.30$ & $91.56\pm0.32$ & 0.05 & 0.01 \\
EuroSAT & Laplace KRR & 1000 & $91.71\pm0.30$ & $91.34\pm0.32$ & 0.05 & 0.01 \\
EuroSAT & Linear ridge & 1000 & $90.58\pm0.42$ & $90.20\pm0.42$ & 0.04 & 0.00 \\
EuroSAT & RBF KRR & 1000 & $91.49\pm0.33$ & $91.12\pm0.34$ & 0.05 & 0.02 \\
\bottomrule
\end{tabular}
\end{table}

\begin{figure}[!htbp]
\centering
\includegraphics[width=0.693846\linewidth]{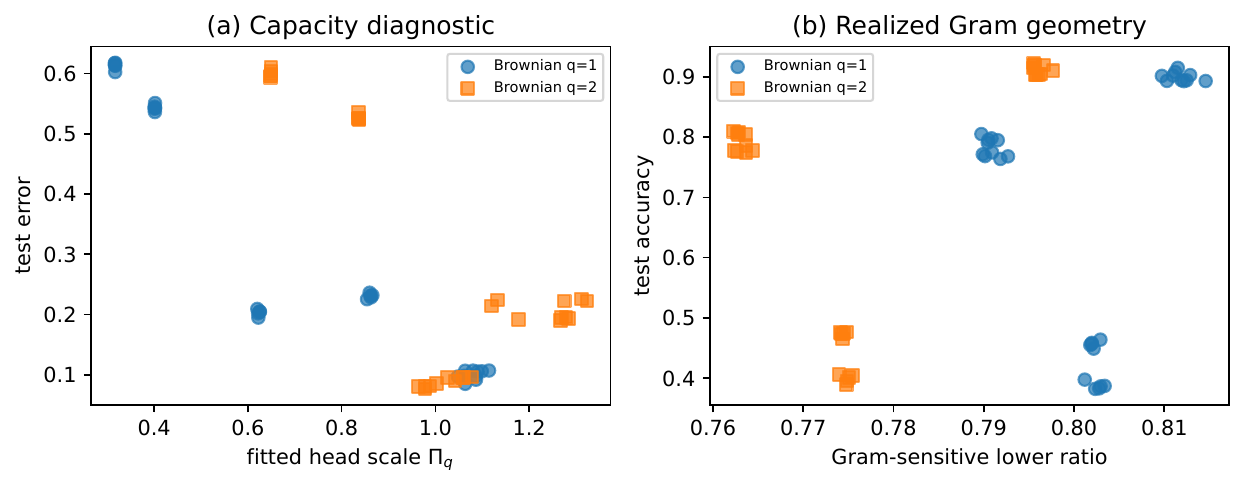}
\caption{Brownian fitted-head and realized-Gram diagnostics. These are finite-sample mechanism diagnostics, not deterministic-equivalent test-risk curves.}
\label{fig:app-frozen-transfer-capacity}
\end{figure}

All predefined completeness, split-pairing, finite-value, and Brownian positive-semidefiniteness checks passed. The fixed L\'evy--Brownian head has average rank 1.333, mean gap to the best head 0.011 percentage points, and a positive mean gain over linear ridge in all six settings. Complete records include split hashes, selected ridge parameters, accuracy, macro-F1, balanced accuracy, fit time, and prediction time.

\paragraph{Paired differences.}
Intervals below are descriptive paired percentile-bootstrap summaries over five seeds; no multiplicity-corrected significance claim is made.
\begin{table}[!htbp]
\caption{Paired gains of the Brownian heads over linear ridge. Intervals are descriptive paired percentile-bootstrap summaries over five seeds; no multiplicity-corrected significance claim is made.}
\label{tab:app-brownian-linear-paired}
\centering
\scriptsize
\begin{tabular}{llrrr}
\toprule
Dataset & Brownian head & $n_{\rm lab}$ & Gain over linear (pp) & Interval (pp) \\
\midrule
CIFAR-10 & Brownian $q=1$ & 500 & -0.25 & [-0.48, -0.05] \\
CIFAR-10 & Brownian $q=2$ & 500 & 0.65 & [0.49, 0.82] \\
CIFAR-10 & Brownian $q=1$ & 1000 & 0.04 & [-0.20, 0.25] \\
CIFAR-10 & Brownian $q=2$ & 1000 & 1.03 & [0.92, 1.13] \\
CIFAR-100 & Brownian $q=1$ & 500 & 0.10 & [-0.26, 0.54] \\
CIFAR-100 & Brownian $q=2$ & 500 & 1.29 & [0.90, 1.69] \\
CIFAR-100 & Brownian $q=1$ & 1000 & 0.27 & [0.02, 0.53] \\
CIFAR-100 & Brownian $q=2$ & 1000 & 1.90 & [1.53, 2.18] \\
EuroSAT & Brownian $q=1$ & 500 & 0.06 & [-0.13, 0.30] \\
EuroSAT & Brownian $q=2$ & 500 & 1.23 & [1.01, 1.44] \\
EuroSAT & Brownian $q=1$ & 1000 & -0.00 & [-0.27, 0.30] \\
EuroSAT & Brownian $q=2$ & 1000 & 1.32 & [1.12, 1.53] \\
\bottomrule
\end{tabular}
\end{table}

\begin{table}[!htbp]
\centering
\scriptsize
\caption{L\'evy--Brownian $q=2$ paired differences against the two principal non-Brownian baselines.}
\label{tab:app-q2-pairwise}
\begin{tabular}{llrrr}
\toprule
Dataset & Comparator & $n_{\rm lab}$ & Difference (pp) & Interval (pp) \\
\midrule
CIFAR-10 & Laplace KRR & 500 & -0.06 & [-0.13, 0.06] \\
CIFAR-10 & Linear ridge & 500 & 0.65 & [0.49, 0.82] \\
CIFAR-10 & Laplace KRR & 1000 & -0.01 & [-0.25, 0.22] \\
CIFAR-10 & Linear ridge & 1000 & 1.03 & [0.92, 1.13] \\
CIFAR-100 & Laplace KRR & 500 & 0.35 & [0.18, 0.61] \\
CIFAR-100 & Linear ridge & 500 & 1.29 & [0.90, 1.69] \\
CIFAR-100 & Laplace KRR & 1000 & 0.21 & [0.05, 0.36] \\
CIFAR-100 & Linear ridge & 1000 & 1.90 & [1.53, 2.18] \\
EuroSAT & Laplace KRR & 500 & 0.38 & [0.30, 0.46] \\
EuroSAT & Linear ridge & 500 & 1.23 & [1.01, 1.44] \\
EuroSAT & Laplace KRR & 1000 & 0.20 & [0.10, 0.27] \\
EuroSAT & Linear ridge & 1000 & 1.32 & [1.12, 1.53] \\
\bottomrule
\end{tabular}
\end{table}

% Keep supplementary-experiment floats before the kernel-chain appendix.
\FloatBarrier

% Appendix-local display spacing; keep short derivations together.
\allowdisplaybreaks[1]
\setlength{\jot}{2pt}
\setlength{\abovedisplayskip}{5pt plus 2pt minus 2pt}
\setlength{\belowdisplayskip}{5pt plus 2pt minus 2pt}
\setlength{\belowdisplayshortskip}{3pt plus 2pt minus 2pt}
\section{Supporting Recursive Brownian-Chain Material}
\label{sec:brownian-chains}

This appendix records material substantially related to BKL, not a separate contribution. It is not used in the terminal-head theorem proofs. We relate fixed-measure BKL recursions to Hilbertian kernel chains and record recursive Brownian lifts, dyadic propagation, affine energy, and sample compression in the present notation.

\subsection{Fixed-measure BKL recursions are Hilbertian kernel chains}

Let $\mathcal G$ be an RKHS on $\cX$ with kernel $k$, and write
$k_x:=k(\cdot,x)\in\mathcal G$.
Let $\mathcal U\subseteq\mathcal G$ be Borel and let $\mu$ be a finite positive Borel measure on $\mathcal U$ satisfying
\begin{equation}
  \int_{\mathcal U}\norm{u}_{\mathcal G}\dd\mu(u)<\infty.
  \label{eq:chain-link-first-moment}
\end{equation}
Define the Brownian link kernel on $\mathcal G$ by
\begin{equation}
  \widetilde k_{\mu}(h,h')
  :=
  \int_{\mathcal U}
  \kb\!\left(
    \inner{u}{h}_{\mathcal G},
    \inner{u}{h'}_{\mathcal G}
  \right)
  \dd\mu(u),
  \qquad h,h'\in\mathcal G,
  \label{eq:brownian-link-kernel}
\end{equation}
and pull it back to $\cX$ through the canonical feature map $x\mapsto k_x$:
\begin{equation}
  k^{+}_{\mu}(x,x')
  :=\widetilde k_{\mu}(k_x,k_{x'}).
  \label{eq:chain-kernel-recursion}
\end{equation}
This is the Hilbert-space specialization of the kernel-chaining operation of
\citet{HeeringaSpekBrune2025}.

\begin{proposition}[Fixed-measure BKL--kernel-chain correspondence]
\label{prop:bkl-chain-correspondence}
Under \eqref{eq:chain-link-first-moment}, the integral in
\eqref{eq:brownian-link-kernel} is finite for every $h,h'\in\mathcal G$, and
$\widetilde k_\mu$ is a positive-definite kernel on $\mathcal G$.
Consequently, $k^+_\mu$ is a positive-definite kernel on $\cX$ and satisfies
\begin{equation}
  k^{+}_{\mu}(x,x')
  =
  \int_{\mathcal U}
  \kb\bigl(u(x),u(x')\bigr)
  \dd\mu(u).
  \label{eq:bkl-recursion-from-chain}
\end{equation}
Thus every fixed admissible sequence of BKL measures generates, link by link, a Hilbertian reproducing-kernel chain with Brownian link kernels.
In the indexing of an $L$-level BKL, the initial RKHS is followed by $L-1$ such links.
\end{proposition}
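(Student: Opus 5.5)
The proof has three parts: finiteness of the integral, positive definiteness by averaging pullbacks, and the reproducing identity. Each reduces to facts already recorded in the paper.

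\emph{Finiteness.} For each fixed $u\in\mathcal U$ and $h,h'\in\mathcal G$, the bound $0\le\kb(a,b)\le\min\{|a|,|b|\}\le(|a|+|b|)/2$ from the proof of \Cref{prop:pd-kernels}, together with Cauchy--Schwarz in $\mathcal G$, gives
\[
 0\le\kb\bigl(\inner{u}{h}_{\mathcal G},\inner{u}{h'}_{\mathcal G}\bigr)\le\tfrac12\norm{u}_{\mathcal G}\bigl(\norm{h}_{\mathcal G}+\norm{h'}_{\mathcal G}\bigr).
\]
Measurability in $u$ holds because $u\mapsto\inner{u}{h}$ is continuous and $\kb$ is continuous on $\R^2$. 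The first-moment condition \eqref{eq:chain-link-first-moment} then makes the integral in \eqref{eq:brownian-link-kernel} finite, with bound $\tfrac12(\norm h+\norm{h'})\int\norm u\dd\mu$.

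\emph{Positive definiteness.} For each fixed $u$, the map $(h,h')\mapsto\kb(\inner{u}{h},\inner{u}{h'})$ is the pullback of the positive-definite scalar kernel $\kb$ (from \eqref{eq:main-proof-scalar-brownian-feature}) through the linear functional $h\mapsto\inner{u}{h}$, so it is positive definite. Fix points $h_1,\dots,h_N$ and coefficients $c\in\R^N$. Integrability lets us integrate the finite quadratic form termwise:
\[
 \sum_{a,b}c_ac_b\widetilde k_\mu(h_a,h_b)=\int\sum_{a,b}c_ac_b\,\kb\bigl(\inner{u}{h_a},\inner{u}{h_b}\bigr)\dd\mu(u)\ge0,
\]
since the integrand is nonnegative pointwise. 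This is the same argument as the Gaussian-mixture step in \Cref{prop:pd-kernels}, with $\mu$ replacing the Gaussian law. Alternatively, one can exhibit an explicit feature map $h\mapsto\phi_{\rm B}(\inner{\cdot}{h})$ into $L^2(\mu;L^2(\{\pm1\}\times\R_+))$. Composing with $x\mapsto k_x$ preserves positive definiteness, because a pullback of a positive-definite kernel along any map is positive definite. Hence $k^+_\mu$ is positive definite.

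\emph{Reproducing identity.} For $u\in\mathcal U\subseteq\mathcal G$, the reproducing property gives $\inner{u}{k_x}_{\mathcal G}=u(x)$. Substituting $h=k_x$ and $h'=k_{x'}$ into \eqref{eq:brownian-link-kernel} yields \eqref{eq:bkl-recursion-from-chain} directly. For the chain statement, I would proceed by induction on the level. Each step takes the current RKHS $\mathcal G_\ell$ and a fixed admissible measure $\mu_\ell$ satisfying the moment condition, and produces the positive-definite $k^+_{\mu_\ell}$, whose RKHS is $\mathcal G_{\ell+1}$. So $L$ levels consist of the initial RKHS followed by $L-1$ links.

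The only delicate step is the measurability and integrability bookkeeping when $\mathcal U$ is an arbitrary Borel subset of a possibly nonseparable $\mathcal G$. I expect it to be handled by continuity of $u\mapsto\kb(\inner{u}{h},\inner{u}{h'})$ together with the first-moment bound.
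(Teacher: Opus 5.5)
Your proposal is correct and follows the paper's proof essentially step for step. The steps are: the pointwise bound on $\kb$ with Cauchy--Schwarz and the first-moment condition for finiteness (the paper uses the slightly sharper $\norm{u}_{\mathcal G}\min\{\norm{h}_{\mathcal G},\norm{h'}_{\mathcal G}\}$, but either suffices); continuity for measurability; integrating the pointwise nonnegative Gram form of the pulled-back scalar kernel for positive definiteness; the reproducing property $\inner{u}{k_x}_{\mathcal G}=u(x)$ for \eqref{eq:bkl-recursion-from-chain}; and iteration for the chain statement.
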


\begin{proof}
The scalar Brownian kernel obeys
\begin{equation}
  0\le \kb(s,t)\le \min\{\abs{s},\abs{t}\}
  \qquad(s,t\in\R).
  \label{eq:brownian-min-bound}
\end{equation}
If $st\le0$, then $\abs{s-t}=\abs{s}+\abs{t}$ and hence
$\kb(s,t)=0$.
If $st>0$, then
$\abs{s-t}=\bigl|\abs{s}-\abs{t}\bigr|$, so
$\kb(s,t)=\min\{\abs{s},\abs{t}\}$.
By Cauchy--Schwarz and \eqref{eq:brownian-min-bound},
\begin{align}
  0
  &\le
  \kb\!\left(
    \inner{u}{h}_{\mathcal G},
    \inner{u}{h'}_{\mathcal G}
  \right) \\
  &\le
  \norm{u}_{\mathcal G}
  \min\{\norm{h}_{\mathcal G},\norm{h'}_{\mathcal G}\}.
  \label{eq:link-integrability-bound}
\end{align}
The right-hand side is $\mu$-integrable by
\eqref{eq:chain-link-first-moment}, so the link kernel is finite.
The integrand is Borel measurable because
$u\mapsto\inner{u}{h}_{\mathcal G}$ is continuous and $\kb$ is continuous.

To prove positive definiteness, fix
$h_1,\ldots,h_N\in\mathcal G$ and $c_1,\ldots,c_N\in\R$.
The finite sum below is integrable by \eqref{eq:link-integrability-bound}, and therefore
\begin{align*}
  \sum_{r,s=1}^N c_rc_s\widetilde k_\mu(h_r,h_s)
  &=
  \int_{\mathcal U}
  \sum_{r,s=1}^N c_rc_s
  \kb\!\left(
    \inner{u}{h_r}_{\mathcal G},
    \inner{u}{h_s}_{\mathcal G}
  \right)
  \dd\mu(u)\\
  &\ge0.
\end{align*}
For each fixed $u$, the inner sum is nonnegative because it is a Gram quadratic form for the pullback of the positive-definite kernel $\kb$ through the bounded functional
$h\mapsto\inner{u}{h}_{\mathcal G}$.
Hence $\widetilde k_\mu$ is positive definite.
Its pullback $k^+_\mu$ through $x\mapsto k_x$ is therefore positive definite as well.
Finally, the reproducing property gives
\[
  \inner{u}{k_x}_{\mathcal G}=u(x),
  \qquad
  \inner{u}{k_{x'}}_{\mathcal G}=u(x'),
\]
and substitution into \eqref{eq:chain-kernel-recursion} proves
\eqref{eq:bkl-recursion-from-chain}.
Iteration proves the final assertion.
\end{proof}

\begin{proposition}[Directional-moment propagation for fixed BKL links]
\label{prop:bkl-directional-moment}
Under the assumptions of \Cref{prop:bkl-chain-correspondence}, define the
$L^1(\mu)$ analysis norm
\begin{equation}
  \chi_{\mathcal G}(\mu)
  :=\sup_{\norm{h}_{\mathcal G}\le1}
  \int_{\mathcal U}\abs{\inner{u}{h}_{\mathcal G}}\dd\mu(u).
  \label{eq:bkl-directional-moment}
\end{equation}
Then
\begin{equation}
  \chi_{\mathcal G}(\mu)
  \le\int_{\mathcal U}\norm{u}_{\mathcal G}\dd\mu(u).
  \label{eq:bkl-directional-first-moment}
\end{equation}
Let
$d_0(x,x'):=\norm{k_x-k_{x'}}_{\mathcal G}$ and let $d_\mu^+$ be the
canonical feature distance of $k_\mu^+$.
Then
\begin{equation}
  d_\mu^+(x,x')^2
  =\int_{\mathcal U}\abs{u(x)-u(x')}\dd\mu(u)
  \le\chi_{\mathcal G}(\mu)d_0(x,x').
  \label{eq:bkl-directional-link-bound}
\end{equation}
Consequently, after $r$ fixed BKL links with directional moments
$\chi_1,\ldots,\chi_r$, the induced distances satisfy
\begin{equation}
  d_r(x,x')
  \le d_0(x,x')^{2^{-r}}
  \prod_{\ell=1}^r
  \chi_\ell^{2^{-(r-\ell+1)}}.
  \label{eq:bkl-directional-chain-bound}
\end{equation}
The directional factor can be strictly smaller than the usual first moment:
if $\mu$ is uniform on $s$ orthonormal unit vectors, then the right-hand side
of \eqref{eq:bkl-directional-first-moment} equals one whereas
$\chi_{\mathcal G}(\mu)=s^{-1/2}$.
\end{proposition}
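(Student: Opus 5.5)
The plan has four parts, taken in the order of the statement: the first-moment comparison, the one-link distance identity and bound, the iteration over links, and the orthonormal example.

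\emph{First-moment comparison.} Fix $h$ with $\norm{h}_{\mathcal G}\le1$. Cauchy--Schwarz gives $\abs{\inner{u}{h}_{\mathcal G}}\le\norm{u}_{\mathcal G}$ pointwise in $u$. Integrating against $\mu$ and taking the supremum over $h$ proves \eqref{eq:bkl-directional-first-moment}. Finiteness follows from \eqref{eq:chain-link-first-moment}.

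\emph{One-link distance.} Expand the squared canonical feature distance of $k^+_\mu$ as
\[
d_\mu^+(x,x')^2=k^+_\mu(x,x)+k^+_\mu(x',x')-2k^+_\mu(x,x').
\]
Substitute the integral formula \eqref{eq:bkl-recursion-from-chain} for each term. The scalar identity $\kb(s,s)+\kb(t,t)-2\kb(s,t)=\abs{s-t}$ holds directly from the definition of $\kb$, because $\kb(s,s)=\abs{s}$. This gives the displayed identity $\int_{\mathcal U}\abs{u(x)-u(x')}\dd\mu(u)$. Combining the three integrals into one is justified by the integrability bound \eqref{eq:link-integrability-bound} established in the proof of \Cref{prop:bkl-chain-correspondence}.

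\emph{One-link bound.} Write $u(x)-u(x')=\inner{u}{k_x-k_{x'}}_{\mathcal G}$. If $k_x=k_{x'}$, both sides vanish. Otherwise normalize $h=(k_x-k_{x'})/d_0(x,x')$, so that
\[
\int\abs{u(x)-u(x')}\dd\mu=d_0(x,x')\int\abs{\inner{u}{h}}\dd\mu\le\chi_{\mathcal G}(\mu)\,d_0(x,x').
\]

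\emph{Iteration over links.} Applying the one-link bound to link $\ell$, with $\mathcal G$ the RKHS of level $\ell-1$, gives $d_\ell\le\chi_\ell^{1/2}d_{\ell-1}^{1/2}$. Induction on $r$ then gives
\[
d_r\le\chi_r^{1/2}\Bigl(d_0^{2^{-(r-1)}}\prod_{\ell<r}\chi_\ell^{2^{-(r-\ell)}}\Bigr)^{1/2}=d_0^{2^{-r}}\prod_{\ell\le r}\chi_\ell^{2^{-(r-\ell+1)}},
\]
which is \eqref{eq:bkl-directional-chain-bound}. The only care needed is exponent bookkeeping: the $\ell=r$ factor gets exponent $1/2$, and each earlier exponent is halved once more. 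The moment condition \eqref{eq:chain-link-first-moment} must hold at each level; I take this as part of ``fixed BKL links''.

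\emph{Orthonormal example.} Let $\mu=s^{-1}\sum_{j=1}^s\delta_{e_j}$ for orthonormal $e_1,\ldots,e_s$. Then $\int\norm{u}\dd\mu=1$, while
\[
\chi_{\mathcal G}(\mu)=\sup_{\norm h\le1}s^{-1}\sum_{j=1}^s\abs{\inner{e_j}{h}}.
\]
By Cauchy--Schwarz and Bessel's inequality, $\sum_j\abs{\inner{e_j}{h}}\le\sqrt s\,(\sum_j\inner{e_j}{h}^2)^{1/2}\le\sqrt s$. Equality is attained at $h=s^{-1/2}\sum_j e_j$, which gives $\chi_{\mathcal G}(\mu)=s^{-1/2}$.

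None of these steps is deep. The main point to get right is the exact distance identity: it relies on the diagonal relation $\kb(s,s)=\abs s$ together with the integrability needed to split the integrals.
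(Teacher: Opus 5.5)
Your proposal is correct and follows the paper's proof essentially step for step: the pointwise Cauchy--Schwarz bound for the first-moment comparison, the diagonal identity $\kb(s,s)+\kb(t,t)-2\kb(s,t)=\abs{s-t}$ with the reproducing property for the one-link identity and bound, induction on $d_\ell^2\le\chi_\ell d_{\ell-1}$ for the chain, and Cauchy--Schwarz with equality at $h=s^{-1/2}\sum_j e_j$ for the orthonormal example. The paper leaves two details implicit that you make explicit: splitting the three integrals via the integrability bound, and requiring the moment condition at every level.
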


\begin{proof}
For every $h$ with $\|h\|_{\mathcal G}\le1$,
$|\inner{u}{h}|\le\|u\|_{\mathcal G}$ pointwise.  Integrating and then
taking the supremum over such $h$ proves
\eqref{eq:bkl-directional-first-moment}, including the case in which the
right-hand side is zero.
Using the Brownian identity
$\kb(a,a)+\kb(b,b)-2\kb(a,b)=\abs{a-b}$ and the reproducing property,
\begin{align*}
  d_\mu^+(x,x')^2
  &=\int_{\mathcal U}\abs{u(x)-u(x')}\dd\mu(u)\\
  &=\int_{\mathcal U}
    \abs{\inner{u}{k_x-k_{x'}}_{\mathcal G}}\dd\mu(u)\\
  &\le\chi_{\mathcal G}(\mu)\norm{k_x-k_{x'}}_{\mathcal G},
\end{align*}
which proves \eqref{eq:bkl-directional-link-bound}.
At link $\ell$, the same argument gives
$d_\ell^2\le\chi_\ell d_{\ell-1}$.  We prove the closed form by induction
without dividing by any distance or directional moment.  For $r=1$ it reads
$d_1\le d_0^{1/2}\chi_1^{1/2}$.  If it holds at level $r-1$, then
\begin{align*}
 d_r
 &\le \chi_r^{1/2}d_{r-1}^{1/2}\\
 &\le d_0^{2^{-r}}
 \prod_{\ell=1}^{r-1}\chi_\ell^{2^{-(r-\ell+1)}}
 \chi_r^{1/2},
\end{align*}
which is \eqref{eq:bkl-directional-chain-bound}.  If $d_0=0$ or some
$\chi_\ell=0$, the one-step inequality forces all subsequent relevant
distances to be zero, so the same formula remains valid at the endpoints.
For the final claim, if $\mu=s^{-1}\sum_{j=1}^s\delta_{u_j}$ with
$(u_j)_{j=1}^s$ orthonormal, then
\[
  \chi_{\mathcal G}(\mu)
  =\frac1s\sup_{\norm h\le1}\sum_{j=1}^s\abs{\inner{u_j}{h}}
  =\frac1{\sqrt s},
\]
by Cauchy--Schwarz, with equality at
$h=s^{-1/2}\sum_{j=1}^su_j$.
\end{proof}

\begin{corollary}[Gaussian covariance links and dimension-free directional control]
\label{cor:gaussian-bkl-directional}
Let $\Sigma\succeq0$ on $\R^m$ and let
$\mu_\Sigma=\mathcal N(0,\Sigma)$.  With $c_0=\sqrt{\pi/2}$,
\begin{equation}
  c_0^{-1}K_\Sigma(h,h')
  =\int_{\R^m}\kb\bigl(\inner{g}{h},\inner{g}{h'}\bigr)
  \dd\mu_\Sigma(g),
  \label{eq:gaussian-bkl-link-normalization}
\end{equation}
and
\begin{equation}
  \chi_{\R^m}(\mu_\Sigma)
  =\sqrt{\frac2\pi}\,\norm{\Sigma^{1/2}}_{2\to2}.
  \label{eq:gaussian-bkl-directional-factor}
\end{equation}
Consequently, after restoring the normalization $c_0$, the one-link metric
factor is exactly $\norm{\Sigma^{1/2}}_{2\to2}$:
\begin{equation}
  \norm{\Phi_\Sigma(h)-\Phi_\Sigma(h')}_{\cH_\Sigma}^2
  =\norm{h-h'}_\Sigma
  \le\norm{\Sigma^{1/2}}_{2\to2}\norm{h-h'}_2.
  \label{eq:gaussian-bkl-normalized-metric}
\end{equation}
For a sequence of normalized Gaussian covariance links
$\Sigma_1,\ldots,\Sigma_r$, the corresponding chain therefore satisfies
\begin{equation}
  d_r
  \le d_0^{2^{-r}}
  \prod_{\ell=1}^r
  \norm{\Sigma_\ell^{1/2}}_{2\to2}^{2^{-(r-\ell+1)}}.
  \label{eq:gaussian-bkl-chain-factor}
\end{equation}
For $\Sigma=I_m$ and $G\sim\mathcal N(0,I_m)$, the normalized directional
factor equals one, whereas the normalized first-moment factor is
$c_0\E\norm{G}_2>1$ for $m\ge2$.
Thus the directional sharpening removes an otherwise dimension-dependent
Gaussian first-moment loss.
\end{corollary}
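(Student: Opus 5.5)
The plan is to specialize \Cref{prop:bkl-directional-moment} to $\mathcal G=\R^m$ with the Euclidean inner product and $\mu=\mu_\Sigma$, so that $u(h)=\inner{g}{h}$. The identity \eqref{eq:gaussian-bkl-link-normalization} is then a direct rewriting of \Cref{prop:main-covariance}. That result gives $K_\Sigma(h,h')=c_0\E\kb(\inner{G_\Sigma}{h},\inner{G_\Sigma}{h'})$ with $G_\Sigma\sim\mathcal N(0,\Sigma)$, and dividing by $c_0$ yields the display. The first-moment condition \eqref{eq:chain-link-first-moment} holds because $\E\norm{G_\Sigma}_2\le(\tr\Sigma)^{1/2}<\infty$. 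Hence the fixed-link framework applies.

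For \eqref{eq:gaussian-bkl-directional-factor}, I would use the scalar identity $\E\abs{\inner{G_\Sigma}{h}}=\sqrt{2/\pi}\,\norm{h}_\Sigma=\sqrt{2/\pi}\,\norm{\Sigma^{1/2}h}_2$, which was already used in \Cref{prop:main-covariance}. Taking the supremum over $\norm{h}_2\le1$ gives exactly $\sqrt{2/\pi}\,\norm{\Sigma^{1/2}}_{2\to2}$. The supremum is attained at a top singular vector, which gives equality rather than just an upper bound.

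The metric identity in \eqref{eq:gaussian-bkl-normalized-metric} comes from \Cref{prop:main-covariance}, which gives $\norm{\Phi_\Sigma(h)-\Phi_\Sigma(h')}^2=\norm{h-h'}_\Sigma$. The inequality is the operator-norm bound $\norm{\Sigma^{1/2}(h-h')}_2\le\norm{\Sigma^{1/2}}_{2\to2}\norm{h-h'}_2$. Equivalently, it is $c_0\chi$ from \eqref{eq:bkl-directional-link-bound}, since the normalized link kernel is $c_0$ times the $\mu_\Sigma$-link kernel, which cancels the factor $\sqrt{2/\pi}$. The chain bound \eqref{eq:gaussian-bkl-chain-factor} then follows from the one-step inequality $d_\ell^2\le\norm{\Sigma_\ell^{1/2}}_{2\to2}\,d_{\ell-1}$ by the same induction as in \Cref{prop:bkl-directional-moment}, applied with $\chi_\ell$ replaced by $\norm{\Sigma_\ell^{1/2}}_{2\to2}$. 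Alternatively, one can apply \eqref{eq:bkl-directional-chain-bound} to the rescaled measures $c_0\mu_{\Sigma_\ell}$, whose directional moments are exactly these norms.

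For the final comparison with $\Sigma=I_m$, the normalized directional factor is $c_0\sqrt{2/\pi}=1$. The normalized first moment is $c_0\E\norm{G}_2$. I need $\E\norm{G}_2>\sqrt{2/\pi}$ for $m\ge2$. This holds because $\norm{G}_2\ge\abs{G_1}$, with strict inequality almost surely when $m\ge2$, and $\E\abs{G_1}=\sqrt{2/\pi}$. It also grows like $\sqrt m$, which explains the phrase ``dimension-dependent.''

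The only step requiring care is the bookkeeping of the normalization constant $c_0$. The link in \Cref{prop:bkl-directional-moment} is unnormalized, so I must check that scaling the measure by $c_0$ scales both the squared distance and $\chi$ linearly. It does, because both are integrals that are linear in $\mu$. With that checked, the chain bound transfers verbatim.
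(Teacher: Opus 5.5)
Your proposal is correct and follows essentially the same route as the paper's proof. You identify \eqref{eq:gaussian-bkl-link-normalization} with \Cref{prop:main-covariance}, compute $\chi_{\R^m}(\mu_\Sigma)$ from $\E|\langle G_\Sigma,h\rangle|=\sqrt{2/\pi}\,\|\Sigma^{1/2}h\|_2$, obtain the metric bound either from the covariance isometry or from \Cref{prop:bkl-directional-moment} after scaling by $c_0$, iterate for the chain, and use $\|G\|_2\ge|G_1|$ for the strict comparison; your extra checks (the first-moment condition via $\E\|G_\Sigma\|_2\le(\tr\Sigma)^{1/2}$ and the almost-sure strictness for $m\ge2$) only make explicit details the paper leaves implicit.
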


\begin{proof}
Equation~\eqref{eq:gaussian-bkl-link-normalization} is
\eqref{eq:main-covariance}.  Moreover,
\begin{align*}
  \chi_{\R^m}(\mu_\Sigma)
  &=\sup_{\norm h_2\le1}\E\abs{\inner{G_\Sigma}{h}}\\
  &=\sqrt{\frac2\pi}
  \sup_{\norm h_2\le1}\sqrt{h^\top\Sigma h}
  =\sqrt{\frac2\pi}\norm{\Sigma^{1/2}}_{2\to2},
\end{align*}
which proves \eqref{eq:gaussian-bkl-directional-factor}.
Multiplication of the link kernel by $c_0$ multiplies its squared canonical
feature distance by $c_0$; hence \eqref{eq:gaussian-bkl-normalized-metric}
follows from \Cref{prop:bkl-directional-moment} and is also the direct identity
in \Cref{prop:main-covariance}.
Iteration proves \eqref{eq:gaussian-bkl-chain-factor}.
For $\Sigma=I_m$, strict inequality
$\E\norm{G}_2>\E\abs{G_1}=\sqrt{2/\pi}$ holds for $m\ge2$.
\end{proof}

\begin{remark}[One fixed chain versus the canonical BKL space]
\label{rem:one-chain-versus-bkl}
\Cref{prop:bkl-chain-correspondence} identifies a fixed BKL measure sequence with one chain of RKHSs.
The canonical BKL space of \citet{MohammadigohariBKL2026} is instead the union of the terminal RKHSs over all normalized admissible ladders, and its complexity is the infimum of the corresponding terminal RKHS norms over all ladders that realize the same function.
It is therefore a variational union of Brownian kernel-chain RKHSs, not one fixed RKHS.
Conversely, the terminal-head model studied in \Cref{sec:main-geometry,sec:main-fixed} is an ordinary finite ReLU representation followed by only one terminal Brownian lift.
\phantomsection\label{rem:terminal-versus-recursive-lift}
These three objects---a terminal Brownian head, a fixed Brownian kernel chain, and the BKL variational union over chains---should not be conflated.
\end{remark}

\subsection{Recursive Brownian canonical-feature chains}
\label{sec:main-chain}

The correspondence above suggests a separate architecture that feeds a Brownian canonical feature into the next link.
Fix a depth $L\ge1$, an index $q\in\{1,2\}$, a real Hilbert space
$\mathcal E_0$, and an input feature map
$\Psi_0:\cX\to\mathcal E_0$.
For clarity, write $K_{q,m}$ for the kernel $K_q$ on $\R^m$ and retain the notation
$\Phi_{q,m}$ for its canonical feature map.
For $\ell=1,\ldots,L$, fix a width $m_\ell\ge1$ and let
\[
  T_\ell:\mathcal E_{\ell-1}\to\R^{m_\ell}
\]
be a bounded linear operator when $\R^{m_\ell}$ carries its $\ell_q$ norm, let
$b_\ell\in\R^{m_\ell}$, and set
$\mathcal E_\ell:=\cH_{q,m_\ell}$.
Define recursively
\begin{equation}
  u_\ell(x)
  :=\relu\bigl(T_\ell\Psi_{\ell-1}(x)+b_\ell\bigr),
  \qquad
  \Psi_\ell(x)
  :=\Phi_{q,m_\ell}\bigl(u_\ell(x)\bigr).
  \label{eq:recursive-brownian-feature-chain}
\end{equation}
For later use, introduce the induced kernels
\begin{equation}
  \kappa_0(x,x'):=\inner{\Psi_0(x)}{\Psi_0(x')}_{\mathcal E_0},
  \qquad
  \kappa_\ell(x,x')
  :=K_{q,m_\ell}\bigl(u_\ell(x),u_\ell(x')\bigr).
  \label{eq:recursive-induced-kernels}
\end{equation}

\begin{proposition}[The recursive lift is a Hilbert kernel chain]
\label{prop:recursive-lift-is-chain}
For every $\ell\in\{0,\ldots,L\}$, $\kappa_\ell$ is positive definite and
$\Psi_\ell$ is a feature map for $\kappa_\ell$.
Let $\mathcal G_\ell$ be the RKHS of $\kappa_\ell$.
There is a unique isometry
\begin{equation}
  J_\ell:\mathcal G_\ell
  \longrightarrow
  \overline{\operatorname{span}}\{\Psi_\ell(x):x\in\cX\}
  \subseteq\mathcal E_\ell
  \quad\text{such that}\quad
  J_\ell\kappa_{\ell,x}=\Psi_\ell(x).
  \label{eq:chain-feature-isometry}
\end{equation}
For $\ell\ge1$, the function
\begin{equation}
  \widetilde\kappa_\ell(g,g')
  :=
  K_{q,m_\ell}\!\left(
    \relu(T_\ell J_{\ell-1}g+b_\ell),
    \relu(T_\ell J_{\ell-1}g'+b_\ell)
  \right)
  \label{eq:recursive-link-kernel}
\end{equation}
forms a positive-definite link kernel on $\mathcal G_{\ell-1}$ and satisfies
\begin{equation}
  \kappa_\ell(x,x')
  =\widetilde\kappa_\ell
  \bigl(\kappa_{\ell-1,x},\kappa_{\ell-1,x'}\bigr).
  \label{eq:recursive-chain-identity}
\end{equation}
Hence \eqref{eq:recursive-brownian-feature-chain} is a fixed Hilbertian reproducing-kernel chain.
\end{proposition}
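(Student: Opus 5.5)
We argue by induction on $\ell$, establishing at each level three facts: $\kappa_\ell$ is positive definite with feature map $\Psi_\ell$; the isometry $J_\ell$ exists; and the link identity holds.

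\emph{Base case $\ell=0$.} Here $\kappa_0(x,x')=\inner{\Psi_0(x)}{\Psi_0(x')}_{\mathcal E_0}$ is a Gram kernel, so it is positive definite and $\Psi_0$ is a feature map by definition.

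\emph{Feature-map part for $\ell\ge1$.} By \Cref{prop:pd-kernels}, $K_{q,m_\ell}$ is positive definite. \Cref{prop:main-isometry} and the reproducing property give
\[
\inner{\Psi_\ell(x)}{\Psi_\ell(x')}_{\cH_{q,m_\ell}}=K_{q,m_\ell}(u_\ell(x),u_\ell(x'))=\kappa_\ell(x,x').
\]
Hence $\kappa_\ell$ is a Gram kernel of the vectors $\Psi_\ell(x)$, so it is positive definite with feature map $\Psi_\ell$. No assumption on $T_\ell$ is needed here beyond $u_\ell$ being well defined.

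\emph{The isometry $J_\ell$.} This is the standard RKHS/feature-space construction. Define $J_\ell$ on finite sums by $J_\ell\sum_r c_r\kappa_{\ell,x_r}=\sum_r c_r\Psi_\ell(x_r)$. It is well defined and isometric because both sides have squared norm $\sum_{r,s}c_rc_s\kappa_\ell(x_r,x_s)$; in particular, a sum that is zero in $\mathcal G_\ell$ maps to zero. Since the kernel sections span a dense subspace of $\mathcal G_\ell$, $J_\ell$ extends uniquely by continuity to an isometry onto the closed span of $\{\Psi_\ell(x)\}$. Uniqueness follows from density together with continuity of any isometry. The scale-isometry argument in \Cref{prop:scale-balance} follows the same pattern.

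\emph{Link kernel and identity.} Fix $\ell\ge1$ and set $F_\ell(g)=\relu(T_\ell J_{\ell-1}g+b_\ell)$, a map $\mathcal G_{\ell-1}\to\R^{m_\ell}$. It is well defined because $J_{\ell-1}g\in\mathcal E_{\ell-1}$ and $T_\ell$ is bounded on $\mathcal E_{\ell-1}$. Then $\widetilde\kappa_\ell(g,g')=K_{q,m_\ell}(F_\ell g,F_\ell g')$ is the pullback of a positive-definite kernel through a map, so it is positive definite; explicitly, every finite Gram quadratic form equals the one for $K_{q,m_\ell}$ at the points $F_\ell g_r$. Finally, the defining property of $J_{\ell-1}$ gives $J_{\ell-1}\kappa_{\ell-1,x}=\Psi_{\ell-1}(x)$, hence $F_\ell(\kappa_{\ell-1,x})=u_\ell(x)$. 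Substituting this into the definition of $\widetilde\kappa_\ell$ yields $\kappa_\ell(x,x')$, which is the link identity.

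\emph{Main obstacle.} The only delicate step is a consistency check: $J_{\ell-1}$ must be constructed before $\widetilde\kappa_\ell$ is defined, and its codomain must sit inside $\mathcal E_{\ell-1}$, the domain of $T_\ell$. The induction order above guarantees this. The remaining issue is well-definedness of $J_\ell$ on the algebraic span, which the norm identity settles.
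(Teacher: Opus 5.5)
Your proposal is correct and follows the paper's proof essentially step for step. It uses the same three ingredients: positive definiteness of $\kappa_\ell$ as a Gram kernel of the vectors $\Psi_\ell(x)$, the isometric extension of the section-to-feature map $\sum_r c_r\kappa_{\ell,x_r}\mapsto\sum_r c_r\Psi_\ell(x_r)$ from the dense algebraic span, and the pullback argument for $\widetilde\kappa_\ell$, with $J_{\ell-1}\kappa_{\ell-1,x}=\Psi_{\ell-1}(x)$ giving the chain identity.
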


\begin{proof}
The kernel $\kappa_0$ is positive definite by its feature representation.
For $\ell\ge1$, \eqref{eq:recursive-induced-kernels} is the pullback of the positive-definite kernel $K_{q,m_\ell}$ through $u_\ell$, and
\[
  \kappa_\ell(x,x')
  =\inner{\Psi_\ell(x)}{\Psi_\ell(x')}_{\mathcal E_\ell}.
\]
Define first on the algebraic span of kernel sections
\[
 U_\ell\left(\sum_{i=1}^Nc_i\kappa_{\ell,x_i}\right)
 :=\sum_{i=1}^Nc_i\Psi_\ell(x_i).
\]
For every finite coefficient family,
\begin{align*}
 \left\|\sum_i c_i\Psi_\ell(x_i)\right\|_{\mathcal E_\ell}^2
 &=\sum_{i,j}c_ic_j
 \inner{\Psi_\ell(x_i)}{\Psi_\ell(x_j)}_{\mathcal E_\ell}\\
 &=\sum_{i,j}c_ic_j\kappa_\ell(x_i,x_j)\\
 &=\left\|\sum_i c_i\kappa_{\ell,x_i}\right\|_{\mathcal G_\ell}^2.
\end{align*}
Consequently the rule is well defined: any two section representations of
the same RKHS vector have difference of norm zero and therefore images with
difference of norm zero.  It is an isometry on a dense subspace of
$\mathcal G_\ell$, so it extends uniquely by continuity to an isometry
$J_\ell$.  Its range is the closure of the span of the vectors
$\Psi_\ell(x)$, exactly as displayed in \eqref{eq:chain-feature-isometry}.

For fixed $\ell\ge1$, the map
$g\mapsto\relu(T_\ell J_{\ell-1}g+b_\ell)$ is a well-defined map from
$\mathcal G_{\ell-1}$ to $\R^{m_\ell}$.
Thus \eqref{eq:recursive-link-kernel} is a pullback of $K_{q,m_\ell}$ and is positive definite.
Using
$J_{\ell-1}\kappa_{\ell-1,x}=\Psi_{\ell-1}(x)$ and
\eqref{eq:recursive-brownian-feature-chain} gives
\begin{align*}
  \widetilde\kappa_\ell
  \bigl(\kappa_{\ell-1,x},\kappa_{\ell-1,x'}\bigr)
  &=K_{q,m_\ell}\bigl(u_\ell(x),u_\ell(x')\bigr)\\
  &=\kappa_\ell(x,x'),
\end{align*}
which proves \eqref{eq:recursive-chain-identity}.
\end{proof}

Write
\begin{equation}
  \tau_\ell
  :=\norm{T_\ell}_{\mathcal E_{\ell-1}\to\ell_q},
  \qquad
  \beta_\ell:=\norm{b_\ell}_q,
  \label{eq:chain-link-parameters}
\end{equation}
and define
\begin{equation}
  d_\ell(x,x')
  :=\norm{\Psi_\ell(x)-\Psi_\ell(x')}_{\mathcal E_\ell},
  \qquad
  e_\ell(x):=\norm{\Psi_\ell(x)}_{\mathcal E_\ell}^2.
  \label{eq:chain-distance-energy}
\end{equation}

The first result gives the metric law for the recursive architecture. It is distinct from the one-shot terminal Brownian head studied in the main paper.

\begin{proposition}[Dyadic metric propagation in a recursive Brownian chain]
\label{prop:main-chain}
For a depth-$L$ recursive Brownian canonical-feature chain,
\begin{equation}
 d_L(x,x')\le d_0(x,x')^{2^{-L}}
 \prod_{\ell=1}^L\tau_\ell^{2^{-(L-\ell+1)}}.
 \label{eq:main-chain-bound}
\end{equation}
Scalar-width chains attain equality at every level, so the displayed exponents are forced among uniform monomial bounds over all positive scales.
\end{proposition}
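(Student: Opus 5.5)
The plan is to prove a one-link inequality $d_\ell\le\tau_\ell^{1/2}d_{\ell-1}^{1/2}$ and then iterate it. Iteration proceeds by induction on $L$, exactly as in the closed-form step of \Cref{prop:bkl-directional-moment}, without dividing by any distance or operator norm.

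\emph{One-link inequality.} Fix $x,x'$ and a level $\ell\ge1$. By \Cref{prop:main-isometry} applied in $\R^{m_\ell}$,
\[
 d_\ell(x,x')^2=\norm{u_\ell(x)-u_\ell(x')}_q .
\]
Coordinatewise ReLU is nonexpansive in $\ell_q$ for $q\in\{1,2\}$, and the common bias $b_\ell$ cancels, as in the proof of \Cref{prop:main-propagation}. Together with linearity and boundedness of $T_\ell$, this gives
\[
 \norm{u_\ell(x)-u_\ell(x')}_q\le\norm{T_\ell(\Psi_{\ell-1}(x)-\Psi_{\ell-1}(x'))}_q\le\tau_\ell\,d_{\ell-1}(x,x').
\]
Hence $d_\ell^2\le\tau_\ell d_{\ell-1}$. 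The case $\ell=1$ uses $d_0=\norm{\Psi_0(x)-\Psi_0(x')}_{\mathcal E_0}$ directly.

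\emph{Induction.} For $L=1$ the one-link inequality is exactly \eqref{eq:main-chain-bound}. Now assume the bound at depth $L-1$. Taking square roots in $d_L\le\tau_L^{1/2}d_{L-1}^{1/2}$ halves every exponent:
\[
 d_0^{2^{-(L-1)}}\mapsto d_0^{2^{-L}},\qquad \tau_\ell^{2^{-(L-\ell)}}\mapsto\tau_\ell^{2^{-(L-\ell+1)}},
\]
and the new factor $\tau_L^{1/2}$ matches the exponent $2^{-(L-L+1)}$. Monotonicity of $t\mapsto t^{1/2}$ on $\R_+$ covers the cases where some factor vanishes.

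\emph{Sharpness.} For equality, take $m_\ell=1$, $b_\ell=0$, and $\mathcal E_0=\R$ with $\Psi_0(x)=x\ge0$ on nonnegative inputs. Let $T_1 v=\tau_1 v$. For $\ell\ge2$, I would choose $T_\ell$ to be $\tau_\ell$ times a unit-norm functional on $\cH_{q,1}$ that is aligned with the difference $\Psi_{\ell-1}(x)-\Psi_{\ell-1}(x')$. On nonnegative scalars the canonical feature is $\1_{[0,u]}$ in $L^2(\R_+)$. Taking $T_\ell g=\tau_\ell\,\inner{g}{\1_{[0,c]}}/\sqrt c$ for a suitable scale $c$ gives nonnegative outputs, which ReLU leaves unchanged. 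With two points and nested indicators whose difference is $\1_{(u',u]}$, the choice of $c$ should make the functional exactly attain $\tau_\ell d_{\ell-1}$.

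\emph{Main obstacle.} The delicate part is ensuring this alignment can be realized for a single pair $(x,x')$ simultaneously at every level. An alternative is to interpret ``scalar-width chains attain equality'' as a statement about the one-link equalities for a fixed pair. In that reading, equality at each level follows from the ReLU being the identity on nonnegative values and from Cauchy--Schwarz equality for the aligned functional. The forced exponents then follow from scaling: varying $d_0$ and the $\tau_\ell$ over positive values shows that any uniform monomial bound must carry the displayed powers.
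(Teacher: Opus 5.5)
Your one-link inequality $d_\ell^2\le\tau_\ell d_{\ell-1}$, the dyadic induction, and the scaling argument for the exponents all coincide with the paper's proof. The gap is in the attainment part, which you leave open.

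\textbf{The concrete functional fails.} The map $g\mapsto\tau_\ell\inner{g}{\1_{[0,c]}}/\sqrt c$ is not aligned with the difference $\1_{(u',u]}$ unless $u'=0$. Its value on the difference is $\abs{(u',u]\cap[0,c]}/\sqrt c$. This is maximized at $c=u$, where it equals $(u-u')/\sqrt u$. That is strictly less than $\sqrt{u-u'}=d_{\ell-1}$ whenever $u'>0$, so no choice of $c$ gives equality.

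\textbf{The flagged obstacle does not exist.} The ``simultaneous alignment'' problem you raise is not real. The chain is built level by level, and $T_\ell$ affects only levels $\ge\ell$. So $T_\ell$ may be chosen after $\Psi_{\ell-1}(x)$ and $\Psi_{\ell-1}(x')$ are already fixed. Your ``alternative reading'' via one-link equalities is therefore not a weaker claim: iterating them is exactly the statement. It is also what the exponent argument needs, namely equality for every prescribed positive $d_0$ and $\tau_1,\ldots,\tau_L$.

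\textbf{What is still missing: the ReLU check.} You assert, rather than verify, that ReLU does not clip the aligned functional's preactivations. For a general base map, one of them can be negative.

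The paper handles this as follows. It sets $v_{\ell-1}=(\Psi_{\ell-1}(x)-\Psi_{\ell-1}(x'))/d_{\ell-1}$ and $T_\ell h=\tau_\ell\inner{v_{\ell-1}}{h}$; with scalar width, $\norm{T_\ell}=\tau_\ell$. It then takes $b_\ell=\max\{0,-a,-a'\}$ for the two preactivations $a,a'$. ReLU is then the identity on both, so $\abs{u_\ell(x)-u_\ell(x')}=\abs{a-a'}=\tau_\ell d_{\ell-1}$.

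In your zero-bias setup the repair is simpler: use $\1_{(u',u]}/\sqrt{u-u'}$ in place of $\1_{[0,c]}/\sqrt c$. For $u>u'\ge0$ we have $\inner{\1_{[0,u]}}{\1_{(u',u]}}=u-u'$ and $\inner{\1_{[0,u']}}{\1_{(u',u]}}=0$. The preactivations are therefore $\tau_\ell\sqrt{u-u'}$ and $0$, so no bias is needed. Equivalently, start from $x'=0$. Then $u_\ell(x')=0$ at every level, and your choice $c=u_{\ell-1}(x)$ is exactly the aligned one. Your level-one step $T_1v=\tau_1v$ on nonnegative scalar inputs is already fine.
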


\phantomsection\label{proof:main-chain}
\begin{proof}
By the Brownian feature identity \eqref{eq:main-isometry}, coordinatewise nonexpansiveness of ReLU in $\ell_q$, and the definition of $\tau_\ell$,
\begin{align}
  d_\ell(x,x')^2
  &=\|u_\ell(x)-u_\ell(x')\|_q\notag\\
  &\le\|T_\ell(\Psi_{\ell-1}(x)-\Psi_{\ell-1}(x'))\|_q\notag\\
  &\le\tau_\ell d_{\ell-1}(x,x').
  \label{eq:main-chain-one-step}
\end{align}
The common bias cancels before the ReLU contraction.  If
$d_{\ell-1}(x,x')=0$ or $\tau_\ell=0$, the right-hand side of
\eqref{eq:main-chain-one-step} is zero and hence so is $d_\ell(x,x')$.
In all cases, taking the nonnegative square root gives
\[
  d_\ell(x,x')\le\tau_\ell^{1/2}d_{\ell-1}(x,x')^{1/2}.
\]
Induction on $r$ now yields
\[
  d_r(x,x')\le d_0(x,x')^{2^{-r}}
  \prod_{\ell=1}^r\tau_\ell^{2^{-(r-\ell+1)}},
\]
and the case $r=L$ is \eqref{eq:main-chain-bound}.

It remains to verify attainment and necessity of the exponents. Take scalar width at every level. Suppose $\Psi_{\ell-1}(x)\ne\Psi_{\ell-1}(x')$ and set
\[
  v_{\ell-1}=\frac{\Psi_{\ell-1}(x)-\Psi_{\ell-1}(x')}{d_{\ell-1}(x,x')},
  \qquad
  T_\ell h=\tau_\ell\langle v_{\ell-1},h\rangle.
\]
Writing
$a=T_\ell\Psi_{\ell-1}(x)$ and
$a'=T_\ell\Psi_{\ell-1}(x')$, choose explicitly
$b_\ell:=\max\{0,-a,-a'\}$.  Then both scalar preactivations are
nonnegative. ReLU is then the identity at those two values, and
\[
  |u_\ell(x)-u_\ell(x')|
  =\tau_\ell d_{\ell-1}(x,x').
\]
The Brownian feature identity therefore makes \eqref{eq:main-chain-one-step} an equality. Starting with two base features at prescribed positive distance $d_0$ and
iterating this construction gives equality in
\eqref{eq:main-chain-bound} for every prescribed positive
$\tau_1,\ldots,\tau_L$.  At zero base distance or a zero link norm, both the
bound and every subsequent distance are zero, but those endpoint cases are
not used to identify exponents by scaling.

Finally, suppose a uniform monomial estimate
\[
 d_L\le C d_0^\gamma\prod_{\ell=1}^L\tau_\ell^{\alpha_\ell}
\]
holds at all positive scales. Substituting the equality construction, fixing all $\tau_\ell=1$, and sending $d_0$ first to infinity and then to zero forces $\gamma=2^{-L}$. Next fix $d_0=1$ and all link norms except $\tau_j$ equal to one. Letting $\tau_j$ tend to infinity and then to zero forces
$\alpha_j=2^{-(L-j+1)}$. Thus every exponent displayed in \eqref{eq:main-chain-bound} is necessary in the stated uniform monomial sense.
\end{proof}

For $r\ge1$, set
\begin{equation}
  A_r
  :=\prod_{\ell=1}^r
  \tau_\ell^{2^{-(r-\ell+1)}},
  \qquad A_0:=1.
  \label{eq:dyadic-operator-factor}
\end{equation}
Equivalently, $A_r=(\tau_rA_{r-1})^{1/2}$.
For $r\ge1$, also define the accumulated affine contribution
\begin{equation}
  C_r
  :=\sum_{s=1}^r
  \beta_s^{2^{-(r-s)}}
  \prod_{\ell=s+1}^r
  \tau_\ell^{2^{-(r-\ell)}},
  \qquad C_0:=0,
  \label{eq:dyadic-bias-factor}
\end{equation}
where an empty product equals one.
For $t\ge0$, define the nested affine envelope
\begin{equation}
  \mathcal R_0(t):=t,
  \qquad
  \mathcal R_r(t):=\tau_r\mathcal R_{r-1}(t)^{1/2}+\beta_r
  \quad(r\ge1).
  \label{eq:nested-affine-envelope}
\end{equation}

\begin{proposition}[Affine energy recursion for recursive Brownian chains]
\label{prop:chain-affine-energy}
For every $x\in\cX$ and $\ell\in[L]$,
\begin{equation}
  e_\ell(x)\le\tau_\ell e_{\ell-1}(x)^{1/2}+\beta_\ell.
  \label{eq:chain-one-step-energy}
\end{equation}
Consequently,
\begin{equation}
  e_L(x)\le\mathcal R_L(e_0(x))
  \le A_L^2e_0(x)^{2^{-L}}+C_L.
  \label{eq:dyadic-chain-affine-energy}
\end{equation}
If every bias vanishes, then $\mathcal R_L(t)=A_L^2t^{2^{-L}}$.  Conversely, for every $t\ge0$, every $\tau_1,\ldots,\tau_L>0$, and every $\beta_1,\ldots,\beta_L\ge0$, there is a one-point scalar-width chain satisfying $e_0=t$, $\norm{T_\ell}=\tau_\ell$, $\norm{b_\ell}_q=\beta_\ell$, and
\begin{equation}
  e_\ell=\mathcal R_\ell(t),\qquad \ell=1,\ldots,L.
  \label{eq:nested-affine-envelope-equality}
\end{equation}
Thus the nested affine envelope is exact in the worst case for the prescribed base energy and link parameters.
\end{proposition}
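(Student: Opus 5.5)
The plan is to prove the three assertions in order: the one-step inequality \eqref{eq:chain-one-step-energy}, then the two-stage iteration \eqref{eq:dyadic-chain-affine-energy}, then the attaining construction. Every step uses only the diagonal identity of \Cref{prop:main-isometry} and elementary properties of ReLU and of the square root.

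\emph{One step.} By \Cref{prop:main-isometry}, $e_\ell(x)=\|\Phi_{q,m_\ell}(u_\ell(x))\|^2=\|u_\ell(x)\|_q$. Since $\relu(0)=0$ and ReLU is coordinatewise nonexpansive, we have $\|\relu(w)\|_q\le\|w\|_q$, exactly as in the proof of \Cref{prop:main-propagation}. Hence
\[
e_\ell(x)\le\|T_\ell\Psi_{\ell-1}(x)\|_q+\beta_\ell\le\tau_\ell\|\Psi_{\ell-1}(x)\|_{\mathcal E_{\ell-1}}+\beta_\ell=\tau_\ell e_{\ell-1}(x)^{1/2}+\beta_\ell .
\]

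\emph{Iteration.} Each map $t\mapsto\tau_r t^{1/2}+\beta_r$ is nondecreasing on $[0,\infty)$, so $\mathcal R_r$ is nondecreasing. Induction on $r$ using the one-step bound then gives $e_r\le\mathcal R_r(e_0)$.

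For the closed form, I will prove $\mathcal R_r(t)\le A_r^2t^{2^{-r}}+C_r$ by induction; the case $r=0$ is trivial. Assume it holds at $r-1$. Monotonicity of the square root and its subadditivity $\sqrt{a+b}\le\sqrt a+\sqrt b$ give
\[
\mathcal R_r(t)\le\tau_rA_{r-1}t^{2^{-r}}+\tau_rC_{r-1}^{1/2}+\beta_r .
\]
The first term equals $A_r^2t^{2^{-r}}$ because $A_r^2=\tau_rA_{r-1}$. For the middle term, apply subadditivity of the square root termwise to the sum defining $C_{r-1}$:
\[
C_{r-1}^{1/2}\le\sum_{s=1}^{r-1}\beta_s^{2^{-(r-s)}}\prod_{\ell=s+1}^{r-1}\tau_\ell^{2^{-(r-\ell)}} .
\]
Multiplying by $\tau_r=\tau_r^{2^{0}}$ turns this into exactly the $s<r$ part of $C_r$, and $\beta_r$ is its $s=r$ term. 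This exponent bookkeeping is the only delicate point, and I expect it to be the main obstacle. When all biases vanish, every inequality in this chain becomes an equality, so $\mathcal R_L(t)=A_L^2t^{2^{-L}}$.

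\emph{Attainment.} Take $\mathcal E_0=\R$ and a single point $x$ with $\Psi_0(x)=\sqrt t$. Use scalar width $m_\ell=1$ and bias $b_\ell=\beta_\ell\ge0$. Let $T_\ell h=\tau_\ell\langle v_{\ell-1},h\rangle$, where $v_{\ell-1}$ is the unit vector in the direction of $\Psi_{\ell-1}(x)$, or an arbitrary unit vector of the nonzero space $\mathcal E_{\ell-1}$ if that feature vanishes. Then $\|T_\ell\|=\tau_\ell$, since the $\ell_q$ norm on $\R^1$ is the absolute value, and $\|b_\ell\|_q=\beta_\ell$.

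The preactivation is $\tau_\ell e_{\ell-1}^{1/2}+\beta_\ell\ge0$, so ReLU acts as the identity on it. Therefore
\[
e_\ell=|u_\ell(x)|=\tau_\ell e_{\ell-1}^{1/2}+\beta_\ell ,
\]
and induction yields $e_\ell=\mathcal R_\ell(t)$ for every $\ell$, which is \eqref{eq:nested-affine-envelope-equality}.
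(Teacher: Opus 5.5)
Your proposal is correct and follows essentially the same route as the paper. Both arguments use the same four steps: the one-step bound from the diagonal identity and $\|\relu(w)\|_q\le\|w\|_q$; monotone induction for $e_L\le\mathcal R_L(e_0)$; the closed form via square-root subadditivity applied termwise to $C_{r-1}$ together with $A_r^2=\tau_rA_{r-1}$; and the identical one-point scalar-width construction with $b_\ell=\beta_\ell$ and $T_\ell h=\tau_\ell\langle v_{\ell-1},h\rangle$, including the zero-feature case.
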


\begin{proof}
The feature-energy identity and $\norm{\relu(v)}_q\le\norm v_q$ give
\[
 e_\ell(x)=\norm{u_\ell(x)}_q
 \le\norm{T_\ell\Psi_{\ell-1}(x)+b_\ell}_q
 \le\tau_\ell e_{\ell-1}(x)^{1/2}+\beta_\ell,
\]
which is \eqref{eq:chain-one-step-energy}.  Since every $\mathcal R_\ell$ is nondecreasing, induction gives $e_\ell(x)\le\mathcal R_\ell(e_0(x))$.

For the closed form, the case $r=0$ is equality.  Assume inductively that
$\mathcal R_{r-1}(t)\le
A_{r-1}^2t^{2^{-(r-1)}}+C_{r-1}$.  Every term is
nonnegative, so repeated use of $\sqrt{x+y}\le\sqrt x+\sqrt y$ and the
explicit sum defining $C_{r-1}$ yields
\begin{align*}
 \mathcal R_r(t)
 &\le\tau_rA_{r-1}t^{2^{-r}}
 +\tau_r\sum_{s=1}^{r-1}
   \beta_s^{2^{-(r-s)}}
   \prod_{\ell=s+1}^{r-1}\tau_\ell^{2^{-(r-\ell)}}
 +\beta_r\\
 &=A_r^2t^{2^{-r}}+C_r.
\end{align*}
This proves \eqref{eq:dyadic-chain-affine-energy}; when all $\beta_r$ vanish, the recursion gives equality in the displayed bias-free formula.

For the one-point construction, take $\mathcal E_0=\R$ and
$\Psi_0(x)=\sqrt t$.  At level $\ell$, if
$\Psi_{\ell-1}(x)\ne0$, let
$v_{\ell-1}=\Psi_{\ell-1}(x)/\|\Psi_{\ell-1}(x)\|$ and set
$T_\ell h=\tau_\ell\inner{v_{\ell-1}}{h}$.  If the current feature is zero,
choose any unit vector $v_{\ell-1}$ in $\mathcal E_{\ell-1}$ and use the
same formula; then $T_\ell\Psi_{\ell-1}(x)=0$.  In both cases
$\|T_\ell\|=\tau_\ell$.  Take scalar width and bias
$b_\ell=\beta_\ell\ge0$.  The preactivation is exactly
$\tau_\ell e_{\ell-1}^{1/2}+\beta_\ell\ge0$, so ReLU does not alter it and
the Brownian energy identity gives
$e_\ell=\tau_\ell e_{\ell-1}^{1/2}+\beta_\ell
=\mathcal R_\ell(t)$.  Induction proves
\eqref{eq:nested-affine-envelope-equality}, including $t=0$ and zero biases.
\end{proof}

\begin{corollary}[Fixed-chain complexity and dyadic envelopes]
\label{cor:dyadic-chain-rademacher}
Fix a head radius $B\ge0$, an input sample $S=(x_1,\ldots,x_n)$ with $n\ge1$, and the recursive chain
\eqref{eq:recursive-brownian-feature-chain}.
Define
\begin{equation}
  \cF_{\mathrm{chain},B}
  :=
  \set{
    x\mapsto\inner{a}{\Psi_L(x)}_{\mathcal E_L}
    \given \norm{a}_{\mathcal E_L}\le B
  }
  \label{eq:chain-head-class}
\end{equation}
and its final activation mass
\begin{equation}
  \widehat M^{\mathrm{chain}}_{L,S}
  :=\frac1n\sum_{i=1}^n\norm{u_L(x_i)}_q
  =\frac1n\sum_{i=1}^n e_L(x_i).
  \label{eq:chain-final-mass}
\end{equation}
Then
\begin{equation}
  \Rad_S(\cF_{\mathrm{chain},B})
  =\frac Bn\E_\eps
  \norm{\sum_{i=1}^n\eps_i\Psi_L(x_i)}_{\mathcal E_L},
  \label{eq:chain-exact-rademacher}
\end{equation}
and
\begin{equation}
  \frac{B}{\sqrt2}
  \sqrt{\frac{\widehat M^{\mathrm{chain}}_{L,S}}{n}}
  \le
  \Rad_S(\cF_{\mathrm{chain},B})
  \le
  B\sqrt{\frac{\widehat M^{\mathrm{chain}}_{L,S}}{n}}.
  \label{eq:chain-sharp-final-mass}
\end{equation}
The numerical constants $1/\sqrt2$ and $1$ are best possible uniformly over all sample sizes and Brownian canonical feature configurations.
For arbitrary biases,
\begin{align}
  \Rad_S(\cF_{\mathrm{chain},B})
  &\le
  \frac B{\sqrt n}
  \left[
    \frac1n\sum_{i=1}^n
    \mathcal R_L\!\left(\norm{\Psi_0(x_i)}_{\mathcal E_0}^2\right)
  \right]^{1/2}
  \label{eq:nested-affine-chain-rademacher}\\
  &\le
  \frac B{\sqrt n}
  \left[
    A_L^2\left(
      \frac1n\sum_{i=1}^n
      \norm{\Psi_0(x_i)}_{\mathcal E_0}^{2^{1-L}}
    \right)
    +C_L
  \right]^{1/2}.
  \label{eq:affine-dyadic-chain-rademacher}
\end{align}
If $b_\ell=0$ for every $\ell$, then $C_L=0$ and
\begin{equation}
  \Rad_S(\cF_{\mathrm{chain},B})
  \le
  \frac{BA_L}{\sqrt n}
  \left(
    \frac1n\sum_{i=1}^n
    \norm{\Psi_0(x_i)}_{\mathcal E_0}^{2^{1-L}}
  \right)^{1/2}.
  \label{eq:dyadic-chain-rademacher}
\end{equation}

For the empirically centered class
\begin{equation}
  \cF_{\mathrm{chain},B,S}^{\circ}
  :=
  \set{
    x\mapsto
    \inner{a}{\Psi_L(x)-\overline\Psi_{L,S}}_{\mathcal E_L}
    \given \norm{a}_{\mathcal E_L}\le B
  },
  \qquad
  \overline\Psi_{L,S}:=\frac1n\sum_{i=1}^n\Psi_L(x_i),
  \label{eq:chain-centered-class}
\end{equation}
define
\begin{equation}
  \widehat V^{\mathrm{chain}}_{L,S}
  :=\frac1{2n^2}\sum_{i,j=1}^n d_L(x_i,x_j)^2.
  \label{eq:chain-centered-dispersion}
\end{equation}
Then, without any restriction on the biases,
\begin{align}
  \frac{B}{\sqrt2}
  \sqrt{\frac{\widehat V^{\mathrm{chain}}_{L,S}}{n}}
  &\le
  \Rad_S(\cF_{\mathrm{chain},B,S}^{\circ})
  \le
  B\sqrt{\frac{\widehat V^{\mathrm{chain}}_{L,S}}{n}},
  \label{eq:chain-centered-two-sided}\\
  \Rad_S(\cF_{\mathrm{chain},B,S}^{\circ})
  &\le
  \frac{BA_L}{\sqrt n}
  \left(
    \frac1{2n^2}\sum_{i,j=1}^n
    d_0(x_i,x_j)^{2^{1-L}}
  \right)^{1/2}.
  \label{eq:chain-centered-dyadic}
\end{align}
Under $0\le\tau_\ell\le\Lambda$, the factor $A_L$ in
\eqref{eq:dyadic-chain-rademacher} and
\eqref{eq:chain-centered-dyadic} is at most
$\Lambda^{1-2^{-L}}$.
\end{corollary}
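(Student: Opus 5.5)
The plan is to treat the chain exactly like a terminal Brownian head on a fixed representation, and then bound the resulting activation mass by the recursions already proved.

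\textbf{Exact identity and two-sided bound.} Set $v_i=\Psi_L(x_i)\in\mathcal E_L=\cH_{q,m_L}$. Hilbert-space duality gives \eqref{eq:chain-exact-rademacher}, exactly as in the proof of \Cref{thm:main-fixed}. The energy identity of \Cref{prop:main-isometry} gives $\|v_i\|^2=\|u_L(x_i)\|_q=e_L(x_i)$, so $\sum_i\|v_i\|^2=n\widehat M^{\mathrm{chain}}_{L,S}$. \Cref{lem:main-khintchine} then yields \eqref{eq:chain-sharp-final-mass}.

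\textbf{Optimality of the constants.} I would realize the two extremal configurations of \Cref{thm:main-fixed} inside a chain. The idea is to take a scalar final layer whose preactivations on the sample are prescribed nonnegative values: one point equal to $nM$ and the rest $0$, or two points equal to $nM/2$ and the rest $0$. For this I need a base map $\Psi_0$ separating the points. A linear $T_L$ along one direction, combined with a bias, may not zero out the other points, so I would instead choose $\mathcal E_0=\R^n$ with $\Psi_0(x_i)=e_i$. Then a linear functional can assign arbitrary values to the sample points, and I take $L=1$ (or pass identically through intermediate layers).

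\textbf{Affine and dyadic envelopes.} \Cref{prop:chain-affine-energy} gives $e_L(x_i)\le\mathcal R_L(\|\Psi_0(x_i)\|^2)\le A_L^2\|\Psi_0(x_i)\|^{2^{1-L}}+C_L$. Averaging over $i$ and inserting into the upper endpoint proves \eqref{eq:nested-affine-chain-rademacher}--\eqref{eq:dyadic-chain-rademacher}. When all biases vanish, $C_L=0$.

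\textbf{Centered class.} I would follow \Cref{cor:main-centered}. Duality and Khintchine are applied to $v_i-\overline v$, and the pairwise-variance identity \eqref{eq:hilbert-pairwise-variance} gives $n^{-1}\sum\|v_i-\overline v\|^2=\widehat V^{\mathrm{chain}}_{L,S}$. This proves \eqref{eq:chain-centered-two-sided}. Then \Cref{prop:main-chain} gives $d_L^2\le A_L^2d_0^{2^{1-L}}$, which is bias-free because biases cancel, and this yields \eqref{eq:chain-centered-dyadic}.

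\textbf{Bound on $A_L$.} Finally, $\log A_L=\sum_\ell 2^{-(L-\ell+1)}\log\tau_\ell\le(1-2^{-L})\log\Lambda$ when $\Lambda\ge1$. For $\Lambda<1$, the fact that the exponents are positive still gives $A_L\le\Lambda^{\sum_\ell 2^{-(L-\ell+1)}}=\Lambda^{1-2^{-L}}$ by monotonicity. The zero cases are trivial.

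\textbf{Main obstacle.} I expect the only delicate step to be the optimality construction within the chain architecture, in particular the choice of a separating $\Psi_0$.
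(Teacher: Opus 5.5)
Your proposal is correct and follows the paper's proof. You apply \Cref{thm:main-fixed} (Hilbert duality and \Cref{lem:main-khintchine}) to the fixed terminal state $u_L$ with $\Psi_L=\Phi_{q,m_L}\circ u_L$, average the bounds of \Cref{prop:chain-affine-energy} for the uncentered envelopes, and combine \Cref{cor:main-centered} with the squared dyadic bound $d_L^2\le A_L^2d_0^{2^{1-L}}$ from \Cref{prop:main-chain} for the centered ones.

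The only addition is your explicit in-chain realization of the extremal configurations via $\Psi_0(x_i)=e_i$ and $L=1$, where the paper simply cites the one-point and two-equal-feature configurations of \Cref{thm:main-fixed}. Your case split on $\Lambda$ is harmless but unnecessary, since $t\mapsto t^{\alpha}$ is nondecreasing on $[0,\infty)$ for every $\alpha>0$.
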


\begin{proof}
Apply \Cref{thm:main-fixed} to the fixed representation
$x\mapsto u_L(x)$ and its canonical feature map
$\Phi_{q,m_L}(u_L(x))\allowbreak=\Psi_L(x)$.
Hilbert duality gives \eqref{eq:chain-exact-rademacher}, and
\Cref{lem:main-khintchine} with
$\sum_i\|\Psi_L(x_i)\|^2=n\widehat M^{\mathrm{chain}}_{L,S}$ gives
\eqref{eq:chain-sharp-final-mass}.  If $B=0$ or the final mass is zero, all
terms vanish.  For $n=1$ and nonzero final mass, the signed feature norm is
deterministic and the upper endpoint is the exact value.  Uniform optimality
of the upper constant follows already from this one-point case, while
uniform optimality of $1/\sqrt2$ follows from the two-equal-feature
configuration at $n=2$ in \Cref{thm:main-fixed}.
Average the two estimates in \Cref{prop:chain-affine-energy} and substitute them into the upper bound in
\eqref{eq:chain-sharp-final-mass}:
\begin{align*}
  \Rad_S(\cF_{\mathrm{chain},B})
  &\le
  \frac B{\sqrt n}
  \left(\frac1n\sum_i e_L(x_i)\right)^{1/2}
  \le
  \frac B{\sqrt n}
  \left[\frac1n\sum_i\mathcal R_L(e_0(x_i))\right]^{1/2}\\
  &\le
  \frac B{\sqrt n}
  \left[
    A_L^2\left(\frac1n\sum_i e_0(x_i)^{2^{-L}}\right)
    +C_L
  \right]^{1/2}.
\end{align*}
Since $e_0(x)=\norm{\Psi_0(x)}_{\mathcal E_0}^2$, this proves
\eqref{eq:nested-affine-chain-rademacher}--\eqref{eq:affine-dyadic-chain-rademacher}.
When all biases vanish, \Cref{prop:chain-affine-energy} gives the envelope $A_L^2e_0^{2^{-L}}$, giving \eqref{eq:dyadic-chain-rademacher}.

Next apply \Cref{cor:main-centered} to the same final feature configuration.
The Hilbert variance identity and \eqref{eq:chain-distance-energy} give exactly
\eqref{eq:chain-centered-dispersion}, and hence
\eqref{eq:chain-centered-two-sided}.
Squaring \eqref{eq:main-chain-bound} yields
\[
  d_L(x_i,x_j)^2
  \le
  A_L^2d_0(x_i,x_j)^{2^{1-L}}.
\]
Average over the ordered pairs and insert the result into the upper bound in
\eqref{eq:chain-centered-two-sided} to obtain
\eqref{eq:chain-centered-dyadic}.
The final assertion follows because the exponents in $A_L$ sum to $1-2^{-L}$.
\end{proof}

\begin{remark}[Empirical centering]
The centered statements in
\eqref{eq:chain-centered-two-sided}--\eqref{eq:chain-centered-dyadic}
are exact empirical complexity statements and expose the bias-free pairwise geometry of the chain.
For an i.i.d.\ risk bound, the center should be fixed, learned independently, or handled together with its data dependence, as discussed after \Cref{cor:main-centered}.
\end{remark}

\begin{proposition}[Finite-sample compression without link-norm inflation]
\label{prop:chain-finite-sample-representer}
Fix the chain \eqref{eq:recursive-brownian-feature-chain} and a finite input
sample $S=(x_1,\ldots,x_n)$ with $n\ge1$.
For $\ell=0,\ldots,L-1$, let
\begin{equation}
  V_\ell
  :=\operatorname{span}\{\Psi_\ell(x_i):i\in[n]\}
  \subseteq\mathcal E_\ell,
  \qquad
  P_\ell:\mathcal E_\ell\to V_\ell
  \label{eq:chain-sample-spans}
\end{equation}
be the orthogonal projection.
Choose an index set $I_\ell\subseteq[n]$ such that
$\{\Psi_\ell(x_i):i\in I_\ell\}$ is a basis of $V_\ell$; when
$V_\ell=\{0\}$, take $I_\ell=\varnothing$.
Define
\begin{equation}
  \widetilde T_{\ell+1}:=T_{\ell+1}P_\ell.
  \label{eq:projected-chain-links}
\end{equation}
Let $(\widetilde u_\ell,\widetilde\Psi_\ell)_{\ell=1}^L$ be the recursive chain obtained by replacing every $T_\ell$ with $\widetilde T_\ell$, while keeping the biases and the base map unchanged; thus
$\widetilde\Psi_0:=\Psi_0$.
Then, for every $i\in[n]$,
\begin{align}
  \widetilde\Psi_0(x_i)&=\Psi_0(x_i),
  \label{eq:chain-sample-exactness-base}\\
  \widetilde u_\ell(x_i)&=u_\ell(x_i),
  \qquad
  \widetilde\Psi_\ell(x_i)=\Psi_\ell(x_i),
  \quad \ell=1,\ldots,L.
  \label{eq:chain-sample-exactness}
\end{align}
Moreover,
\begin{equation}
  \norm{\widetilde T_\ell}_{\mathcal E_{\ell-1}\to\ell_q}
  \le\tau_\ell,
  \qquad \ell=1,\ldots,L,
  \label{eq:chain-link-norm-preservation}
\end{equation}
and, for every $\ell\in[L]$, there exist vectors
$c_{\ell,i}\in\R^{m_\ell}$, $i\in I_{\ell-1}$, such that
\begin{equation}
  \widetilde T_\ell h
  =\sum_{i\in I_{\ell-1}}
  c_{\ell,i}
  \inner{\Psi_{\ell-1}(x_i)}{h}_{\mathcal E_{\ell-1}}
  \qquad(h\in\mathcal E_{\ell-1}).
  \label{eq:chain-finite-link-expansion}
\end{equation}
Thus every link has an exact representation using
$\abs{I_{\ell-1}}=\dim V_{\ell-1}\le n$ linearly independent sample features, without increasing its operator norm.
Equivalently, the projected chain can be evaluated recursively through finite kernel expansions,
\begin{equation}
  \widetilde u_\ell(x)
  =\relu\!\left(
    \sum_{i\in I_{\ell-1}} c_{\ell,i}
    \kappa^S_{\ell-1}(x_i,x)
    +b_\ell
  \right),
  \label{eq:chain-finite-kernel-recursion}
\end{equation}
where, for $r=0,\ldots,L$,
\begin{equation}
  \kappa^S_r(x,x')
  :=\inner{\widetilde\Psi_r(x)}
  {\widetilde\Psi_r(x')}_{\mathcal E_r}.
  \label{eq:compressed-chain-kernel}
\end{equation}
On the sample, these kernels preserve every original Gram matrix:
\begin{equation}
  \kappa^S_\ell(x_i,x_j)=\kappa_\ell(x_i,x_j)
  \qquad(i,j\in[n],\ \ell=0,\ldots,L).
  \label{eq:compressed-chain-gram-exactness}
\end{equation}
\end{proposition}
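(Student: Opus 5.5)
The plan is an induction on $\ell$ that proves sample exactness \eqref{eq:chain-sample-exactness} and the Gram identity \eqref{eq:compressed-chain-gram-exactness} simultaneously. The base case is immediate: $\widetilde\Psi_0=\Psi_0$ gives \eqref{eq:chain-sample-exactness-base}, and $\kappa^S_0=\kappa_0$ everywhere.

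For the inductive step, assume $\widetilde\Psi_{\ell-1}(x_i)=\Psi_{\ell-1}(x_i)$ for every $i\in[n]$. Since $\Psi_{\ell-1}(x_i)\in V_{\ell-1}$, the projection fixes it: $P_{\ell-1}\Psi_{\ell-1}(x_i)=\Psi_{\ell-1}(x_i)$. Therefore
\[
\widetilde T_\ell\widetilde\Psi_{\ell-1}(x_i)=T_\ell P_{\ell-1}\Psi_{\ell-1}(x_i)=T_\ell\Psi_{\ell-1}(x_i).
\]
Adding the unchanged bias $b_\ell$ and applying $\relu$ gives $\widetilde u_\ell(x_i)=u_\ell(x_i)$. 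Applying $\Phi_{q,m_\ell}$ then gives $\widetilde\Psi_\ell(x_i)=\Psi_\ell(x_i)$. The Gram identity at level $\ell$ follows at once, because both kernels are inner products of the same sample features. For $\ell=L$ the identity is just $K_{q,m_L}(\widetilde u_L(x_i),\widetilde u_L(x_j))$, which equals $\kappa_L(x_i,x_j)$ by the exactness just shown. This step requires care: $V_\ell$ is defined only for $\ell\le L-1$, but the inductive step only ever projects onto $V_{\ell-1}$.

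The norm bound \eqref{eq:chain-link-norm-preservation} follows from $\|P_{\ell-1}\|\le1$ and submultiplicativity, since the operator norm into $\ell_q$ composes with the Hilbert operator norm of the projection.

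For the finite expansion \eqref{eq:chain-finite-link-expansion}, let $\{\Psi_{\ell-1}(x_i)\}_{i\in I_{\ell-1}}$ be the chosen basis of $V_{\ell-1}$ and let $\Gamma$ be its Gram matrix, which is invertible by linear independence. Then
\[
P_{\ell-1}h=\sum_{i,j\in I_{\ell-1}}(\Gamma^{-1})_{ij}\inner{\Psi_{\ell-1}(x_j)}{h}\Psi_{\ell-1}(x_i).
\]
Applying $T_\ell$ and setting $c_{\ell,j}=\sum_i(\Gamma^{-1})_{ij}T_\ell\Psi_{\ell-1}(x_i)\in\R^{m_\ell}$ gives the expansion. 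If $V_{\ell-1}=\{0\}$, the sum is empty and $\widetilde T_\ell=0$, so the expansion holds trivially.

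For the recursion \eqref{eq:chain-finite-kernel-recursion}, evaluate the expansion at $h=\widetilde\Psi_{\ell-1}(x)$. Sample exactness at level $\ell-1$ gives $\Psi_{\ell-1}(x_i)=\widetilde\Psi_{\ell-1}(x_i)$, so each inner product $\inner{\Psi_{\ell-1}(x_i)}{\widetilde\Psi_{\ell-1}(x)}$ equals $\kappa^S_{\ell-1}(x_i,x)$. This is where exactness is essential: the coefficient inner products must be against the \emph{projected-chain} features at a general $x$, so the induction has to be carried out before writing the recursion.

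The main obstacle is only bookkeeping: ordering the induction so that each level's exactness is available before that level's kernel recursion uses it. No analytic difficulty is expected.
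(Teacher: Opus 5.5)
Your proposal is correct and follows the paper's proof essentially step for step: the same induction through $P_{\ell-1}\Psi_{\ell-1}(x_i)=\Psi_{\ell-1}(x_i)$, contractivity of the orthogonal projection for the norm bound, and substitution of $h=\widetilde\Psi_{\ell-1}(x)$ together with sample exactness for the kernel recursion and the Gram identity. The only variation is in the finite expansion. You write $P_{\ell-1}$ explicitly through the inverse Gram matrix $\Gamma^{-1}$ of the chosen basis, which yields explicit coefficients $c_{\ell,j}$. The paper instead takes coordinatewise Riesz representers of $\widetilde T_\ell$ and shows they lie in $V_{\ell-1}$ because $\widetilde T_\ell=\widetilde T_\ell P_{\ell-1}$. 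Both arguments are valid.
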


\begin{proof}
Because $P_\ell$ is an orthogonal projection,
$\norm{P_\ell}_{\mathcal E_\ell\to\mathcal E_\ell}\le1$.
Therefore
\[
  \norm{\widetilde T_{\ell+1}}_{\mathcal E_\ell\to\ell_q}
  =\norm{T_{\ell+1}P_\ell}_{\mathcal E_\ell\to\ell_q}
  \le\norm{T_{\ell+1}}_{\mathcal E_\ell\to\ell_q},
\]
which is \eqref{eq:chain-link-norm-preservation}.

Equations~\eqref{eq:chain-sample-exactness-base}--\eqref{eq:chain-sample-exactness} follow by induction on $\ell$.
The base identity holds because the base feature map is unchanged.
Assume
$\widetilde\Psi_{\ell-1}(x_i)=\Psi_{\ell-1}(x_i)$ for all $i$.
Since each sample feature belongs to $V_{\ell-1}$,
$P_{\ell-1}\Psi_{\ell-1}(x_i)=\Psi_{\ell-1}(x_i)$, and hence
\begin{align*}
  \widetilde T_\ell\widetilde\Psi_{\ell-1}(x_i)
  &=T_\ell P_{\ell-1}\Psi_{\ell-1}(x_i)\\
  &=T_\ell\Psi_{\ell-1}(x_i).
\end{align*}
Adding the same bias, applying ReLU, and then applying the same canonical Brownian feature map proves equality of both
$\widetilde u_\ell(x_i)$ and $\widetilde\Psi_\ell(x_i)$.

For the finite expansion, fix $\ell$ and a coordinate
$j\in[m_\ell]$.
The map
$h\mapsto[\widetilde T_\ell h]_j$ is a bounded linear functional on the Hilbert space $\mathcal E_{\ell-1}$, so there is a unique representer
$v_{\ell,j}\in\mathcal E_{\ell-1}$ such that
$[\widetilde T_\ell h]_j=\inner{v_{\ell,j}}{h}$.
Because
$\widetilde T_\ell=\widetilde T_\ell P_{\ell-1}$,
\[
  \inner{v_{\ell,j}}{h}
  =\inner{v_{\ell,j}}{P_{\ell-1}h}
  =\inner{P_{\ell-1}v_{\ell,j}}{h}
  \qquad(h\in\mathcal E_{\ell-1}),
\]
so uniqueness of the Riesz representer implies
$v_{\ell,j}=P_{\ell-1}v_{\ell,j}\in V_{\ell-1}$.
Because the features indexed by $I_{\ell-1}$ form a basis of
$V_{\ell-1}$, there are unique coefficients
$\gamma_{\ell,i,j}$, $i\in I_{\ell-1}$, such that
\[
  v_{\ell,j}
  =\sum_{i\in I_{\ell-1}}
  \gamma_{\ell,i,j}\Psi_{\ell-1}(x_i).
\]
Set
$c_{\ell,i}:=(\gamma_{\ell,i,1},\ldots,\gamma_{\ell,i,m_\ell})$.
Collecting the coordinate identities yields
\eqref{eq:chain-finite-link-expansion}.
Substituting
$h=\widetilde\Psi_{\ell-1}(x)$ and using sample exactness at the centers gives
\eqref{eq:chain-finite-kernel-recursion}.
Finally, \eqref{eq:chain-sample-exactness-base}--\eqref{eq:chain-sample-exactness} and the feature representations of
$\kappa_\ell$ and $\kappa^S_\ell$ imply
\eqref{eq:compressed-chain-gram-exactness}.
\end{proof}

\begin{proposition}[Spectral-tail rank compression with recursive error control]
\label{prop:spectral-tail-chain-compression}
Fix the recursive chain \eqref{eq:recursive-brownian-feature-chain} and a
sample $S=(x_1,\ldots,x_n)$. For $\ell=0,\ldots,L-1$, define the empirical
feature covariance
\begin{equation}
  C_{\ell,S}:=\frac1n\sum_{i=1}^n
  \Psi_\ell(x_i)\otimes\Psi_\ell(x_i)
  \quad\text{on }\mathcal E_\ell,
  \label{eq:chain-empirical-covariance}
\end{equation}
let
$\lambda_{\ell,1}\ge\lambda_{\ell,2}\ge\cdots\ge0$ be its eigenvalues,
and let $P_{\ell,r}$ project onto a top-$r$ eigenspace.
Set
\begin{equation}
  \delta_{\ell,r}
  :=\left(\sum_{j>r}\lambda_{\ell,j}\right)^{1/2}
  =\left[
    \frac1n\sum_{i=1}^n
    \norm{(I-P_{\ell,r})\Psi_\ell(x_i)}_{\mathcal E_\ell}^2
  \right]^{1/2}.
  \label{eq:chain-spectral-tail}
\end{equation}
Choose integer ranks $r_0,\ldots,r_{L-1}$ with $0\le r_\ell\le n$; in
the presence of an eigenvalue tie, any orthogonal top-$r_\ell$ eigenspace may
be used.  Define the rank-compressed chain by
$\widetilde\Psi_0=\Psi_0$ and
\begin{equation}
  \widetilde u_\ell(x)
  :=\relu\bigl(T_\ell P_{\ell-1,r_{\ell-1}}
  \widetilde\Psi_{\ell-1}(x)+b_\ell\bigr),
  \qquad
  \widetilde\Psi_\ell(x)
  :=\Phi_{q,m_\ell}(\widetilde u_\ell(x)).
  \label{eq:rank-compressed-chain}
\end{equation}
Then every compressed link factors through a space of dimension at most
$r_{\ell-1}$ and
\begin{equation}
  \norm{T_\ell P_{\ell-1,r_{\ell-1}}}
  \le\tau_\ell.
  \label{eq:rank-compressed-link-norm}
\end{equation}
If
\begin{equation}
  E_0:=0,
  \qquad
  E_\ell:=\frac1n\sum_{i=1}^n
  \norm{\Psi_\ell(x_i)-\widetilde\Psi_\ell(x_i)}_{\mathcal E_\ell}^2,
  \label{eq:chain-compression-error}
\end{equation}
then the errors obey the recursive spectral-tail bound
\begin{equation}
  E_\ell
  \le\tau_\ell
  \left(\delta_{\ell-1,r_{\ell-1}}+\sqrt{E_{\ell-1}}\right),
  \qquad \ell=1,\ldots,L.
  \label{eq:chain-compression-error-recursion}
\end{equation}
Writing $a_\ell:=\tau_\ell\delta_{\ell-1,r_{\ell-1}}$, this gives the explicit
closed envelope
\begin{equation}
  E_L
  \le
  \sum_{s=1}^L
  a_s^{\,2^{-(L-s)}}
  \prod_{\ell=s+1}^L
  \tau_\ell^{\,2^{-(L-\ell)}}.
  \label{eq:chain-compression-closed-form}
\end{equation}
For every $a\in\mathcal E_L$ with $\norm{a}\le B$, the original and compressed
heads satisfy
\begin{equation}
  \left[
    \frac1n\sum_{i=1}^n
    \abs{\inner{a}{\Psi_L(x_i)-\widetilde\Psi_L(x_i)}}^2
  \right]^{1/2}
  \le B\sqrt{E_L}.
  \label{eq:chain-compression-prediction-error}
\end{equation}
If $\widetilde\cF_{\mathrm{chain},B}$ denotes the radius-$B$ head class generated by $\widetilde\Psi_L$, then
\begin{equation}
  \abs{
  \Rad_S(\cF_{\mathrm{chain},B})
  -\Rad_S(\widetilde\cF_{\mathrm{chain},B})}
  \le B\sqrt{\frac{E_L}{n}}.
  \label{eq:chain-compression-rademacher-stability}
\end{equation}
If only level $s\in\{0,\ldots,L-1\}$ is truncated and all other projections
contain the corresponding full sample spans, then
\begin{equation}
  E_L
  \le
  (\tau_{s+1}\delta_{s,r_s})^{2^{-(L-s-1)}}
  \prod_{\ell=s+2}^L
  \tau_\ell^{2^{-(L-\ell)}}.
  \label{eq:single-level-spectral-chain-error}
\end{equation}
In particular, zero spectral tails recover the exact sample compression of
\Cref{prop:chain-finite-sample-representer}.
\end{proposition}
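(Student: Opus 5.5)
The plan is to establish the six assertions in order: the norm bound \eqref{eq:rank-compressed-link-norm}, the one-step recursion \eqref{eq:chain-compression-error-recursion}, the closed form \eqref{eq:chain-compression-closed-form}, the head bounds \eqref{eq:chain-compression-prediction-error}--\eqref{eq:chain-compression-rademacher-stability}, and finally the single-level bound \eqref{eq:single-level-spectral-chain-error}.

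\textbf{Link norm.} Since $P_{\ell-1,r}$ is an orthogonal projection of rank at most $r_{\ell-1}$, we have $\norm{T_\ell P_{\ell-1,r}}\le\tau_\ell$. The link also factors through $\operatorname{ran}P_{\ell-1,r}$, so it passes through a space of dimension at most $r_{\ell-1}$. The spectral-tail identity in \eqref{eq:chain-spectral-tail} follows from $\frac1n\sum_i\norm{(I-P)\Psi(x_i)}^2=\tr((I-P)C_{\ell,S})$.

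\textbf{One-step recursion.} Fix $i$. By the Brownian isometry \eqref{eq:main-isometry} and ReLU nonexpansiveness (the common bias cancels),
\[
\norm{\Psi_\ell(x_i)-\widetilde\Psi_\ell(x_i)}^2=\norm{u_\ell(x_i)-\widetilde u_\ell(x_i)}_q\le\tau_\ell\norm{\Psi_{\ell-1}(x_i)-P\widetilde\Psi_{\ell-1}(x_i)}.
\]
Split the last vector as $(I-P)\Psi_{\ell-1}(x_i)+P(\Psi_{\ell-1}(x_i)-\widetilde\Psi_{\ell-1}(x_i))$ and use $\norm P\le1$. Averaging over $i$ and applying Cauchy--Schwarz ($\frac1n\sum_i a_i\le(\frac1n\sum_i a_i^2)^{1/2}$) to each term gives $E_\ell\le\tau_\ell(\delta_{\ell-1,r_{\ell-1}}+\sqrt{E_{\ell-1}})$. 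This gives \eqref{eq:chain-compression-error-recursion}.

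\textbf{Closed form.} Write $E_\ell\le a_\ell+\tau_\ell E_{\ell-1}^{1/2}$ and induct. Given the bound for $E_{\ell-1}$, take the square root and use $\sqrt{x+y}\le\sqrt x+\sqrt y$ termwise, as in \Cref{prop:chain-affine-energy}. All exponents then halve, and multiplying by $\tau_\ell$ raises the exponent of $\tau_\ell$ to $2^0$, reproducing the sum with $a_\ell$ as the $s=\ell$ term. This is the step I expect to need the most care, though only for exponent bookkeeping; it mirrors the $C_r$ computation already done in \Cref{prop:chain-affine-energy}.

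\textbf{Head bounds.} \eqref{eq:chain-compression-prediction-error} follows pointwise from Cauchy--Schwarz, $|\inner{a}{\Delta_i}|\le B\norm{\Delta_i}$, then averaging. For the Rademacher bound, Hilbert duality as in \Cref{thm:main-fixed} gives
\[
\Rad_S=\frac Bn\E\norm{\sum_i\eps_i\Psi_L(x_i)},
\]
and similarly for the compressed class. The reverse triangle inequality bounds the difference by $\frac Bn\E\norm{\sum_i\eps_i\Delta_i}\le\frac Bn(\sum_i\norm{\Delta_i}^2)^{1/2}=B\sqrt{E_L/n}$, by Jensen and the upper bound of \Cref{lem:main-khintchine}.

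\textbf{Single level and exact compression.} If only level $s$ is truncated, every other projection fixes the sample features, so the first-term argument above gives $\delta=0$ at those levels. Hence $E_\ell=0$ for $\ell\le s$, $E_{s+1}\le a_{s+1}$, and $E_\ell\le\tau_\ell\sqrt{E_{\ell-1}}$ thereafter. Iterating yields \eqref{eq:single-level-spectral-chain-error}. Zero tails give $E_L=0$, which recovers \Cref{prop:chain-finite-sample-representer}.
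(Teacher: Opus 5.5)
Your proposal is correct and follows essentially the same route as the paper. Both arguments use contractivity of the orthogonal projection for the link norm, and the Brownian isometry with ReLU nonexpansiveness, the split $(I-P)\Psi_{\ell-1}+P(\Psi_{\ell-1}-\widetilde\Psi_{\ell-1})$ and Cauchy--Schwarz for the recursion. They then use square-root subadditivity induction for the closed envelope, the reverse triangle inequality with Jensen for the head and Rademacher bounds, and the observation that the tail term vanishes at every untruncated level for the single-level case.
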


\begin{proof}
Orthogonal projection is contractive, which proves
\eqref{eq:rank-compressed-link-norm} and the rank-factorization statement.
For each sample point, the Brownian feature identity, ReLU contraction, and
the operator norm give
\begin{align*}
  &\norm{\Psi_\ell(x_i)-\widetilde\Psi_\ell(x_i)}^2\\
  &\quad=\norm{u_\ell(x_i)-\widetilde u_\ell(x_i)}_q\\
  &\quad\le\tau_\ell
  \norm{\Psi_{\ell-1}(x_i)
  -P_{\ell-1,r_{\ell-1}}\widetilde\Psi_{\ell-1}(x_i)}\\
  &\quad\le\tau_\ell\left[
    \norm{(I-P_{\ell-1,r_{\ell-1}})\Psi_{\ell-1}(x_i)}
    +\norm{\Psi_{\ell-1}(x_i)-\widetilde\Psi_{\ell-1}(x_i)}
  \right].
\end{align*}
Average over $i$ and apply Cauchy--Schwarz separately to the two sums.
The first becomes $\delta_{\ell-1,r_{\ell-1}}$ by the PCA residual identity,
and the second is at most $\sqrt{E_{\ell-1}}$, proving
\eqref{eq:chain-compression-error-recursion}.
Equation~\eqref{eq:chain-compression-prediction-error} is Cauchy--Schwarz in
$\mathcal E_L$ followed by averaging.  By the reverse triangle inequality and Jensen,
\begin{align*}
  \abs{\Rad_S(\cF_{\mathrm{chain},B})
  -\Rad_S(\widetilde\cF_{\mathrm{chain},B})}
  &\le\frac Bn\E_\eps
  \norm{\sum_i\eps_i(\Psi_L(x_i)-\widetilde\Psi_L(x_i))}\\
  &\le\frac Bn\left(\sum_i
  \norm{\Psi_L(x_i)-\widetilde\Psi_L(x_i)}^2\right)^{1/2},
\end{align*}
which is \eqref{eq:chain-compression-rademacher-stability}.
For a single truncation, exact projection at the earlier levels gives
$E_s=0$ and hence
$E_{s+1}\le\tau_{s+1}\delta_{s,r_s}$.
For the general multi-level statement, write the recurrence as
$E_\ell\le a_\ell+\tau_\ell\sqrt{E_{\ell-1}}$.  We prove
\eqref{eq:chain-compression-closed-form} by induction on $\ell$.  It is
$E_1\le a_1$ at the first level.  If
\[
 E_{\ell-1}\le
 \sum_{s=1}^{\ell-1}
 a_s^{2^{-(\ell-1-s)}}
 \prod_{j=s+1}^{\ell-1}
 \tau_j^{2^{-(\ell-1-j)}},
\]
then square-root subadditivity gives
\begin{align*}
 E_\ell
 &\le a_\ell+\tau_\ell
 \sum_{s=1}^{\ell-1}
 a_s^{2^{-(\ell-s)}}
 \prod_{j=s+1}^{\ell-1}
 \tau_j^{2^{-(\ell-j)}}\\
 &=\sum_{s=1}^{\ell}
 a_s^{2^{-(\ell-s)}}
 \prod_{j=s+1}^{\ell}
 \tau_j^{2^{-(\ell-j)}},
\end{align*}
where the $s=\ell$ term is $a_\ell$ and empty products equal one.  Taking
$\ell=L$ proves \eqref{eq:chain-compression-closed-form}.
At every later level after a single truncation the tail term vanishes and
$E_\ell\le\tau_\ell\sqrt{E_{\ell-1}}$; iteration yields
\eqref{eq:single-level-spectral-chain-error}.
\end{proof}

% Keep this short closing remark together.
\begin{samepage}
\begin{remark}[Compression-localized factors and scope]
\label{rem:compression-localized-link-factors}
\label{rem:spectral-tail-chain-compression}
\label{rem:chain-finite-realization-scope}
Let $Q^{\rm cmp}_{\ell-1,S}$ project onto the span of
$\Psi_{\ell-1}(x_i)-P_{\ell-1,r_{\ell-1}}\widetilde\Psi_{\ell-1}(x_i)$,
$i\in[n]$, and set
$\tau^{\rm cmp}_{\ell,S}:=\norm{T_\ell Q^{\rm cmp}_{\ell-1,S}}_{\mathcal E_{\ell-1}\to\ell_q}$.
Because the compression proof uses $T_\ell$ only on this discrepancy span,
all bounds remain valid after the recursive replacement
$\tau_\ell\mapsto\tau^{\rm cmp}_{\ell,S}\le\tau_\ell$.
This localizes the low-rank spectral-tail control suggested by adaptive-kernel
work~\citep{LiuHuangGongYangLi2020}: unlike a training-only Hadamard
deformation, the projection acts on a globally defined canonical chain,
does not increase link norms, and supplies an explicit sample-dependent
approximation error. It does not assert an off-sample Euclidean realization.
\end{remark}
\end{samepage}

\begin{corollary}[Sample-adaptive link and secant sharpening]
\label{cor:sample-adaptive-chain-norms}
In the setting of \Cref{prop:chain-finite-sample-representer}, let
$Q_{\ell-1,S}$ project onto the span of sample-feature differences and define
\begin{align}
  \tau_{\ell,S}
  &:=\norm{T_\ell P_{\ell-1}}_{\mathcal E_{\ell-1}\to\ell_q},
  &A_{L,S}&:=\prod_{\ell=1}^L\tau_{\ell,S}^{2^{-(L-\ell+1)}},
  \label{eq:sample-adaptive-chain-factor}\\
  D_{\ell-1,S}
  &:=\operatorname{span}\{\Psi_{\ell-1}(x_i)-\Psi_{\ell-1}(x_j):i,j\in[n]\},
  \label{eq:sample-difference-span}\\
  \tau_{\ell,S}^{\Delta}
  &:=\norm{T_\ell Q_{\ell-1,S}}_{\mathcal E_{\ell-1}\to\ell_q},
  \notag\\
  \sigma_{\ell,S}
  &:=\max_{\substack{i,j\in[n]\\d_{\ell-1}(x_i,x_j)>0}}
  \frac{\norm{u_\ell(x_i)-u_\ell(x_j)}_q}{d_{\ell-1}(x_i,x_j)},
  &A_{L,S}^{\mathrm{sec}}
  &:=\prod_{\ell=1}^L\sigma_{\ell,S}^{2^{-(L-\ell+1)}}.
  \label{eq:sample-secant-factor}
\end{align}
The maximum over an empty set is zero. Set
$\mathcal R_{0,S}(t):=t$ and
$\mathcal R_{r,S}(t):=\tau_{r,S}\mathcal R_{r-1,S}(t)^{1/2}+\beta_r$.
Then
\begin{equation}
  \sigma_{\ell,S}\le\tau_{\ell,S}^{\Delta}\le\tau_{\ell,S}\le\tau_\ell,
  \quad A_{L,S}^{\mathrm{sec}}\le A_{L,S}\le A_L,
  \quad \mathcal R_{L,S}(t)\le\mathcal R_L(t).
  \label{eq:sample-factor-hierarchy}
\end{equation}
Moreover,
\begin{align}
  \Rad_S(\cF_{\mathrm{chain},B})
  &\le\frac B{\sqrt n}
  \left[\frac1n\sum_{i=1}^n
  \mathcal R_{L,S}\!\left(\norm{\Psi_0(x_i)}_{\mathcal E_0}^2\right)\right]^{1/2},
  \label{eq:sample-adaptive-affine-chain-bound}\\
  \Rad_S(\cF_{\mathrm{chain},B})
  &\le\frac{BA_{L,S}}{\sqrt n}
  \left(\frac1n\sum_{i=1}^n
  \norm{\Psi_0(x_i)}_{\mathcal E_0}^{2^{1-L}}\right)^{1/2}
  \quad\text{if the chain is bias free},
  \label{eq:sample-adaptive-chain-rademacher}\\
  \Rad_S(\cF_{\mathrm{chain},B,S}^{\circ})
  &\le\frac{BA_{L,S}^{\mathrm{sec}}}{\sqrt n}
  \left(\frac1{2n^2}\sum_{i,j=1}^n
  d_0(x_i,x_j)^{2^{1-L}}\right)^{1/2}.
  \label{eq:sample-adaptive-centered-chain}
\end{align}
The uncentered bounds discard link mass outside the sample-feature spans;
the centered bound retains only contraction realized on sample secants.
\end{corollary}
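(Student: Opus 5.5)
The plan is to reduce every claim to statements already proved for a single fixed chain, exploiting the finite-sample compression of \Cref{prop:chain-finite-sample-representer}.

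\textbf{Step 1: the factor hierarchy.} First I establish \eqref{eq:sample-factor-hierarchy} link by link.

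\emph{The bound $\tau_{\ell,S}\le\tau_\ell$.} This holds because $P_{\ell-1}$ is an orthogonal projection, exactly as in \eqref{eq:chain-link-norm-preservation}.

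\emph{The bound $\tau^\Delta_{\ell,S}\le\tau_{\ell,S}$.} Every sample difference lies in $V_{\ell-1}$. Hence $D_{\ell-1,S}\subseteq V_{\ell-1}$ and $Q_{\ell-1,S}=P_{\ell-1}Q_{\ell-1,S}$. It follows that $\|T_\ell Q_{\ell-1,S}\|\le\|T_\ell P_{\ell-1}\|\,\|Q_{\ell-1,S}\|$.

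\emph{The bound $\sigma_{\ell,S}\le\tau^\Delta_{\ell,S}$.} For a pair with $d_{\ell-1}(x_i,x_j)>0$, the common bias cancels. ReLU nonexpansiveness in $\ell_q$ then gives
\[
\|u_\ell(x_i)-u_\ell(x_j)\|_q\le\|T_\ell(\Psi_{\ell-1}(x_i)-\Psi_{\ell-1}(x_j))\|_q .
\]
Since the difference is fixed by $Q_{\ell-1,S}$, the right side is at most $\tau^\Delta_{\ell,S}\,d_{\ell-1}(x_i,x_j)$. Taking the maximum over pairs gives the bound, and the empty-maximum convention covers the degenerate case.

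\emph{The inequalities between $A^{\mathrm{sec}}_{L,S}$, $A_{L,S}$, $A_L$ and between $\mathcal R_{L,S}$, $\mathcal R_L$.} These follow from monotonicity of products with nonnegative exponents. For the envelopes I use induction, since each recursion is nondecreasing in both $\tau$ and its argument.

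\textbf{Step 2: the uncentered bounds.} I pass to the compressed chain with links $\widetilde T_\ell=T_\ell P_{\ell-1}$. By \eqref{eq:chain-sample-exactness}, its sample features coincide with the original ones, so the exact identity \eqref{eq:chain-exact-rademacher} shows the two Rademacher complexities are equal. The compressed chain is itself a recursive Brownian chain whose link norms are exactly $\tau_{\ell,S}$. Applying \Cref{cor:dyadic-chain-rademacher} to it yields \eqref{eq:sample-adaptive-affine-chain-bound} and \eqref{eq:sample-adaptive-chain-rademacher}, with $\mathcal R_{L,S}$ and $A_{L,S}$ in place of $\mathcal R_L$ and $A_L$. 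The energies $e_\ell(x_i)$ on the sample are unchanged, which is all \Cref{prop:chain-affine-energy} needs.

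\textbf{Step 3: the centered bound.} Here I bypass operator norms. By the definition of $\sigma_{\ell,S}$ and the Brownian isometry, for every sample pair
\[
d_\ell(x_i,x_j)^2=\|u_\ell(x_i)-u_\ell(x_j)\|_q\le\sigma_{\ell,S}\,d_{\ell-1}(x_i,x_j),
\]
with the zero-distance case trivial. Iterating exactly as in the proof of \Cref{prop:main-chain} gives $d_L^2\le (A^{\mathrm{sec}}_{L,S})^2d_0^{2^{1-L}}$ on sample pairs. I insert this into the upper bound of \eqref{eq:chain-centered-two-sided}.

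\textbf{Expected obstacle.} The delicate point is Step 2. I must check that applying the fixed-chain corollary to the compressed chain is legitimate and that the only required quantities are sample energies and the norms of $\widetilde T_\ell$. I also need to confirm that the $\mathcal R_{L,S}$ recursion uses $\tau_{\ell,S}$ rather than the secant factor. The secant factor does not control one-point energies, because the bias does not cancel there. This is why the uncentered bounds stop at $\tau_{\ell,S}$.
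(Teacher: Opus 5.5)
Your proposal is correct and follows the paper's proof essentially step for step. The uncentered bounds come from applying \Cref{cor:dyadic-chain-rademacher} to the projected chain of \Cref{prop:chain-finite-sample-representer}, whose link norms are exactly $\tau_{\ell,S}$ and whose sample features coincide with the original ones. The hierarchy uses $D_{\ell-1,S}\subseteq V_{\ell-1}$ together with ReLU contraction on sample secants, and the centered bound iterates the secant inequality as in \Cref{prop:main-chain} before inserting it into \eqref{eq:chain-centered-two-sided}.
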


\begin{proof}
The projected chain of \Cref{prop:chain-finite-sample-representer} has link
norms $\tau_{\ell,S}$ and agrees with the original chain at every sample point
and every level.  Applying \Cref{cor:dyadic-chain-rademacher} to that chain proves
\eqref{eq:sample-adaptive-affine-chain-bound}--\eqref{eq:sample-adaptive-chain-rademacher}.
Because $D_{\ell-1,S}\subseteq V_{\ell-1}$,
$T_\ell Q_{\ell-1,S}=(T_\ell P_{\ell-1})Q_{\ell-1,S}$, so
$\tau_{\ell,S}^{\Delta}\le\tau_{\ell,S}$.
On each sample difference, ReLU contraction gives
$\norm{u_\ell(x_i)-u_\ell(x_j)}_q
\le\norm{T_\ell Q_{\ell-1,S}(\Psi_{\ell-1}(x_i)-\Psi_{\ell-1}(x_j))}_q$.
If $d_{\ell-1}(x_i,x_j)>0$, division by that distance and the operator norm
give the ratio bound by $\tau_{\ell,S}^{\Delta}$.  If the denominator is
zero, then $\Psi_{\ell-1}(x_i)=\Psi_{\ell-1}(x_j)$, so the two affine
preactivations, ReLU states, and numerator are also identical.  If no pair
has positive denominator (in particular when $n=1$), the defining maximum is
zero.  Therefore $\sigma_{\ell,S}\le\tau_{\ell,S}^{\Delta}$ in all cases.
For each sample pair, induction in exactly the same way as in
\Cref{prop:main-chain} gives
\[
 d_L(x_i,x_j)^2
 \le (A_{L,S}^{\mathrm{sec}})^2
 d_0(x_i,x_j)^{2^{1-L}}.
\]
This remains true when a secant factor is zero, because the corresponding
one-step distance and all subsequent distances are zero.  Averaging this
inequality over ordered pairs and substituting it into the centered upper
bound in \eqref{eq:chain-centered-two-sided} proves
\eqref{eq:sample-adaptive-centered-chain}.
The remaining inequalities follow from the product and recursive definitions.
\end{proof}

\end{document}